\definecolor{mycolor}{rgb}{0.122, 0.435, 0.698}
\newcounter{takeawaycounter}
\newcounter{todocounter}
\newcounter{convergencecounter}
\newenvironment{convergence}{
  \stepcounter{convergencecounter}
  \begin{mdframed}[
    middlelinecolor =   black,
    middlelinewidth =   0.5pt,
    backgroundcolor =   seabornblue!10,
    roundcorner     =   4pt,
    nobreak         =   true ,
  ]
  \textbf{Convergence~\theconvergencecounter:}
}{
  \end{mdframed}
}
\DeclareSIUnit{\million}{M}
\DeclareSIUnit{\billion}{B}
\definecolor{seabornorange}{RGB}{255, 127, 14} 
\definecolor{seabornblue}{RGB}{ 31, 119, 180}
\definecolor{seaborngreen}{RGB}{44, 160, 44}
\definecolor{seaborngrey}{RGB}{127, 127, 127}
\definecolor{cadmiumgreen}{rgb}{0.0, 0.42, 0.24}
\theoremstyle{plain}
\newtheorem{theorem}{Theorem}
\newtheorem{lemma}{Lemma}
\newtheorem{corollary}{Corollary}
\newtheorem{definition}{Definition}
\newtheorem{supplem}{Theorem}
\crefname{section}{Sec.}{Secs.}
\Crefname{section}{Section}{Sections}
\Crefname{table}{Table}{Tables}
\crefname{table}{Tab.}{Tabs.}
\crefname{equation}{Eq.}{Eqs.}
\crefname{algorithm}{Algorithm.}{Algorithm.}
\crefname{supplem}{Theorem}{Theorem}
\title{Decipher the Modality Gap in Multimodal Contrastive Learning: From Convergent Representations to Pairwise Alignment}
\author{Lingjie Yi\textsuperscript{\textnormal{1}}\thanks{Email: chris.yi@stonybrook.edu}, Raphael Douady\textsuperscript{\textnormal{2}}, Chao Chen\textsuperscript{\textnormal{1}}\\ 
\textsuperscript{1} Stony Brook University, \textsuperscript{2} University Paris 1 Pantheon-Sorbonne 
\\
}
\newcommand{\myparagraph}[1]{\noindent\textbf{#1}}
\begin{document}

\maketitle

\pagestyle{fancy}
\fancyhf{}
\fancyhead[L]{Preprint — arXiv}
\renewcommand{\headrulewidth}{0.4pt}
\vspace{-5mm}

\begin{abstract}
    Multimodal contrastive learning (MCL) aims to embed data from different modalities in a shared embedding space. However, empirical evidence shows that representations from different modalities occupy completely separate regions of embedding space, a phenomenon referred to as the modality gap. Moreover, experimental findings on how the size of the modality gap influences downstream performance are inconsistent. These observations raise two key questions: (1) What causes the modality gap? (2) How does it affect downstream tasks? To address these questions, we introduce the first theoretical framework for analyzing the convergent optimal representations of MCL and the modality alignment when training is optimized. Specifically, we prove that without any constraint or under the cone constraint, the modality gap converges to zero. Under the subspace constraint (i.e., representations of two modalities fall into two distinct hyperplanes due to dimension collapse), the modality gap converges to the smallest angle between the two hyperplanes. This result identifies \emph{dimension collapse} as the fundamental origin of the modality gap. Furthermore, our theorems demonstrate that paired samples cannot be perfectly aligned under the subspace constraint. The modality gap influences downstream performance by affecting the alignment between sample pairs. We prove that, in this case, perfect alignment between two modalities can still be achieved via two ways: hyperplane rotation and shared space projection.    
\end{abstract}

\section{Introduction} \label{sec:intro} 

Pre-trained vision–language models (VLMs)~\citep{clip, slip, blip} have achieved remarkable success across a wide range of tasks, including zero-shot image classification, zero-shot cross-modal retrieval, and visual question answering. These models are typically trained with multimodal contrastive learning on large-scale image–text pairs. Despite their strong empirical performance, our theoretical understanding of how VLMs learn representations and how these representations relate to downstream performance remains limited. In this work, we provide a theoretical study of these issues.

Our understanding of \textbf{unimodal} contrastive representation learning~\citep{simclr, supcon} has advanced considerably. From a theoretical standpoint, when training is optimized (i.e., the training loss reaches its minimum), the learned representations converge to an optimal configuration. We refer to this process as \emph{representational convergence} and to its limiting configuration as the \emph{convergent optimal representation} (COR). Prior work has demonstrated that the COR of self-supervised learning (SSL) corresponds to a uniform distribution on the surface of an $h$-dimensional unit hypersphere $(\mathbb{S}^{h-1})$~\citep{alignment}. For supervised contrastive learning (SupCon), the COR forms a regular simplex inscribed in $\mathbb{S}^{h-1}$~\citep{simplex}, and a skewed simplex when the data is imbalanced~\citep{featrecon}. (See additional related work in~\cref{sec_supp:related}). These prior research on unimodal data demonstrate that examining the geometric and distributional properties of CORs yields critical insights into how pretraining with contrastive learning affects downstream performance.

\begin{figure}[t]   
    \centering
        \begin{subfigure}[t]{0.19\textwidth}        
            \includegraphics[width=\linewidth]{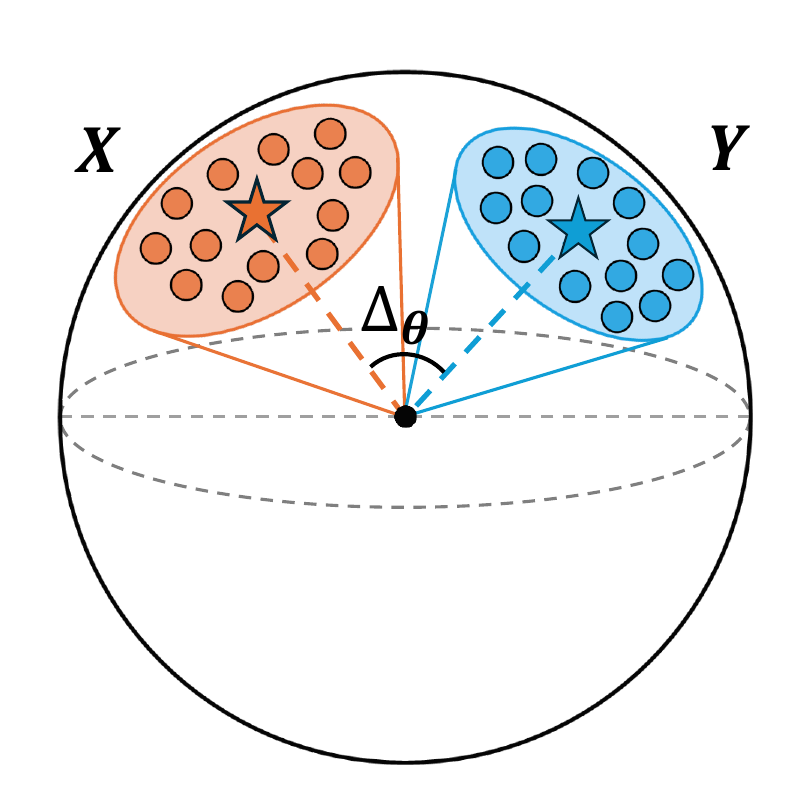}
            \caption{}
            \label{fig:opt:1}
        \end{subfigure} 
        \hfill
        \begin{subfigure}[t]{0.19\textwidth}          
            \includegraphics[width=\linewidth]{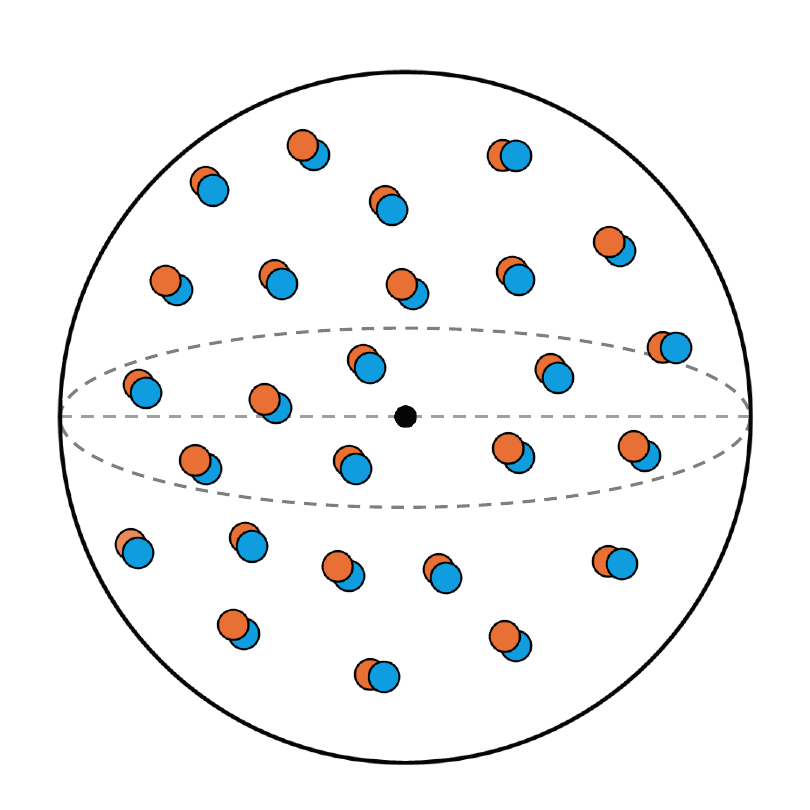}
            \caption{}
            \label{fig:opt:2}
        \end{subfigure} 
        \hfill
        \begin{subfigure}[t]{0.19\textwidth}
            \includegraphics[width=\linewidth]{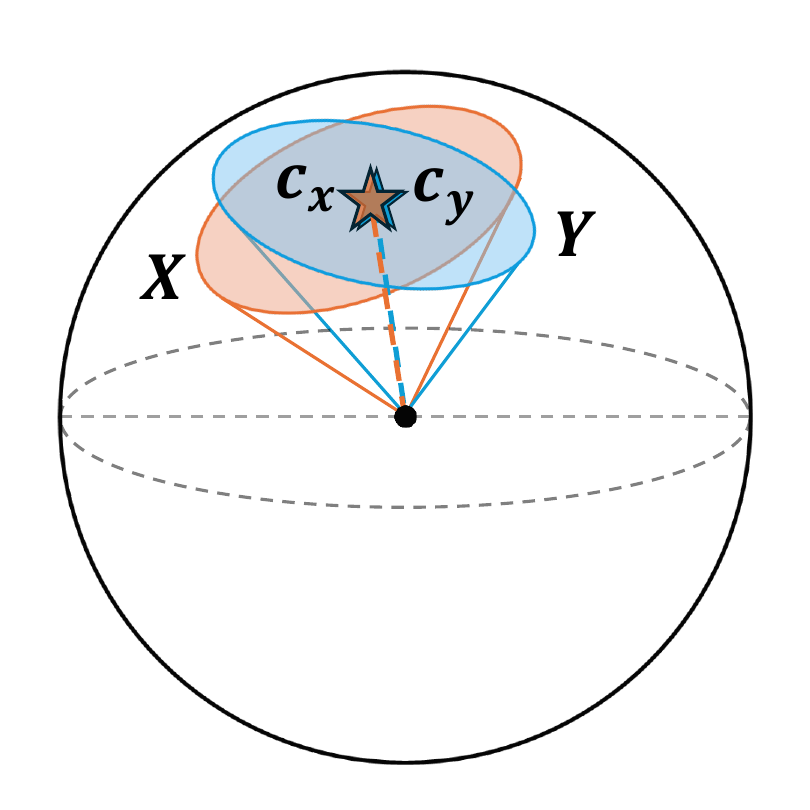}
            \caption{}
            \label{fig:opt:3}
        \end{subfigure} 
        \hfill
        \begin{subfigure}[t]{0.19\textwidth}
            \includegraphics[width=\linewidth]{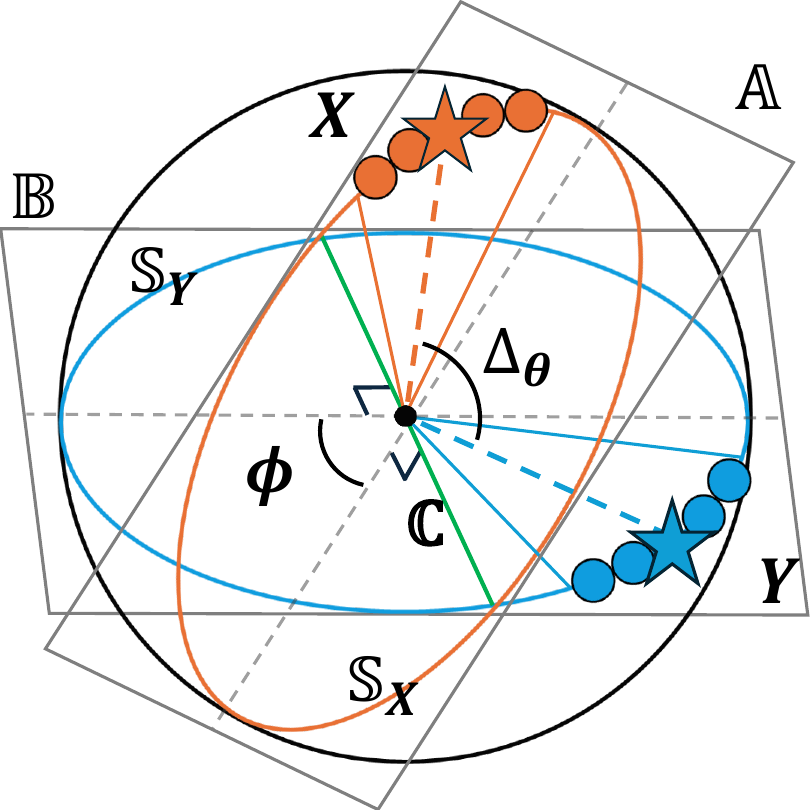}
            \caption{}
            \label{fig:opt:4}
        \end{subfigure} 
        \hfill
        \begin{subfigure}[t]{0.19\textwidth}
            \includegraphics[width=\linewidth]{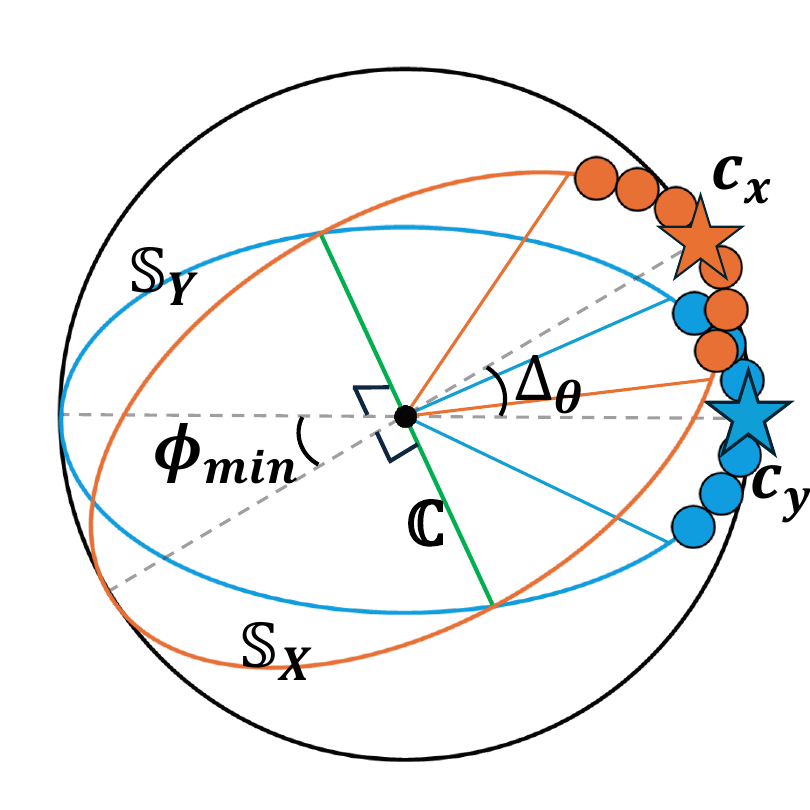}
            \caption{}
            \label{fig:opt:5}
        \end{subfigure} 
    \caption{The COR of MCL. Orange and blue dots represent $X$ and $Y$. Starts are centers of $X$ and $Y$ (i.e., $c_x, c_y$). $\Delta_{\theta}$ denotes the size of modality gap. \textbf{(a)}: When a model is initialized, $(X, Y)$ lie within two distinct cones. \textbf{(b)}: Without any constraint, $(X, Y)$ converge to a paired uniform distribution and $\Delta_{\theta} \rightarrow 0$. \textbf{(c)}: Under the cone constraint, $\Delta_{\theta} \rightarrow 0$. \textbf{(d)}: $(X, Y)$ collapse into two distinct hyperplanes $\mathbb{A}$ and $\mathbb{B}$, respectively. $X \in \mathbb{S}_X$ (orange circle) $\in \mathbb{A}$ and $Y \in \mathbb{S}_Y$ (blue circle) $\in \mathbb{B}$. $\phi$ is the angle between $\mathbb{A}$ and $\mathbb{B}$. The green line represents the shared space $\mathbb{C} = \mathbb{A} \cap \mathbb{B}$. See~\cref{def:subspace} for details. \textbf{(e)}: Under the subspace constraint, when training is optimized, $c_x, c_y \perp \mathbb{C}$ and $\Delta_{\theta} \rightarrow \phi_{\mathrm{min}}$.}
    \label{fig:opt}
    \vspace{-3mm}
\end{figure}

This motivates us to investigate the COR of \textbf{multimodal} contrastive learning (MCL). Intuitively, MCL intends to align representations from different modalities in a shared embedding space. However, this is not supported by empirical evidence. Instead, representations of different modalities cluster into disjoint cones in $\mathbb{S}^{h-1}$, forming a geometric phenomenon called the \emph{modality gap}~\citep{mindgap}. To explain the origin of this gap, several hypotheses have been proposed, including the cone effect~\citep{mindgap}, the contrastive learning object~\citep{contrastgap}, insufficient training~\citep{gaplocal} and information bias~\citep{twoeffect}. The impact of the modality gap on downstream performance also remains unclear. Some studies~\citep{mindgap, twoeffect} show that narrowing the modality gap pos hoc may lead to degraded downstream performance, indicating that such reduction is not always beneficial. (See~\cref{sec_supp:related} for more details). Prior work have mostly focused on numerical analysis. None of them has offered a satisfactory theoretical explanation of what causes the modality gap and how it affects downstream performance.

In this paper, we focus on the theoretical explanation of the modality gap. We establish the first theoretical framework to systematically analyze the COR of MCL. In particular, we prove (\cref{thm:gap1}) that, without any distributional constraints, representations of two modalities converge to a paired uniform distribution on $\mathbb{S}^{h-1}$ (\cref{fig:opt:2}). As a result, the modality gap converges to zero. Meanwhile, the concentration degree (i.e., how tightly clustered a distribution is) of the learned representation becomes zero (\cref{coro:gap1}). This shows that \textbf{the contrastive learning objective tends to close the modality gap}. However, we observe that concentration degrees of the learned representations always remain positive in practice. Therefore, representations of each modality fall into a cone in $\mathbb{S}^{h-1}$ (\cref{fig:opt:1}), a phenomenon known as the \emph{cone effect}. We prove (\cref{thm:gap2}) that even under this cone constraint, the modality gap still converges to zero, regardless of the initial locations or sizes of the cones (\cref{fig:opt:3}). This elucidates that \textbf{the cone effect is not the cause of the modality gap}.

The preceding analysis prompts us to ask whether there are any other geometric or distributional constraints on representations that ultimately give rise to the modality gap.~\citet{dimcollapse} show that the SSL learned representations collapse into a lower-dimensional subspace rather than spanning the entire embedding space, a phenomenon referred to as \emph{dimension collapse}. Inspired by this insight, we observe that dimension collapse also arises in the MCL learned representations. We then prove (\cref{thm:gap3}) that if representations of two modalities collapse into distinct hyperplanes (\cref{fig:opt:4}), the modality gap converges to the smallest angle between these hyperplanes (\cref{fig:opt:5}). This finding demonstrates that \textbf{the true origin of the modality gap is dimension collapse}.

That how modality gap influences downstream tasks still confuses researchers. We argue that downstream performance is determined by the alignment between all paired samples, i.e., \emph{modality alignment}. First, we prove (\cref{thm:match1} and~\cref{coro:match1}) that when representations converge, \textbf{the mutual information between two modalities in the shared space is maximized} and in this case \textbf{paired samples cannot be perfectly aligned}. Next, we demonstrate that changes in the size of the modality gap alter the representation distribution, which in turn affects modality alignment. Then, we show that existing translation approaches, e.g., shifting image embeddings toward language embeddings by the average distance between image–language pairs, modify the representation distribution in arbitrary ways. This explains the worsen downstream performance observed when such methods are applied. Lastly, we prove derive two methods, hyperplane rotation (\cref{coro:match2}) and shared subspace projection (\cref{coro:match3}), that achieve perfect alignment and modality gap reduction without harming downstream performance. The major contributions of our work are listed below:

\begin{itemize}
    \item 
    We theoretically show that the contrastive learning objective tends to close the modality gap regardless of the existence of the cone effect.   
    \item  
    We reveal that the origin of the modality gap is dimension collapse. And under the subspace constraint, the modality gap converges to the smallest angle between two hyperplanes.
    \item  
    We prove that paired samples cannot be perfectly aligned under the subspace constraint.
    \item 
    We derive that perfect alignment can be achieved via hyperplane rotation or shared subspace projection.
\end{itemize}

% \newpage
\section{Preliminary}\label{sec:prelim}

Suppose we have a dataset $D=\left\{\left(I_n, T_n\right)\right\}_{n=1}^N$ of $N$ image-text pairs, where $I=\left(i_1, \ldots, i_N\right) \in (\mathcal{I})^N$ and $T=\left(t_1, \ldots, t_N\right) \in (\mathcal{T})^N$. The unit hypersphere in $\mathbb{R}^h$ is defined as $\mathbb{S}^{h-1}=\left\{z \in \mathbb{R}^h:\|z\|=1\right\}$. An image encoder $f_I\left(\cdot\right): \mathcal{I} \rightarrow  \mathbb{R}^h$ and a text encoder $f_T\left(\cdot\right): \mathcal{T} \rightarrow  \mathbb{R}^h$ map image and text data, respectively, into a shared embedding space. The resulting representations are denoted as $X =\left(f_I\left(i_1\right), \ldots f_I\left(i_N\right)\right) = \left(x_1, \ldots, x_N\right) \in (\mathbb{S}^{h-1})^N$ and $Y =\left(f_T\left(t_1\right), \ldots f_T\left(T_N\right)\right) = \left(y_1, \ldots, y_N\right) \in (\mathbb{S}^{h-1})^N$. 

\myparagraph{Multimodal Contrastive Learning (MCL).} MCL aims to embed data from different modalities into a shared embedding space. This is achieved by minimizing the MCL loss, defined as:

\begin{definition}[Multimodal Contrastive Loss (MCL Loss)]
    Let $(X, Y)$ be an $N$-pair configuration, where $X = \left(x_1, \ldots, x_N\right) \in (\mathbb{S}^{h-1})^N$ and $Y = \left(y_1, \ldots, y_N\right) \in (\mathbb{S}^{h-1})^N$. $\forall \tau >0$, the multimodal contrastive loss    
    $\mathcal{L}_{\mathrm{MCL}}(\cdot, \cdot): ({\mathbb{S}^{h-1}})^N \times ({\mathbb{S}^{h-1}})^N \rightarrow \mathbb{R}$ is defined as:    
    \begin{equation}\label{def:clip:eq1}
        \begin{aligned}
            \mathcal{L}_{\mathrm{MCL}}
            =\frac{1}{N} \sum_{i=1}^N  \mathcal{L}_{\mathrm{MCL}}^i,
            \enspace \text{where} \hspace{1.5mm} 
            \mathcal{L}_{\mathrm{MCL}}^i= \mathcal{L}_{\mathcal{X} \rightarrow \mathcal{Y}}(x_i; Y) + \mathcal{L}_{\mathcal{Y} \rightarrow \mathcal{X}}(y_i; X).
        \end{aligned}
    \end{equation}
    Here, $\mathcal{L}_{\mathcal{X} \rightarrow \mathcal{Y}}$ is the $\mathcal{X}$-to-$\mathcal{Y}$ alignment and $\mathcal{L}_{\mathcal{Y} \rightarrow \mathcal{X}}$ is the $\mathcal{Y}$-to-$\mathcal{X}$ alignment, defined respectively as:
    \begin{equation}\label{def:clip:eq2}
        \begin{aligned}
            \mathcal{L}_{\mathcal{X} \rightarrow \mathcal{Y}}(x_i; Y) = -\log \frac{\exp \left(x_i \cdot y_i / \tau\right)}{\sum_{j=1}^N \exp \left(x_i \cdot y_j / \tau\right)},
            \enspace 
            \mathcal{L}_{\mathcal{Y} \rightarrow \mathcal{X}}(y_i; X) = -\log \frac{\exp \left(x_i \cdot y_i / \tau\right)}{\sum_{j=1}^N \exp \left(x_j \cdot y_i / \tau\right)}.
        \end{aligned}
    \end{equation}
\end{definition}

\vspace{-2mm}

In practice, contrastive learning is performed in a batch-wise manner due to memory limitations. For analytical simplicity, we assume unlimited memory to train on all samples in a single batch. 

\myparagraph{Modality Gap.} Define $\mu_x = \frac{1}{N} \sum_{i=1}^N x_i$, $c_x = \frac{\mu_x}{\| \mu_x \|}$ as the mean and the center representation of $X$, $\mu_y = \frac{1}{N} \sum_{i=1}^N y_i$, $c_y = \frac{\mu_y}{\| \mu_y \|}$ as the mean and the center representation of $Y$. 

\begin{definition}[Modality Gap]
    Let $(X, Y)$ be an $N$-pair configuration, where $X = \left(x_1, \ldots, x_N\right) \in (\mathbb{S}^{h-1})^N$ and $Y = \left(y_1, \ldots, y_N\right) \in (\mathbb{S}^{h-1})^N$. The modality gap between $X$ and $Y$ can be defined as the difference between their mean representations:
    \begin{equation}\label{def:mg:eq1}
        \begin{aligned}
            \Delta_{\mu} = \left\|\mu_x - \mu_y\right\|_2,
        \end{aligned}
    \end{equation}
    or as the angle between their center representations:
    \begin{equation}\label{def:mg:eq2}
        \begin{aligned}
            \Delta_{\theta} = \cos^{-1}(c_x \cdot c_y).
        \end{aligned}
    \end{equation}
\end{definition}

In this study, we use~\cref{def:mg:eq2} to define the \emph{modality gap}.

\section{Representational Convergence and the Modality Gap}\label{sec:converge}

In this section, we study the relationship between MCL and the size of the modality gap. To understand this, we establish the first theoretical framework for analyzing the COR of $(X, Y)$. We prove that, with or without the cone constraint, as the MCL loss approaches its minimum, the modality gap converges to \textbf{zero}.  

Both cases implicitly assume that $X$ and $Y$ are embedded in the same space as $\mathbb{S}^{h-1}$. Empirical evidence, however, shows that $X$ and $Y$ tend to collapse into different subspaces. We further demonstrate that if $X$ and $Y$ lie in two distinct hyperplanes, then when the MCL loss is minimized, the modality gap converges to the \textbf{smallest angle between the two hyperplanes}.

\subsection{von Mises-Fisher (vMF) Distributions}\label{sec:converge:vmf}

As shown in~\citep{mindgap}, when a model is initialized, the representations of each modality reside within a hypercone (\cref{fig:opt:1}). During training, the representation distribution evolves as the size and shape of the hypercone change. The von Mises-Fisher ($\mathrm{vMF}$) distribution~\citep{vmf_dist}, a generalization of the normal distribution on the surface of a hypersphere, also concentrates its samples within a hypercone. Hence, this distribution provides as an effective proxy for studying the geometric and distributional properties of representations learned by MCL.

\begin{definition}[$\mathrm{vMF}$ Distribution]
    $\forall c \in \mathbb{S}^{h-1}$ and $\kappa \ge 0$, the probability density of a random $h$-dimensional unit vector $z \sim \mathrm{vMF}(c, \kappa)$ is given by:
    \begin{equation}\label{def:vmf:eq1}
        \begin{aligned}  
            f_h(z ; c, \kappa) 
            &=D_h(\kappa) e^{\kappa c^{\top} z} , 
            \enspace \text{where} \hspace{1.5mm} 
            D_h(\kappa) 
            =
            \frac{\kappa^{\nu}}{(2 \pi)^{\nu+1} I_{\nu}(\kappa)} .\\
        \end{aligned}
    \end{equation}
    Let $\nu = h/2 -1$, and $I_{\nu}\left(\cdot\right): \mathbb{R} \rightarrow \mathbb{R}$ is the modified Bessel function of the first kind of order $\nu$, which is defined as:
    \begin{equation}\label{def:vmf:eq2}
        \begin{aligned}
            I_{\nu}(x)=\sum_{k=0}^{\infty} \frac{1}{k!\Gamma(\nu+k+1)}\left(\frac{x}{2}\right)^{2 k+\nu}.
        \end{aligned}
    \end{equation}
\end{definition}

Here, $c$ denotes the center vector and $\frac{1}{\kappa}$ denotes the concentration degree. When $\frac{1}{\kappa}=\infty$ ($\kappa=0$), the samples are maximally dispersed and uniformly distributed on $\mathbb{S}^{h-1}$. As $\frac{1}{\kappa}$ decreases, the samples become increasingly concentrated and cluster within a smaller hypercone. When $\frac{1}{\kappa}=0$ ($\kappa = \infty$), the samples are fully concentrated and collapse to a single point. Throughout this work, we assume that $(X, Y)$ are $iid$ samples from two $\mathrm{vMF}$ distributions, i.e., $x_i \sim \mathrm{vMF}(c_x, \kappa_x)$ and $y_i \sim \mathrm{vMF}(c_y, \kappa_y)$.

\subsection{Representational Convergence Without Distributional Constraints}\label{sec:converge:gap1}
 
First, we assume that the encoders, $f_I$ and $f_T$, are sufficiently powerful, capable of realizing any representation distribution without any constraint.~\cref{thm:gap1} reveals that when the limit of $\mathcal{L}_{\mathrm{MCL}}$ attains its minimum, the representations of each paired sample $(x_i, y_i)$ converge to the \textbf{same point}, while the representations of all pairs converge to the \textbf{uniform distribution} in $\mathbb{S}^{h-1}$ (\cref{fig:opt:2}). 

\begin{theorem}\label{thm:gap1}
    Let $(X, Y)$ be an $N$-pair configuration, where $X = \left(x_1, \ldots, x_N\right) \in (\mathbb{S}^{h-1})^N$ are $iid$ samples from $\mu_x$ and $Y = \left(y_1, \ldots, y_N\right) \in (\mathbb{S}^{h-1})^N$ are $iid$ samples from $\mu_y$. Let $\nu = h/2-1$, it holds that:
    \begin{equation}\label{thm:gap1:eq}
        \begin{aligned}
            \lim_{N \to \infty} \mathcal{L}_{\mathrm{MCL}}
            - 2\log(N)
            &= \mathbb{E}_{x_i \sim \mu_x}\left[ -\frac{x_i \cdot y_i}{\tau}\right]
            +\mathbb{E}_{x_i \sim \mu_x}\left[\log \mathbb{E}_{y_i \sim \mu_y}\left[\exp \left(\frac{x_i \cdot y_i}{\tau}\right)\right]\right] \\
            &+ \mathbb{E}_{y_i \sim \mu_y}\left[ -\frac{x_i \cdot y_i}{\tau}\right] 
            +\mathbb{E}_{y_i \sim\mu_y}\left[\log \mathbb{E}_{x_j \sim \mu_x}\left[\exp \left(\frac{x_i \cdot y_i}{\tau}\right)\right]\right] \\
            &\ge -2 / \tau + 2 \log \left(\Gamma\left(\nu+1\right)(2 \tau)^{\nu} I_{\nu}\left(1 / \tau\right)\right)  ,
        \end{aligned}
    \end{equation}  
    where equality is attained if and only if there exists a configuration of $(X, Y)$ such that:  
    \begin{enumerate}[label={(A\arabic*)},labelindent=10pt,leftmargin=*,start=1]
        \item 
        $\forall i \in [N]$, $x_i=y_i$.
        \item 
        $\mu_x = \sigma_{h-1}$ and $\mu_y = \sigma_{h-1}$.
    \end{enumerate}   
\end{theorem}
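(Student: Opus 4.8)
The plan is to pass to the $N\to\infty$ limit by the law of large numbers, decompose the limiting objective into an alignment part and a uniformity part, bound each separately, and then read off the equality conditions. For the limit I would rewrite $\mathcal{L}_{\mathcal{X}\to\mathcal{Y}}(x_i;Y)= -x_i\cdot y_i/\tau+\log N+\log\big(\tfrac1N\sum_{j}e^{x_i\cdot y_j/\tau}\big)$ and similarly for $\mathcal{L}_{\mathcal{Y}\to\mathcal{X}}$. Since all inner products lie in $[-1,1]$ the summands are bounded in $[e^{-1/\tau},e^{1/\tau}]$, so for each fixed $x_i$ the inner average converges a.s. and in $L^1$ to $\mathbb{E}_{y'\sim\mu_y}[e^{x_i\cdot y'/\tau}]$, and averaging over $i$ and applying the LLN once more gives the stated four-term formula for $\lim_{N\to\infty}(\mathcal{L}_{\mathrm{MCL}}-2\log N)$. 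The two $-x_i\cdot y_i/\tau$ contributions coincide and equal $T:=\mathbb{E}[-x\cdot y/\tau]$ over the limiting law of a positive pair, while the other two are $U_1=\mathbb{E}_{x\sim\mu_x}[\log\mathbb{E}_{y'\sim\mu_y}e^{x\cdot y'/\tau}]$ and $U_2=\mathbb{E}_{y\sim\mu_y}[\log\mathbb{E}_{x'\sim\mu_x}e^{x'\cdot y/\tau}]$, so the limit equals $2T+U_1+U_2$.

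The alignment part is immediate: Cauchy--Schwarz gives $x\cdot y\le\|x\|\|y\|=1$, hence $T\ge-1/\tau$, with equality iff $x=y$ almost surely, which is condition (A1) and forces $\mu_x=\mu_y$. For the uniformity part I would first identify the target constant: from the $\mathrm{vMF}$ normalization $\int_{\mathbb{S}^{h-1}}D_h(\kappa)e^{\kappa c^{\top}z}\,dz=1$ together with $|\mathbb{S}^{h-1}|=2\pi^{\nu+1}/\Gamma(\nu+1)$, the choice $\kappa=1/\tau$ yields $\mathbb{E}_{z\sim\sigma_{h-1}}[e^{u\cdot z/\tau}]=\Gamma(\nu+1)(2\tau)^{\nu}I_{\nu}(1/\tau)=:C$ for every unit vector $u$; in particular $U_1=U_2=\log C$ when $\mu_x=\mu_y=\sigma_{h-1}$, so $2T+U_1+U_2=-2/\tau+2\log C$ there, attaining the bound at (A1)+(A2).

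It then remains to prove, for the case $\mu_x=\mu_y=:\mu$ that is enforced at equality, the log-partition energy inequality $\mathbb{E}_{x\sim\mu}\big[\log\mathbb{E}_{x'\sim\mu}e^{x\cdot x'/\tau}\big]\ge\log C$ with equality iff $\mu=\sigma_{h-1}$, and to rule out that the asymmetric case $\mu_x\ne\mu_y$ reaches the bound. I would attack the energy inequality by a variational argument: a first-variation computation shows $\sigma_{h-1}$ is stationary (the inner integral is constant there), and, writing $\mathcal K$ for the integral operator with kernel $e^{x\cdot x'/\tau}$ and expanding in spherical harmonics with eigenvalues $\lambda_k$ — all strictly positive since $e^{\langle\cdot,\cdot\rangle/\tau}$ is a strictly positive-definite kernel on $\mathbb{S}^{h-1}$ (Schoenberg), with $\lambda_0=C$ the Perron eigenvalue — the second variation at $\sigma_{h-1}$ along a mean-zero perturbation $\eta$ works out to $\sum_{k\ge1}\lambda_kC^{-2}(2C-\lambda_k)\|\widehat\eta_k\|^2$, which is strictly positive because $0<\lambda_k\le\lambda_0=C<2C$; hence $\sigma_{h-1}$ is a strict local minimizer. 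Globality I would obtain by specializing to the standing $\mathrm{vMF}(c,\kappa)$ assumption, where by rotational invariance the energy depends only on $\kappa$ and one checks, via monotonicity of ratios of modified Bessel functions, that it is nondecreasing in $\kappa$ and hence minimized at $\kappa=0$ (i.e. $\sigma_{h-1}$); an alternative would be a rotation/symmetrization argument. Combining these bounds and tracing the equality cases yields exactly (A1) and (A2).

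The step I expect to be the main obstacle is the global minimality of $\sigma_{h-1}$ for the log-partition energy: the functional is not convex in $\mu$ (essentially because $(a,b)\mapsto a\log b$ is a saddle), so the local second-variation computation does not settle it by itself. Closely related is the need to control alignment and uniformity jointly rather than separately — there are configurations with $U_1+U_2<2\log C$, e.g. antipodal point masses, for which the excess of $T$ over $-1/\tau$ must be shown to compensate exactly — which is why the cleanest path is to let (A1) pin $\mu_x=\mu_y$ first and then treat the remaining asymmetric configurations as strictly suboptimal. The other ingredients (the LLN interchange, the modified-Bessel identity for $C$, the first-variation check) are routine.
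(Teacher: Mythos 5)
Your route is essentially the paper's: the same rewriting $\mathcal{L}_{\mathcal{X}\to\mathcal{Y}}(x_i;Y)=-x_i\cdot y_i/\tau+\log N+\log\bigl(\tfrac1N\sum_j e^{x_i\cdot y_j/\tau}\bigr)$ and passage to the limit, the same Cauchy--Schwarz bound for the alignment term with equality forcing $x_i=y_i$ and hence $\mu_x=\mu_y$, the same Bessel identity for the constant $C=\Gamma(\nu+1)(2\tau)^{\nu}I_{\nu}(1/\tau)$, and the same first/second-variation argument for the log-partition functional (your coefficients $\lambda_k C^{-2}(2C-\lambda_k)$ are exactly the paper's $\lambda_\ell(2-\lambda_\ell)$ once the kernel is normalized so that the uniform measure is fixed, i.e.\ $\lambda_0=1$). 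Your remark that alignment and uniformity cannot be bounded separately for $\mu_x\neq\mu_y$ is apt; the paper takes the same way out you suggest, namely letting (A1) pin $\mu_x=\mu_y$ before minimizing the uniformity term. Two points of divergence are worth flagging. First, your limit step is looser than the paper's: because $x_i$ is random and the same samples $y_j$ appear in every inner average, pointwise LLN for fixed $x$ is not quite enough; the paper proves almost-sure convergence of $h_N(x)=\log\bigl(\tfrac1N\sum_j e^{x\cdot y_j/\tau}\bigr)$ \emph{uniformly} in $x\in\mathbb{S}^{h-1}$ via an $\eta$-net, Hoeffding and Borel--Cantelli, and only then applies the SLLN over $i$; you should do the same (or argue via an empirical-process bound). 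Second, and more substantively, your fallback for global minimality of $\sigma_{h-1}$ --- specializing to the vMF family and using Bessel-ratio monotonicity in $\kappa$ --- only yields minimality \emph{within} that family, which is not enough for the ``only if'' direction of (A2), since the theorem's equality condition quantifies over all admissible $\mu$. The paper closes this by working over all Borel probability measures: a global minimizer exists by weak compactness and lower semicontinuity of the functional, every minimizer satisfies the Euler--Lagrange equation $\log(T\rho)+T(\rho/T\rho)=\lambda$, and the only solution of that equation on the probability simplex is the constant density, so the local second-variation analysis upgrades to a global statement. Adopting that existence-plus-uniqueness argument (rather than the vMF restriction) is what your plan is missing to match the full strength of the stated theorem.
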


Here, $\sigma_{h-1}$ denotes the uniform probability measure on $\mathbb{S}^{h-1}$. The proof is provided in~\cref{sec_supp:uniform}. Under the assumption that $X$ and $Y$ are drawn from two $\mathrm{vMF}$ distributions,~\cref{coro:gap1} implies that when the limit of $\mathcal{L}_{\mathrm{MCL}}$ attains its minimum, the modality gap converges to \textbf{zero} ($\Delta_{\theta} \to 0$), and both $\kappa_x$ and $\kappa_y$ converge to \textbf{zero}. This result follows directly from~\cref{thm:gap1}.

\begin{corollary}\label{coro:gap1}
    Let $(X, Y)$ be an $N$-pair configuration, where $X = \left(x_1, \ldots, x_N\right) \in (\mathbb{S}^{h-1})^N$ are $iid$ samples from $\mathrm{vMF}(c_x, \kappa_x)$, and $Y = \left(y_1, \ldots, y_N\right) \in (\mathbb{S}^{h-1})^N$ are $iid$ samples from $\mathrm{vMF}(c_y, \kappa_y)$. $\lim_{N \to \infty} \mathcal{L}_{\mathrm{MCL}} - 2\log(N)$ attains its minimum if and only if the following conditions hold:  
    \begin{enumerate}[label={(A\arabic*)},labelindent=10pt,leftmargin=*,start=3]
        \item 
        $\forall i \in [N]$, $x_i=y_i$ ($\Rightarrow \Delta_\theta=\cos ^{-1}\left(c_x \cdot c_y\right) = 0)$.
        \item 
        $\kappa_x = \kappa_y = 0$.
    \end{enumerate} 
\end{corollary}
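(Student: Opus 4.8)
The plan is to derive~\cref{coro:gap1} directly from~\cref{thm:gap1} by re-expressing the equality conditions (A1)--(A2) in terms of the vMF parameters. \Cref{thm:gap1} already gives that $\lim_{N\to\infty}\mathcal{L}_{\mathrm{MCL}}-2\log(N)$ is bounded below by the stated constant, with equality if and only if there is a configuration satisfying (A1) $x_i=y_i$ for all $i$ and (A2) $\mu_x=\mu_y=\sigma_{h-1}$. Hence, once we restrict to $\mu_x=\mathrm{vMF}(c_x,\kappa_x)$ and $\mu_y=\mathrm{vMF}(c_y,\kappa_y)$, it suffices to prove that the conjunction (A1)$\wedge$(A2) is equivalent to (A3)$\wedge$(A4).

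First, (A1) and (A3) are the same statement $x_i=y_i$; the only thing to add is the parenthetical $\Delta_\theta=0$. This follows because $x_i=y_i$ for every $i$ forces $\mu_x=\frac1N\sum_i x_i=\frac1N\sum_i y_i=\mu_y$, hence $c_x=c_y$ and $\Delta_\theta=\cos^{-1}(c_x\cdot c_y)=\cos^{-1}(1)=0$ (and in the degenerate situation $\mu_x=\mu_y=0$, the two centers are equal by construction, so the angle is again $0$).

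The substantive step is the equivalence (A2) $\Leftrightarrow$ (A4): for each modality, $\mathrm{vMF}(c,\kappa)=\sigma_{h-1}$ if and only if $\kappa=0$. In the direction $\kappa>0\Rightarrow\mathrm{vMF}(c,\kappa)\neq\sigma_{h-1}$, I would use~\cref{def:vmf:eq1}: the density $f_h(z;c,\kappa)=D_h(\kappa)e^{\kappa c^\top z}$ is a strictly increasing function of $c^\top z$, and $c^\top z$ sweeps the whole interval $[-1,1]$ as $z$ ranges over $\mathbb{S}^{h-1}$, so the density is non-constant and the law is not uniform. Conversely, at $\kappa=0$ the density collapses to the constant $D_h(0)$; using the small-argument asymptotics $I_\nu(\kappa)\sim\frac{1}{\Gamma(\nu+1)}(\kappa/2)^\nu$ (immediate from~\cref{def:vmf:eq2}), one computes $D_h(0)=\lim_{\kappa\to0}\frac{\kappa^\nu}{(2\pi)^{\nu+1}I_\nu(\kappa)}=\frac{\Gamma(\nu+1)2^\nu}{(2\pi)^{\nu+1}}=\frac{\Gamma(h/2)}{2\pi^{h/2}}$, which is exactly the reciprocal of the surface area of $\mathbb{S}^{h-1}$, i.e., the uniform density. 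Applying this to $X$ turns $\mu_x=\sigma_{h-1}$ into $\kappa_x=0$ and back, and likewise for $Y$. Chaining the equivalences then gives: the limit attains its minimum $\Leftrightarrow$ (A1)$\wedge$(A2) $\Leftrightarrow$ (A3)$\wedge$(A4), as claimed.

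I expect the only non-routine ingredient to be the vMF-versus-uniform equivalence above; the rest is bookkeeping layered on~\cref{thm:gap1}. The one point deserving a little care is the meaning of $c_x,c_y$ (and thus $\Delta_\theta$) at the minimizer, where $\mu_x=\mu_y=0$ and the centers are not defined by normalization: there one should either adopt the convention that $c_x=c_y$ whenever the configuration forces $X$ and $Y$ to have the same law, or read $\Delta_\theta=0$ simply as the assertion that the two modalities occupy the same region of $\mathbb{S}^{h-1}$ --- which is precisely the content of (A3).
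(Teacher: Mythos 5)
Your proposal is correct and follows the same route as the paper: the paper treats~\cref{coro:gap1} as an immediate consequence of~\cref{thm:gap1}, specializing the equality conditions (A1)--(A2) to the vMF family, which is exactly what you do. Your added detail --- that $\mathrm{vMF}(c,\kappa)=\sigma_{h-1}$ if and only if $\kappa=0$ (via the non-constancy of $e^{\kappa c^{\top}z}$ for $\kappa>0$ and the computation $D_h(0)=\Gamma(h/2)/(2\pi^{h/2})$), plus the remark that $x_i=y_i$ forces $\mu_x=\mu_y$ and hence $\Delta_\theta=0$ --- is precisely the bookkeeping the paper leaves implicit, so there is no substantive divergence.
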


\begin{convergence}
    Without any distributional constraints, $X$ and $Y$ converge to a paired uniform distribution on $\mathbb{S}^{h-1}$, and the modality gap converges to zero.
\end{convergence}

\subsection{Representational Convergence Under the Cone Constraint}\label{sec:converge:gap2}

However, in practice, sufficiently powerful encoders are not available.~\cref{fig:limit:1} reveals that intra-modal similarities between two modalities are larger than inter-model similarities.~\cref{fig:limit:2} further shows that ($X, Y$) separate into two clusters. Both indicate that $X$ and $Y$ lie in two hypercones on $\mathbb{S}^{h-1}$.

In this subsection, we assume that the encoders, $f_I$ and $f_T$, are powerful to the extent that $(X, Y)$ are embedded in two hypercones spanning all dimensions of $\mathbb{S}^{h-1}$, i.e., $(X, Y)$ are subject to the \emph{cone constraint}. In this case, $\kappa_x > 0$ and $\kappa_y  
> 0$. Since the modality gap depends solely on the angle between the two center vectors, we focus on the configuration of $(c_x, c_y)$ and their corresponding loss terms: $\mathcal{L}_{MCL}^c = \mathcal{L}_{\mathcal{X} \rightarrow \mathcal{Y}}(c_x; Y) + \mathcal{L}_{\mathcal{Y} \rightarrow \mathcal{X}}(c_y; X)$. We first define a convergence function $\mathcal{J}$.

\begin{definition}\label{def:function_j1}
    $\forall\kappa, \nu, \tau>0$, a function $\mathcal{J}(\cdot;\kappa, \nu): [-1, 1] \rightarrow \mathbb{R}$ is defined as:
    \begin{equation} \label{def:function_j1:eq1}
        \begin{aligned}
            \mathcal{J}(w;\kappa, \nu)
            &= -\frac{w}{\tau}
            + \log\left(\frac{I_{\nu}\left(M_{\kappa}\left(w\right)\right)}{  M_{\kappa}\left(w\right)^{\nu}} \right) - \log\left(\frac{I_{\nu}\left(\kappa\right)}{ \kappa^{\nu}} \right), 
        \end{aligned}        
    \end{equation}
    where the function $M_{\kappa}\left(\cdot\right): [-1, 1] \rightarrow \mathbb{R}^{+}_0$ is defined as:
    \begin{equation} \label{def:function_j1:eq2}
        \begin{aligned}
            M_{\kappa}\left(w\right) 
            = \sqrt{\kappa^2+\frac{2 \kappa w}{\tau}+\frac{1}{\tau^2}} .
        \end{aligned}        
    \end{equation} 
\end{definition}

\vspace{-2mm}
Then,~\cref{thm:gap2} shows that when the limit of $\mathcal{L}_{\mathrm{MCL}}^c$ attains its minimum, the modality gap converges to \textbf{zero} ($\Delta_{\theta} \to 0$) (\cref{fig:opt:3}).

\begin{theorem}\label{thm:gap2}
    Let $(X, Y)$ be an $N$-pair configuration, where $X = \left(x_1, \ldots, x_N\right) \in (\mathbb{S}^{h-1})^N$ are $iid$ samples from $\mu_x=\mathrm{vMF}(c_x, \kappa_x)$, and $Y = \left(y_1, \ldots, y_N\right) \in (\mathbb{S}^{h-1})^N$ are $iid$ samples from $\mu_y=\mathrm{vMF}(c_y, \kappa_y)$. Let $\nu = h/2 - 1$. Suppose there exists an index $i=c$ such that $x_c = c_x$, $y_c = c_y$. Denote $\Delta_{\theta} = \cos^{-1}(c_x \cdot c_y)$. For any fixed $\kappa_x, \kappa_y > 0$, it holds that:
    \begin{equation}
        \begin{aligned}
            \lim_{N \to \infty} \mathcal{L}_{\mathrm{MCL}}^c
            - 2\log(N) 
            &= 
            \mathcal{J}(\cos\left(\Delta_{\theta}\right); \kappa_y, \nu) + \mathcal{J}(\cos\left(\Delta_{\theta}\right); \kappa_x, \nu)
            \\&
            \ge \mathcal{J}(1; \kappa_y, \nu) + \mathcal{J}(1; \kappa_x, \nu) ,
        \end{aligned}
    \end{equation}
    where equality is attained if and only if there exists a configuration of $(X, Y)$ such that:
    \begin{enumerate}[label={(A\arabic*)},labelindent=10pt,leftmargin=*,start=5]
        \item 
        $\Delta_\theta=\cos ^{-1}\left(c_x \cdot c_y\right) = 0$.
    \end{enumerate}  
\end{theorem}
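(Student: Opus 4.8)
The plan is to prove the statement in two parts: first evaluate $\lim_{N\to\infty}\mathcal{L}_{\mathrm{MCL}}^c-2\log(N)$ in closed form to obtain the two $\mathcal{J}$-terms, and then establish the pointwise inequality $\mathcal{J}(w;\kappa,\nu)\ge\mathcal{J}(1;\kappa,\nu)$ for every fixed $\kappa>0$ and $w\in[-1,1]$, with equality only at $w=1$. The unifying idea behind both parts is to recognize $\mathcal{J}(\,\cdot\,;\kappa,\nu)$ as the cumulant generating function of a $\mathrm{vMF}$ vector restricted to the sphere of radius $1/\tau$.

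For the first part, fix a configuration with $c_x\cdot c_y=w=\cos(\Delta_\theta)$ and with the special index $i=c$ satisfying $x_c=c_x$, $y_c=c_y$. Expanding the $i=c$ term gives $\mathcal{L}_{\mathcal{X}\to\mathcal{Y}}(c_x;Y)=-w/\tau+\log\sum_{j=1}^N\exp(c_x\cdot y_j/\tau)$. Since each summand lies in $[e^{-1/\tau},e^{1/\tau}]$ and $y_1,\dots,y_N$ are $iid$ $\mathrm{vMF}(c_y,\kappa_y)$, the strong law yields $\tfrac1N\sum_j\exp(c_x\cdot y_j/\tau)\to\mathbb{E}_{z\sim\mathrm{vMF}(c_y,\kappa_y)}[\exp(c_x\cdot z/\tau)]$ almost surely. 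I would compute this expectation from the normalization $\int_{\mathbb{S}^{h-1}}e^{b^\top z}\,d\sigma(z)=1/D_h(\|b\|)$ ($\sigma$ the surface measure), which gives $\mathbb{E}_{z\sim\mathrm{vMF}(q,\kappa)}[e^{a^\top z}]=D_h(\kappa)/D_h(\|\kappa q+a\|)$; taking $q=c_y$, $a=c_x/\tau$ and using $\|\kappa_y c_y+c_x/\tau\|^2=\kappa_y^2+2\kappa_y w/\tau+1/\tau^2=M_{\kappa_y}(w)^2$ together with $D_h(s)=s^\nu/((2\pi)^{\nu+1}I_\nu(s))$ collapses $-w/\tau+\log(\,\cdot\,)$ to exactly $\mathcal{J}(w;\kappa_y,\nu)$. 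Interchanging $x$ and $y$ gives $\mathcal{L}_{\mathcal{Y}\to\mathcal{X}}(c_y;X)-\log(N)\to\mathcal{J}(w;\kappa_x,\nu)$, and adding the two proves the displayed equality.

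The second part is the crux. Fix a unit vector $q$ and $\kappa>0$ and set $F(v):=\log\mathbb{E}_{z\sim\mathrm{vMF}(q,\kappa)}[e^{v^\top(z-q)}]$; from the formula above, $F(v)=\log D_h(\kappa)-v^\top q-\log D_h(\|v+\kappa q\|)$, so for any unit vector $u$ with $u^\top q=w$ we get $F(u/\tau)=\log D_h(\kappa)-w/\tau-\log D_h(M_\kappa(w))=\mathcal{J}(w;\kappa,\nu)$. Now $F$ is finite and convex on $\mathbb{R}^h$, being a cumulant generating function of a bounded random vector, so its supporting hyperplane at $v_0=q/\tau$ gives $\mathcal{J}(w;\kappa,\nu)=F(u/\tau)\ge F(q/\tau)+\tfrac1\tau\,\nabla F(q/\tau)^\top(u-q)=\mathcal{J}(1;\kappa,\nu)+\tfrac1\tau\,\nabla F(q/\tau)^\top(u-q)$. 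It remains to sign the gradient: the exponential tilt $\propto e^{(q/\tau)^\top(z-q)}\,\mathrm{vMF}(q,\kappa)$ equals $\mathrm{vMF}(q,\kappa+1/\tau)$, whose mean is $R_\nu(\kappa+1/\tau)\,q$ with $R_\nu(s):=I_{\nu+1}(s)/I_\nu(s)$, hence $\nabla F(q/\tau)=\mathbb{E}_{z\sim\mathrm{vMF}(q,\kappa+1/\tau)}[z]-q=(R_\nu(\kappa+1/\tau)-1)\,q$. Because a $\mathrm{vMF}$ vector of finite concentration is not almost surely constant, $R_\nu(\kappa+1/\tau)=\|\mathbb{E}[z]\|<1$, so $\lambda:=R_\nu(\kappa+1/\tau)-1<0$ and $\nabla F(q/\tau)^\top(u-q)=\lambda(u^\top q-1)=\lambda(w-1)\ge0$, strictly positive when $w<1$. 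Thus $\mathcal{J}(w;\kappa,\nu)\ge\mathcal{J}(1;\kappa,\nu)$ with equality iff $w=1$. Applying this with $(q,\kappa)=(c_y,\kappa_y)$ and $(q,\kappa)=(c_x,\kappa_x)$ and summing yields the stated lower bound; equality in both summands holds iff $\cos(\Delta_\theta)=1$, i.e.\ $\Delta_\theta=0$, which is condition (A5), and such a configuration exists (take $c_x=c_y$), so the minimum is attained.

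I expect the main obstacle to be exactly the second part. The naive route is to differentiate, $\tfrac{d}{dw}\mathcal{J}(w;\kappa,\nu)=-\tfrac1\tau+\tfrac{\kappa}{\tau}\,R_\nu(M_\kappa(w))/M_\kappa(w)$, and to show this is negative on $[-1,1]$; but that demands a uniform inequality $\kappa R_\nu(M_\kappa(w))\le M_\kappa(w)$ in all of $\kappa,\tau,w$, and in fact $\mathcal{J}(\,\cdot\,;\kappa,\nu)$ need not be monotone (for small $\tau$ and large $\kappa$, $M_\kappa(w)$ dips below $\kappa$ near $w=-1$ and $\mathcal{J}$ is increasing there), so $w=1$ is a global minimizer but not one reached downhill from every point, and sharp Amos-type bounds on the Bessel ratio $R_\nu$ become delicate. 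The convexity/tilting argument sidesteps all of this, needing only convexity of the cumulant generating function and the elementary bound $R_\nu<1$. The remaining points are routine: boundedness of the summands for the strong law, and differentiation under the integral for $\nabla F$, both immediate because $z-q$ is bounded.
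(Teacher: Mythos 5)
Your proposal is correct, and its two halves relate to the paper differently. The first half (computing the limit) is essentially the paper's own route: expand the $i=c$ term, apply the strong law to $\tfrac1N\sum_j\exp(c_x\cdot y_j/\tau)$, and evaluate the vMF moment generating function via the normalizing constant, so that $\|\kappa_y c_y + c_x/\tau\| = M_{\kappa_y}(\cos\Delta_\theta)$ yields exactly $\mathcal{J}(\cos\Delta_\theta;\kappa_y,\nu)$; this mirrors the paper's Theorem~\ref{thm_supp:transform2}. The second half is where you genuinely diverge. The paper proves the bound $\mathcal{J}(w;\kappa,\nu)\ge\mathcal{J}(1;\kappa,\nu)$ by a scalar analysis in $w$: Lemma~\ref{lemma:gradofj1} shows $\mathcal{J}$ is strictly decreasing on $[0,1]$ and concave on $(-1,1]$ (concavity resting on a Tur\'an-type inequality for $I_\nu$ via Lemma~\ref{lemma:gradofhm}), so the minimum sits at an endpoint, and then Lemma~\ref{lemma:minofj} (through the log-Bessel inequality of Lemma~\ref{lemma:logI}) rules out $w=-1$. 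You instead lift the problem to the vector argument: you identify $\mathcal{J}(w;\kappa,\nu)=F(u/\tau)$ with $F$ the cumulant generating function of $z-q$ under $\mathrm{vMF}(q,\kappa)$, invoke convexity of $F$, and use the supporting hyperplane at $q/\tau$ together with the tilting identity (the tilted law is $\mathrm{vMF}(q,\kappa+1/\tau)$, mean $R_\nu(\kappa+1/\tau)\,q$ with $R_\nu<1$) to get $\mathcal{J}(w)\ge\mathcal{J}(1)+\tfrac{|\lambda|}{\tau}(1-w)$, strict for $w<1$. This buys you the equality characterization (A5) in one stroke and entirely avoids the concavity computation, the Tur\'an inequality, and the separate endpoint-comparison lemma; it even gives a quantitative linear gap in $1-\cos\Delta_\theta$ that the paper's argument does not state. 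The paper's route, in exchange, establishes structural facts about $\mathcal{J}$ (monotonicity on $[0,1]$, concavity) that it reuses later for Theorems~\ref{thm:gap3} and~\ref{thm:match1}, so your shortcut proves this theorem but would not by itself replace those lemmas downstream. Your remark that $\mathcal{J}$ need not be monotone on all of $[-1,1]$ is consistent with the paper, which only claims monotonicity on $[0,1]$ and handles $w<0$ through concavity plus the endpoint comparison.
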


The proof is provided in~\cref{sec_supp:gap1}. Since the distributions of $X$ and $Y$ are symmetric, non-center pairs $(x_i, y_i)_{i \neq c}$ do not affect the configuration of $(c_x, c_y)$, as confirmed by~\cref{thm:match1}.
\begin{convergence}
    Under the cone constraint, the modality gap still converges to zero.
\end{convergence}

\begin{figure}[t]   
    \centering
        \begin{subfigure}[t]{0.24\textwidth} 
            \includegraphics[width=\linewidth]{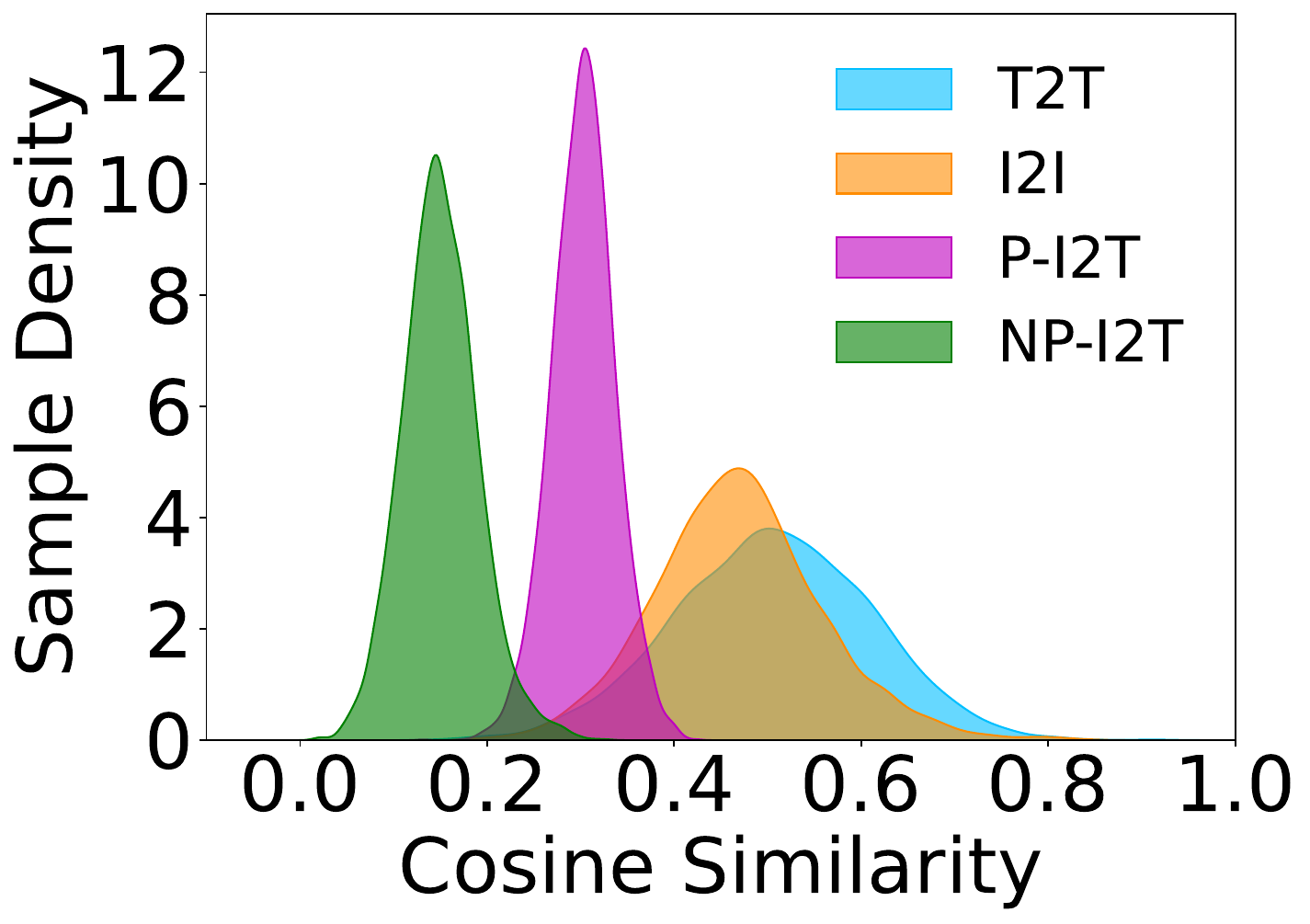}
            \caption{} 
            \label{fig:limit:1}
        \end{subfigure} 
        \hfill
        \begin{subfigure}[t]{0.24\textwidth}
            \includegraphics[width=\linewidth]{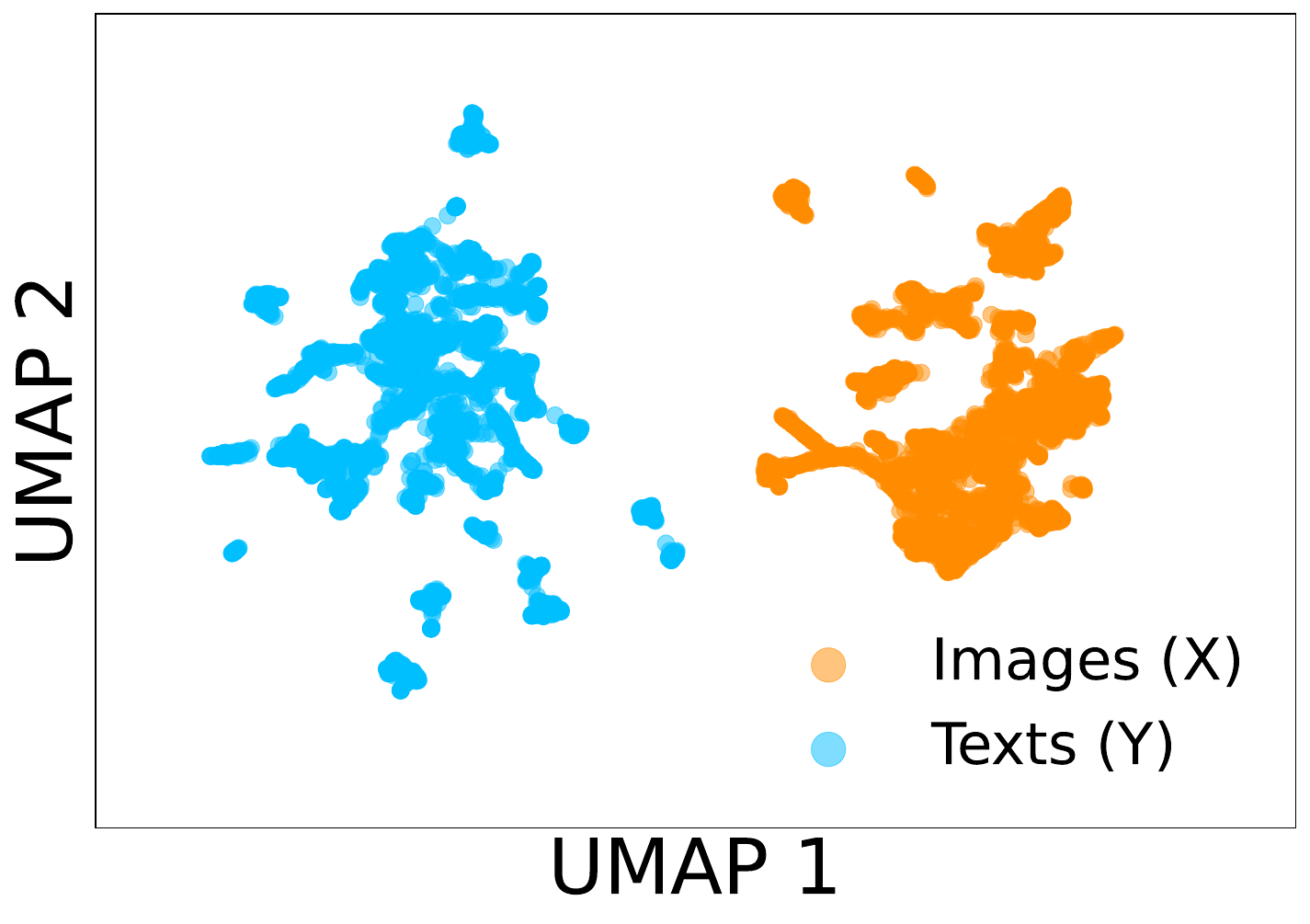}
            \caption{} 
            \label{fig:limit:2}
        \end{subfigure} 
        \hfill
        \begin{subfigure}[t]{0.24\textwidth}
            \includegraphics[width=\linewidth]{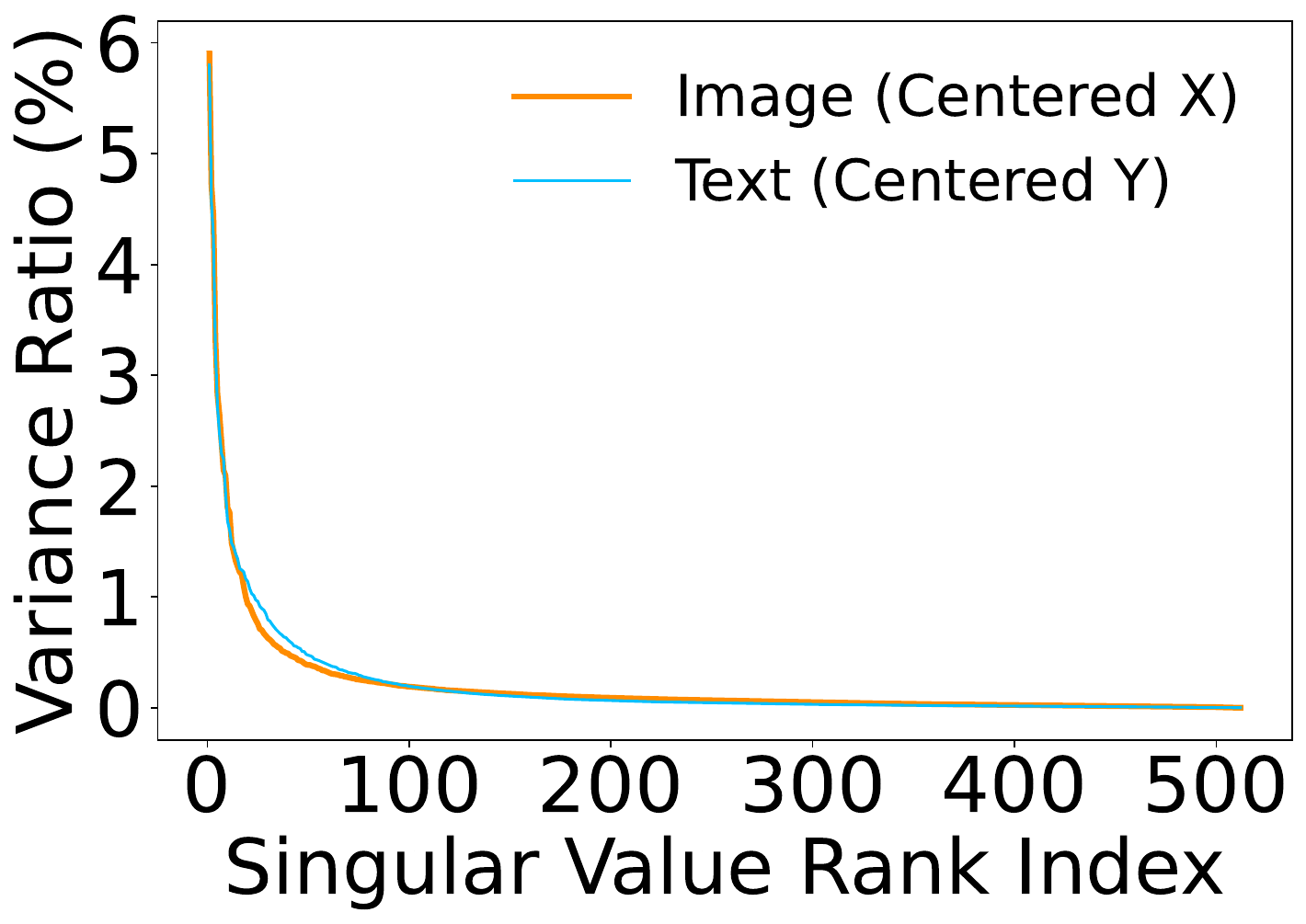}
            \caption{} 
            \label{fig:limit:3}
        \end{subfigure} 
        \hfill
        \begin{subfigure}[t]{0.24\textwidth}
            \includegraphics[width=\linewidth]{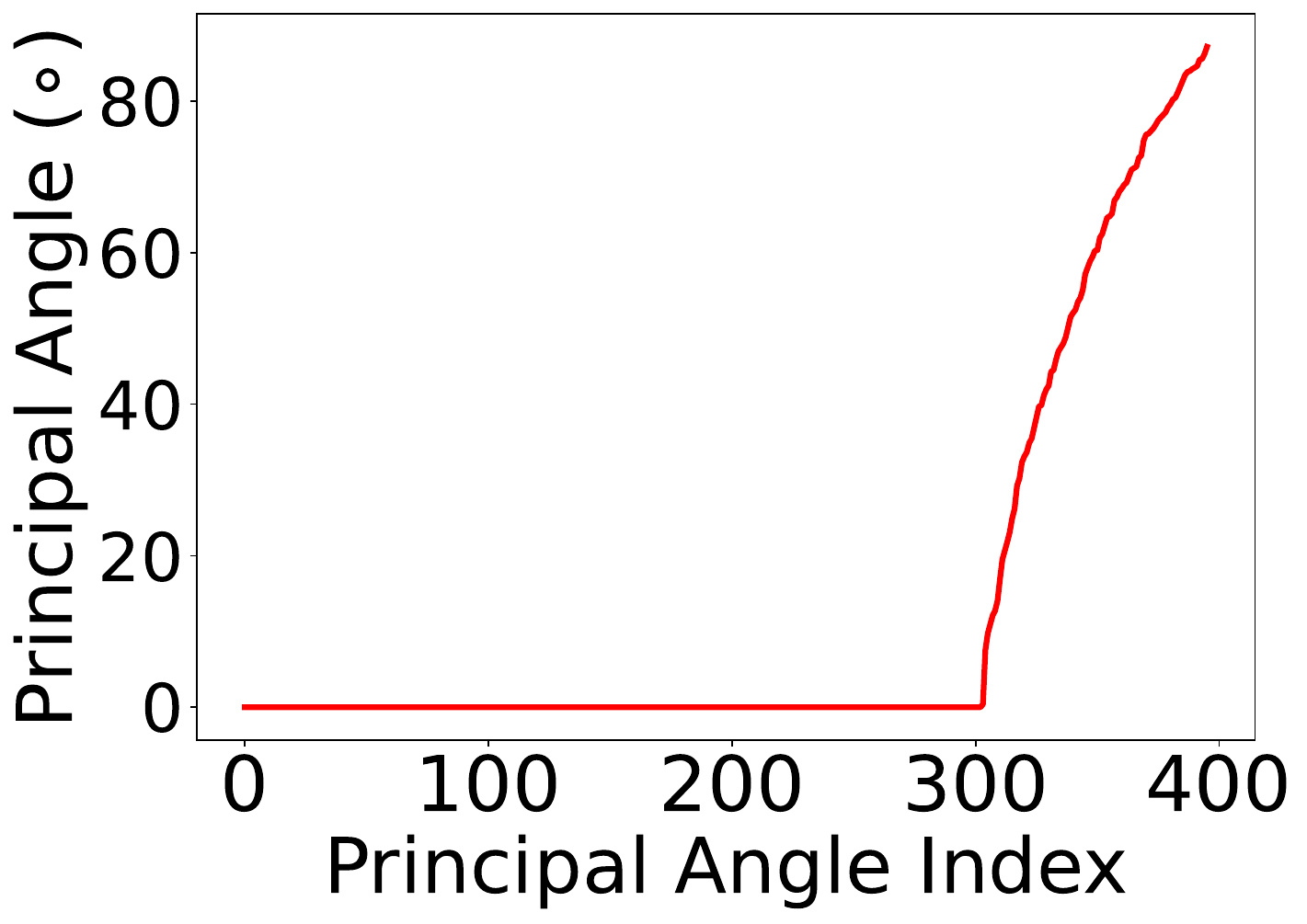}
            \caption{} 
            \label{fig:limit:4}
        \end{subfigure} 
    \caption{Distributional constraints. CLIP ViT-B/32 representations of the MSCOCO validation set. \textbf{(a)}: Density plot of cosine similarities between representations of image–image (I2I), text–text (T2T), paired image–text (P I2T), and unpaired image–text (NP-I2T). \textbf{(b)}: UMAP plot of $X$ and $Y$. \textbf{(c)}: Explained variance ratio of singular values of $X-\mu_X$ and $Y-\mu_Y$. \textbf{(d)}: Values of principal angles.}
    \label{fig:limit}
    \vspace{-2mm}
\end{figure}

\subsection{Representational Convergence Under the Subspaces Constraint} \label{sec:collaps:gap3}

In addition to the cone constraint, we observe that $X$ and $Y$ collapse into two overlapping subspaces of $\mathbb{S}^{h-1}$. To demonstrate this, we plot the singular values $\sigma_{i}$ of the centered $X$ and the centered $Y$ in~\cref{fig:limit:3}. Zero-valued $\sigma$s confirm the dimensional collapse of $X$ and $Y$.~\cref{fig:limit:4} shows the principal angles $\gamma_i$ between the subspaces where $X$ and $Y$ collapse. Zero-valued $\gamma$s indicate that the two 

subspaces share overlapping dimensions. Detailed explanations are provided in~\cref{sec_supp:method:collapse} and~\cref{sec_supp:method:sharedspace}.

In this subsection, we assume that the encoders, $f_I$ and $f_T$, embed $(X, Y)$ into two  partially overlapping subspaces of $\mathbb{S}^{h-1}$ (\cref{fig:opt:4}), i.e., $(X, Y)$ are subject to the \emph{subspace constraint}. To simplify the analysis, we require that the two subspaces are hyperplanes, as described below:

\begin{definition}\label{def:subspace}
    Let $\mathbb{A}$ and $\mathbb{B}$ be two distinct $(h-1)$-dimensional linear subspaces (i.e., hyperplanes through the origin) with normal vectors $n_A$ and $n_B$, projection matrices $P_A$ and $P_B$. Denote $\mathbb{C} = \mathbb{A} \cap \mathbb{B}$, with $P_C$ as its projection matrix. Define $ \phi = \cos^{-1}\left( \frac{n_A \cdot n_B}{\left\|n_A\right\| \cdot\|n_B\|} \right)$ as the angle between $\mathbb{A}$ and $\mathbb{B}$, restricted to $0 <\phi_{\mathrm{min}} \leq \phi < \frac{\pi}{2}$. Then, $\mathbb{S}_X$ and $\mathbb{S}_Y$ can be represented as:
    \begin{equation}\label{def:sub:eq1}
        \begin{aligned}
            & \mathbb{S}_X = \mathbb{S}^{h-1} \cap \mathbb{A}=\left\{x \in \mathbb{R}^{h} : \|x\|=1, n_A \cdot x=0\right\} \cong \mathbb{S}^{h-2} \in \mathbb{S}^{h-1} ,\\
            & \mathbb{S}_Y = \mathbb{S}^{h-1} \cap \mathbb{B}=\left\{y \in \mathbb{R}^{h} :  \|y\|=1, n_B \cdot y=0\right\} \cong \mathbb{S}^{h-2} \in \mathbb{S}^{h-1}.
        \end{aligned}
    \end{equation}
\end{definition}

$\mathbb{C}$ is an $(h-2)$ dimensional linear subspace~\citep{linearalgebra}. We now define a convergence function $\Tilde{\mathcal{J}}$. Note that function $\mathcal{J}$ in~\cref{def:function_j1} is a special case of $\Tilde{\mathcal{J}}$ with $\mathcal{J}(w;\kappa, \nu) = \Tilde{\mathcal{J}}\left(w, w, 1;\kappa, \nu\right) $. 

\begin{definition}\label{def:function_j2}
    $\forall\kappa, \nu, \tau>0$, $\Tilde{\mathcal{J}}(\cdot, \cdot, \cdot;\kappa, \nu): [-1, 1] \times [-1, 1] \times [0, 1] \rightarrow \mathbb{R}$ is defined as:
    \begin{equation} \label{def:function_j2:eq1}
        \begin{aligned}
            \Tilde{\mathcal{J}}\left(w_1, w_2, t;\kappa, \nu\right) 
            &= -\frac{w_1}{\tau}
            + \log\left(\frac{I_{\nu}\left(\Tilde{M}_{\kappa}(w_2, t)\right)}{\Tilde{M}_{\kappa}(w_2, t)^{\nu}} \right) - \log\left(\frac{I_{\nu}\left(\kappa\right)}{ \kappa^{\nu}} \right), 
        \end{aligned}        
    \end{equation}
    where the function $\Tilde{M}_{\kappa}(\cdot, \cdot): [-1, 1] \times [0, 1] \rightarrow \mathbb{R}^{+}_0$ is defined as:
    \begin{equation} \label{def:function_j2:eq2}
        \begin{aligned}
            \Tilde{M}_{\kappa}\left(w, t\right) 
            = \sqrt{\kappa^2+\frac{2 \kappa w}{\tau}+\frac{t^2}{\tau^2}} .
        \end{aligned}        
    \end{equation} 
\end{definition}

\cref{thm:gap3} shows that when the limit of $\mathcal{L}_{\mathrm{MCL}}^c$ attains its minimum, $c_x,c_y$ are orthogonal to $\mathbb{C}$, and the modality gap converges to the \textbf{smallest angle between $\mathbb{A}$ and $\mathbb{B}$} ($\Delta_{\theta} \to \phi_{\mathrm{min}}$) (\cref{fig:opt:5}).

\begin{theorem}\label{thm:gap3}
     Let $(X, Y)$ be an $N$-pair configuration, where $X = \left(x_1, \ldots, x_N\right) \in (\mathbb{S}_X \setminus \mathbb{C})^N$ are $iid$ samples from $\mu_x=\mathrm{vMF}(c_x, \kappa_x)$, and $Y = \left(y_1, \ldots, y_N\right) \in (\mathbb{S}_Y  \setminus \mathbb{C} )^N$ are $iid$ samples from $\mu_y=\mathrm{vMF}(c_y, \kappa_y)$. Let $\Tilde{\nu} = (h-1)/2 - 1$. Suppose there exists an index $i=c$ such that $x_c = c_x$, $y_c = c_y$. Denote $\Delta_{\theta} = \cos^{-1}(c_x \cdot c_y)$ and assume that $c_x, c_y \notin \mathbb{C}$ with $c_x \cdot c_y > 0$. For any fixed $\kappa_x, \kappa_y > 0$, it holds that:
    \begin{equation}\label{thm:gap3:eq1}
        \begin{aligned}
            &\lim_{N \to \infty} \mathcal{L}_{\mathrm{MCL}}^c
            - 2\log(N) 
            \\&
            = \Tilde{\mathcal{J}}(\cos\left(\Delta_{\theta}\right), \cos\left(\Delta_{\theta}\right),\| P_B c_x \|;\kappa_y, \Tilde{\nu}) 
            + \Tilde{\mathcal{J}}(\cos\left(\Delta_{\theta}\right), \cos\left(\Delta_{\theta}\right),\| P_A c_y \|;\kappa_x, \Tilde{\nu}) 
            \\
            &\ge \Tilde{\mathcal{J}}(\cos\left(\phi_{\mathrm{min}}\right), \cos\left(\phi_{\mathrm{min}}\right), \cos\left(\phi_{\mathrm{min}}\right); \kappa_y, \Tilde{\nu}) 
            + \Tilde{\mathcal{J}}(\cos\left(\phi_{\mathrm{min}}\right), \cos\left(\phi_{\mathrm{min}}\right), \cos\left(\phi_{\mathrm{min}}\right); \kappa_x, \Tilde{\nu}),
        \end{aligned}
    \end{equation}
    \vspace{-2mm}
    
    where equality is attained if and only if there exists a configuration of $(X, Y)$ such that:
    \begin{enumerate}[label={(A\arabic*)},labelindent=10pt,leftmargin=*,start=6]
        \item 
        $c_x \perp \mathbb{C}$ and $c_y \perp \mathbb{C}$ ($\Rightarrow \Delta_\theta = \phi$ ).
        \item 
        $\Delta_\theta=\cos ^{-1}\left(c_x \cdot c_y\right) = \phi_{\mathrm{min}}$.
    \end{enumerate}  
\end{theorem}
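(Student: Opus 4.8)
My plan is to split the proof into two parts: (i) establishing the closed form of $\lim_{N\to\infty}\mathcal{L}^c_{\mathrm{MCL}}-2\log(N)$ (the equality in~\eqref{thm:gap3:eq1}), which I expect to be routine, and (ii) minimizing that expression over admissible center pairs $(c_x,c_y)$ to obtain the inequality and its equality conditions, which is where the real work lies.

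For part (i), I would use $x_c=c_x$, $y_c=c_y$ to write $\mathcal{L}_{\mathcal{X}\to\mathcal{Y}}(c_x;Y)=-\tfrac{c_x\cdot c_y}{\tau}+\log\sum_{j=1}^N\exp(c_x\cdot y_j/\tau)$. Since $y_1,\dots,y_N$ are i.i.d.\ $\mathrm{vMF}(c_y,\kappa_y)$ on $\mathbb{S}_Y$, the strong law gives $\tfrac1N\sum_j\exp(c_x\cdot y_j/\tau)\to\mathbb{E}_{y\sim\mathrm{vMF}(c_y,\kappa_y)}[\exp(c_x\cdot y/\tau)]$, so $\mathcal{L}_{\mathcal{X}\to\mathcal{Y}}(c_x;Y)-\log(N)$ converges to $-\tfrac{c_x\cdot c_y}{\tau}+\log\mathbb{E}_y[\exp(c_x\cdot y/\tau)]$. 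To evaluate the expectation I would note that $\mathbb{S}_Y$ is the unit sphere of the $(h-1)$-dimensional space $\mathbb{B}$ (so vMF densities on it carry the order $\Tilde{\nu}=(h-1)/2-1$) and that $c_x\cdot y=(P_Bc_x)\cdot y$ for every $y\in\mathbb{B}$; the vMF normalization identity $\int_{\mathbb{S}_Y}\exp(v\cdot y)\,d\sigma(y)=1/D_{h-1}(\|v\|)$ for $v\in\mathbb{B}$ then yields $\mathbb{E}_y[\exp((P_Bc_x/\tau)\cdot y)]=D_{h-1}(\kappa_y)/D_{h-1}(\|\kappa_yc_y+P_Bc_x/\tau\|)$. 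Expanding the norm and using $c_y\cdot P_Bc_x=c_x\cdot c_y=\cos(\Delta_\theta)$ (because $P_Bc_y=c_y$) identifies $\|\kappa_yc_y+P_Bc_x/\tau\|=\Tilde{M}_{\kappa_y}(\cos(\Delta_\theta),\|P_Bc_x\|)$, and substituting the explicit form of $D_{h-1}$ collapses everything to $\Tilde{\mathcal{J}}(\cos(\Delta_\theta),\cos(\Delta_\theta),\|P_Bc_x\|;\kappa_y,\Tilde{\nu})$. The symmetric computation for $\mathcal{L}_{\mathcal{Y}\to\mathcal{X}}(c_y;X)$ gives the second term, and summing proves the equality.

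For part (ii), I would first unpack the geometry of Definition~\ref{def:subspace}. Let $u_A$, $u_B$ be unit vectors spanning the lines $\mathbb{A}\cap\mathbb{C}^\perp$ and $\mathbb{B}\cap\mathbb{C}^\perp$; since $n_A,n_B,u_A,u_B$ all lie in the $2$-plane $\mathbb{C}^\perp$ with $u_A\perp n_A$ and $u_B\perp n_B$, the angle between $u_A$ and $u_B$ equals $\phi$, and we may orient them so $u_A\cdot u_B=\cos\phi>0$. Writing $c_x=a+\alpha u_A$ with $a\in\mathbb{C}$ and $\|a\|^2+\alpha^2=1$, and $c_y=b+\beta u_B$ analogously, one checks $\|P_Bc_x\|^2=1-\alpha^2\sin^2\phi$, $\|P_Ac_y\|^2=1-\beta^2\sin^2\phi$, and $\cos(\Delta_\theta)=a\cdot b+\alpha\beta\cos\phi$; in particular $c_x\perp\mathbb{C}$ iff $|\alpha|=1$, in which case $\Delta_\theta=\phi$ and $\|P_Bc_x\|=\cos\phi$, and likewise for $c_y$. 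Next I would assemble the analytic toolkit: $r\mapsto\log\bigl(I_{\Tilde{\nu}}(r)/r^{\Tilde{\nu}}\bigr)$ is strictly increasing on $(0,\infty)$ with derivative $A_{\Tilde{\nu}}(r):=I_{\Tilde{\nu}+1}(r)/I_{\Tilde{\nu}}(r)\in(0,1)$; $\Tilde{M}_\kappa(w,t)$ is increasing in $w$ and in $t\ge0$; hence $w\mapsto\Tilde{\mathcal{J}}(w,w,t;\kappa,\Tilde{\nu})$ is strictly increasing in $t$ and, for $w>0$, strictly decreasing in $w$, since its derivative $-\tfrac1\tau\bigl(1-\tfrac{\kappa}{\Tilde{M}_\kappa(w,t)}A_{\Tilde{\nu}}(\Tilde{M}_\kappa(w,t))\bigr)$ is negative once $\kappa/\Tilde{M}_\kappa<1$ (automatic when $w>0$) and $A_{\Tilde{\nu}}<1$. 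From the decreasing-in-$w$ property, for fixed $(|\alpha|,|\beta|)$ the objective is minimized by taking $a\parallel b$ and $\alpha\beta\ge0$ so that $\cos(\Delta_\theta)$ is as large as allowed; reparametrizing $c_x,c_y$ by the angles $\theta_x,\theta_y\in(0,\pi/2]$ they make with $\mathbb{C}$ turns the problem into minimizing $\Tilde{\mathcal{J}}(\bar w,\bar w,\sqrt{1-\sin^2\theta_x\sin^2\phi};\kappa_y,\Tilde{\nu})+\Tilde{\mathcal{J}}(\bar w,\bar w,\sqrt{1-\sin^2\theta_y\sin^2\phi};\kappa_x,\Tilde{\nu})$ with $\bar w=\cos\theta_x\cos\theta_y+\sin\theta_x\sin\theta_y\cos\phi$, whose candidate minimizer is $\theta_x=\theta_y=\pi/2$, i.e.\ $c_x,c_y\perp\mathbb{C}$; there $\bar w=\cos\phi$ and both third arguments equal $\cos\phi$, so the value is exactly the right-hand side of~\eqref{thm:gap3:eq1}.

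The main obstacle is this last minimization. Raising $\theta_x$ simultaneously lowers $\cos(\Delta_\theta)$ — which, via the decreasing-in-$w$ property, raises the objective — and lowers $\|P_Bc_x\|$ — which, via the increasing-in-$t$ property, lowers it — so monotonicity in $\theta_x$ is not automatic and has to be extracted from a quantitative comparison of these two competing effects. After differentiating, the sign reduces (up to the $\tau$-scaling) to a Bessel-ratio inequality of the shape $2\Tilde{M}_\kappa\ge A_{\Tilde{\nu}}(\Tilde{M}_\kappa)\cdot(2\kappa+1+\cos\phi)$, which I expect to require a sharp Amos-type two-sided bound on $A_{\Tilde{\nu}}$ together with a case split on $\phi$ and on which of $\theta_x,\theta_y$ is varied; ruling out an interior critical point and controlling the limit $\theta_x\to0$ (where $c_x$ would approach the excluded set $\mathbb{C}$) are the delicate points. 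Once this monotonicity is in place, the minimum sits at $\theta_x=\theta_y=\pi/2$ — condition~(A6), forcing $\Delta_\theta=\phi$ and hence $\Delta_\theta=\phi_{\mathrm{min}}$, condition~(A7) — and equality in~\eqref{thm:gap3:eq1} holds precisely there, completing the proof.
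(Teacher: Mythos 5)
Your part (i) is essentially the paper's own derivation (its \cref{thm_supp:transform3}): pointwise convergence for the fixed center, the vMF moment identity on the $(h-1)$-dimensional sphere $\mathbb{S}_Y$, and the substitution $c_x\cdot y=(P_Bc_x)\cdot y$ giving $\Tilde{M}_{\kappa_y}(\cos(\Delta_\theta),\|P_Bc_x\|)$; no issue there.

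The gap is in part (ii), and it is not a deferred technicality that an Amos-type bound will close: the monotonicity you need is false. In your own parametrization, take $p\in\mathbb{C}\cap\mathbb{S}^{h-1}$ and the admissible family $c_x=\cos\theta\,p+\sin\theta\,e_A$, $c_y=\cos\theta\,p+\sin\theta\,e_B$ with $e_A\perp\mathbb{C}$, $e_B\perp\mathbb{C}$, $e_A\cdot e_B=\cos\phi$, $\theta\in(0,\tfrac{\pi}{2}]$ (so $c_x,c_y\notin\mathbb{C}$ and $c_x\cdot c_y>0$). Then $\cos(\Delta_\theta)=\cos^2\theta+\sin^2\theta\cos\phi$ and $\|P_Bc_x\|=\|P_Ac_y\|=\sqrt{\cos^2\theta+\sin^2\theta\cos^2\phi}$, and both tend to $1$ as $\theta\to0^{+}$, so the exact limit you computed in part (i) tends to $\Tilde{\mathcal{J}}(1,1,1;\kappa_y,\Tilde{\nu})+\Tilde{\mathcal{J}}(1,1,1;\kappa_x,\Tilde{\nu})$. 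By the diagonal monotonicity you yourself invoke (the paper's \cref{lemma:gradofj3}: $w\mapsto\Tilde{\mathcal{J}}(w,w,w)$ is strictly decreasing on $[0,1]$), this limit is \emph{strictly below} the claimed right-hand side at $w=\cos\phi_{\mathrm{min}}<1$, so by continuity the objective at small $\theta>0$ already lies below the value at $\theta_x=\theta_y=\tfrac{\pi}{2}$ (numerically, with $\tau=\kappa=1$, $\Tilde{\nu}=1$, $\cos\phi=\tfrac12$, the per-term value is about $-0.35$ at $\theta=\tfrac{\pi}{2}$, $-0.50$ at $\theta=\tfrac{\pi}{4}$, $-0.66$ as $\theta\to0$). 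Hence $(\tfrac{\pi}{2},\tfrac{\pi}{2})$ is not the minimizer of your reduced problem, there is no interior-critical-point argument or Bessel-ratio inequality that can make it one, and the direct joint minimization you plan cannot be completed as stated.

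For contrast, the paper never attempts this joint minimization. Its route (\cref{thm_supp:gap3} plus the main proof) is per-term: it bounds each alignment term using that $\hat{\mathcal{J}}(w,t)$ is increasing in $t$ together with $\|P_Bc_x\|\ge\cos\phi$ (\cref{lemma:gradofj2}), and decreasing in $w\ge0$ together with $w_c\le\|P_Ac_y\|$ (\cref{lemma:gradofj1}, \cref{lemma:onesideprojection}); it then characterizes when both replacements are simultaneously tight via the projection–collinearity \cref{lemma:twosideprojection}, which under $c_x,c_y\notin\mathbb{C}$ forces $c_x\perp\mathbb{C}$, $c_y\perp\mathbb{C}$, and finally descends from $\phi$ to $\phi_{\mathrm{min}}$ by \cref{lemma:gradofj3}. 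Note that this argument identifies where its per-term bounds are attained rather than performing the global minimization you set up — which is exactly the distinction your boundary family probes — so if you rework part (ii) you should either follow the paper's decoupled tightness chain and phrase the conclusion accordingly, or add an explicit restriction keeping the centers away from $\mathbb{C}$; as a direct global lower bound over all admissible $(c_x,c_y)$ your plan fails.
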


The proof is provided in~\cref{sec_supp:gap3}. Condition (A6) shows the optimal configuration of $(c_x, c_y)$ for any given $\phi$. Condition (A7) establishes that the loss decreases monotonically as $\phi$ decreases to $\phi_{\mathrm{min}}$. Since the distributions of $X$ and $Y$ are symmetric, non-center pairs $(x_i, y_i)_{i \neq c}$ do not affect Condition (A6). Moreover, optimizing of $\mathcal{L}_{\mathrm{MCL}}^{i \neq c}$ also yields Condition (A7), as shown in~\cref{thm:match1}.

\begin{convergence}
    Under the subspace constraint, the modality gap converges to the smallest angle between the two hyperplanes. 
\end{convergence}

\section{Representational Convergence and Modality Alignment}\label{sec:alignment}

In~\cref{sec:collaps:gap3}, we identified the true origin of the modality gap by analyzing the configuration of the center pair. However, the relationship between the modality gap and downstream performance, which depends on the configuration of all pairs, remains unclear. In this section, we show that, under the subspace constraint, non-center pairs cannot be perfectly aligned when the MCL loss is minimized. 

\subsection{Intra-Modal Isometry and Perfect Alignment}

The Platonic Representation Hypothesis~\citep{platonic} suggests that contrastive learners are optimized by representations of $X$ and $Y$ whose intra-modal kernels (i.e., pairwise similarities) align. Building on this idea, we define the kernel alignment as \emph{Intra-Modal Isometry}.

\begin{definition}[Intra-Modal Isometry (IMS)]\label{def:intraiso}
    Let $(X, Y)$ be an $N$-pair configuration in $\mathbb{R}^h$, we say $(X, Y)$ achieves Intra-Modal Isometry if and only if $\forall i, j \in [N], i\neq j$, $x_i \cdot x_j = y_i \cdot y_j$.  
\end{definition}
\vspace{-1mm}

The Intra-Modal Isometry assumption implies that $\forall i \in [N], x_i \cdot c_x = y_i \cdot c_y$, and thus $\kappa_x = \kappa_y$ (\cref{fig:align:1}). However, knowledge of the intra-modal configuration alone is insufficient to determine how the modality gap affects downstream performance. In downstream tasks such as zero-shot image classification, given an input from one modality (e.g., $x_i$), CLIP retrieves data from the other modality (e.g., $y_j$) with the largest similarity to the input. Ideally, the output should be $y_j=y_i$. We therefore define an ideal inter-modal configuration as \emph{Perfect Alignment}. And when Perfect Alignment is achieved, downstream performance is maximized. 
 
\begin{definition}[Perfect Alignment]\label{def:interalign}
    Let $(X, Y)$ be an $N$-pair configuration in $\mathbb{R}^h$, we say $(x_i,y_i)$ is perfectly aligned if and only if $\forall j \neq i$, $x_i, \cdot y_i > x_i \cdot y_j$ and $x_i, \cdot y_i > x_j \cdot y_x$ If $\forall i \in [N]$,  $(x_i,y_i)$ is perfectly aligned, we say $(X, Y)$ achieves Perfect Alignment.
\end{definition} 

\begin{figure}[t]   
    \centering
        \begin{subfigure}[t]{0.19\textwidth}
            \includegraphics[width=\linewidth]{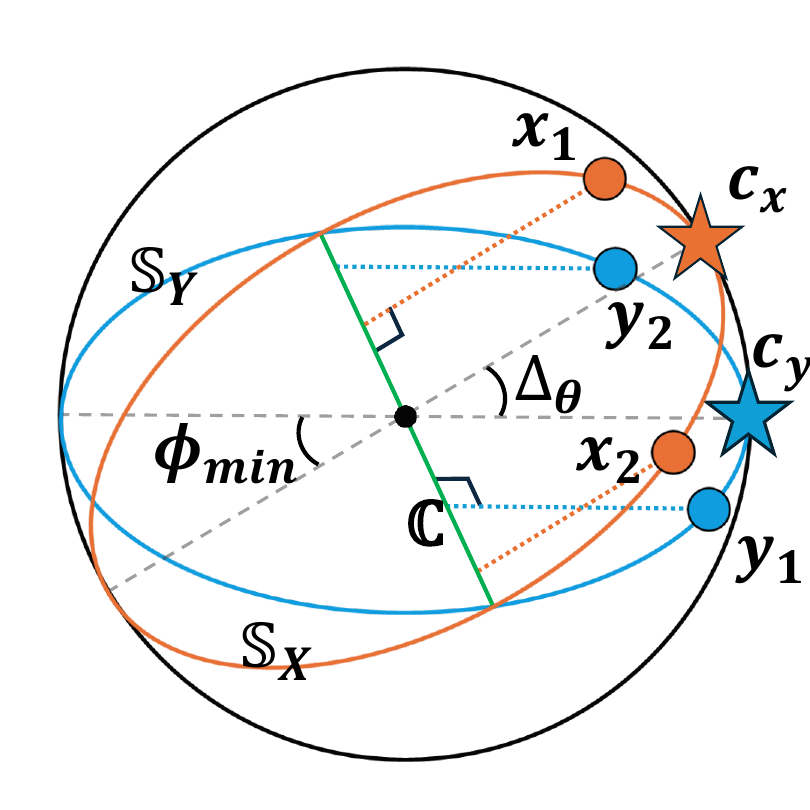}
             
            \caption{} 
            \label{fig:align:1}
        \end{subfigure} 
        \hfill
        \begin{subfigure}[t]{0.19\textwidth}
            \includegraphics[width=\linewidth]{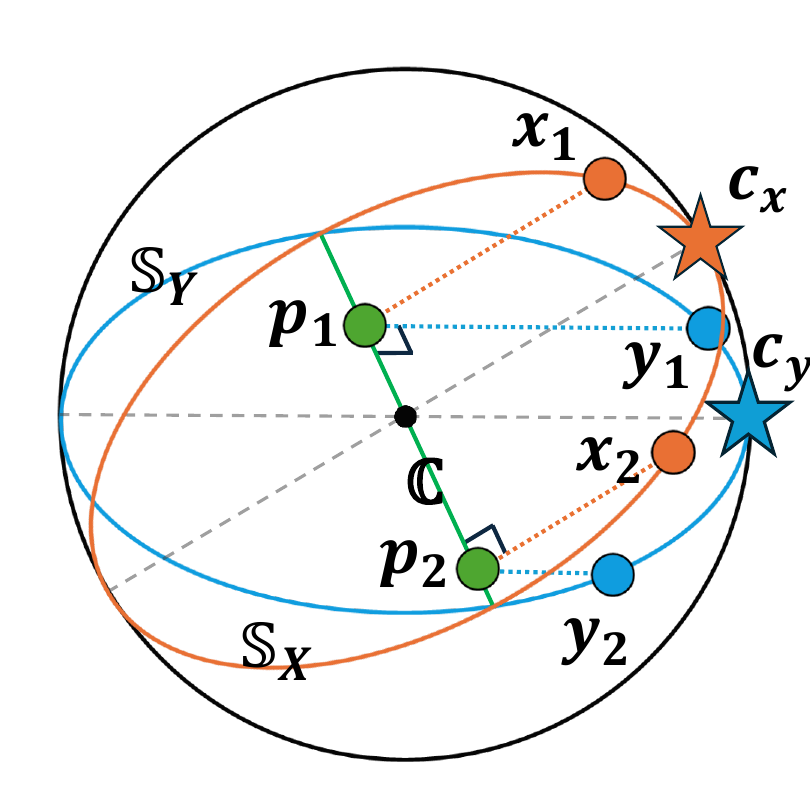}
             
            \caption{} 
            \label{fig:align:2}
        \end{subfigure} 
        \hfill
        \begin{subfigure}[t]{0.19\textwidth}
            \includegraphics[width=\linewidth]{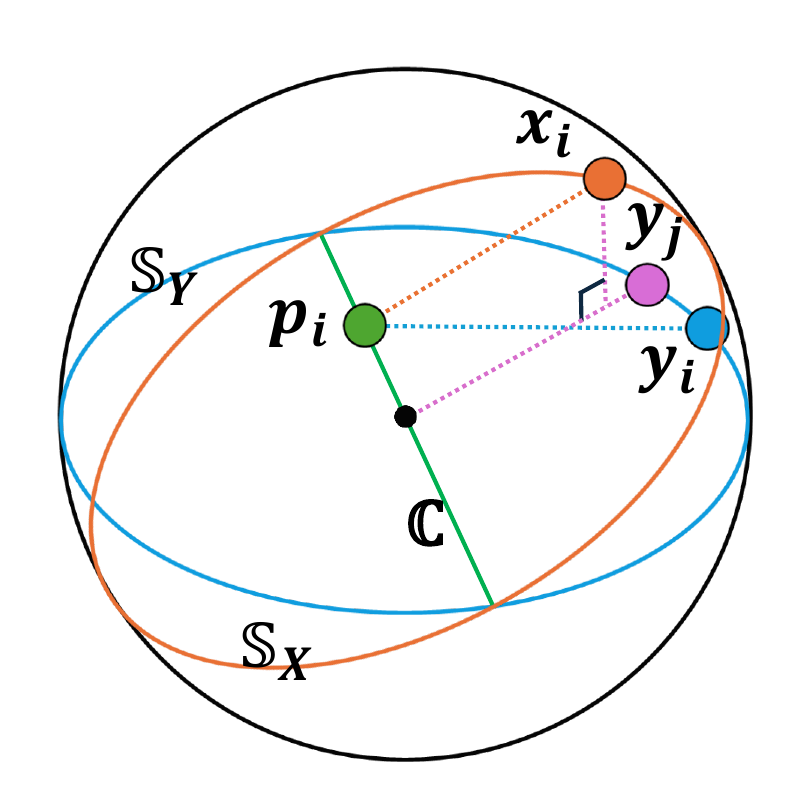}
             
            \caption{} 
            \label{fig:align:3}
        \end{subfigure} 
        \hfill
        \begin{subfigure}[t]{0.19\textwidth}
            \includegraphics[width=\linewidth]{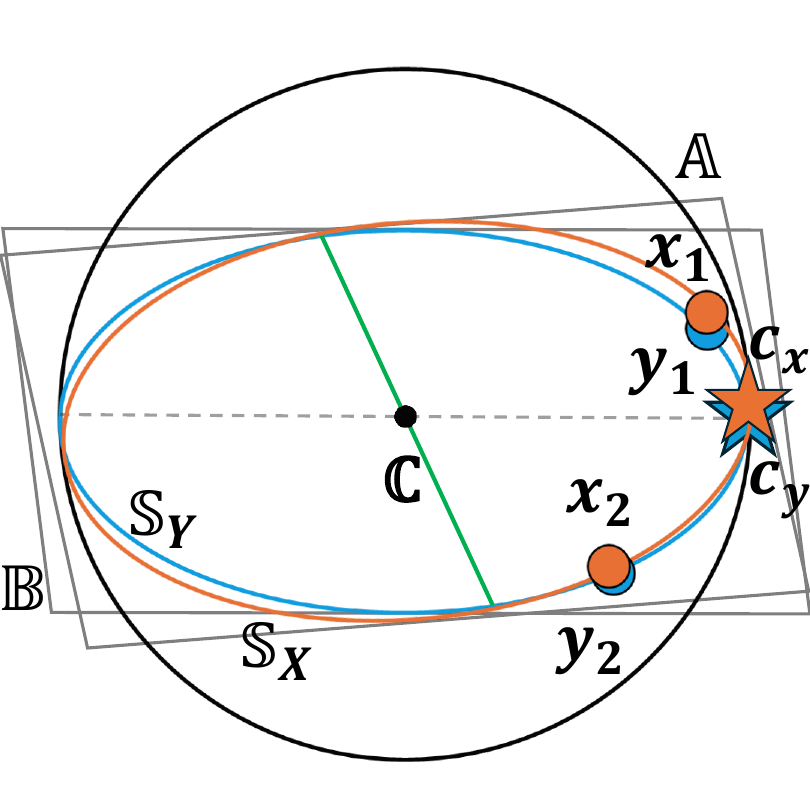}
             
            \caption{} 
            \label{fig:align:4}
        \end{subfigure} 
        \hfill
        \begin{subfigure}[t]{0.19\textwidth}
            \includegraphics[width=\linewidth]{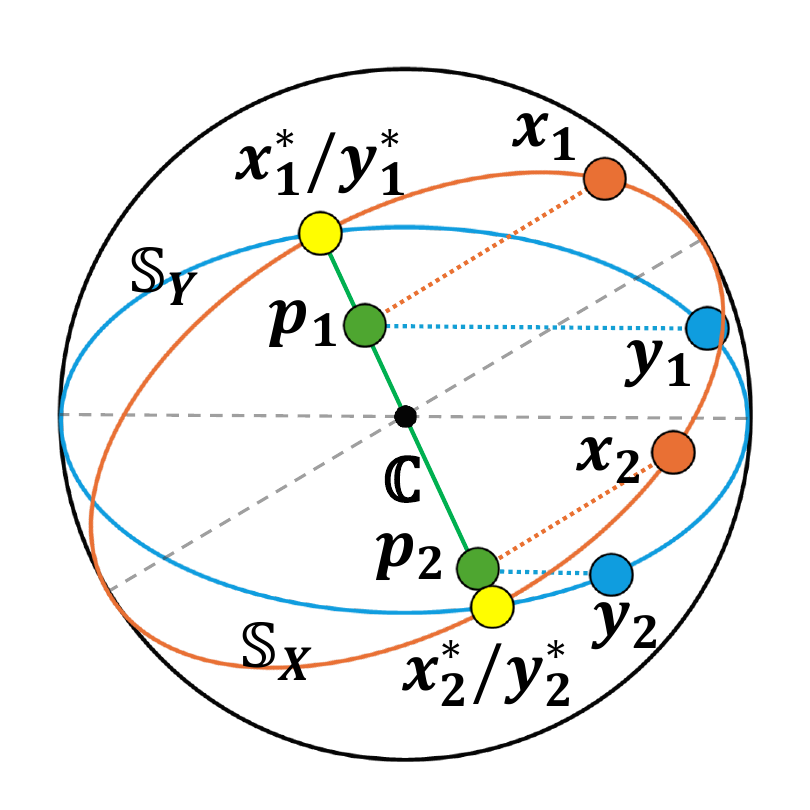}
             
            \caption{} 
            \label{fig:align:5}
        \end{subfigure} 
    \caption{Modality alignment. Notations follow~\cref{fig:opt}. \textbf{(a)}: Condition (A6) ($c_x, c_y \perp \mathbb{C}$) and IMS ($x_i \cdot c_x = y_i \cdot c_y)$ hold. \textbf{(b)}: The projections of $(x_i, y_i)_{i \neq c}$ onto $\mathbb{C}$ converge to $p_i$ (green points), i.e., $P_C x_i = P_C y_i = p_i$. \textbf{(c)}: When Condition (A6) and (A8) ($P_C x_i = P_C y_i$) hold, $P_B x_i \nparallel y_i$, $P_A y_i \nparallel x_i$. Let $y_j = \frac{P_B x_i}{\|P_B x_i\|}$ (purple dot). Then $x_i \cdot y_j > x_i \cdot y_i$, and $(x_i, y_i)_{i \neq c}$ are not perfectly aligned. \textbf{(d)}: Rotating $X$ with the hyperplane $\mathbb{A}$ towards $\mathbb{B}$, $X$ and $Y$ can be aligned perfectly. \textbf{(c)}: Project $(x_i, y_i)$ onto $\mathbb{C}$ and re-normalize, then $(x_i^{*}, y_i^{*})$ (yellow dots) are perfectly aligned.}
    \label{fig:align}
    \vspace{-2mm}
\end{figure}

\subsection{Representational Convergence of Non-Center Pairs}\label{sec:alignment:match1}

To investigate the alignment between two modalities, we examine the optimal configuration of each data pair.~\cref{thm:match1} states that if Condition (A6) (in~\cref{thm:gap3}) is satisfied through the optimization of $\mathcal{L}^{i=c}_{\mathrm{MCL}}$, and if $(X, Y)$ achieves Intra-Modal Isometry (\cref{fig:align:1}), then when the limit of $\mathcal{L}^{i \neq c}_{\mathrm{MCL}}$ attains its minimum, the projections of any non-center pair $(x_i, y_i)_{i \neq c}$ onto $\mathbb{C}$ converge to the same vector (\cref{fig:align:2}).

\begin{theorem}\label{thm:match1}
     Let $(X, Y)$ be an $N$-pair configuration, where $X = \left(x_1, \ldots, x_N\right) \in (\mathbb{S}_X \setminus \mathbb{C})^N$ are $iid$ samples from $\mu_x=\mathrm{vMF}(c_x, \kappa_x)$, and $Y = \left(y_1, \ldots, y_N\right) \in (\mathbb{S}_Y  \setminus \mathbb{C} )^N$ are $iid$ samples from $\mu_y=\mathrm{vMF}(c_y, \kappa_y)$. Let $\Tilde{\nu} = (h-1)/2 - 1$. Denote $\Delta_{\theta} = \cos ^{-1} \left(c_x \cdot c_y\right)$ and assume $c_x, c_y \perp \mathbb{C}$ with $c_x \cdot c_y > 0$. Suppose $(X, Y)$ achieves Intra-Modal Isometry. Then $\forall i \in [N]$, denote $\theta_i^c = \cos^{-1}\left(x_i \cdot c_x\right) = \cos ^{-1} \left(y_i \cdot c_y\right)$, and $\kappa=\kappa_x=\kappa_y$. Let $\theta_i^c \in (0,\frac{\pi}{2})$ and $\kappa > 0$, it holds that: \\
    \begin{equation}\label{thm:match1:eq1}
        \begin{aligned}
            &\lim_{N \to \infty} \mathcal{L}_{\mathrm{MCL}}^{i \neq c}
            - 2\log(N) \\
            &= \Tilde{\mathcal{J}}\left(\cos\left(\Delta_{\theta}\right), \cos\left(\theta_i^c\right), \left\|P_B x_i\right\| ; \kappa, \Tilde{\nu}\right) 
            + \Tilde{\mathcal{J}}\left(\cos\left(\Delta_{\theta}\right), \cos\left(\theta_i^c\right), \left\|P_A y_i\right\| ; \kappa, \Tilde{\nu}\right)  \\
            &\ge 2 \Tilde{\mathcal{J}}\left(\cos^2\left(\theta_i^c \right) \cos\left(\phi_{\mathrm{min}}\right) + \sin^2\left(\theta_i^c \right), \cos\left(\theta_i^c \right), \sqrt{\cos^2\left(\theta_i^c \right) \cos^2\left(\phi_{\mathrm{min}}\right) + \sin^2\left(\theta_i^c \right)} ; \kappa, \Tilde{\nu}\right),
        \end{aligned}
    \end{equation}    
    where equality is attained if and only if there exists a configuration of $(X, Y)$ such that:
    \begin{enumerate}[label={(A\arabic*)},labelindent=10pt,leftmargin=*,start=8]
        \item 
        $P_C x_i = P_C y_i$.
        \item 
        $\Delta_\theta=\cos ^{-1}\left(c_x \cdot c_y\right) = \phi_{\mathrm{min}}$.
    \end{enumerate}  
\end{theorem}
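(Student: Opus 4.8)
The plan is to pass to the population limit, compute the resulting $\mathrm{vMF}$ integral in closed form, re-express it through the geometry of $\mathbb{A},\mathbb{B},\mathbb{C}$, and then minimize in two stages: first over the non-center pair, then over the hyperplane angle $\phi$. First I would write $\mathcal{L}^{i\neq c}_{\mathrm{MCL}}=\mathcal{L}_{\mathcal{X}\to\mathcal{Y}}(x_i;Y)+\mathcal{L}_{\mathcal{Y}\to\mathcal{X}}(y_i;X)$, each summand being $-x_i\cdot y_i/\tau+\log N+\log\big(\tfrac1N\sum_j e^{x_i\cdot y_j/\tau}\big)$. Conditioned on $x_i$, the terms $e^{x_i\cdot y_j/\tau}$ are i.i.d.\ and bounded in $[e^{-1/\tau},e^{1/\tau}]$, so the strong law gives $\tfrac1N\sum_j e^{x_i\cdot y_j/\tau}\to\mathbb{E}_{y\sim\mathrm{vMF}(c_y,\kappa)}[e^{x_i\cdot y/\tau}]$ almost surely; continuity of $\log$ and cancellation of $-2\log N$ then yield
\[
\lim_{N\to\infty}\mathcal{L}^{i\neq c}_{\mathrm{MCL}}-2\log N=-\frac{2\,x_i\cdot y_i}{\tau}+\log\mathbb{E}_{y\sim\mathrm{vMF}(c_y,\kappa)}\big[e^{x_i\cdot y/\tau}\big]+\log\mathbb{E}_{x\sim\mathrm{vMF}(c_x,\kappa)}\big[e^{x\cdot y_i/\tau}\big].
\]

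Next I would evaluate the expectations. For $y\in\mathbb{S}_Y\subset\mathbb{B}$ one has $x_i\cdot y=(P_Bx_i)\cdot y$, so with $v_x:=P_Bx_i/\tau+\kappa c_y\in\mathbb{B}$ and the $\mathrm{vMF}$ normalization identity $\int_{\mathbb{S}_Y}e^{v\cdot y}\,d\sigma(y)=1/D_{h-1}(\|v\|)$ — the sphere $\mathbb{S}_Y\cong\mathbb{S}^{h-2}$ carries order parameter $\Tilde{\nu}=(h-1)/2-1$ —
\[
\mathbb{E}_{y\sim\mathrm{vMF}(c_y,\kappa)}\big[e^{x_i\cdot y/\tau}\big]=\frac{D_{h-1}(\kappa)}{D_{h-1}(\|v_x\|)}=\frac{I_{\Tilde{\nu}}(\|v_x\|)\big/\|v_x\|^{\Tilde{\nu}}}{I_{\Tilde{\nu}}(\kappa)\big/\kappa^{\Tilde{\nu}}},
\]
and symmetrically with $v_y:=P_Ay_i/\tau+\kappa c_x$; this rewrites the limit as a sum of two $\Tilde{\mathcal{J}}$ terms, with $\|v_x\|$ in the role of $\Tilde{M}_\kappa$ of \cref{def:function_j2}. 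The slots are then pinned down geometrically: $c_x,c_y\perp\mathbb{C}$ forces the orthogonal splits $\mathbb{A}=\mathbb{C}\oplus\mathrm{span}(c_x)$ and $\mathbb{B}=\mathbb{C}\oplus\mathrm{span}(c_y)$, so Intra-Modal Isometry gives $x_i=P_Cx_i+\cos\theta_i^c\,c_x$, $y_i=P_Cy_i+\cos\theta_i^c\,c_y$ with $\|P_Cx_i\|=\|P_Cy_i\|=\sin\theta_i^c$ and $P_Bc_x=\cos\Delta_\theta\,c_y$, $P_Ac_y=\cos\Delta_\theta\,c_x$; hence $x_i\cdot y_i=P_Cx_i\cdot P_Cy_i+\cos^2\theta_i^c\cos\Delta_\theta$, $x_i\cdot c_y=y_i\cdot c_x=\cos\theta_i^c\cos\Delta_\theta$, and $\|P_Bx_i\|=\|P_Ay_i\|=\sqrt{\sin^2\theta_i^c+\cos^2\theta_i^c\cos^2\Delta_\theta}$. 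In particular $\|v_x\|=\|v_y\|$, so the two $\Tilde{\mathcal{J}}$ terms coincide and the identity in \cref{thm:match1:eq1} follows (with $\Delta_\theta=\phi$ under the hypothesis $c_x,c_y\perp\mathbb{C}$).

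For the bound I would minimize in two stages. Fix $\phi=\Delta_\theta$. Among admissible non-center configurations the only free quantity above is $P_Cx_i\cdot P_Cy_i\in[-\sin^2\theta_i^c,\sin^2\theta_i^c]$, and it appears only in the term $-2(x_i\cdot y_i)/\tau$ (the arguments of $\Tilde{M}$ depend only on $\theta_i^c,\phi$). Since $\Tilde{\mathcal{J}}$ is strictly decreasing in its first slot, the sum is minimized by maximizing $P_Cx_i\cdot P_Cy_i$; Cauchy--Schwarz with the equal norms $\sin\theta_i^c>0$ forces $P_Cx_i=P_Cy_i$, which is (A8), and then $x_i\cdot y_i=\sin^2\theta_i^c+\cos^2\theta_i^c\cos\phi$. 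The sum now becomes a function $g(u)$ of $u:=\cos\phi\in(0,\cos\phi_{\mathrm{min}}]$, and I would show $g$ is strictly decreasing, so its minimum over $\phi\in[\phi_{\mathrm{min}},\tfrac{\pi}{2})$ is attained uniquely at $\phi=\phi_{\mathrm{min}}$, which is (A9); substituting $u=\cos\phi_{\mathrm{min}}$ into the formulas above reproduces the right-hand side of \cref{thm:match1:eq1}. Concretely $v_x=(\kappa+u\cos\theta_i^c/\tau)\,c_y+P_Cx_i/\tau$, so $\|v_x\|=\sqrt{(\kappa+u\cos\theta_i^c/\tau)^2+\sin^2\theta_i^c/\tau^2}$, and using $\tfrac{d}{dr}\log\big(I_{\Tilde{\nu}}(r)/r^{\Tilde{\nu}}\big)=A_{\Tilde{\nu}}(r):=I_{\Tilde{\nu}+1}(r)/I_{\Tilde{\nu}}(r)$ one obtains
\[
g'(u)=\frac{2\cos\theta_i^c}{\tau}\Big(A_{\Tilde{\nu}}(\|v_x\|)\,\frac{\kappa+u\cos\theta_i^c/\tau}{\|v_x\|}-\cos\theta_i^c\Big),
\]
so everything hinges on the scalar inequality $A_{\Tilde{\nu}}(\|v_x\|)\,\big(\kappa+u\cos\theta_i^c/\tau\big)/\|v_x\|<\cos\theta_i^c$.

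The hard part will be exactly this last inequality. The trivial bound $A_{\Tilde{\nu}}<1$ only reduces it to $\kappa\tau<\cos\theta_i^c(1-\cos\phi)$, which is not automatic, so in general it calls for a sharp upper estimate of the $\mathrm{vMF}$ mean-resultant-length ratio $A_{\Tilde{\nu}}$ — e.g.\ an Amos/Tur\'an-type bound such as $A_\nu(r)\le r\big/\big(\nu+\sqrt{(\nu+1)^2+r^2}\,\big)$ — used together with the explicit form of $\|v_x\|$, whose $\sin\theta_i^c/\tau$ component dominates when $\tau$ is small and pulls $(\kappa+u\cos\theta_i^c/\tau)/\|v_x\|$ well below $\cos\theta_i^c$. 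A cleaner alternative is probabilistic: $A_{\Tilde{\nu}}(\|v_x\|)\,v_x/\|v_x\|$ is the mean of the exponentially tilted law $\mathrm{vMF}(v_x/\|v_x\|,\|v_x\|)$ on $\mathbb{S}_Y$ obtained by reweighting $\mathrm{vMF}(c_y,\kappa)$ by $e^{x_i\cdot y/\tau}$, and the inequality asserts that this tilted mean has strictly smaller $c_y$-coordinate than the paired point $y_i$, which one can try to extract from a covariance estimate for the tilt. Establishing this Bessel inequality at the strength required, and tracking strictness so that the two reduction steps are equalities exactly under (A8) and (A9), is where the real effort goes; the law of large numbers, the $\mathrm{vMF}$ integral, and the geometric bookkeeping are routine. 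The hypotheses $\theta_i^c\in(0,\tfrac{\pi}{2})$ and $\kappa>0$ are what keep all the inequalities strict, and hence deliver the stated "if and only if."
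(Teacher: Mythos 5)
Your route — strong law plus continuity of $\log$, the vMF moment-generating function on $\mathbb{S}_Y\cong\mathbb{S}^{h-2}$ with order $\Tilde{\nu}$, the orthogonal splits forced by $c_x,c_y\perp\mathbb{C}$ together with IMS, and the Cauchy--Schwarz step that pins down (A8) — is essentially the paper's own path (its auxiliary results \cref{thm_supp:transform4} and \cref{thm_supp:match1}). The genuine gap is exactly where you flag it, and it is worse than "real effort": writing $A_{\tilde\nu}(r)=I_{\tilde\nu+1}(r)/I_{\tilde\nu}(r)$, the scalar inequality you reduce (A9) to, namely $A_{\tilde\nu}(\|v_x\|)\,\bigl(\kappa+u\cos\theta_i^c/\tau\bigr)/\|v_x\|<\cos\theta_i^c$ with $\|v_x\|^2=(\kappa+u\cos\theta_i^c/\tau)^2+\sin^2\theta_i^c/\tau^2$ and $u=\cos\Delta_\theta$, is false in general under the stated hypotheses. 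As $\cos\theta_i^c\to0$ the left side tends to $A_{\tilde\nu}\bigl(\sqrt{\kappa^2+1/\tau^2}\bigr)\,\kappa/\sqrt{\kappa^2+1/\tau^2}>0$ while the right side vanishes; concretely, $\tilde\nu\approx5$, $\kappa=10$, $\tau=0.1$, $\theta_i^c=80^\circ$, $u=0.5$ gives left $\approx0.46$ versus right $\approx0.17$, so your $g$ is \emph{increasing} in $u$ there. No Amos/Tur\'an-type sharpening of the Bessel ratio can rescue a false inequality, so the proposal cannot be completed as written.

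The source of the trouble is the second slot of $\Tilde{\mathcal{J}}$. Your literal geometry gives $x_i\cdot c_y=\cos\theta_i^c\cos\Delta_\theta$, so the cross term $2\kappa(x_i\cdot c_y)/\tau$ inside $\Tilde{M}_{\kappa}$ varies with $\cos\Delta_\theta$, and it is precisely this term that injects the large positive contribution $\kappa\cos\theta_i^c\,A_{\tilde\nu}(R)/(\tau R)$ into $g'(u)$. The theorem as stated — and the paper's proof, via the identification made in the proof of \cref{thm_supp:transform4} — puts $\cos\theta_i^c$ in that slot, so the only $\cos\Delta_\theta$-dependence of $\Tilde{M}_{\kappa}$ is through $t=\sqrt{\cos^2\theta_i^c\cos^2\Delta_\theta+\sin^2\theta_i^c}$; the monotonicity then becomes elementary (\cref{lemma:gradofj4}): the derivative equals $\frac{\cos^2\theta_i^c}{\tau}\bigl(-1+\frac{u}{\tau m}\,\frac{I_{\tilde\nu+1}(m)}{I_{\tilde\nu}(m)}\bigr)<0$, since $m\tau\ge t\ge u$ and the Bessel ratio is below one. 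Note also that your derived identity therefore does not literally reproduce \cref{thm:match1:eq1} (your second slot is $\cos\theta_i^c\cos\Delta_\theta$, the theorem's is $\cos\theta_i^c$); you should have flagged this discrepancy, because it is exactly the point at which your version of the final monotonicity step diverges from the one the paper actually proves.
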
    

The proof of~\cref{thm:match1} is provided in~\cref{sec_supp:match1}. Condition (A8) characterizes the optimal configuration of $(x_i, y_i)_{i \neq c}$ for any given $\phi$. Condition (A9) establishes that the loss decreases monotonically as $\phi$ decreases to $\phi_{\mathrm{min}}$, consistent with Condition (A7) of~\cref{thm:gap3}. Moreover,~\cref{thm:match1} implies that MCL aims to \textbf{maximize the mutual information between the two modalities in the shared space while preserving modality-specific information in the complementary space}.     

\subsection{Representational Convergence Does Not Ensure Perfect Alignment}

In~\cref{lemma:onesideprojection}, we show that $(x_i, y_i)_{i \neq c}$ are perfectly aligned if and only if the projections of $(x_i, y_i)_{i \neq c}$ onto $\mathbb{B}$ and $\mathbb{A}$ are collinear, i.e., $P_B x_i \parallel y_i$ and $P_A y_i \parallel x_i$. However, when training is optimized such that conditions (A6) and (A8) hold, $P_B x_i \nparallel y_i$ and $P_A y_i \nparallel x_i$. This implies that $(x_i, y_i)_{i \neq c}$ are \textbf{not perfectly aligned} (\cref{fig:align:3}).

\begin{corollary}\label{coro:match1}
    $\forall i \in [N], i \neq c$, if $c_x, c_y \perp \mathbb{C}$ and $P_C x_i = P_C y_i\neq \vec{0}$ and $\phi > 0$, then it holds:
    \begin{enumerate}[label={(A\arabic*)},labelindent=10pt,leftmargin=*,start=10]
        \item 
        $(x_i, y_i)_{i \neq c}$ are not perfectly aligned.
    \end{enumerate} 
\end{corollary}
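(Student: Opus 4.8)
The plan is to reduce the claim to the collinearity criterion of \cref{lemma:onesideprojection}, which says that $(x_i,y_i)_{i\neq c}$ is perfectly aligned if and only if $P_B x_i\parallel y_i$ and $P_A y_i\parallel x_i$; hence it suffices to show that, under the stated hypotheses, $P_B x_i\nparallel y_i$ for every $i\neq c$ (by symmetry the same argument gives $P_A y_i\nparallel x_i$). Everything then follows from one short linear-algebra computation once the geometry of $\mathbb{A},\mathbb{B},\mathbb{C}$ is in place.

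First I would record the orthogonal decompositions $\mathbb{A}=\mathbb{C}\oplus\mathbb{R}c_x$ and $\mathbb{B}=\mathbb{C}\oplus\mathbb{R}c_y$, valid because $c_x\in\mathbb{A}$, $c_y\in\mathbb{B}$, $\dim\mathbb{A}=\dim\mathbb{B}=\dim\mathbb{C}+1$, and the hypothesis $c_x,c_y\perp\mathbb{C}$. These being orthogonal splittings, $P_B=P_C+P_{\mathbb{R}c_y}$, so that $P_B c_x=(c_x\cdot c_y)\,c_y$; and by Condition (A6) the modality gap equals the dihedral angle, $c_x\cdot c_y=\cos\Delta_\theta=\cos\phi\in(0,1)$ (here $0<\phi<\tfrac{\pi}{2}$ by \cref{def:subspace} together with the hypothesis $\phi>0$). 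Thus $P_B c_x=\cos\phi\,c_y$.

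Next, for any fixed $i\neq c$, write $x_i=p+a\,c_x$ and $y_i=p+b\,c_y$ with $p:=P_C x_i=P_C y_i\neq\vec0$, $a:=x_i\cdot c_x$, $b:=y_i\cdot c_y$. Since $x_i,y_i$ are unit vectors and $p\perp c_x$, $p\perp c_y$, we get $\|p\|^2+a^2=1=\|p\|^2+b^2$, hence $a^2=b^2$; also $a,b\neq0$ because $x_i,y_i\notin\mathbb{C}$. Applying $P_B$ termwise gives $P_B x_i=p+a\cos\phi\,c_y$. Because $\{p,c_y\}$ is linearly independent, $P_B x_i\parallel y_i=p+b\,c_y$ would force equal coefficients on $p$ (ratio $1$) and hence $a\cos\phi=b$; combined with $a^2=b^2$ and $a\neq0$ this yields $\cos^2\phi=1$, contradicting $\phi>0$. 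Therefore $P_B x_i\nparallel y_i$, and \cref{lemma:onesideprojection} gives Condition (A10). (Equivalently, avoiding the lemma: the unit vector $y_j:=P_B x_i/\|P_B x_i\|\in\mathbb{S}_Y$ satisfies $x_i\cdot y_j=\|P_B x_i\|>x_i\cdot y_i$ by Cauchy--Schwarz, strictly because $P_B x_i\nparallel y_i$, so $(x_i,y_i)$ cannot be perfectly aligned.)

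I do not anticipate a real obstacle; the only two points requiring care are (i) the orthogonal splitting $\mathbb{B}=\mathbb{C}\oplus\mathbb{R}c_y$ together with the resulting clean identity $P_B c_x=\cos\phi\,c_y$ --- this is exactly where the hypotheses $c_x,c_y\perp\mathbb{C}$ and Condition (A6) are used --- and (ii) exploiting \emph{both} unit-norm constraints to obtain $a^2=b^2$, which is what converts the parallelism condition $a\cos\phi=b$ into the impossibility $\cos\phi=1$. A minor point is $a\neq0$, i.e. that $x_i$ genuinely has a nonzero $c_x$-component, which is guaranteed by the ambient assumption $X\in(\mathbb{S}_X\setminus\mathbb{C})^N$ carried over from \cref{thm:match1}.
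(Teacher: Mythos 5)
Your proof is correct, and it reaches the conclusion by a somewhat different route than the paper. Both arguments share the same reduction: perfect alignment of $(x_i,y_i)_{i\neq c}$ would force $y_i$ to coincide with the normalized projection $P_B x_i/\|P_B x_i\|$, which is the content of \cref{lemma:onesideprojection} (and of the Cauchy--Schwarz variant you give in parentheses). They differ in how this collinearity is ruled out. The paper appeals to \cref{lemma:twosideprojection}: the two collinearity conditions can hold simultaneously only if $x_i,y_i\perp\mathbb{C}$ or $x_i=\pm y_i\in\mathbb{C}$, and the hypotheses $P_C x_i=P_C y_i\neq\vec{0}$ together with $x_i\in\mathbb{S}_X\setminus\mathbb{C}$, $y_i\in\mathbb{S}_Y\setminus\mathbb{C}$ exclude both cases. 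You instead inline a direct computation: the orthogonal splittings $\mathbb{A}=\mathbb{C}\oplus\mathbb{R}c_x$ and $\mathbb{B}=\mathbb{C}\oplus\mathbb{R}c_y$ (valid exactly because $c_x,c_y\perp\mathbb{C}$) give $P_B c_x=(c_x\cdot c_y)\,c_y$, hence $P_B x_i=p+a\,(c_x\cdot c_y)\,c_y$ with $p=P_Cx_i\neq\vec{0}$; parallelism with $y_i=p+b\,c_y$ then forces the scalar to be $1$ and $(c_x\cdot c_y)^2=1$ via $a^2=b^2$, contradicting $0<\phi<\tfrac{\pi}{2}$. This buys a shorter, self-contained argument that needs only the one-sided failure $P_B x_i\nparallel y_i$ and is insensitive to the sign of $c_x\cdot c_y$; what it gives up is the reusable two-sided classification of collinear configurations that \cref{lemma:twosideprojection} provides (and which the paper also needs for \cref{thm:gap3}). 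One shared caveat: both your argument and the paper's implicitly treat the competitor $y_j=P_B x_i/\|P_B x_i\|\in\mathbb{S}_Y$ as an available point of the configuration (justified in the $N\to\infty$, full-support vMF regime), so your proof is no less rigorous than the paper's on that point.
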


The proof of~\cref{coro:match1} is provided in~\cref{sec_supp:match1:coro}. Since the limit of $\mathcal{L}_{\mathrm{MCL}}$ attains its minimum when both $\mathcal{L}_{\mathrm{MCL}}^c$ and $\mathcal{L}_{\mathrm{MCL}}^{i \neq c}$ attain their minima, and since all paired samples are non-center pairs almost surely (the `center' forms a zero measure set in $\mathbb{S}_X$ or $\mathbb{S}_Y$), then we conclude that:

\begin{convergence}
    Under the subspace constraint, paired samples cannot be perfectly aligned. 
\end{convergence}

\section{Shared Subspace Projection Improves Modality Alignment}\label{sec:match}

In~\cref{sec:alignment}, we prove that the representations of paired samples are not perfectly aligned. Despite this undesirable configuration, in this section we derive potential methods to improve the alignment between the two modalities.   

\subsection{How to Achieve Perfect Alignment}

In downstream tasks, when $(x_i, y_i)_{i \neq c}$ are not perfectly aligned, $x_i$ can be misaligned to some $y_{j \neq i}$ (\cref{fig:align:3}). A straightforward way to address this is to manually shift $(x_i, y_i)$ in $\mathbb{S}^{h-1}$. For example,~\citet{mindgap} translate $x_i$ toward $y_i$ as $x_i^{\mathrm{new}} = x_i + \Delta_u$, followed by renormalization. This operation clearly alters the distributions of $X$. Since downstream performance depends on the number of misaligned $y_j$ in the test set. A change in the distribution of $X$ leads to a change in the proportion of misaligned $y_j$, but in an unpredictable direction. Therefore, the impact of translating $X$ on downstream performance can be arbitrary. An illustrative example is provided in~\cref{sec_supp:translation}. 

As shown in~\cref{fig:align:4}, if we rotate $\mathbb{A}$ to overlap with $\mathbb{B}$, then $\mathbb{A}=\mathbb{B}=\mathbb{C}$. In this case, Condition (A8) implies $x_i = y_i$, and thus $x_i$ and $y_j$ are \textbf{perfectly aligned}. Hence, modality alignment can be improved by rotating the hyperplanes $\mathbb{A}$ and $\mathbb{B}$ until $\mathbb{A}=\mathbb{B}$ ($\Delta_{\theta} = \phi = 0$). 

\begin{corollary}\label{coro:match2}
    $\forall i \in [N], i \neq c$, if $c_x, c_y \perp \mathbb{C}$, $P_C x_i = P_C y_i$ and $(x_i, y_i)_{i \neq c} \in\mathbb{S}^{h-1}\setminus \mathbb{C}$, then $(x_i, y_i)_{i \neq c}$ are perfectly aligned if the following condition holds:
    \begin{enumerate}[label={(A\arabic*)},labelindent=10pt,leftmargin=*,start=11]
        \item 
        $\Delta_{\theta} = \phi = 0$.
    \end{enumerate} 
\end{corollary}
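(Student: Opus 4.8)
The plan is to reduce the claim to the characterization of perfect alignment in terms of projections (Lemma \ref{lemma:onesideprojection}, already available from the excerpt), and then observe that in the degenerate regime $\Delta_\theta = \phi = 0$ the two hyperplanes coincide, so the problematic non-collinearity identified in Corollary \ref{coro:match1} disappears. Concretely, $\phi = 0$ forces $n_A \parallel n_B$, hence $\mathbb{A} = \mathbb{B}$, and therefore $\mathbb{C} = \mathbb{A} \cap \mathbb{B} = \mathbb{A} = \mathbb{B}$ and $P_A = P_B = P_C$. The hypothesis $c_x, c_y \perp \mathbb{C}$ then says $c_x$ and $c_y$ are both orthogonal to the common $(h-1)$-dimensional hyperplane, i.e. both are parallel to its one-dimensional normal line; combined with the standing assumption (inherited from the setting of Theorem \ref{thm:gap3}) that $c_x \cdot c_y > 0$, this gives $c_x = c_y$.

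Next I would unpack Condition (A8), $P_C x_i = P_C y_i$, in this collapsed geometry. Write the orthogonal decomposition of any unit vector $z$ with respect to $\mathbb{C}$ as $z = P_C z + (z - P_C z)$, where the second term lies along the common normal $n := n_A = n_B$ (up to sign). Since $x_i \in \mathbb{S}_X = \mathbb{S}^{h-1} \cap \mathbb{A}$ and $\mathbb{A} = \mathbb{C}$, we have $x_i \in \mathbb{C}$, so $P_C x_i = x_i$; likewise $P_C y_i = y_i$. Hence (A8) reads $x_i = y_i$ directly. (If one prefers not to invoke $\mathbb{S}_X \subset \mathbb{C}$ but instead keeps $(x_i,y_i) \in \mathbb{S}^{h-1}\setminus\mathbb{C}$ with a normal component, the same conclusion follows: the normal components of $x_i$ and $y_i$ must agree in sign with $c_x = c_y$ by the way $X,Y$ are drawn from $\mathrm{vMF}(c_x,\kappa)$, $\mathrm{vMF}(c_y,\kappa)$ concentrated around their centers, and agree in magnitude because $x_i \cdot c_x = y_i \cdot c_y$ under Intra-Modal Isometry; together with $P_C x_i = P_C y_i$ this forces $x_i = y_i$.)

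Finally, with $x_i = y_i$ for every $i$, perfect alignment is immediate from Definition \ref{def:interalign}: for any $j \neq i$, $x_i \cdot y_i = \|x_i\|^2 = 1 \geq x_i \cdot y_j$ with equality only if $x_i = y_j = y_i$, which contradicts $y_j \neq y_i$ (the pairs are distinct almost surely), and symmetrically $x_i \cdot y_i > x_j \cdot y_i$ for $j \neq i$; hence every pair $(x_i, y_i)$ is perfectly aligned and $(X,Y)$ achieves Perfect Alignment. The one place that needs care — the main obstacle — is the step $c_x = c_y$ and, more generally, ruling out the sign ambiguity of the normal direction when passing from "$c_x, c_y \perp \mathbb{C}$ and $\phi = 0$" to "$x_i = y_i$": one must use the orientation hypothesis $c_x \cdot c_y > 0$ (rather than merely $c_x \cdot c_y \neq 0$) that is carried over from Theorem \ref{thm:gap3}'s setup, since otherwise $c_x = -c_y$ is consistent with all the other hypotheses and would instead give $x_i = -y_i$, i.e. perfect \emph{anti}-alignment. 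I would state this orientation assumption explicitly at the start of the proof and then the rest is the short chain of equalities above.
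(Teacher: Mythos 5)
Your argument matches the paper's own proof of this corollary: when $\Delta_{\theta}=\phi=0$ the two hyperplanes coincide, so $\mathbb{A}=\mathbb{B}=\mathbb{C}$ and $P_A=P_B=P_C$, and Condition (A8) collapses to $x_i = P_C x_i = P_C y_i = y_i$, which is exactly the chain $x_i = P_B x_i = P_A y_i = y_i$ used in the appendix, after which perfect alignment is immediate. Your extra care about the orientation of the centers is harmless but not needed, since the hypothesis $\Delta_{\theta}=\cos^{-1}(c_x\cdot c_y)=0$ already forces $c_x=c_y$.
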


The proof of~\cref{coro:match2} is provided in~\cref{sec_supp:match1:coro}. Despite this theoretical guarantee, rotating a high-dimensional hyperplane can be complicated in practice. As illustrated in~\cref{fig:align:5}, if we project $x_i$ and $y_i$ onto $\mathbb{C}$ and then renormalize, we obtain $x_i^* = y_i^*$. And $(x_i^*, y_i^*)$ are \textbf{perfectly aligned}. 

\begin{corollary}\label{coro:match3}
    $\forall i \in [N], i \neq c$, if $c_x, c_y \perp \mathbb{C}$ and $P_C x_i = P_C y_i$, then the following holds: 
    \begin{enumerate}[label={(A\arabic*)},labelindent=10pt,leftmargin=*,start=12]
        \item 
        $(\frac{P_C x_i}{\|P_C x_i\|}, \frac{P_C y_i}{\|P_C y_i\|})_{i \neq c}$ are perfectly aligned 
    \end{enumerate} 
\end{corollary}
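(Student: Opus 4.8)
The plan is to reduce the whole statement to one elementary observation: the two hypotheses force the projected-and-renormalized pair to coincide \emph{exactly}. Write $p_i := P_C x_i = P_C y_i$ for each $i \neq c$; as in \cref{coro:match1} we have $p_i \neq \vec{0}$ (this holds almost surely and is needed for the normalization to be well defined), so $x_i^* := \frac{P_C x_i}{\|P_C x_i\|}$ and $y_i^* := \frac{P_C y_i}{\|P_C y_i\|}$ are genuine unit vectors in $\mathbb{S}^{h-1} \cap \mathbb{C}$. Since $P_C x_i = P_C y_i$, this gives $x_i^* = \frac{p_i}{\|p_i\|} = y_i^*$: applying the shared-subspace projection and renormalizing collapses each transformed pair onto a single point.

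Given this, I would verify \cref{def:interalign} directly. The diagonal term is $x_i^* \cdot y_i^* = \|x_i^*\|^2 = 1$. For $j \neq i$, using $y_j^* = x_j^*$, the off-diagonal terms are $x_i^* \cdot y_j^* = x_i^* \cdot x_j^*$ and $x_j^* \cdot y_i^* = x_j^* \cdot x_i^*$; by Cauchy--Schwarz both are $\le 1$, with equality if and only if $x_i^* = x_j^*$, i.e.\ $p_i$ and $p_j$ point along the same ray. Hence Perfect Alignment holds precisely when the directions $\{p_i/\|p_i\|\}_{i \neq c}$ are pairwise distinct, and in that case the inequalities $x_i^* \cdot y_i^* > x_i^* \cdot y_j^*$ and $x_i^* \cdot y_i^* > x_j^* \cdot y_i^*$ are strict for all $j \neq i$, which is exactly condition (A12).

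The remaining point --- pairwise distinctness of the directions --- is the only real content, and it is a measure-theoretic genericity argument rather than a computation, which is where I expect the main difficulty to lie. Because $X$ (resp.\ $Y$) are $iid$ draws from the continuous distribution $\mathrm{vMF}(c_x, \kappa_x)$ on $\mathbb{S}_X$ with $\kappa_x > 0$ and $c_x \notin \mathbb{C}$, the pushforward of $x_i$ under the fixed linear projection $P_C$ is absolutely continuous on $\mathbb{C} \cong \mathbb{R}^{h-2}$; hence for any fixed $j$ the event $P_C x_i \in \mathbb{R}_{>0}\, P_C x_j$ (a single ray, hence a null set) has probability zero, and likewise $P_C x_i = \vec{0}$ has probability zero. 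A union bound over the finitely many pairs shows that almost surely all $p_i$ are nonzero with pairwise distinct directions, so the claim holds almost surely. If one prefers a deterministic statement, this is precisely the place where a ``general position'' hypothesis on $(X,Y)$ must be invoked, since without it only the non-strict bounds $\ge$ survive, which do not meet the strict requirement in \cref{def:interalign}; everything else in the argument is immediate.
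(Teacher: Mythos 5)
Your proposal is correct and its core step is exactly the paper's: from $P_C x_i = P_C y_i$ one gets $x_i^* = y_i^*$, and that is essentially the entirety of the paper's own proof of \cref{coro:match3}. Where you go beyond the paper is in checking \cref{def:interalign} honestly: you observe that $x_i^*\cdot y_i^*=1$ only beats the cross terms $x_i^*\cdot y_j^* = x_i^*\cdot x_j^*$ \emph{strictly} when the projected directions $p_i/\|p_i\|$ are pairwise distinct (and nonzero), and you supply the almost-sure genericity argument from the continuity of the $\mathrm{vMF}$ laws to guarantee this. The paper's proof stops at $x_i^*=y_i^*$ and never addresses strictness or the possibility $P_C x_i=\vec 0$ (note $x_i\in\mathbb{S}_X\setminus\mathbb{C}$ does not by itself exclude $x_i\perp\mathbb{C}$), so your extra paragraph is not redundant caution but fills a real, if minor, gap in the stated hypotheses; as you say, a deterministic version of the corollary would need an explicit general-position assumption. (Incidentally, the paper's proof ends with ``are not perfectly aligned,'' an evident typo for ``are perfectly aligned.'')
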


The proof of~\cref{coro:match3} is provided in~\cref{sec_supp:match1:coro}. Note that in~\cref{fig:align:5}, $\mathbb{C}$ is a $1$D line, so all transformed paired samples overlap at $y_1^*$ and $y_2^*$. In practice, however, the dimension of $\mathbb{C}$ is typically greater than 1 (e.g., $212$D for MSCOCO dataset). For instance, in the $4$D example in~\cref{fig:project} of~\cref{sec_supp:alignment}, $\mathbb{C}$ is a $2$D plane, and the samples are distributed along a unit circle.

\begin{table*}[t]
    \centering
    \caption{Size of $\theta_{\Delta}$ and accuracies ($\%$) of zero-shot image classification of ViT-B/32.}
    \label{tab:zeroshot:B32}
    \fontsize{8pt}{10pt}\selectfont
    \resizebox{0.9\textwidth}{!}
    {
        \begin{tabular}{l|ccc|ccc|ccc}
        \toprule
        % \hline
        \multirow{2}{*}{Model} & \multicolumn{3}{c|}{ CIFAR-10} & \multicolumn{3}{c|}{ CIFAR-100} & \multicolumn{3}{c}{ ImageNet-1K}  \\
        % \midrule
        % \hline
        \cmidrule(r){2-4} \cmidrule(r){5-7} \cmidrule(r){8-10} 
         & $\Delta_{\theta}$ & R1 & R5 & $\Delta_{\theta}$ & R1 & R5 & $\Delta_{\theta}$ & R1 & R5 \\
        \midrule
        CLIP 
        & 74.69$^{\circ}$ & 89.00 & 99.36 
        & 74.19$^{\circ}$ & 65.23 & 88.88 
        & 71.02$^{\circ}$ & 63.34 & 88.82 \\
        \midrule
        CLIP + Translation 
        &  7.02$^{\circ}$ & 80.97 & 96.09 
        & 30.50$^{\circ}$ & 54.46 & 77.25 
        & 51.68$^{\circ}$ & 60.37 & 86.93 \\
        CLIP + Removal 
        & 72.50$^{\circ}$ & 14.91 & 56.22 
        & 73.16$^{\circ}$ & 6.44 & 16.82  
        & 69.71$^{\circ}$ & 49.50 & 78.55 \\
        CLIP + SSP 
        & \textbf{5.37}$^\circ$  & \textbf{86.43} & \textbf{99.27} 
        & \textbf{30.39}$^\circ$ & \textbf{64.51} & \textbf{88.79} 
        & \textbf{50.40}$^\circ$ & \textbf{62.45} & \textbf{88.41}  \\
        \bottomrule
        % \hline
        \end{tabular}
    }
    \vspace{-1mm}
\end{table*}  

\subsection{Experiment} 

\cref{thm:match1},~\cref{coro:match2} and~\cref{coro:match3} suggest that if projections of $X$ and $Y$ are aligned in the shared space, modality alignment can be improved. This also indicates that the modality gap can be reduced pos hoc without harming downstream performance.  

\myparagraph{Method.} Following~\cref{coro:match3}, we apply the shared space projection (SSP) method pos hoc to improve the alignment of the modality. Detailed procedures are described in~\cref{sec_supp:method:algo}. 

\begin{wrapfigure}{r}{0.50\textwidth}   
    \vspace{-4mm}
    \centering
        \begin{subfigure}[t]{0.24\textwidth}  
            \includegraphics[width=\linewidth]{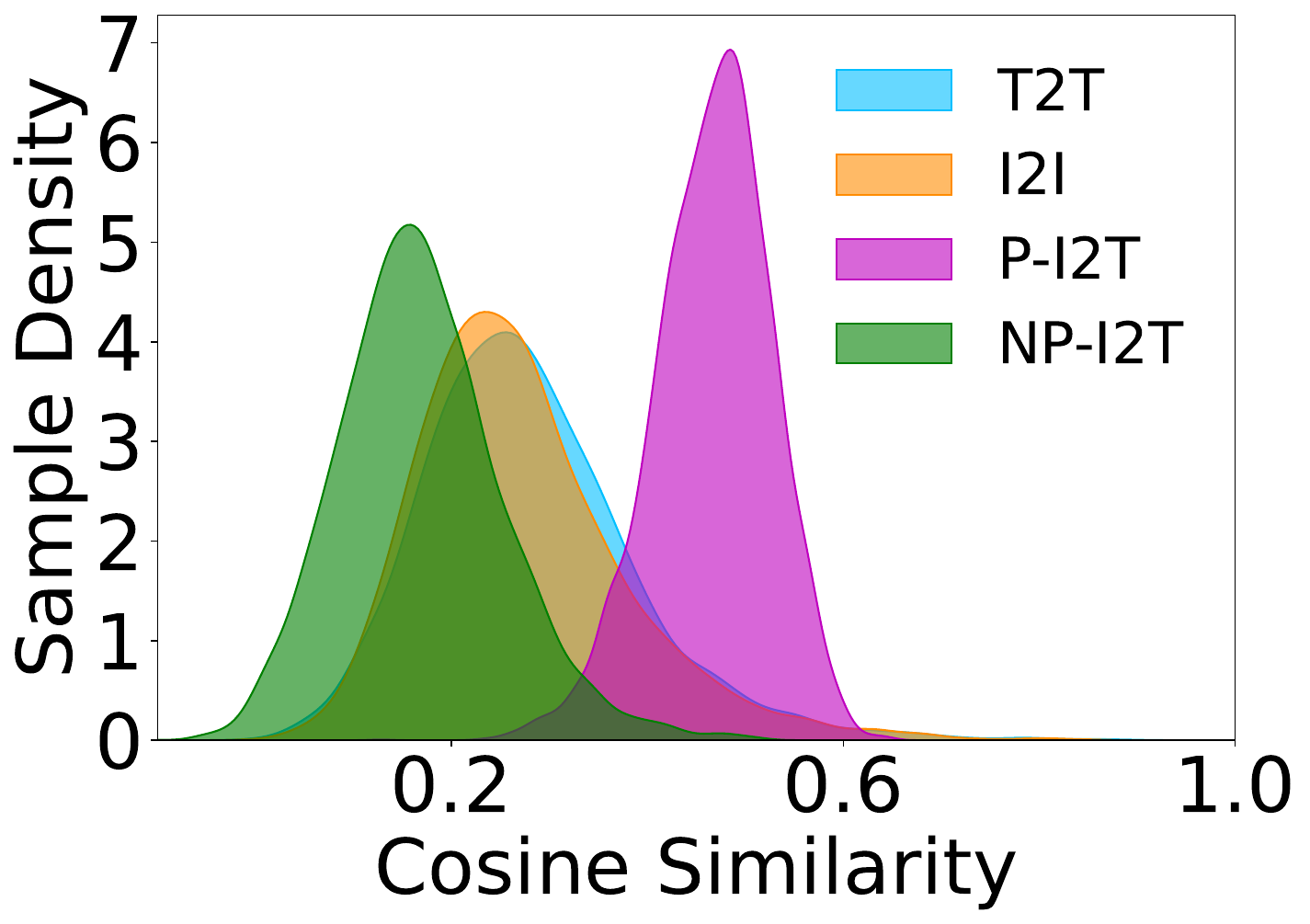}
            \caption{} 
            \label{fig:res:1}
        \end{subfigure} 
        \hfill
        \begin{subfigure}[t]{0.24\textwidth}
            \includegraphics[width=\linewidth]{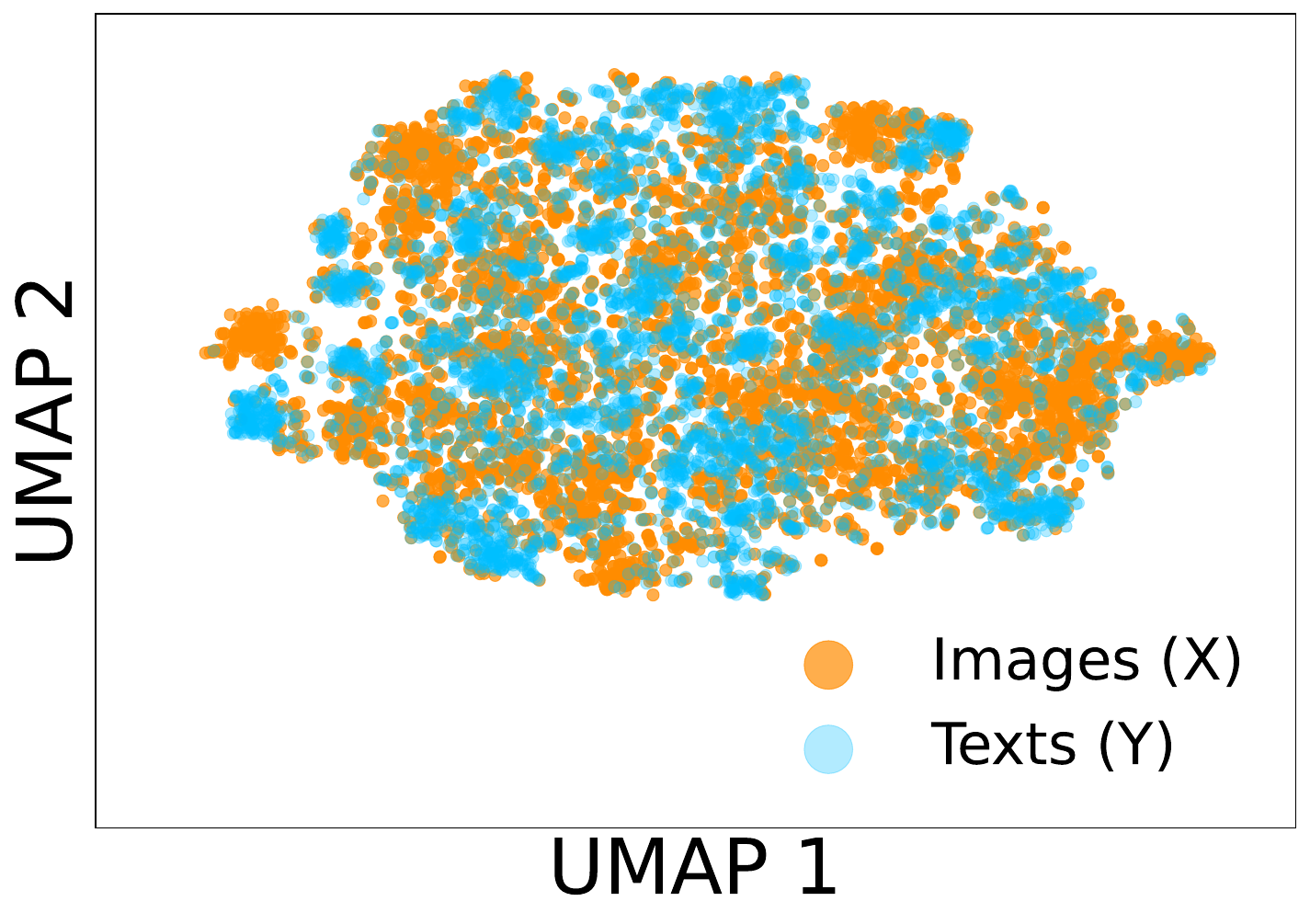}
            \caption{} 
            \label{fig:res:2}
        \end{subfigure} 
    \caption{Results. CLIP ViT-B/32 embeddings of MSCOCO validation set after applying SSP are used. \textbf{(a)}: UMAP plot. \textbf{(b)}: Density plot of cosine similarities.}
    \label{fig:result} 
    \vspace{-2mm}
\end{wrapfigure}

\myparagraph{Modality Alignment.} To validate the effectiveness of our method, we start by visualizing $X$ and $Y$ after applying SSP. We first project $X$ and $Y$ onto an estimated shared space of $212$ dimensions.~\cref{fig:res:1} (vs.~\cref{fig:limit:1}) shows the cosine similarities of the projected $X$ and $Y$. It indicates that our method improves both inter-modal alignment (larger P-I2T) and and intra-modal uniformity (smaller T2T and I2I). Since the shared space is not estimated from the original training data, the estimation can be noisy. Hence, we select a $10$ dimensional subspace of the estimated shared space to reduce the estimation error (details explained in~\cref{sec_supp:method:algo}). We project $X$ and $Y$ onto this subspace.~\cref{fig:res:2} (vs.~\cref{fig:limit:2}) shows that the projected $X$ and $Y$ are no longer in separate clusters.

\myparagraph{Zero-Shot Image Classification.} We also test our method in the zero-shot image classification task on various datasets. Details of this experiment are provided in~\cref{sec_supp:exp:cls}. Our goal is to reduce the size of the modality gap as much as possible without harming downstream performance. In~\cref{tab:zeroshot:B32}, we list results of the size of the modality gap ($\Delta_{\theta}$), the top-1 accuracy (R1), and the top-5 accuracy (R5). We include two baseline methods: a translation-based approach~\citep{mindgap} and a dimension-removal approach~\citep{twoeffect}. Our results show that our method outperforms these baselines by achieving a greater reduction in the modality gap while maintaining comparable downstream performance prior to the post hoc operation. Despite its advantages, our method does not lead to improved downstream performance, as indicated in~\cref{coro:match3}. We argue that this limitation arises because the intra-model isometry assumption does not hold in CLIP. Prior work has shown that CLIP’s vision and text spaces exhibit different neighborhood structures~\citep{understandandfix, twoeffect}. We provide additional experiments of 
\textbf{Zero-Shot Cross-Modal Retrieval} in~\cref{sec_supp:exp:retrieval}.

\section{Conclusion} 
Our work comprehensively investigates two key questions: (1) What causes the modality gap? (2) How does it affect downstream tasks? Our theorems identify \emph{dimension collapse} as the fundamental origin of the modality gap. Our theorems also demonstrate that paired samples cannot be perfectly aligned under the subspace constraint. We further prove that two approaches, hyperplane rotation and shared space projection, can achieve perfect alignment between two modalities. We apply the latter approach post-hoc and validate its effectiveness in downstream tasks. Besides the pos hoc application, our method has potential to be applied to pretraining. It can directly optimize modality alignment in the shared space to achieve the intra-modal isometry. We will explore it in the next step.

\newpage

% \subsubsection*{Author Contributions}
% If you'd like to, you may include  a section for author contributions as is done
% in many journals. This is optional and at the discretion of the authors.

% \subsubsection*{Acknowledgments}
% Use unnumbered third level headings for the acknowledgments. All
% acknowledgments, including those to funding agencies, go at the end of the paper.

% \bibliography{iclr2026_conference}
\bibliography{main}
\bibliographystyle{iclr2026_conference}

\newpage

\appendix

\section{Appendix A: More Discussions}\label{sec_supp:discuss}

\subsection{Related Work} \label{sec_supp:related} 

Due to the page limit of the initial submission (9 pages), we include the related work here. In the final version (10 pages), this section will be moved into the main text.

\subsubsection{Representation Learning and Representational Convergence}

Unimodal representations can be learned in an unsupervised manner using self-supervised contrastive learning (SSL)~\citep{simclr}. When the InfoNCE loss~\citep{instancedis} reaches its minimum, the representations of differently augmented views of an image converge to a single point, and the representation of all images converge to a uniform distribution on $\mathbb{S}^{h-1}$~\citep{alignment}. However,~\citet{dimcollapse} empirically shows that this theoretical optimum may not be realized in practice: the learned representations tend to collapse into a lower-dimensional subspace rather than spanning the entire embedding space.

In the supervised setting, representations can be learned through a neural classifier. When the cross-entropy loss is minimized, representations of samples from different balanced classes converge to the vertices of a regular simplex inscribed in $\mathbb{S}^{h-1}$, a phenomenon known as \emph{neural collapse}~\citep{neuralcollapse}.~\citet{simplex} provide a theoretical explanation of this phenomenon. Representations can also be learned with supervised contrastive learning (SupCon)~\citep{supcon}.~\citet{simplex} prove that the COR of a balanced dataset of SupCon also forms a regular simplex.~\citet{featrecon} provide a refined proof and further show that, for imbalanced datasets, representations converge to a skewed simplex or even collapse into two distinct points. Other works extend the concept of neural collapse to semi-supervised learning~\citep{pivotalign} and OOD detection~\citep{oodnc}.  

Multimodal representations are learned through multimodal contrastive learning (MCL). However, the COR of MCL remains poorly understood. In this work, we address this gap by characterizing the COR of MCL. Our theorems suggest that MCL seeks to maximize the mutual information between the two modalities in the shared space while preserving modality-specific information in the complementary space.

\subsubsection{Modality Gap}

\citet{mindgap} first identified the modality gap, a geometric phenomenon characterized by the complete separation of representations of different modalities in the embedding space. They hypothesize that the gap arises from the cone effect due to random model initialization and is preserved by the contrastive learning objective.~\citet{contrastgap} argues that the modality gap is inherent to contrastive loss.~\citet{explainmitigate, understandandfix} examine the role of mismatch pairs and the temperature parameter.~\citet{gaplocal} attribute the cause of the modality gap to insufficient training.~\citet{twoeffect} suggests that problematic training data, which contain information bias, create the gap. Most of these works validate their hypotheses through numerical examples on a small number of data pairs. By contrast, we provide an analysis based on the entire distribution.

In addition, several studies have proposed post-hoc methods to mitigate the modality gap.~\citet{mindgap} attempts to translate the representations of one modality toward those of another using a constant shift.~\citet{twoeffect} explores removing the few dimensions that primarily drive the modality gap. However, experiments in both works reported that narrowing the modality gap pos hoc may lead to degraded downstream performance.~\citet{mitigategap} mitigates the modality gap by retraining CLIP from scratch. Our work focuses on training-free pos-hoc plug-and-play methods that can directly leverage existing pre-trained models.

\subsection{Limitations}\label{sec_supp:limit}
While our work investigates the origin of the modality gap and attributes it to dimension collapse, we do not address the exact factors that lead to dimension collapse.~\citep{dimcollapse} theoretically show that dimension collapse occurs whenever negative eigenvalues appear in the weight matrix of a neural network.~\citep{twoeffect} suggests that when training data with information bias are sufficiently aligned, `more dimensions' are required to focus on objects and and `less dimensions' to focus on attributes, ultimately resulting in dimension collapse.~\citep{multiplicity} provides a more comprehensive study of the inherent challenges within MCL, including intra-modal variability, asymmetries in information, and task-dependent alignment. We suspect that all these factors contribute to dimension collapse in the learned representations. Identifying the causes of dimension collapse thus constitutes a major open problem, parallel to understanding the origin of the modality gap, and represents an important direction for future research.

\subsection{Connections Between Our Theorems and Previous Hypotheses}

In this subsection, we examine the connection between empirical observations from prior studies and our theoretical conclusions.

\myparagraph{Cone Effect:} The cone effect hypothesis~\citep{mindgap} posits that the representations of $X$ and $Y$ fall into different cones on the hypersphere, thereby causing the modality gap. In our theoretical framework, as described in~\cref{sec:converge:vmf}, the cone size of the representations is modeled by the parameter $\kappa$. However, in contrast to this hypothesis, \cref{thm:gap2} shows that the cone size has no effect on the convergence of the modality gap, even when the representations follow a uniform distribution (i.e., $\kappa \rightarrow 0$).

\myparagraph{Temperature:} It is hypothesized that the choice of temperature contributes to the emergence of the modality gap~\citep{explainmitigate, understandandfix}. However,~\cref{thm:gap2} suggests that the temperature parameter, $\tau$, has no effect on the convergence of the modality gap. We suspect that if temperature has any impact, it operates indirectly by influencing dimension collapse. 

\myparagraph{Information Bias:}~\citep{twoeffect} argue that information bias, i.e., images containing more information than the corresponding text, leads to the modality gap. The unequal amount of information across modalities prevents Intra-Modal Isometry of the representations (see~\cref{def:intraiso}), making it difficult for the model to align representations from the two modalities. This results in sub-optimal inter-modal alignment, which in turn imposes a lower bound on the alignment terms and ensures $\Delta_{\theta} > 0$. We posit that there is a strong connection between information bias and dimension collapse: information bias induces dimension collapse in the learned representations, thereby causing the modality gap.

\subsection{Disclosure of LLM Usage}
In the preparation of this paper, we used large language models (LLMs) as general-purpose assistive tools. Specifically, we used an LLM to help with grammar polishing, wording improvements, and proof‐reading.

Any text or content generated by the LLM have been reviewed and edited by the authors. We take full responsibility for the content of the submission. The LLM was not used to produce novel research claims, data analysis, results formulation, or conclusions. The research ideation, theoretical contributions, experiments, and all core technical work are entirely the work of the authors.

% \newpage
\section{Appendix B: More Supporting Examples}

In this subsection, we provide more examples and illustrations. 

\begin{figure}[t]   
    \centering
        \begin{subfigure}[t]{0.3\textwidth}
            \includegraphics[width=\linewidth]{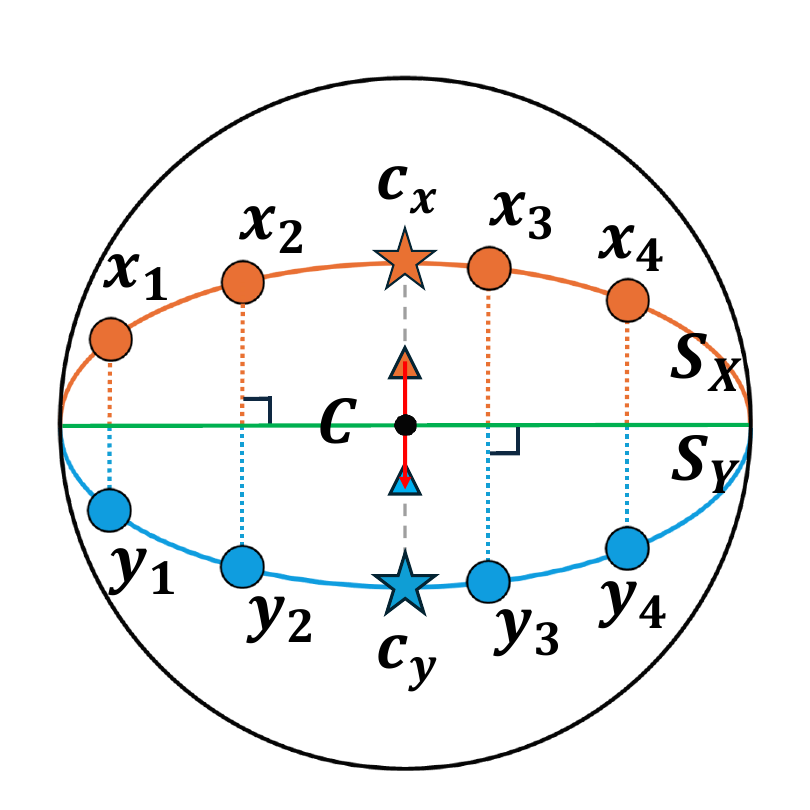}
            \caption{} 
            \label{fig:translate:1}
        \end{subfigure} 
        \hfill
        \begin{subfigure}[t]{0.3\textwidth}
            \includegraphics[width=\linewidth]{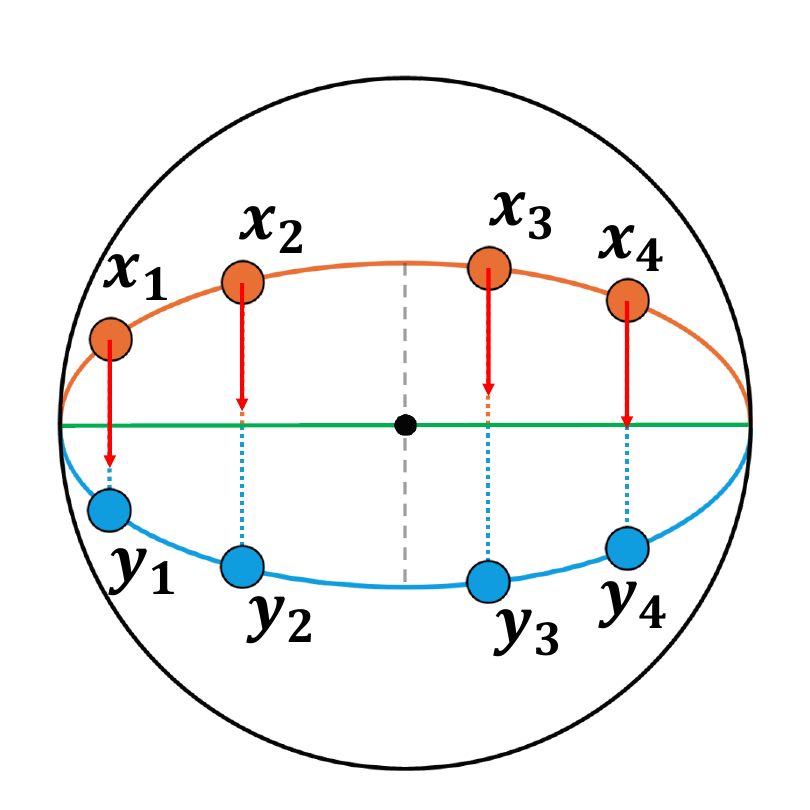}
            \caption{} 
            \label{fig:translate:2}
        \end{subfigure} 
        \hfill
        \begin{subfigure}[t]{0.3\textwidth}
            \includegraphics[width=\linewidth]{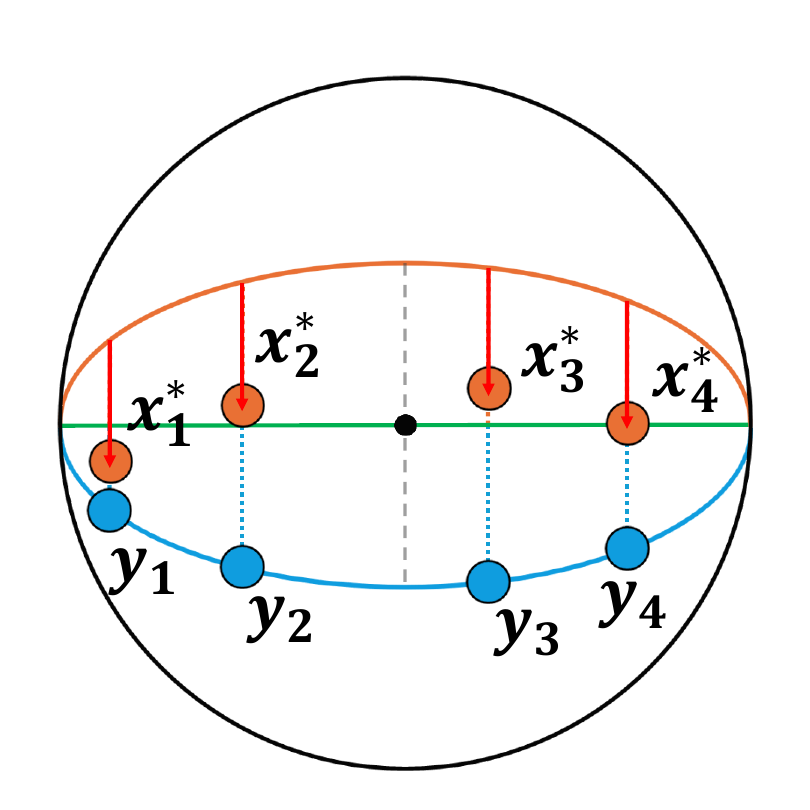}
            \caption{} 
            \label{fig:translate:3}
        \end{subfigure} 
        
        \begin{subfigure}[t]{0.3\textwidth}
            \includegraphics[width=\linewidth]{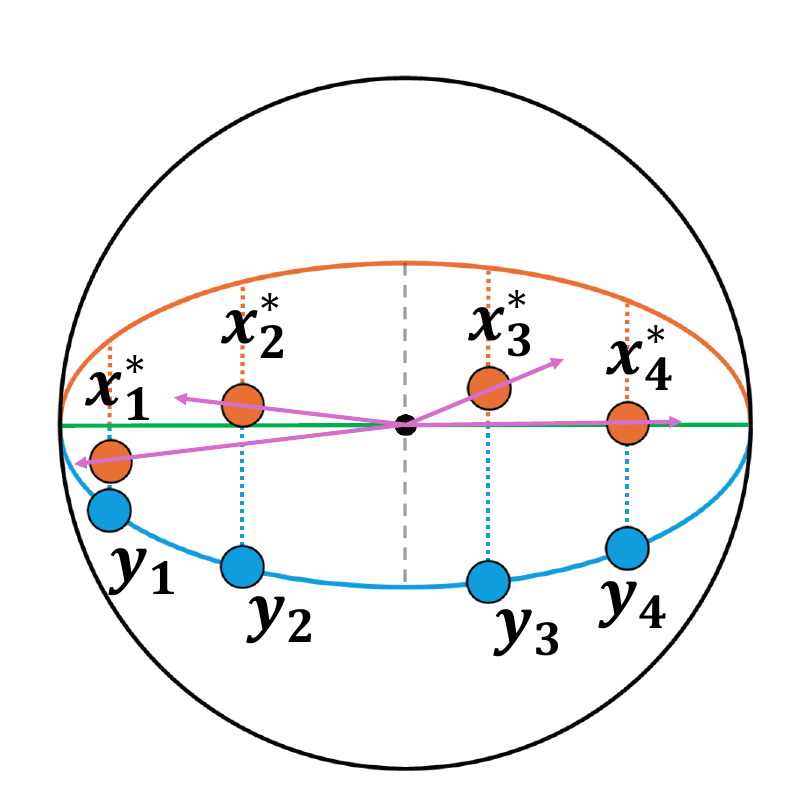}
            \caption{} 
            \label{fig:translate:4}
        \end{subfigure} 
        \hfill
        \begin{subfigure}[t]{0.3\textwidth}
            \includegraphics[width=\linewidth]{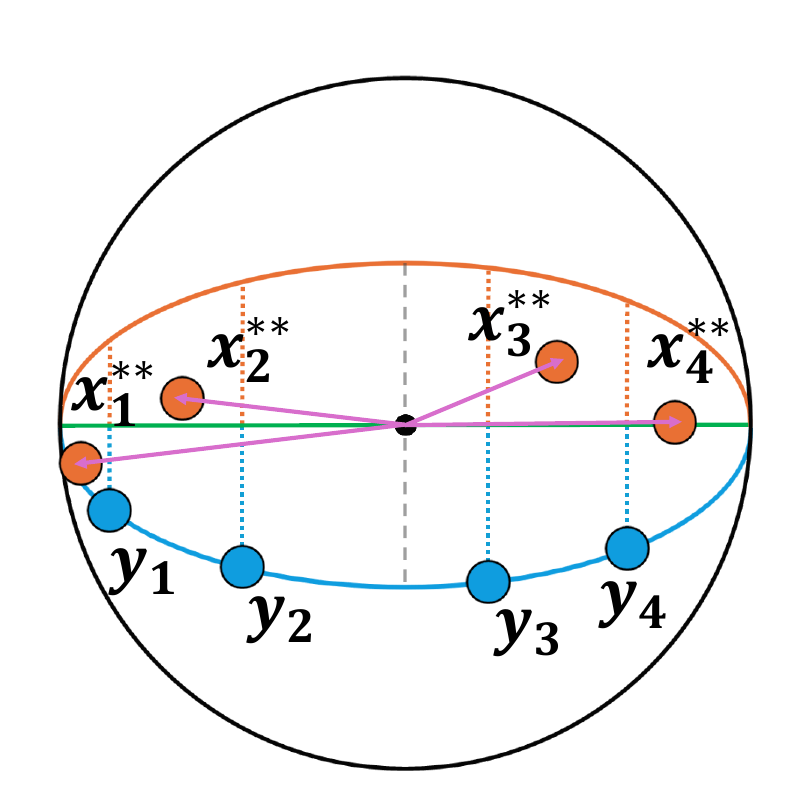}
            \caption{} 
            \label{fig:translate:5}
        \end{subfigure} 
        \hfill
        \begin{subfigure}[t]{0.3\textwidth}
            \includegraphics[width=\linewidth]{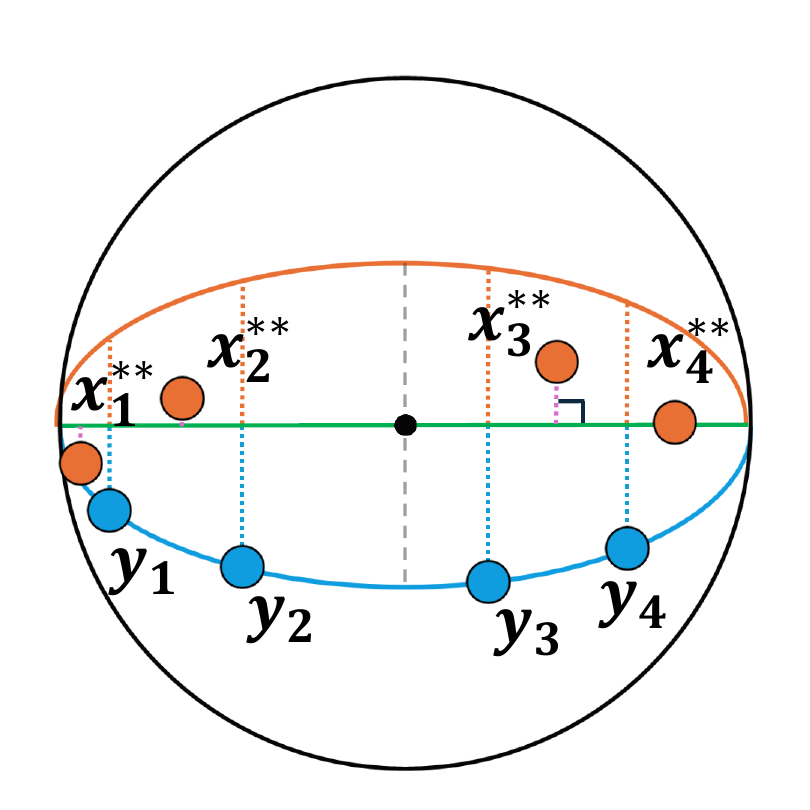}
            \caption{} 
            \label{fig:translate:6}
        \end{subfigure}
    \caption{Translation-based method. Notations follow~\cref{fig:opt}. \textbf{(a)}: Condition (A6) ($c_x, c_y \perp \mathbb{C}$) and (A8) ($P_C x_i = P_C y_i$) hold. Orange and blue triangles represent $\mu_x$ and $\mu_y$, respectively. Red arrows indicate the direction and magnitude of the constant translation ($\mu_y - \mu_x$).
    \textbf{(b)}: Translate $X$. \textbf{(c)}: $X$ are translated to $X^{*}$. \textbf{(d)}: Re-normalize $X^{*}$. Purple arrows indicate the direction and magnitude of the normalization. \textbf{(e)}: $X^{*}$ are re-normalized to $X^{**}$. \textbf{(f)}: The distribution of $X$ is altered after translation, such that $P_C x_i^{**} \neq P_C y_i^{**}$.}
    \label{fig:translate}
\end{figure}

\subsection{Illustrative Example of Translation-Based Method}\label{sec_supp:translation}

In this subsection, we provide an illustrative example showing that the impact of translating $X$ pos hoc on downstream performance can be arbitrary.~\cref{fig:translate:1} depicts a set of $X$ and $Y$ where Condition (A6) and Condition (A8) hold.~\cref{fig:translate:2} illustrates how $X$s are going to be translated.~\cref{fig:translate:3} shows the positions of $X^{*}$s after translation.~\cref{fig:translate:4} illustrates how $X^{*}$s are going to be normalized.~\cref{fig:translate:5} shows the positions of $X^{**}$s after normalization. In~\cref{fig:translate:6}, we observe that the distribution of $X^{**}$s differs substantially from that of $X$: they no longer reside in the same shared space (the large circle in this example), and their projections onto the shared space diverge from those of $Y$s. The direction of these changes depends on the specific configuration of $X$ and is therefore unpredictable. Hence, the impact of translating $X$ on downstream performance is unpredictable. In practice, the impact is often a negative one.

\begin{figure}[t]   
    \centering
        \begin{subfigure}[t]{0.3\textwidth}
            \includegraphics[width=\linewidth]{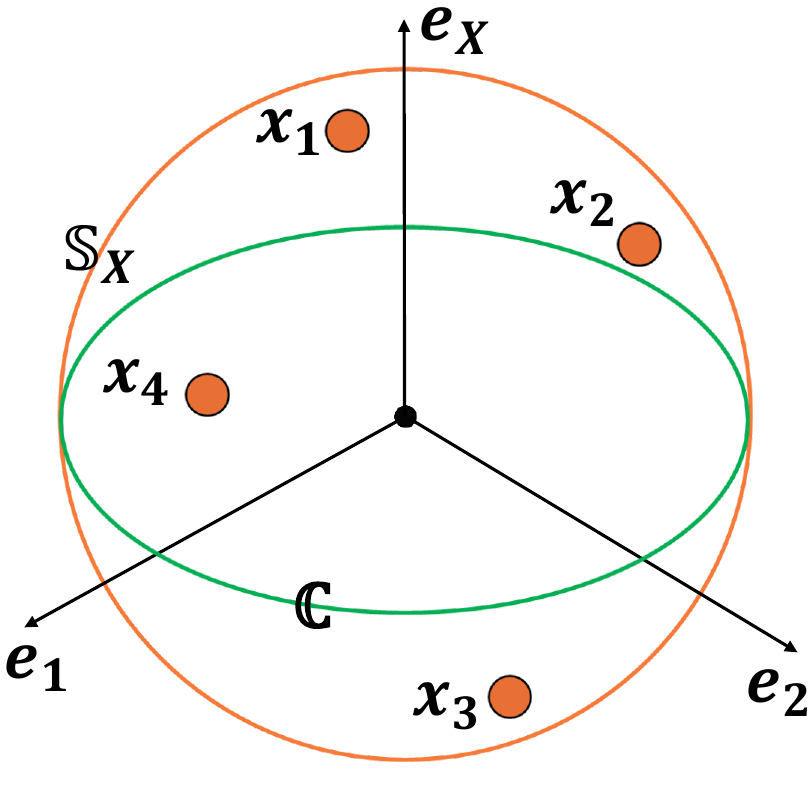}
            \caption{} 
            \label{fig:project:1}
        \end{subfigure} 
        \hfill
        \begin{subfigure}[t]{0.3\textwidth}
            \includegraphics[width=\linewidth]{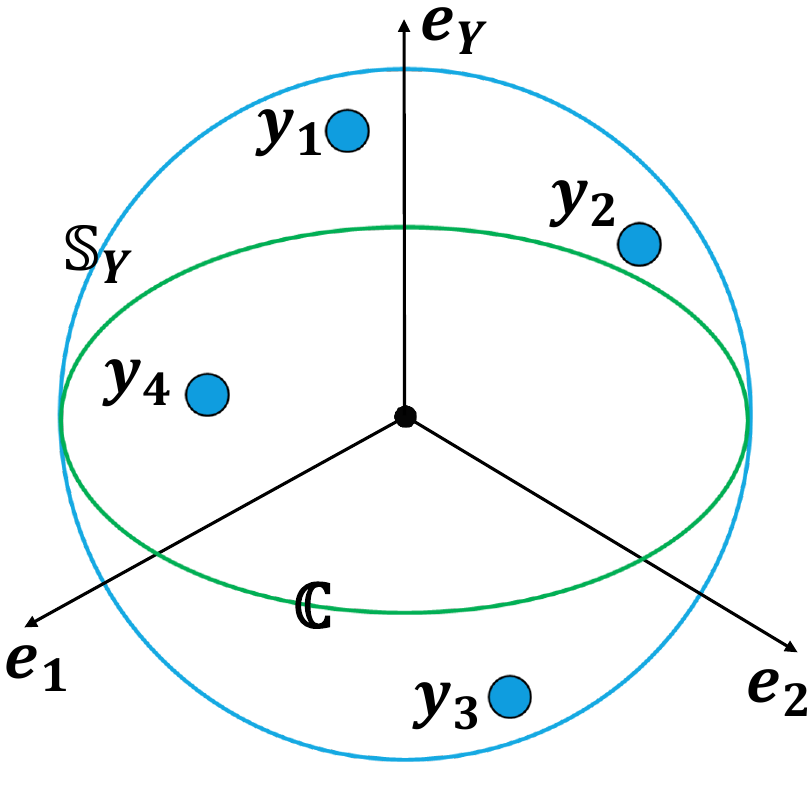}
             
            \caption{} 
            \label{fig:project:2}
        \end{subfigure} 
        \hfill
        \begin{subfigure}[t]{0.3\textwidth}
            \includegraphics[width=\linewidth]{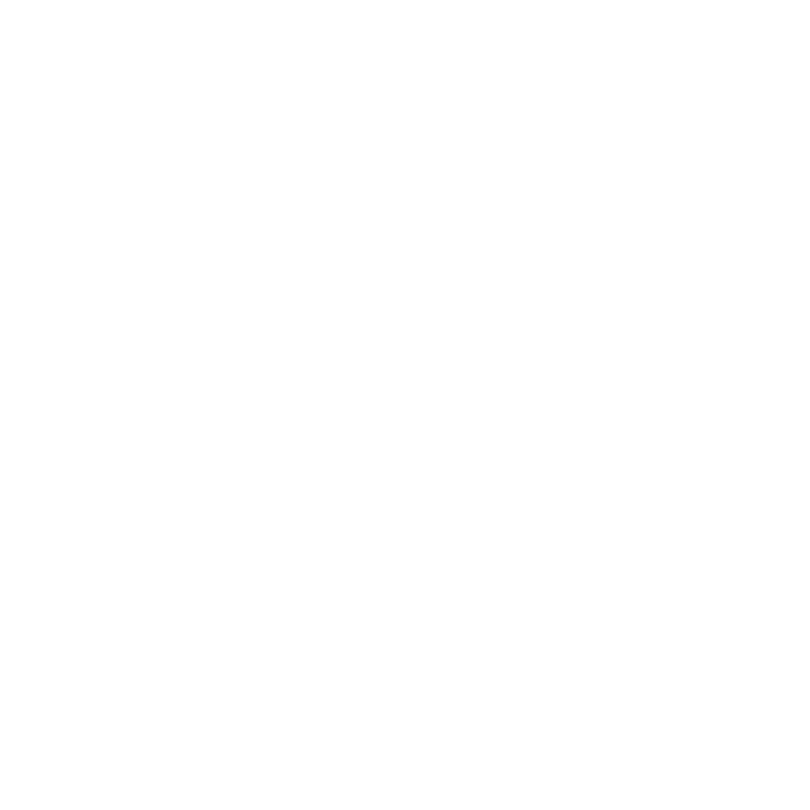}
        \end{subfigure} 
        % \hfill
        \begin{subfigure}[t]{0.3\textwidth}
            \includegraphics[width=\linewidth]{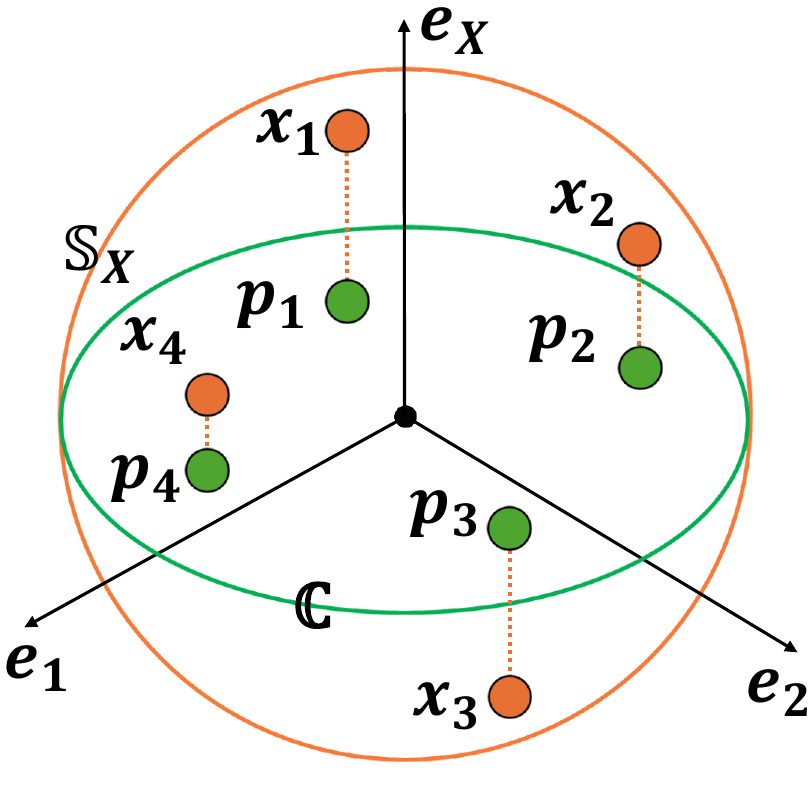}
            \caption{} 
            \label{fig:project:3}
        \end{subfigure} 
        \hfill
        \begin{subfigure}[t]{0.3\textwidth}
            \includegraphics[width=\linewidth]{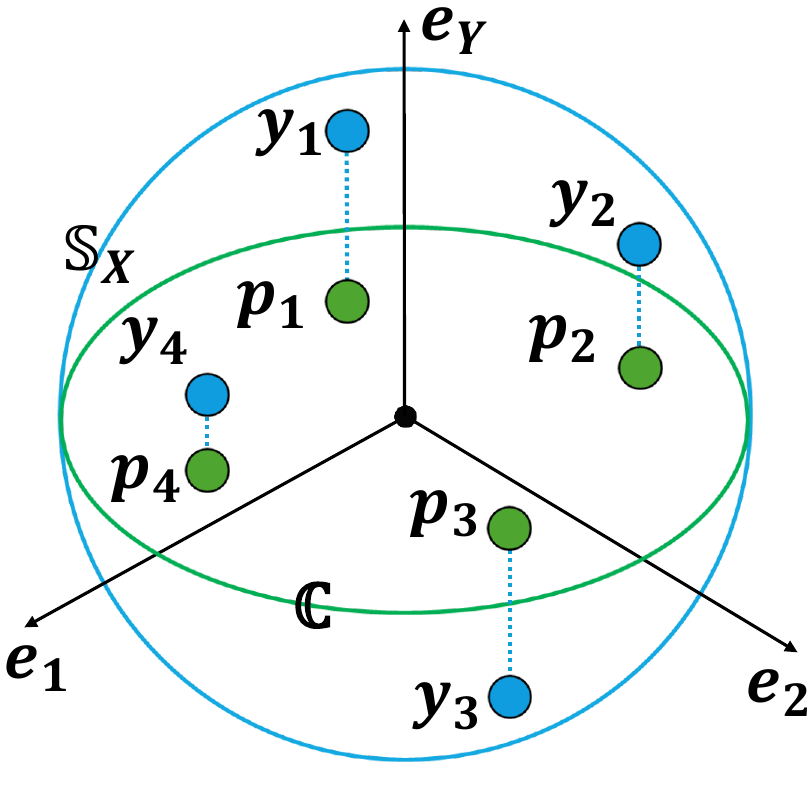}
            \caption{} 
            \label{fig:project:4}
        \end{subfigure} 
        \hfill
        \begin{subfigure}[t]{0.3\textwidth}
            \includegraphics[width=\linewidth]{project_8.pdf}
        \end{subfigure} 
        % \hfill
        \begin{subfigure}[t]{0.3\textwidth}
            \includegraphics[width=\linewidth]{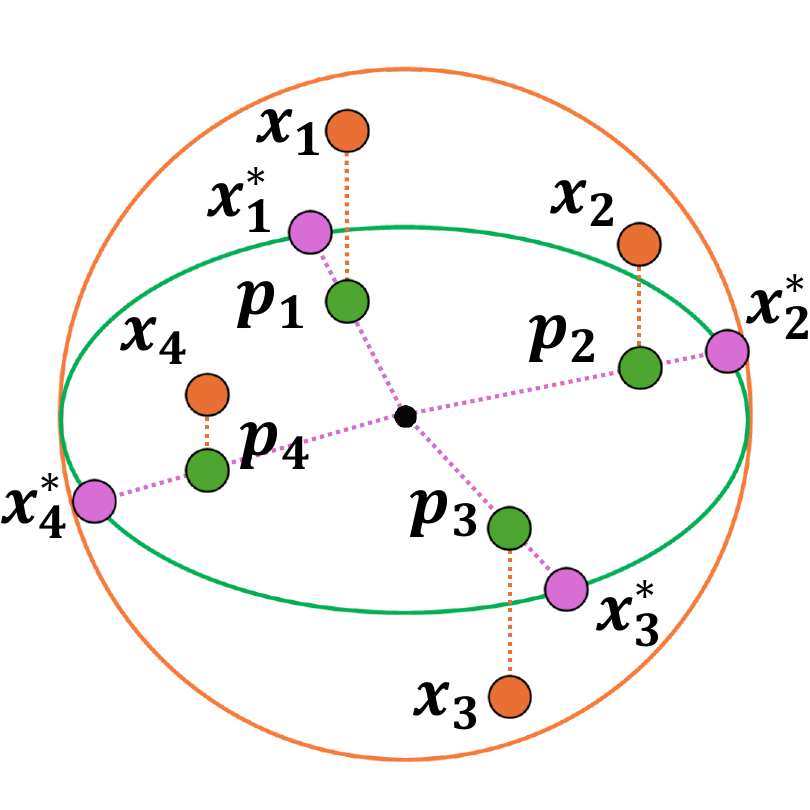}
            \caption{} 
            \label{fig:project:5}
        \end{subfigure} 
        \hfill
        \begin{subfigure}[t]{0.3\textwidth}
            \includegraphics[width=\linewidth]{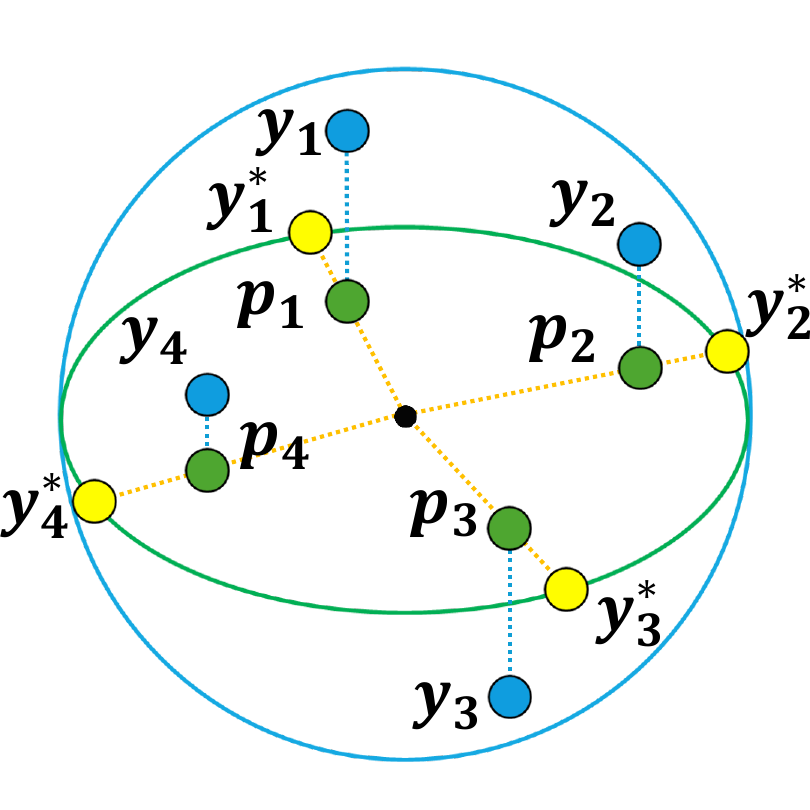}
            \caption{} 
            \label{fig:project:6}
        \end{subfigure} 
        \hfill
        \begin{subfigure}[t]{0.3\textwidth}
            \includegraphics[width=\linewidth]{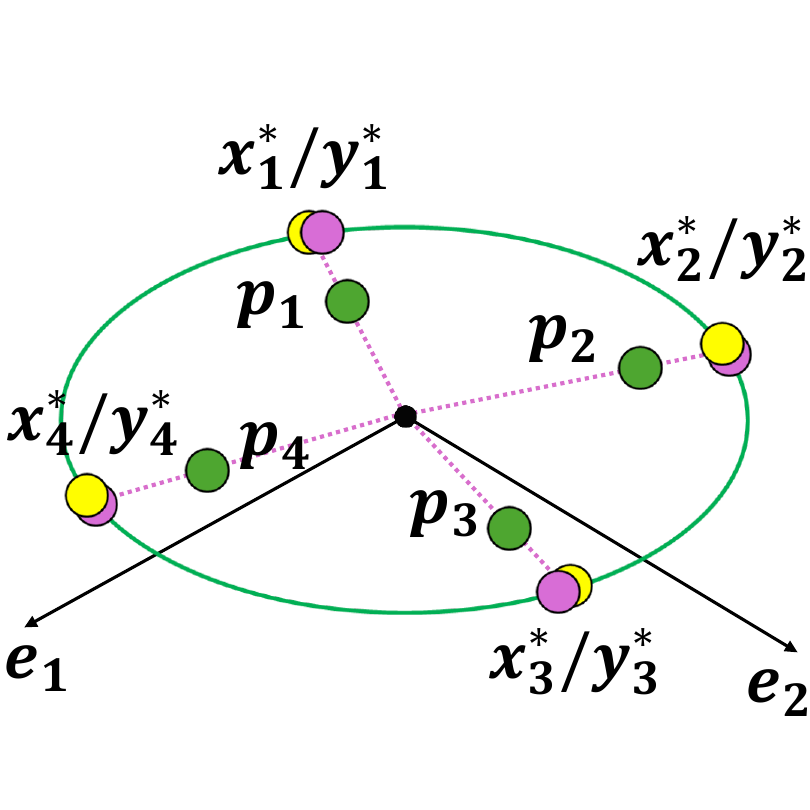}
            \caption{} 
            \label{fig:project:7}
        \end{subfigure} 
    \caption{Modality alignment in $4$D space. $\mathbb{S}_X$ (orange circle) and $\mathbb{S}_Y$ (blue circle) are two $3$D unit spheres within $\mathbb{S}^3$ as described in~\cref{sample:space:eq1} and~\cref{sample:base:s:eq1}. The shared space $\mathbb{C}$ is a $2$D plane as described in~\cref{sample:base:c:eq1}. The intersection $\mathbb{S}_X \cap \mathbb{C}=\mathbb{S}_Y \cap \mathbb{C}$ forms a $2$D circle (green circle). \textbf{(a)} Four samples from $X$ (orange dots). \textbf{(b)} Four samples from $Y$ (blue dots). \textbf{(c), (d)}: The projections of $(x_i, y_i)_{i \neq c}$ onto the shared space $\mathbb{C}$ converge to $p_i$ (green points), i.e., $P_C x_i = P_C y_i = p_i$. \textbf{(e), (f)}: Re-normalize $p_i$ to get $x_i^*$ (purple dots) and $y_i^*$ (yellow dots) as described in~\cref{sample:base:trans:eq1}. \textbf{(g)}: ($x_i^*, y_i^*$) are perfectly aligned.}
    \label{fig:project}
\end{figure}

\subsection{Additional Example of Modality Alignment}\label{sec_supp:alignment}

In~\cref{sec:match}, we discuss how the shared space projection approach can improve modality alignment. As an illustrative case,~\cref{fig:align:5} presents an example in a $3D$ embedding space where $\mathbb{C}$ corresponds to a $1$D line. However, this example may be misinterpreted as implying that all transformed paired samples $(x_i^*, y_i^*)$ perfectly overlap at $y_1*$ and $y_2*$. To clarify this point, in this subsection we examine a more intricate example in a $4$D embedding space.

First, recall~\cref{def:subspace} and set $h=4$:

\textbf{Definition 5} [Restate with $h=4$]
    Let $\mathbb{A}$ and $\mathbb{B}$ be two distinct $(h-1)$-dimensional linear subspaces (i.e., hyperplanes through the origin) with normal vectors $n_A$ and $n_B$, projection matrices $P_A$ and $P_B$. Denote $\mathbb{C} = \mathbb{A} \cap \mathbb{B}$, with $P_C$ as its projection matrix. Define $ \phi = \cos^{-1}\left( \frac{n_A \cdot n_B}{\left\|n_A\right\| \cdot\|n_B\|} \right)$ as the angle between $\mathbb{A}$ and $\mathbb{B}$, restricted to $0 <\phi_{\mathrm{min}} \leq \phi < \frac{\pi}{2}$. Then, $\mathbb{S}_X$ and $\mathbb{S}_Y$ can be represented as:
    
    \begin{equation}\label{sample:space:eq1}
        \begin{aligned}
            & \mathbb{S}_X = \mathbb{S}^{3} \cap \mathbb{A}=\left\{x \in \mathbb{R}^{4} : \|x\|=1, n_A \cdot x=0\right\} \cong \mathbb{S}^{2} \in \mathbb{S}^{3} ,\\
            & \mathbb{S}_Y = \mathbb{S}^{3} \cap \mathbb{B}=\left\{y \in \mathbb{R}^{4} :  \|y\|=1, n_B \cdot y=0\right\} \cong \mathbb{S}^{2} \in \mathbb{S}^{3}.
        \end{aligned}
    \end{equation}

Now, $\mathbb{S}^{3}$ denotes the $4$D unit hypersphere. To analyze this case, we decompose the embedding space. Let $\{e_1, e_2, e_3, e_4\}$ be an orthonormal basis of $\mathbb{R}^4$. Suppose that the shared space $\mathbb{C}$ lies within the span of $e_1$ and $e_2$:
    
\begin{equation}\label{sample:base:c:eq1}
    \begin{aligned}
        \mathbb{C} 
        &= \operatorname{span}\left\{e_1\right\} \oplus \operatorname{span}\left\{e_2\right\}. \\
    \end{aligned}
\end{equation}

Therefore, $\mathbb{C}^{\perp}$ is a 2-dimensional orthogonal complement of $C$, and $\mathbb{C}^{\perp}$ satisfies:

\begin{equation}\label{sample:base:c:eq2}
    \begin{aligned}
        \mathbb{C}^{\perp}
        &= \operatorname{span}\left\{e_3\right\} \oplus \operatorname{span}\left\{e_4\right\}, \\
        \mathbb{R}^h
        &=\mathbb{C} \oplus \mathbb{C}^{\perp} .
    \end{aligned}
\end{equation}

Define two unit vectors $e_X$ and $e_Y$ such that:
    
\begin{equation} \label{sample:base:e:eq1}
    \begin{aligned}
        & e_X \in \mathbb{S}_X, \enspace \text{and} \hspace{2mm}  e_X \perp \mathbb{C}, \\
        & e_Y \in \mathbb{S}_Y, \enspace \text{and} \hspace{2mm}  e_Y \perp \mathbb{C}.\\
    \end{aligned}
\end{equation}

Since $n_A, n_B \in \mathbb{C}^{\perp}$, $n_A \perp e_X$ and $n_B \perp e_Y$, we have:

\begin{equation} \label{sample:base:e:eq2}
    \begin{aligned}
        \left\langle e_X, e_Y\right\rangle = \pm\left\langle n_A, n_B\right\rangle , 
    \end{aligned}
\end{equation}

and we choose a pair of $e_X$ and $e_Y$ such that: 

\begin{equation} \label{sample:base:e:eq3}
    \begin{aligned}
        \left\langle e_X, e_Y\right\rangle = \left\langle n_A, n_B\right\rangle = \cos\left(\phi\right) \in (0, 1).
    \end{aligned}
\end{equation}

Therefore, $\mathbb{S}_X$ and $\mathbb{S}_Y$ can be represented by two orthonormal bases:

\begin{equation}\label{sample:base:s:eq1}
    \begin{aligned}
        \mathbb{S}_X
        & \in \mathbb{A} = \operatorname{span}\left\{e_1\right\} \oplus \operatorname{span}\left\{e_2\right\} \oplus \operatorname{span}\left\{e_X\right\}, \\
        \mathbb{S}_Y
        & \in \mathbb{B} = \operatorname{span}\left\{e_1\right\} \oplus \operatorname{span}\left\{e_2\right\} \oplus \operatorname{span}\left\{e_Y\right\}. \\
    \end{aligned}
\end{equation}

In~\cref{thm:gap3}, we show that when $\mathcal{L}_{\mathrm{MCL}}^c$ is minimized, $c_x, c_y \perp \mathbb{C}$ (Condition (A6)). Accordingly, we can set $c_x=e_X$ and $c_y=e_Y$. These settings are illustrated in~\cref{fig:project:1} and~\cref{fig:project:2}. 

In~\cref{thm:match1}, we show that when $\mathcal{L}_{\mathrm{MCL}}^{i \neq c}$ is minimized, $P_C x_i = P_C y_i$ (Condition (A8)). This condition is illustrated in~\cref{fig:project:3} and~\cref{fig:project:4}.

Re-normalize the projections to obtain the transformed pairs:

\begin{equation}\label{sample:base:trans:eq1}
    \begin{aligned}
        x_i^* = \frac{P_C x_i}{\|P_C x_i\|} ,\\
        y_i^* = \frac{P_C y_i}{\|P_C y_i|} ,\\
    \end{aligned}
\end{equation}

We illustrate $x_i^*$ and $y_i^*$ in~\cref{fig:project:5} and~\cref{fig:project:6}. In~\cref{coro:match3}, we show that ($x_i^*, y_i^*$) are perfectly aligned, as illustrated in~\cref{fig:project:7}.

\begin{figure}[t]   
    \centering
        \includegraphics[width=0.3\textwidth]{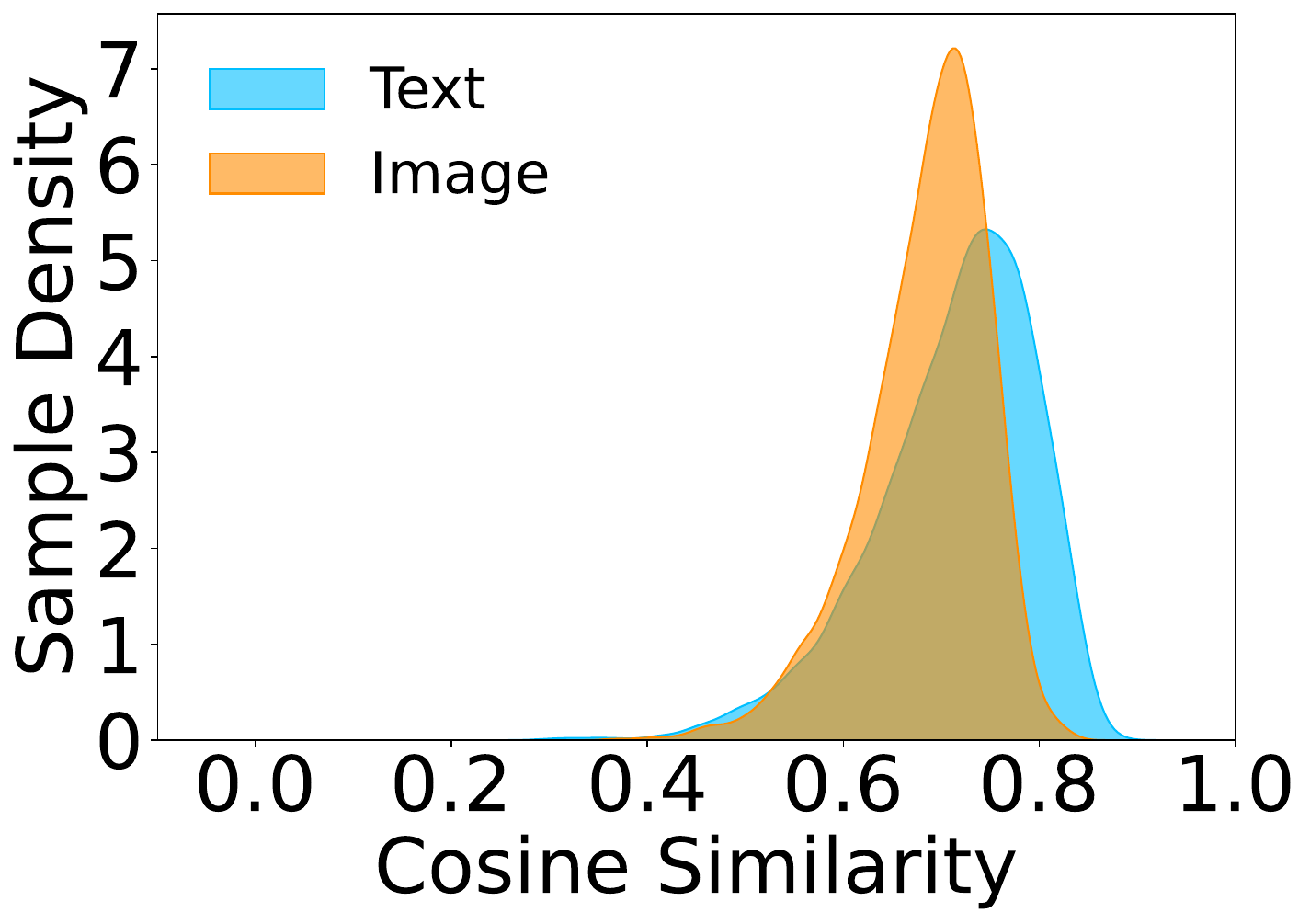}
         
        \caption{Density plot of $\theta_i^c$ of CLIP ViT-B/32 representations of the MSCOCO validation set.} 
        \label{fig:theta}
\end{figure}

\subsection{Justification of Assumption in Theorem 4}\label{sec_supp:assumption}

In~\cref{thm:match1}, we assume that the angle between a modality input and its center, $\theta_i^c$, satisfies $\theta_i^c \in \left(0, \tfrac{\pi}{2}\right)$. In~\cref{lemma:u_range}, we provide a theoretical justification for this assumption. Furthermore, the density plot of $\theta_i^c$ in~\cref{fig:theta} shows that almost all $\theta_i^c$ indeed lie within $\left(0, \tfrac{\pi}{2}\right)$.

\section{Appendix C: Details of Method}\label{sec_supp:method}

In this subsection, we describe in details about how to detect dimension collapse, how to detect the shared space of two subspaces, and how to conduct projection onto the shared space.

\begin{figure}[t]   
    \centering
        \begin{subfigure}[t]{0.3\textwidth}
            \includegraphics[width=\linewidth]{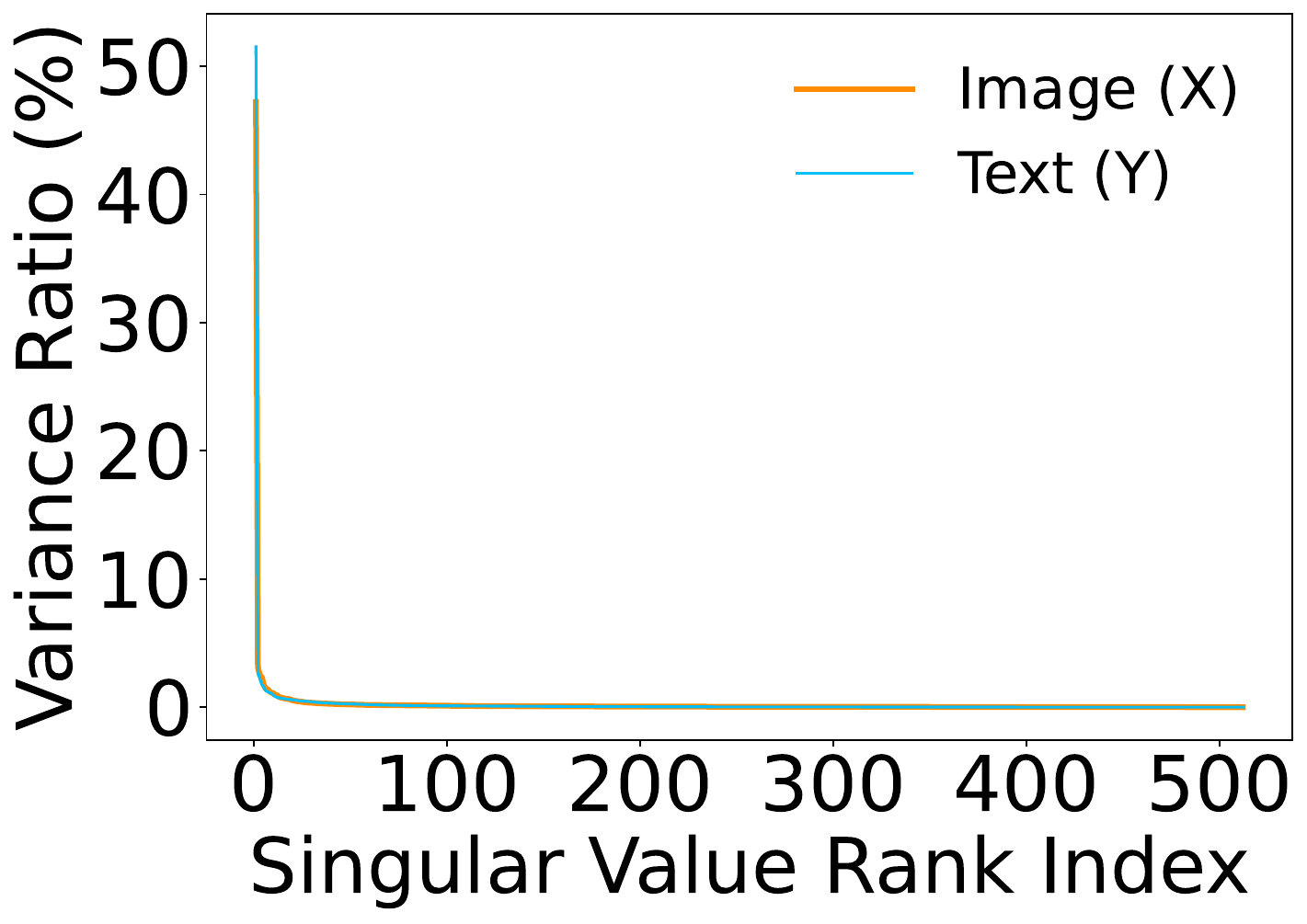}
            \caption{} 
            \label{fig:limit2:1}
        \end{subfigure} 
        \hfill
        \begin{subfigure}[t]{0.3\textwidth}
            \includegraphics[width=\linewidth]{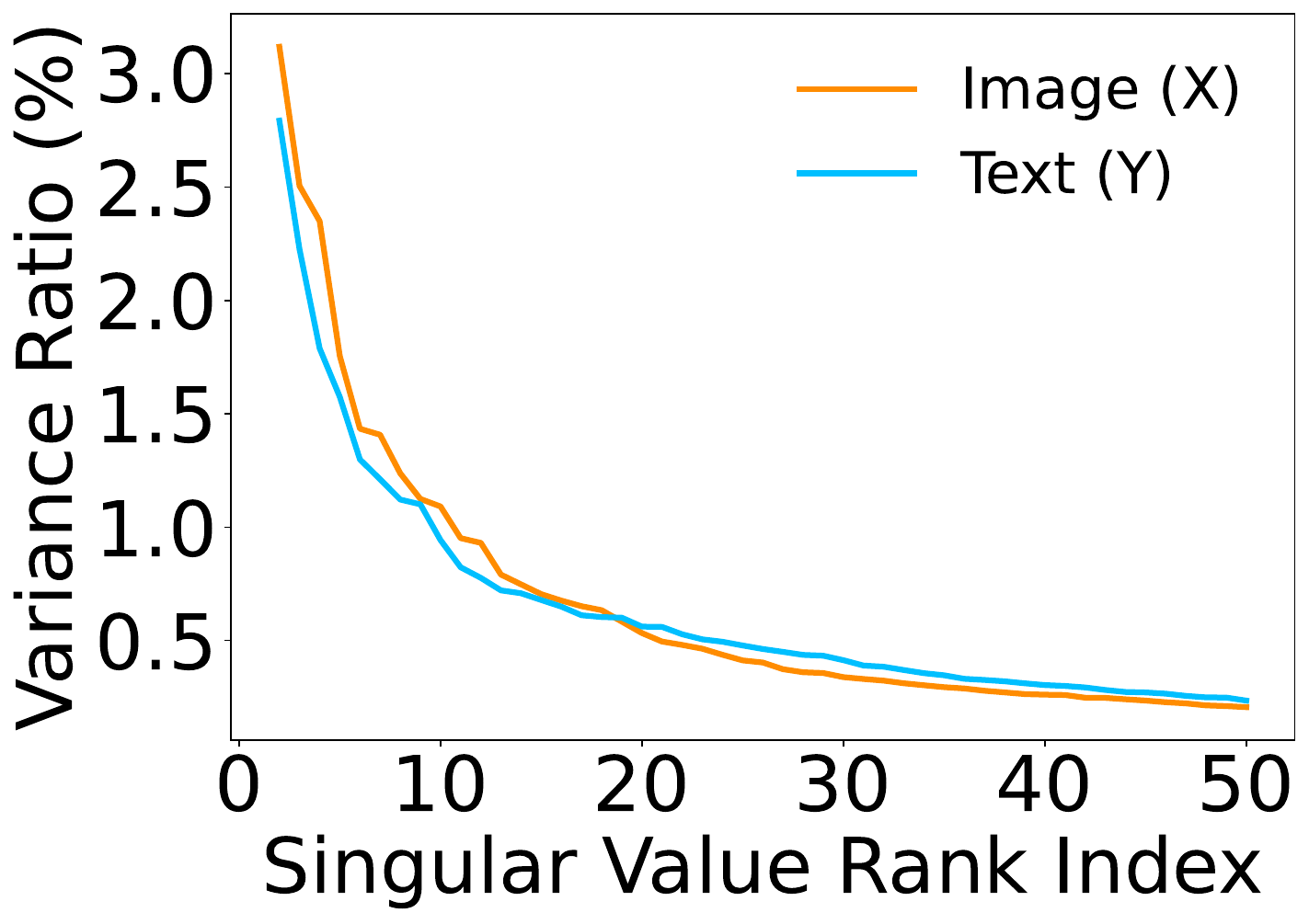}
            \caption{} 
            \label{fig:limit2:2}
        \end{subfigure} 
        \hfill
        \begin{subfigure}[t]{0.3\textwidth}
            \includegraphics[width=\linewidth]{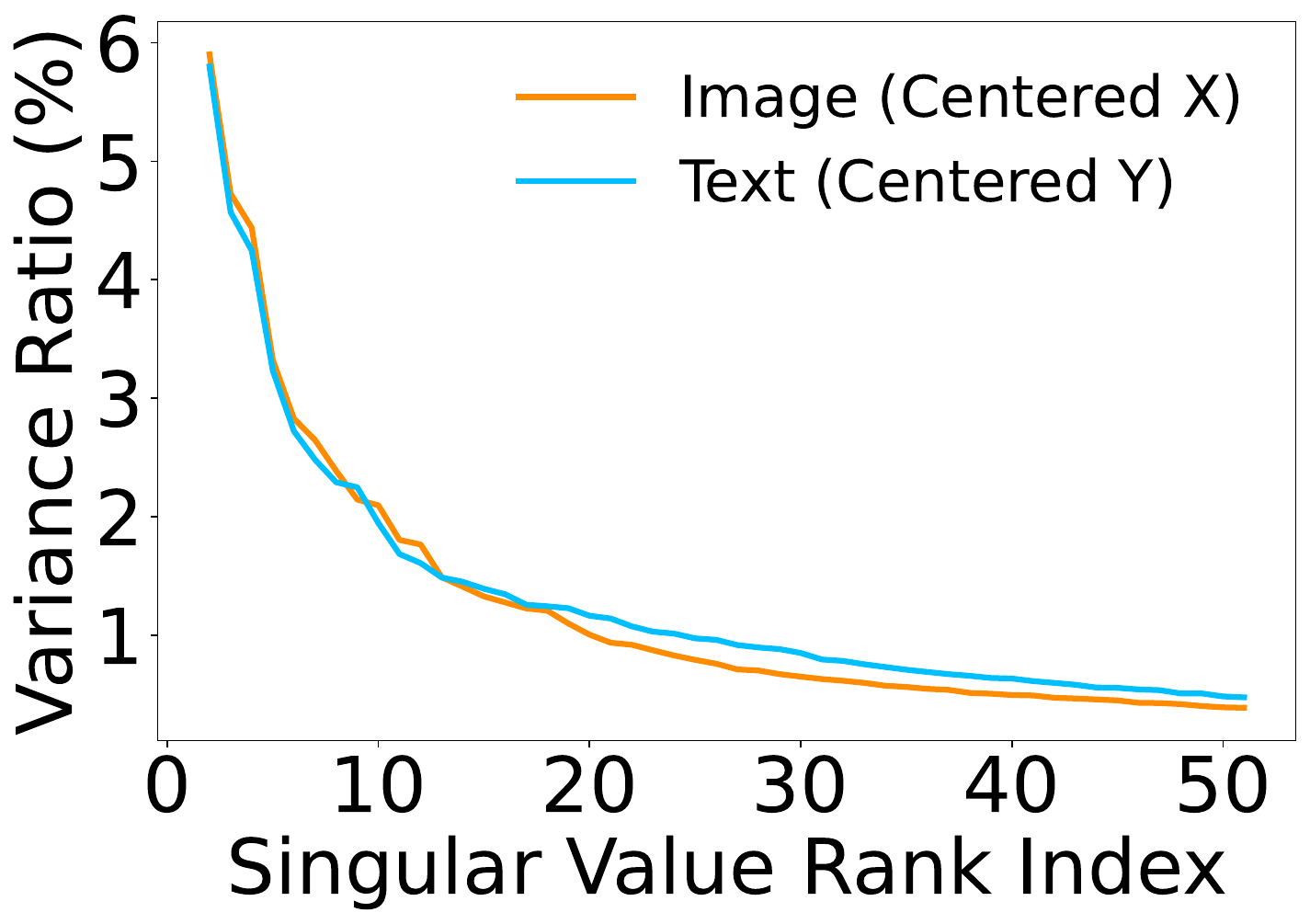}
            \caption{} 
            \label{fig:limit2:3}
        \end{subfigure} 
    \caption{Singular values. CLIP ViT-B/32 representations of the MSCOCO validation set are used. \textbf{(a)}: All singular values of $X$ and $Y$. \textbf{(b)}: The $2^{\mathrm{nd}}$ to $50^{\mathrm{th}}$ singular values of $X$ and $Y$. \textbf{(c)}: The $1^{\mathrm{st}}$ to $50^{\mathrm{th}}$ singular values of the centered $X$ and the centered $Y$.}
    \label{fig:limit2}
\end{figure}

\subsection{Detect Dimension Collapse}\label{sec_supp:method:collapse}

Suppose that we have two point clouds, $X$ and $Y$, each consisting of $h$-dimensional normalized vectors: $X = \left(x_1, \ldots, x_N\right) \in (\mathbb{S}^{h-1})^N$ and $Y = \left(y_1, \ldots, y_N\right) \in (\mathbb{S}^{h-1})^N$. Then we have:

\begin{equation}\label{sec_supp:method:detech:eq1}
    \begin{aligned}
        \mathbb{A} &= \operatorname{span}\left(X\right), \quad d_X = \operatorname{dim}(\mathbb{A}), \\
        \mathbb{B} &= \operatorname{span}\left(Y\right), \quad d_Y = \operatorname{dim}(\mathbb{B}), \\
        \mathbb{C} &= \mathbb{A} \cap \mathbb{B}, \quad d_{\mathrm{overlap}} = \operatorname{dim}(\mathbb{C}).
    \end{aligned}
\end{equation}

Apply the Singular Value Decomposition (SVD) to $X$ and $Y$ and we get:

\begin{equation}\label{sec_supp:method:detech:eq2}
    \begin{aligned}
        & X=U_X \Sigma_X V_X^{\top}, \\
        & Y=U_Y \Sigma_Y V_Y^{\top}.
    \end{aligned}
\end{equation}

If $X$ and $Y$ collapse into subspaces of $\mathbb{S}^{h-1}$, then $\Sigma_X$ and $\Sigma_Y$ have $d_X < h$ and $d_Y < h$ significant singular values, respectively.

In the discussion in~\cref{sec:collaps:gap3}, $X$ and $Y$ represent the image and text embeddings of the MSCOCO dataset. Since $X$ and $Y$ are not centered at zero, the first singular values, $\sigma_1^{x}$ and $\sigma_1^{y}$, dominate when SVD is applied. Correspondingly, the first right singular vectors of $X$ and $Y$ are $c_x$ and $c_y$, respectively. As shown in~\cref{fig:limit2:1}, these first right singular vectors account for approximately $50\%$ of the explained variance. Therefore, in~\cref{fig:limit:3}, we plot the singular values of the centered $X$ and $Y$, which better capture the patterns of variation. In~\cref{fig:limit2:2}, we present the $2^{\mathrm{nd}}$ to the $50^{\mathrm{th}}$ singular values of $X$ and $Y$, while in~\cref{fig:limit2:3}, we show the $1^{\mathrm{st}}$ to the $50^{\mathrm{th}}$ singular values of the centered $X$ and the centered $Y$. 

And dimension collapse in $X$ and $Y$ occurs when zero values appear on the diagonals of $\Sigma_X$ and $\Sigma_Y$.

\subsection{Find the Shared Space}\label{sec_supp:method:sharedspace}

We then select the first $d_X$ columns from $V_X$ and the first $d_Y$ columns from $V_Y$ whose cumulative explained variance exceeds a predefined threshold $c$ (e.g., $c=99\%$). We obtain:

\begin{equation}\label{sec_supp:method:sharedspace:eq1}
    \begin{aligned}
        & B_X = V_X[:, :d_X] \in \mathbb{R}^{h \times d_X} \text { : orthonormal basis for } \mathbb{A}, \\
        & B_Y = V_Y[:, :d_Y] \in \mathbb{R}^{h \times d_Y} \text { : orthonormal basis for } \mathbb{B}.
    \end{aligned}
\end{equation}

To investigate whether $\mathbb{A}$ and $\mathbb{B}$ have overlap dimensions, we need to check the principal angles between $\mathbb{A}$ and $\mathbb{B}$, which are defined as:

\begin{definition}\label{def:principal}
    The principal angles $\gamma_1 \leq \gamma_2 \leq \cdots \leq \gamma_k$ between $\mathbb{A}$ and $\mathbb{B}$ are recursively defined as:
    
    \begin{equation}\label{def:principal:eq1}
        \begin{aligned}
            \cos\left(\gamma_i\right)=\max_{u \in \mathbb{A}, v \in \mathbb{B}} u^{\top} v, \quad\|u\|=\|v\|=1, \quad u^{\top} u_j=v^{\top} v_j=0 \hspace{1mm} (j<i), 
        \end{aligned}
    \end{equation}
    
    where $k=\min \left(d_X, d_Y\right)$. 
\end{definition}

The principal angles quantify the alignment between these subspaces:

\begin{itemize}[labelindent=10pt,leftmargin=*]
    \item 
    The smallest principal angle $\theta_1$ measures how close the two subspaces are: if $\gamma_1=0$, there is at least one common direction.
    \item 
    If multiple principal angles are zero, then the intersection of the subspaces has a larger dimension.
\end{itemize}

The principal angles between subspaces $\mathbb{A}$ and $\mathbb{B}$ can be computed as follows:

\begin{enumerate}[labelindent=10pt,leftmargin=*,start=1]
    \item 
    Compute the singular values of the matrix $G=B_X^{\top} B_Y \in \mathbb{R}^{d_X \times d_Y}$.
    \item 
    The singular values $\sigma_i^p \in[0,1]$
    \item 
    Then the principal angles are $\gamma_i=\arccos \left(\sigma_i^p\right)$
\end{enumerate}

The number of principal angles equal to zero gives the dimension of the intersection:

\begin{equation}\label{sec_supp:method:sharedspace:eq2}
    \begin{aligned}
        d_{\mathrm{overlap}}=\#\left\{i: \gamma_i=0\right\}.
    \end{aligned}
\end{equation}

In practice, due to noise or finite precision, we use a threshold: count how many $\sigma_i^p>1-\epsilon$ (e.g., $\epsilon=10^{-3}$). Thus:

\begin{equation}\label{sec_supp:method:sharedspace:eq3}
    \begin{aligned}
        d_{\mathrm{overlap}}=\#\left\{i: \sigma_i^p>1-\epsilon  \right\}.
    \end{aligned}
\end{equation}

The empirical result of MSCOCO dataset is provided in~\cref{fig:limit:4}. 

\subsection{Procedures of SSP Method}\label{sec_supp:method:algo}

In this subsection, we provide the details of the Shared Space Projection (SSP) algorithm. 

\textbf{Step 1}: Apply the SVD decomposition to $X$ and $Y$ to get $V_X$ and $V_Y$ as~\cref{sec_supp:method:detech:eq2}. 

\textbf{Step 2}: Select the first $d_X$ and $d_Y$ right singular vectors of $X$ and $Y$ whose cumulative explained variance are great than $99\%$. The resulting vectors, $B_X$ and $B_Y$, form the bases for $\mathbb{A}$ and $\mathbb{B}$, as indicated by~\cref{sec_supp:method:sharedspace:eq1}. 

\textbf{Step 3}: Apply the SVD decomposition $G=B_X^{\top} B_Y \in \mathbb{R}^{d_X \times d_Y}$. 

\begin{equation}\label{sec_supp:method:algo:eq1}
    \begin{aligned}
        & G = U_G \Sigma_G V_G^{\top}
    \end{aligned}
\end{equation}

\textbf{Step 4}: Compute $d_{\mathrm{overlap}}$ according to~\cref{sec_supp:method:sharedspace:eq3} while setting $\epsilon=10^{-3}$. Compute the basis of the shared space $B_S$ by:

\begin{equation}\label{sec_supp:method:algo:eq2}
    \begin{aligned}
        & B_S = B_X U_G[:, :d_{\mathrm{overlap}}] = B_Y V_G[:, :d_{\mathrm{overlap}}].
    \end{aligned}
\end{equation}

\textbf{Explain}: Since the shared space is estimated from the available data rather than the original training data (assumed inaccessible), the estimation may be noisy. To mitigate this, we can select $k < d_{\mathrm{overlap}}$ columns from $B_S$ to form $B_S^k$. The columns of $B_S^k$ constitute an orthonormal basis for a $k$-dimensional subspace of the estimated shared space. By removing dimensions that carry minimal information, the estimation error can be reduced. The following optional step explains how to select these $k$ dimensions.

\textbf{Step 5 (Optional)}: Project $X$ and $Y$ onto each column of $B_S$:

\begin{equation}\label{sec_supp:method:algo:eq3}
    \begin{aligned}
        & P = B_S^{T} X^T ,\\
        & X^{\prime} = \operatorname{einsum('hk,kn->knh')}(B_S, P), \\
        & X^{\prime\prime} = \operatorname{Normalize}(X^{\prime}) \, \text{by the last dimension}. 
    \end{aligned}
\end{equation}

Here, $\operatorname{einsum}$ denotes Einstein summation notation. Compute the variance of $X^{\prime\prime}$ along the last two dimensions to obtain an array $S$ of length $d_{\mathrm{overlap}}$. Each entry of $S$ is actually the singular value of projections onto the corresponding column of $B_S$.  $S$ quantifies the amount of information contained in each column of $B_S$. By ranking $S$ in descending order, select the top $k$ columns from $B_S$ to form $B_S^K$.

\textbf{Step 6}: Project $X$ and $Y$ onto the column space of $B_S^k$ and get $X^{*}$ and $Y^{*}$. 

\begin{equation}\label{sec_supp:method:algo:eq4}
    \begin{aligned}
        X^* &= \left(B_S^k {B_S^k}^T X^T\right)^T ,\\
        Y^* &= \left(B_S^k {B_S^k}^T Y^T\right)^T .\\
    \end{aligned}
\end{equation}

\textbf{Step 7}: Normalize $X^{*}$ and $Y^{*}$ to get $X^{**}$ and $Y^{**}$. Use $X^{**}$ and $Y^{**}$ for downstream tasks. 

Notably,~\cref{fig:limit2:2} indicates that fewer than $10$ dimensions account for more than $1\%$ of the explained variance, suggesting that the essential information of $X$ and $Y$ can be effectively captured using only $10$ dimensions. Consequently, in~\cref{fig:res:2}, we project $X$ and $Y$ onto a $10$-dimensional subspace that preserves the most information.

\section{Appendix D: Details of Experiments}\label{sec_supp:exp}

In this section, we describe in details about the set up of our experiments.

\begin{table*}[t]
    \centering
    \caption{Size of $\theta_{\Delta}$ and accuracies ($\%$) of zero-shot image classification of ViT-L/14 on various datasets.}
    \label{tab:zeroshot:L14}
    \fontsize{12pt}{15pt}\selectfont
    \resizebox{1\textwidth}{!}
    {
        \begin{tabular}{l|ccc|ccc|ccc}
        \toprule
        % \hline
        \multirow{2}{*}{Model} & \multicolumn{3}{c|}{ CIFAR-10} & \multicolumn{3}{c|}{ CIFAR-100} & \multicolumn{3}{c}{ ImageNet-1K}  \\
        % \midrule
        % \hline
        \cmidrule(r){2-4} \cmidrule(r){5-7} \cmidrule(r){8-10} 
         & $\Delta_{\theta}$ & R1 & R5 & $\Delta_{\theta}$ & R1 & R5 & $\Delta_{\theta}$ & R1 & R5 \\
        \midrule
        CLIP 
        & 77.63$^{\circ}$ & 95.12 & 99.46 
        & 77.13$^{\circ}$ & 78.44 & 94.02 
        & 77.29$^{\circ}$  & 75.56 & 94.58 \\
        \midrule
        CLIP + Translation 
        & 14.73$^{\circ}$ & 92.39 & 98.97 
        & 38.68$^{\circ}$ & 71.13 & 87.56 
        & 62.61$^{\circ}$ & 74.05 & 94.10 \\
        CLIP + Removal 
        & 79.36$^{\circ}$ & 12.23 & 62.33 
        & 80.71$^{\circ}$ & 9.57 & 23.64 
        & 76.84$^{\circ}$ & 67.04 & 89.76 \\
        CLIP + SSP 
        & \textbf{13.27}$^\circ$ & \textbf{95.12} & \textbf{99.46} & \textbf{37.73}$^\circ$ & \textbf{77.72} & \textbf{93.99} & \textbf{62.40}$^\circ$ & \textbf{75.26} & \textbf{94.51}  \\
        \bottomrule
        % \hline
        \end{tabular}
    }
    \vspace{-1mm}
\end{table*}

\subsection{Zero-Shot Image Classification.}\label{sec_supp:exp:cls} 

\myparagraph{Datasets.} We first evaluate our method on the zero-shot image classification task using three widely adopted datasets: two small-scale image dataset \textbf{CIFAR-10/100}~\cite{cifar10} and one large scale image dataset. \textbf{ImageNet-1k}~\cite{imagenet}. For CIFAR-10/100, we adopt the small set of prompts provided by OpenAI for CLIP~\cite{clip} (\url{https://github.com/openai/CLIP.com}). For ImageNet-1k, we adopt the large set of prompts provided by OpenAI for CLIP~\cite{clip} (\url{https://colab.research.google.com/github/openai/CLIP/blob/main/notebooks/Prompt_Engineering_for_ImageNet.ipynb}). 

\myparagraph{Implementation Setup.} Our implementation refers to~\cite{mitigategap}. For model backbone, we adopt CLIP's ViT-B/32 and ViT-L/14 models. For the implementation of baseline models, we remove the same number of dimensions in the removal method~\cite{twoeffect} with that of our SSP method. For translation~\cite{mindgap}, the hyperparameter $\lambda$ controls the scale of translation. We choose the smallest value of $\lambda$, rounded to two decimal places, that yields an angle reduction larger than SSP.

\myparagraph{Additional Results.} We report the results using the CLIP ViT-L/14 model as the backbone in~\cref{tab:zeroshot:L14}. Similar patterns to those in~\cref{tab:zeroshot:B32} can be observed, indicating that our conclusions hold across different model backbones. 

As shown in both~\cref{tab:zeroshot:B32} and~\cref{tab:zeroshot:L14}, reducing the modality gap becomes more challenging as the number of classes in the test set increases. This is because a larger number of classes introduces a more complex data distribution, thereby enlarging the discrepancy between the test and training distributions. Consequently, our shared space estimation incurs greater estimation error, which limits the capacity of our method to further reduce the modality gap.

\begin{table*}[!t]
    \centering
    \caption{Size of $\theta_{\Delta}$ and accuracies ($\%$) of zero-shot cross-modal retrieval of ViT-L/14 on MSCOCO.}
    \label{tab:retrieval:cocoall}
    \fontsize{12pt}{15pt}\selectfont
    \resizebox{0.8\textwidth}{!}
    {
        \begin{tabular}{l|c|ccc|ccc}
        \toprule
        \multirow{3}{*}{Model} 
        & \multicolumn{7}{c}{MSCOCO}  \\
        \cmidrule(r){2-8}
        & \multirow{2}{*}{$\Delta_{\theta}$} & \multicolumn{3}{c}{$\textbf{I} \rightarrow \textbf{T}$} & \multicolumn{3}{c}{$\textbf{T} \rightarrow \textbf{I}$}  \\
        \cmidrule(r){3-5} \cmidrule(r){6-8}  
        &  & R@1 & R@5 & R@10 & R@1 & R@5 & R@10  \\
        \midrule
        CLIP 
        & 78.16$^{\circ}$ & 56.06 & 79.56 & 86.84 & 35.33 & 59.96 & 70.21 \\
        \midrule
        CLIP + Translation 
        & 68.49$^{\circ}$ & 54.14 & 78.32 & 86.30 & 35.13 & 59.79 & 69.85 \\
        CLIP + Removal 
        & 76.03$^{\circ}$ & 49.56 & 73.42 & 82.18 & 31.23 & 54.29 & 65.00 \\
        % \midrule
        CLIP + SSP 
        & \textbf{68.06}$^{\circ}$ & \textbf{55.54} & \textbf{78.94} & \textbf{86.64} & \textbf{35.22} & \textbf{59.86} & \textbf{70.22}  \\
        \bottomrule
        % \hline
        \end{tabular}
    }
    \vspace{-1mm}
\end{table*}

% \begin{table*}[!t]
%     \centering
%     \caption{Size of $\theta_{\Delta}$ and accuracies ($\%$) of zero-shot cross-modal retrieval of ViT-L/14 on Flickr30K.}
%     \label{tab:retrieval:flickrall}
%     \fontsize{12pt}{15pt}\selectfont
%     \resizebox{0.8\textwidth}{!}
%     {
%         \begin{tabular}{l|c|ccc|ccc}
%         \toprule
%         \multirow{3}{*}{Model} 
%         & \multicolumn{7}{c}{Flickr30K}  \\
%         \cmidrule(r){2-8}
%         & \multirow{2}{*}{$\Delta_{\theta}$} & \multicolumn{3}{c}{$\textbf{I} \rightarrow \textbf{T}$} & \multicolumn{3}{c}{$\textbf{T} \rightarrow \textbf{I}$}  \\
%         \cmidrule(r){3-5} \cmidrule(r){6-8}  
%         &  & R@1 & R@5 & R@10 & R@1 & R@5 & R@10  \\
%         \midrule
%         CLIP 
%         & 75.60$^{\circ}$ & 24.75 & 62.98 & 81.21 & 28.25 & 49.61 & 59.03 \\
%         \midrule
%         CLIP + Translation 
%         & - & - & - & - & - & - & - \\
%         CLIP + Removal 
%         & - & - & - & - & - & - & - \\
%         CLIP + SSP 
%         & \textbf{-} & \textbf{-} & \textbf{-} & \textbf{-} & \textbf{-} & \textbf{-} & \textbf{-}  \\
%         \bottomrule
%         % \hline
%         \end{tabular}
%     }
%     \vspace{-1mm}
% \end{table*}  

\subsection{Zero-Shot Cross-Modal Retrieval.}\label{sec_supp:exp:retrieval} 

\myparagraph{Dataset.} In addition to zero-shot image classification, we evaluate our method on zero-shot image-to-text and text-to-image retrieval using the \textbf{MSCOCO}~\citep{mscoco}. Unlike the common practice of appending a prompt such as `a photo of the {caption}', we directly use the raw captions to generate text embeddings. This approach aims to align the text space more closely with its natural form rather than introducing distortion through artificial prompts.

\myparagraph{Implementation Setup.} This implementation setup follows~\cref{sec_supp:exp:cls}. The only difference is that we only use CLIP ViT-L/14 as the model backbone. 

\myparagraph{Results} 

The goal of this experiment is to reduce the size of the modality gap as much as possible without harming downstream performance. In~\cref{tab:retrieval:cocoall}, we list results of the size of the modality gap ($\Delta_{\theta}$), the top-1 accuracy (R$@$1), the top-5 accuracy (R$@$5), and the top-10 accuracy (R$@$10). Similar patterns to those in~\cref{tab:zeroshot:B32} can be observed, indicating that our conclusions hold across different downstream tasks.

\newpage

\section{Appendix E: Proofs}\label{sec:sup_proof}

\renewcommand{\thesupplem}{S\arabic{supplem}} 

\subsection{Details of Theorem 1}\label{sec_supp:uniform}

In this section, we provide proofs of~\cref{thm:gap1} that is proposed in~\cref{sec:converge:gap1}. We also provide details of the auxiliary theorems (\cref{thm_supp:transform1} and \cref{thm_supp:gap1}) and technical lemmas (\cref{lemma:uniconverge1}, \cref{lemma:uniconverge2}, %\cref{lemma:uniform_min1}, 
\cref{lemma:uniform_min2}, \cref{lemma:uniform_min3}) that support the proof of~\cref{thm:gap1}. For convenience in reading, let us recall some related notions and definitions. 
\begin{itemize}
    \item $h, N \in \mathbb{N}$.
    \item $\mathbb{S}^{h-1}=\left\{z \in \mathbb{R}^h:\|z\|=1\right\}$.
    \item $\sigma_{h-1}$: the uniform probability measure of $\mathbb{S}^{h-1}$.
\end{itemize}

\textbf{Definition} (Multimodal Contrastive Loss (MCL Loss)). Let $(X, Y)$ be an $N$-pair configuration, where $X = \left(x_1, \ldots, x_N\right) \in (\mathbb{S}^{h-1})^N$ and $Y = \left(y_1, \ldots, y_N\right) \in (\mathbb{S}^{h-1})^N$. $\forall \tau >0$, the multimodal contrastive loss $\mathcal{L}_{\mathrm{MCL}}(\cdot, \cdot): ({\mathbb{S}^{h-1}})^N \times ({\mathbb{S}^{h-1}})^N \rightarrow \mathbb{R}$ is defined as:    
    
    \begin{equation*}
        \begin{aligned}
            \mathcal{L}_{\mathrm{MCL}}
            =\frac{1}{N} \sum_{i=1}^N  \mathcal{L}_{\mathrm{MCL}}^i,
            \enspace \text{where} \hspace{1.5mm} 
            \mathcal{L}_{\mathrm{MCL}}^i= \mathcal{L}_{\mathcal{X} \rightarrow \mathcal{Y}}(x_i; Y) + \mathcal{L}_{\mathcal{Y} \rightarrow \mathcal{X}}(y_i; X).
        \end{aligned}
    \end{equation*}
    
    Here, $\mathcal{L}_{\mathcal{X} \rightarrow \mathcal{Y}}$ is the $\mathcal{X}$-to-$\mathcal{Y}$ alignment and $\mathcal{L}_{\mathcal{Y} \rightarrow \mathcal{X}}$ is the $\mathcal{Y}$-to-$\mathcal{X}$ alignment, which are defined respectively as:
    
    \begin{equation*}
        \begin{aligned}
            \mathcal{L}_{\mathcal{X} \rightarrow \mathcal{Y}}(x_i; Y) = -\log \frac{\exp \left(x_i \cdot y_i / \tau\right)}{\sum_{j=1}^N \exp \left(x_i \cdot y_j / \tau\right)},
            \enspace 
            \mathcal{L}_{\mathcal{Y} \rightarrow \mathcal{X}}(y_i; X) = -\log \frac{\exp \left(x_i \cdot y_i / \tau\right)}{\sum_{j=1}^N \exp \left(x_j \cdot y_i / \tau\right)}.
        \end{aligned}
    \end{equation*}

% \newpage
\subsubsection{Proof of Theorem 1}

In this subsection, we provide the proof of~\cref{thm:gap1}. For convenience in reading, we first restate Theorem 1 here.

\textbf{Theorem 1.} [Restate]
    Let $(X, Y)$ be an $N$-pair configuration, where $X = \left(x_1, \ldots, x_N\right) \in (\mathbb{S}^{h-1})^N$ are $iid$ samples from $\mu_x$ and $Y = \left(y_1, \ldots, y_N\right) \in (\mathbb{S}^{h-1})^N$ are $iid$ samples from $\mu_y$. Let $\nu=h/2-1$, it holds that:

    \begin{equation*}  
        \begin{aligned}
            \lim_{N \to \infty} \mathcal{L}_{\mathrm{MCL}}
            - 2\log(N)
            &= \mathbb{E}_{x_i \sim \mu_x}\left[ -\frac{x_i \cdot y_i}{\tau}\right]
            +\mathbb{E}_{x_i \sim \mu_x}\left[\log \mathbb{E}_{y_i \sim \mu_y}\left[\exp \left(\frac{x_i \cdot y_i}{\tau}\right)\right]\right] \\
            &+ \mathbb{E}_{y_i \sim \mu_y}\left[ -\frac{x_i \cdot y_i}{\tau}\right] 
            +\mathbb{E}_{y_i \sim\mu_y}\left[\log \mathbb{E}_{x_j \sim \mu_x}\left[\exp \left(\frac{x_i \cdot y_i}{\tau}\right)\right]\right] \\
            &\ge -\frac{2}{\tau} + 2 \log \left(\Gamma\left(\nu+1\right)(2 \tau)^{\nu} I_{\nu}\left(\frac{1}{\tau}\right)\right) , \\
        \end{aligned}
    \end{equation*}
    
    where equality is attained if and only if there exists a configuration of $(X, Y)$ such that:
    
    \begin{enumerate}[label={(A\arabic*)},labelindent=10pt,leftmargin=*,start=1]
        \item 
        $\forall i \in [N]$, $x_i=y_i$.
        \item 
        $\mu_x = \sigma_{h-1}$ and $\mu_y = \sigma_{h-1}$.
    \end{enumerate} 
    
\begin{proof}\label{proof:thm:gap1}

    We first decompose $\lim_{N \to \infty} \mathcal{L}_{\mathrm{MCL}}^c - 2\log(N) $ into two parts:

    \begin{equation}\label{proof:thm:gap1:eq1}
        \begin{aligned}
            \lim_{N \to \infty} \left( \mathcal{L}_{\mathrm{MCL}}
            - 2\log(N) \right) 
            &= \lim_{N \to \infty} \left( \frac{1}{N} \sum_{i=1}^N \mathcal{L}_{\mathcal{X} \rightarrow \mathcal{Y}}(x_i; Y)
            - \log(N) \right) \\
            &+ \lim_{N \to \infty} \left( \frac{1}{N} \sum_{i=1}^N \mathcal{L}_{\mathcal{Y} \rightarrow \mathcal{X}}(y_i; X)
            - \log(N) \right).
        \end{aligned}
    \end{equation}
    
    According to~\cref{thm_supp:gap1}, the convergent function and its lower bound of $\mathcal{L}_{\mathcal{X} \rightarrow \mathcal{Y}}$ are:  

    \begin{equation}\label{proof:thm:gap1:eq2} 
        \begin{aligned}
            \lim_{N \to \infty} \frac{1}{N}\sum_{i=1}^N  &\mathcal{L}_{\mathcal{X} \rightarrow \mathcal{Y}}(x_i; Y) - \log(N) \\
            &= \mathbb{E}_{x_i \sim \mu_x}\left[ -\frac{x_i \cdot y_i}{\tau}\right]
            +\mathbb{E}_{x_i \sim \mu_x}\left[\log \mathbb{E}_{y_i \sim \mu_y}\left[\exp \left(\frac{x_i \cdot y_j}{\tau}\right)\right]\right] \\
            &\ge -\frac{1}{\tau} + \log \left[\Gamma\left(\frac{h}{2}\right)(2 \tau)^{\frac{h}{2}-1} I_{\frac{h}{2}-1}\left(\frac{1}{\tau}\right)\right] , \\
        \end{aligned}
    \end{equation}
    
    where equality is attained if and only if there exists a configuration of $(X, Y)$ such that:
    
    \begin{enumerate}[label={(\roman*)},labelindent=10pt,leftmargin=*,start=1]
        \item 
        $\forall i \in [N]$, $x_i=y_i$.
        \item 
        $\mu_x = \sigma_{h-1}$ and $\mu_y = \sigma_{h-1}$.
    \end{enumerate}  

    This Theorem also holds for $\mathcal{L}_{\mathcal{Y} \rightarrow \mathcal{X}}$:

    \begin{equation}\label{proof:thm:gap1:eq3} 
        \begin{aligned}
            \lim_{N \to \infty} \frac{1}{N}\sum_{i=1}^N  &\mathcal{L}_{\mathcal{Y} \rightarrow \mathcal{X}}(y_i; X) - \log(N) \\
            &= \mathbb{E}_{y_i \sim \mu_x}\left[ -\frac{x_i \cdot y_i}{\tau}\right]
            +\mathbb{E}_{y_i \sim \mu_y}\left[\log \mathbb{E}_{x_i \sim \mu_x}\left[\exp \left(\frac{x_i \cdot y_j}{\tau}\right)\right]\right] \\
            &\ge -\frac{1}{\tau} + \log \left[\Gamma\left(\frac{h}{2}\right)(2 \tau)^{\frac{h}{2}-1} I_{\frac{h}{2}-1}\left(\frac{1}{\tau}\right)\right] ,\\
        \end{aligned}
    \end{equation}
    
    where equality is attained if and only if there exists a configuration of $(X, Y)$ such that:
    
    \begin{enumerate}[label={(\roman*)},labelindent=10pt,leftmargin=*,start=3]
        \item 
        $\forall i \in [N]$, $x_i=y_i$.
        \item 
        $\mu_x = \sigma_{h-1}$ and $\mu_y = \sigma_{h-1}$.
    \end{enumerate}  

    Combining~\cref{proof:thm:gap1:eq1},~\cref{proof:thm:gap1:eq2} and~\cref{proof:thm:gap1:eq3}, we conclude that:

    \begin{equation} \label{proof:thm:gap1:eq4} 
        \begin{aligned}
            \lim_{N \to \infty} \mathcal{L}_{\mathrm{MCL}}
            - 2\log(N)
            &= \mathbb{E}_{x_i \sim \mu_x}\left[ -\frac{x_i \cdot y_i}{\tau}\right]
            +\mathbb{E}_{x_i \sim \mu_x}\left[\log \mathbb{E}_{y_i \sim \mu_y}\left[\exp \left(\frac{x_i \cdot y_i}{\tau}\right)\right]\right] \\
            &+ \mathbb{E}_{y_i \sim \mu_y}\left[ -\frac{x_i \cdot y_i}{\tau}\right] 
            +\mathbb{E}_{y_i \sim\mu_y}\left[\log \mathbb{E}_{x_j \sim \mu_x}\left[\exp \left(\frac{x_i \cdot y_i}{\tau}\right)\right]\right] \\
            &\ge -\frac{2}{\tau} + 2\log \left[\Gamma\left(\frac{h}{2}\right)(2 \tau)^{\frac{h}{2}-1} I_{\frac{h}{2}-1}\left(\frac{1}{\tau}\right)\right] , \\
        \end{aligned}
    \end{equation}
    
    where equality is attained if and only if the following conditions hold:    
    
    \begin{enumerate}[label={(A\arabic*)},labelindent=10pt,leftmargin=*,start=1]
        \item 
        $\forall i \in [N]$, $x_i=y_i$.
        \item 
        $\mu_x = \sigma_{h-1}$ and $\mu_y = \sigma_{h-1}$.
    \end{enumerate}

\end{proof}

% \newpage
\subsubsection{Auxiliary Theorems Part 1}

In this subsection, we provide details and proofs of the auxiliary theorems (\cref{thm_supp:transform1} and~\cref{thm_supp:gap1}) that support the proof of~\cref{thm:gap1}.

\begin{supplem}\label{thm_supp:transform1}    
    Let $(X, Y)$ be an $N$-pair configuration, where $X = \left(x_1, \ldots, x_N\right) \in (\mathbb{S}^{h-1})^N$ are $iid$ samples from $\mu_x$ and $Y = \left(y_1, \ldots, y_N\right) \in (\mathbb{S}^{h-1})^N$ are $iid$ samples from $\mu_y$.
    It holds that:

    \begin{equation}\label{thm_supp:transform1:eq1}    
        \begin{aligned}
            \lim_{N \to \infty} \frac{1}{N}\sum_{i=1}^N 
            & \mathcal{L}_{\mathcal{X} \rightarrow \mathcal{Y}}(x_i; Y) - \log(N)
            = \lim_{N \to \infty}\frac{1}{N}\sum_{i=1}^N -\log \frac{\exp \left(x_i \cdot y_i / \tau\right)}{\sum_{j=1}^N \exp \left(x_i \cdot y_j / \tau\right)} - \log(N) \\
            &= \mathbb{E}_{x_i \cdot y_i}\left[ -\frac{x_i \cdot y_i}{\tau}\right]
            +\mathbb{E}_{x_i \sim \mu_x}\left[\log \mathbb{E}_{y_i \sim \mu_y}\left[\exp \left(\frac{x_i \cdot y_j}{\tau}\right)\right]\right] \\
        \end{aligned}
    \end{equation}
\end{supplem}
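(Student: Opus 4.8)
The plan is to expand the softmax-style loss into an \emph{alignment} term and a \emph{log-partition} term, then pass to the limit term-by-term, using the strong law of large numbers (SLLN) for the alignment term and a uniform (Glivenko--Cantelli-type) law of large numbers over $\mathbb{S}^{h-1}$ for the log-partition term. Since $\mathcal{L}_{\mathcal{X}\to\mathcal{Y}}(x_i;Y) = -x_i\cdot y_i/\tau + \log\sum_{j=1}^N \exp(x_i\cdot y_j/\tau)$, subtracting $\log N$ and averaging over $i$ gives
\[
\frac{1}{N}\sum_{i=1}^N \mathcal{L}_{\mathcal{X}\to\mathcal{Y}}(x_i;Y) - \log N
= \frac{1}{N}\sum_{i=1}^N\left(-\frac{x_i\cdot y_i}{\tau}\right) + \frac{1}{N}\sum_{i=1}^N \log\left(\frac{1}{N}\sum_{j=1}^N \exp\Big(\frac{x_i\cdot y_j}{\tau}\Big)\right).
\]
The first sum is immediate: the pairs $(x_i,y_i)$ are $iid$ and $x_i\cdot y_i\in[-1,1]$ is bounded, so by the SLLN it converges a.s. to $\mathbb{E}[-x_i\cdot y_i/\tau]$, matching the first term of the claimed limit.

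For the second sum, write $g(x) := \log \mathbb{E}_{y\sim\mu_y}[\exp(x\cdot y/\tau)]$ and $g_N(x) := \log\big(\tfrac1N\sum_{j}\exp(x\cdot y_j/\tau)\big)$. The strategy has two steps: (i) $\sup_{x\in\mathbb{S}^{h-1}} |g_N(x) - g(x)| \to 0$ almost surely, and (ii) $\tfrac1N\sum_i g(x_i) \to \mathbb{E}_{x\sim\mu_x}[g(x)]$ almost surely by the SLLN, since $g$ is measurable and bounded ($g(x)\in[-1/\tau,1/\tau]$ because $x\cdot y\in[-1,1]$). Given (i) and (ii), $\big|\tfrac1N\sum_i g_N(x_i) - \tfrac1N\sum_i g(x_i)\big| \le \sup_x|g_N-g| \to 0$, so $\tfrac1N\sum_i g_N(x_i) \to \mathbb{E}_{x\sim\mu_x}[g(x)] = \mathbb{E}_{x\sim\mu_x}\big[\log\mathbb{E}_{y\sim\mu_y}[\exp(x\cdot y/\tau)]\big]$, which is the second term of the claimed limit. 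Adding the two pieces completes the proof.

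The main obstacle is step (i), the uniform law of large numbers over the sphere. The fact to exploit is that the function family $\{x\mapsto \exp(x\cdot y/\tau): y\in\mathbb{S}^{h-1}\}$ is uniformly bounded (values in $[e^{-1/\tau},e^{1/\tau}]$) and uniformly Lipschitz in $x$ with constant $e^{1/\tau}/\tau$; hence the empirical average $x\mapsto \tfrac1N\sum_j\exp(x\cdot y_j/\tau)$ is Lipschitz with that same constant, uniformly in $N$. A standard $\varepsilon$-net argument on the compact set $\mathbb{S}^{h-1}$---apply the pointwise SLLN at each of the finitely many net points, then extend to all $x$ via the common Lipschitz bound---gives $\sup_x\big|\tfrac1N\sum_j\exp(x\cdot y_j/\tau) - \mathbb{E}_y\exp(x\cdot y/\tau)\big|\to 0$ a.s.; composing with $\log$, which is Lipschitz on $[e^{-1/\tau},e^{1/\tau}]$ (an interval bounded away from $0$), transfers this to $\sup_x|g_N-g|\to 0$. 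I expect the paper isolates exactly this uniform-convergence statement as an auxiliary lemma (e.g., \cref{lemma:uniconverge1} or \cref{lemma:uniconverge2}), which I would invoke here; the remaining bookkeeping---interchanging the finite average over $i$ with the limit, and checking integrability of $g$---is routine given that every quantity involved is bounded on the sphere.
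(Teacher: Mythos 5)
Your proposal is correct and follows essentially the same route as the paper: the same decomposition into the alignment term (handled by the SLLN) and the centered log-partition term, with the latter handled by exactly the uniform convergence statement the paper isolates as~\cref{lemma:uniconverge1} and~\cref{lemma:uniconverge2} (Lipschitz $\varepsilon$-net over $\mathbb{S}^{h-1}$, then composition with $\log$ on an interval bounded away from zero), followed by the SLLN applied to $g(x_i)$ and the triangle-inequality transfer. The only cosmetic difference is that the paper establishes the uniform law at the net points via Hoeffding's inequality plus Borel--Cantelli rather than invoking the pointwise SLLN there, which does not change the substance of the argument.
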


\begin{proof}\label{proof:thm_supp:transform1}
    
    $\forall x_i \in X$, the $\mathcal{X}$-to-$\mathcal{Y}$ alignment of $x_i$ can be rewritten as:
    
    \begin{equation}\label{proof:thm_supp:transform1:simp:eq1}
        \begin{aligned}
            \mathcal{L}_{\mathcal{X} \rightarrow \mathcal{Y}}(x_i; Y) 
            & =   - \log \frac{\exp \left(x_i \cdot y_i / \tau\right)}{\sum_{j} \exp \left(x_i \cdot y_j / \tau\right)} \\ 
            &= - \frac{x_i \cdot y_i}{\tau} +  \log \left(  N \frac{1}{N} \sum_{j=1}^N \exp \left(\frac{x_i \cdot y_j}{\tau}\right)\right) \\
            &= - \frac{x_i \cdot y_i}{\tau} +  \log \left(  \frac{1}{N} \sum_{j=1}^N \exp \left(\frac{x_i \cdot y_j}{\tau}\right)\right) + \log \left( N \right) .\\
        \end{aligned}
    \end{equation}

    Denote $h_N(x)$ and $h(x)$ as: 

    \begin{equation}\label{proof:thm_supp:transform1:converge:eq1}
        \begin{aligned}
            h_N(x)
            &= \log \left( \frac{1}{N} \sum_{j=1}^N \exp\left( \frac{x \cdot y_j}{\tau} \right)  \right), \\
            \text{and} \hspace{1.5mm}
            h(x) 
            &= \log \left( \mathbb{E}_{y \sim \mu_y}\left[ \exp\left( \frac{x \cdot y}{\tau} \right) \right] \right). \\
        \end{aligned}
    \end{equation}

    \cref{lemma:uniconverge2} reveals that $h_N(x)$ uniformly converges to $h(x)$ almost surely. Thus, we have:

    \begin{equation}\label{proof:thm_supp:transform1:converge:eq2}
        \begin{aligned}
            \sup_{x \in \mathbb{S}^{h-1}}\left|h_N(x)-h(x)\right| \xrightarrow[N \rightarrow \infty]{\text { a.s. }} 0 .
        \end{aligned}
    \end{equation}

    According to the Strong Law of Large Numbers (SLLN), we have:

    \begin{equation}\label{proof:thm_supp:transform1:converge:eq3}
        \begin{aligned}
            \frac{1}{N} \sum_{i=1}^N h\left(x_i\right) \xrightarrow[N \rightarrow \infty]{\text { a.s. }}  \mathbb{E}_{x \sim \mu_x}[h(x)] .
        \end{aligned}
    \end{equation}

    Combining~\cref{proof:thm_supp:transform1:converge:eq2} and~\cref{proof:thm_supp:transform1:converge:eq3}, we get: 

    \begin{equation}\label{proof:thm_supp:transform1:converge:eq4}
        \begin{aligned}
            \frac{1}{N} \sum_{i=1}^N h_N\left(x_i\right)
            &=\frac{1}{N} \sum_{i=1}^N h\left(x_i\right)+\frac{1}{N} \sum_{i=1}^N\left(h_N\left(x_i\right)-h\left(x_i\right)\right) \\
            &\xrightarrow[N \rightarrow \infty]{\text { a.s. }}  \mathbb{E}_{x \sim \mu_x}[h(x)] .
        \end{aligned}
    \end{equation}

    Similarly, by the Strong Law of Large Numbers (SLLN), we have: 

    \begin{equation}\label{proof:thm_supp:transform1:converge:eq5}
        \begin{aligned}
            \frac{1}{N} \sum_{i=1}^N - \frac{x_i \cdot y_i}{\tau}
            &\xrightarrow[N \rightarrow \infty]{\text { a.s. }}  \mathbb{E}_{x_i \sim \mu_x}[- \frac{x_i \cdot y_i}{\tau}] .
        \end{aligned}
    \end{equation}

    Putting~\cref{proof:thm_supp:transform1:simp:eq1},~\cref{proof:thm_supp:transform1:converge:eq4} and~\cref{proof:thm_supp:transform1:converge:eq5} together, the convergent function of $\frac{1}{N}\sum_{i=1}^N \mathcal{L}_{\mathcal{X} \rightarrow \mathcal{Y}}(x_i; Y)$ can be derived as: 
    
    \begin{equation}\label{proof:thm_supp:transform1:simp:eq2}
        \begin{aligned}
            \lim_{N \to \infty} \frac{1}{N}\sum_{i=1}^N 
            &\mathcal{L}_{\mathcal{X} \rightarrow \mathcal{Y}}(x_i; Y) - \log(N)
            = \lim_{N \to \infty} \frac{1}{N}\sum_{i=1}^N \left( - \frac{x_i \cdot y_i}{\tau} +  h_N(x_i)\right) \\ 
            &= \mathbb{E}_{x_i \cdot y_i} \left[ -\frac{x_i \cdot y_i}{\tau}\right] +  \mathbb{E}_{x_i \sim \mu_x}\left[h(x_i)\right] \\
            &= \mathbb{E}_{x_i \cdot y_i} \left[ -\frac{x_i \cdot y_i}{\tau}\right]
            +\mathbb{E}_{x_i \sim \mu_x}\left[\log \mathbb{E}_{y_j \sim \mu_y}\left[\exp \left(\frac{x_i \cdot y_j}{\tau}\right)\right]\right] . \\
        \end{aligned}
    \end{equation}

\end{proof}

\begin{supplem}\label{thm_supp:gap1}    
    Let $(X, Y)$ be an $N$-pair configuration, where $X = \left(x_1, \ldots, x_N\right) \in (\mathbb{S}^{h-1})^N$ are $iid$ samples from $\mu_x$ and $Y = \left(y_1, \ldots, y_N\right) \in (\mathbb{S}^{h-1})^N$ are $iid$ samples from $\mu_y$.
    Let $\nu = h/2-1$, it holds that:

    \begin{equation}\label{thm_supp:gap1:eq1}    
        \begin{aligned}
            \lim_{N \to \infty} \frac{1}{N}\sum_{i=1}^N  &\mathcal{L}_{\mathcal{X} \rightarrow \mathcal{Y}}(x_i; Y) - \log(N) \\
            &= \mathbb{E}_{x_i \sim \mu_x}\left[ -\frac{x_i \cdot y_i}{\tau}\right]
            +\mathbb{E}_{x_i \sim \mu_x}\left[\log \mathbb{E}_{y_i \sim \mu_y}\left[\exp \left(\frac{x_i \cdot y_j}{\tau}\right)\right]\right] \\
            &\ge \log \left(\Gamma\left(\nu+1\right)(2 \tau)^{\nu} I_{\nu}\left(\frac{1}{\tau}\right)\right) 
        \end{aligned}
    \end{equation}
    
    where equality is attained if and only if the following conditions hold: 
    
    \begin{enumerate}[label={(B\arabic*)},labelindent=10pt,leftmargin=*,start=1]
        \item 
        $\forall i \in [N]$, $x_i=y_i$.
        \item 
        $\mu_x = \sigma_{h-1}$ and $\mu_y = \sigma_{h-1}$.
    \end{enumerate}    
\end{supplem}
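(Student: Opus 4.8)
The plan is to read off the claimed \emph{equality} directly from \cref{thm_supp:transform1}, which already identifies $\lim_{N\to\infty}\frac1N\sum_i\mathcal L_{\mathcal X\to\mathcal Y}(x_i;Y)-\log N$ with the sum of an \emph{alignment} term and a \emph{uniformity} term,
\[
\mathcal A:=\mathbb E_{(x,y)}\!\Big[{-}\tfrac{x\cdot y}{\tau}\Big],\qquad
\mathcal U:=\mathbb E_{x\sim\mu_x}\!\Big[\log\mathbb E_{y'\sim\mu_y}\!\big[e^{x\cdot y'/\tau}\big]\Big],
\]
where the first expectation is over the joint law of one pair $(x_i,y_i)$ (with marginals $\mu_x,\mu_y$) and $y'$ is an independent draw from $\mu_y$. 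So only the lower bound and the equality case remain. For $\mathcal A$ I would invoke Cauchy--Schwarz on $\mathbb S^{h-1}$: $x\cdot y\le1$, hence $\mathcal A\ge-1/\tau$, with equality if and only if $x\cdot y=1$ almost surely, i.e. $x_i=y_i$ for all $i$ (condition (B1)); this already forces $\mu_x=\mu_y$.

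For $\mathcal U$ I would first pin down the target constant by evaluating the von Mises--Fisher normaliser: the surface measure of $\mathbb S^{h-1}$ is $2\pi^{\nu+1}/\Gamma(\nu+1)$ and $\int_{\mathbb S^{h-1}}e^{\kappa\,c\cdot z}\,dz=(2\pi)^{\nu+1}I_\nu(\kappa)/\kappa^{\nu}$, so that for every unit vector $c$,
\[
\mathbb E_{z\sim\sigma_{h-1}}\!\big[e^{\,c\cdot z/\tau}\big]=\Gamma(\nu+1)(2\tau)^{\nu}I_\nu(1/\tau)=:Z ,
\]
and hence $\mathcal U=\log Z$ exactly when $\mu_x=\mu_y=\sigma_{h-1}$. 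The remaining claim, $\mathcal U\ge\log Z$ with rigidity at $\sigma_{h-1}$, is exactly what \cref{lemma:uniform_min2} and \cref{lemma:uniform_min3} are for: the mechanism is that $K(x,y)=e^{x\cdot y/\tau}$ is a zonal positive-definite kernel on the sphere with strictly positive Funk--Hecke/Gegenbauer coefficients (each $(x\cdot y)^k$ is positive-definite and the Taylor coefficients of $e^{\,\cdot/\tau}$ are positive), which together with the rotational invariance of $\sigma_{h-1}$ pins the minimiser of $\mathcal U$ at the uniform measure (condition (B2)). Adding the two bounds gives
\[
\mathcal A+\mathcal U\ \ge\ -\tfrac1\tau+\log Z\ =\ -\tfrac1\tau+\log\!\big(\Gamma(\nu+1)(2\tau)^{\nu}I_\nu(1/\tau)\big),
\]
with equality iff (B1) and (B2) hold.

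I expect the main obstacle to be precisely the uniformity estimate. Unlike the textbook inequality $\log\iint K\,d\mu\,d\mu\ge\log Z$, here the outer logarithm sits \emph{inside} the outer expectation, so applying Jensen to $\mathbb E_{x\sim\mu}[\log(\cdot)]$ points the wrong way and the positive-definiteness computation for $\iint K\,d\mu\,d\mu$ does not transfer verbatim. The fix I would pursue is to first establish a correlation-type inequality $\mathbb E_{x\sim\mu}[\log\phi_\mu(x)]\ge\mathbb E_{x\sim\sigma_{h-1}}[\log\phi_\mu(x)]$ for $\phi_\mu(x):=\mathbb E_{y\sim\mu}[e^{x\cdot y/\tau}]$ (plausible because $\phi_\mu$ is larger where $\mu$ concentrates, which can be read off the nonnegative Gegenbauer coefficients), and only then use concavity of $\log$ together with $\mathbb E_{x\sim\sigma_{h-1}}[\phi_\mu(x)]=Z$. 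A second subtlety is that $\mathcal A$ and $\mathcal U$ cannot be minimised independently: for $\mu_x\neq\mu_y$ every coupling gives $\mathcal A>-1/\tau$ strictly, and \cref{lemma:uniform_min2}/\cref{lemma:uniform_min3} must show this excess always offsets any shortfall of $\mathcal U(\mu_x,\mu_y)$ below $\log Z$, so the optimisation should be carried out jointly (first over the coupling and over $\mu_y$ for fixed $\mu_x$, then over $\mu_x$) rather than only in the symmetric case $\mu_x=\mu_y$. Finally, strict positivity of the higher Gegenbauer coefficients is what upgrades the bound to the stated ``if and only if''.
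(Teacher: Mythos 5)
Your proposal is essentially the paper's own proof: the limit identity comes from \cref{thm_supp:transform1}, the alignment term is bounded by Cauchy--Schwarz with equality iff $x_i=y_i$ for all $i$ (forcing $\mu_x=\mu_y$), and the uniformity term is handled by \cref{lemma:uniform_min2} (uniqueness of $\sigma_{h-1}$ as minimizer of $\mathcal F[\mu]=\int\log\bigl(\int e^{x\cdot y/\tau}\,d\mu(y)\bigr)d\mu(x)$) together with \cref{lemma:uniform_min3} (its value $\log\bigl(\Gamma(\nu+1)(2\tau)^{\nu}I_{\nu}(1/\tau)\bigr)$); your constant $-1/\tau+\log Z$ matches the paper's proof (the theorem's displayed bound drops the $-1/\tau$, an apparent typo). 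One caveat on your side remarks: the alternative you sketch for the uniformity step --- a correlation inequality $\mathbb{E}_{x\sim\mu}[\log\phi_\mu(x)]\ge\mathbb{E}_{x\sim\sigma_{h-1}}[\log\phi_\mu(x)]$ followed by concavity of $\log$ and $\mathbb{E}_{x\sim\sigma_{h-1}}[\phi_\mu(x)]=Z$ --- cannot close the bound, because Jensen at that second step yields $\mathbb{E}_{x\sim\sigma_{h-1}}[\log\phi_\mu(x)]\le\log Z$, the wrong direction; you do not need it, since \cref{lemma:uniform_min2} attacks the log-inside functional directly via an Euler--Lagrange and second-variation argument in spherical harmonics rather than a positive-definiteness computation. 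Your joint-optimization worry (that $\mathcal U(\mu_x,\mu_y)$ can fall below $\log Z$ when $\mu_x\neq\mu_y$, so the two separate bounds cannot simply be added) is a fair observation, but the paper's proof shares exactly this structure --- it bounds the alignment term, invokes (B1) to set $\mu_x=\mu_y$, and only then minimizes the second term over the diagonal --- so on this point your proposal is no less rigorous than the original, and flagging the need for a joint argument is if anything the more careful reading.
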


\begin{proof}\label{proof:thm_supp:gap1}
    
    \textbf{Step 1}: We start the proof by find the convergent function of $\frac{1}{N}\sum_{i=1}^N \mathcal{L}_{\mathcal{X} \rightarrow \mathcal{Y}}(x_i; Y)$ as $N \to \infty$. $\forall x_i \in X$, as prove in~\cref{thm_supp:transform1}:
    
    \begin{equation}\label{proof:thm_supp:gap1:simp:eq1}
        \begin{aligned}
            \lim_{N \to \infty} \frac{1}{N}\sum_{i=1}^N & \mathcal{L}_{\mathcal{X} \rightarrow \mathcal{Y}}(x_i; Y) - \log(N)
            = \lim_{N \to \infty}\frac{1}{N}\sum_{i=1}^N -\log \frac{\exp \left(x_i \cdot y_i / \tau\right)}{\sum_{j=1}^N \exp \left(x_i \cdot y_j / \tau\right)} - \log(N) \\
            &= \mathbb{E}_{x_i \cdot y_i}\left[ -\frac{x_i \cdot y_i}{\tau}\right]
            +\mathbb{E}_{x_i \sim \mu_x}\left[\log \mathbb{E}_{y_i \sim \mu_y}\left[\exp \left(\frac{x_i \cdot y_j}{\tau}\right)\right]\right] .\\
        \end{aligned}
    \end{equation}

    \textbf{Step 2}: Next, we find the minimal value and the optimal condition of convergent function.
    
    According to the Cauchy-Schwarz inequality, the first term in~\cref{proof:thm_supp:gap1:simp:eq1} can be bounded below:
    
    \begin{equation}\label{proof:thm_supp:gap1:bound:eq1}
        \begin{aligned}
            \mathbb{E}_{x_i \cdot y_i} \left[- \frac{x_i \cdot y_i}{\tau}\right] 
            \ge \mathbb{E}_{x_i \cdot y_i} \left[- \frac{\left\|x_i\right\|\left\|y_i\right\|}{\tau}\right] 
            = -\frac{1}{\tau} .
        \end{aligned}
    \end{equation}

    where equality is attained if and only if there exists a configuration of $(X, Y)$ such that :
        
    \begin{enumerate}[label={(B\arabic*)},labelindent=10pt,leftmargin=*,start=1]
        \item
        $\forall i \in [N]$, $x_i=y_i$.
    \end{enumerate}

    Note that condition (B1) implies $\mu_x = \mu_y$. Applying this condition to the second term in~\cref{proof:thm_supp:gap1:simp:eq1}, we can transform it as:
    
    \begin{equation}\label{proof:thm_supp:gap1:bound:eq2}
        \begin{aligned}
            \mathbb{E}_{x \sim \mu_x}\left[\log \left( \mathbb{E}_{y \sim \mu_y}\left[ \exp\left( \frac{x \cdot y}{\tau} \right) \right] \right)\right] 
            = \mathbb{E}_{x \sim \mu}\left[\log \left( \mathbb{E}_{y \sim \mu}\left[ \exp\left( \frac{x \cdot y}{\tau} \right) \right] \right)\right].
        \end{aligned}
    \end{equation}

    Let $\mathbb{M}(\mathbb{S}^{h-1})$ be the set of Borel probability measures in $\mathbb{S}^{h-1}$. The RHS of~\cref{proof:thm_supp:gap1:bound:eq2} is then a functional $\mathcal{F}[\cdot]: \mathbb{M}(\mathbb{S}^{h-1}) \rightarrow \mathbb{R}$:
    
    \begin{equation}\label{proof:thm_supp:gap1:fun:eq1}
        \begin{aligned}
            \mathcal{F}[\mu]
            = \mathbb{E}_{x \sim \mu}\left[\log \left( \mathbb{E}_{y \sim \mu}\left[ \exp\left( \frac{x \cdot y}{\tau} \right) \right] \right)\right].
        \end{aligned}
    \end{equation}
    
     According to~\cref{lemma:uniform_min2}, $\mathcal{F}[\mu]$ is minimized when $\mu = \sigma_{h-1} $ where $\sigma_{h-1}$ is the uniform measure of $\mathbb{S}^{h-1}$: 
    
    \begin{equation}\label{proof:thm_supp:gap1:bound:eq3}
        \begin{aligned}
            \sigma_{h-1} = \underset{\mu \in \mathbb{M}(\mathbb{S}^{h-1})}{ \arg\min}\mathcal{F}[\mu] .
        \end{aligned}
    \end{equation}

    Therefore, we have:
    
    \begin{equation}\label{proof:thm_supp:gap1:bound:eq4}
        \begin{aligned}
            \mathcal{F}[\mu]
            \ge \mathcal{F}[\sigma_{h-1}]  .
        \end{aligned}
    \end{equation}
    
    where equality is attained if and only if there exists a configuration of $(X, Y)$ such that :

    \begin{enumerate}[label={(B\arabic*)},labelindent=10pt,leftmargin=*,start=2]
        \item
        $\mu_x=\mu_y=\sigma_{h-1}$ 
    \end{enumerate}

    Let $\Gamma\left(\cdot\right)$ be the Gamma function,~\cref{lemma:uniform_min3} derives that: 
    
    \begin{equation}\label{proof:thm_supp:gap1:res:eq1}
        \begin{aligned}
            \mathcal{F}[\sigma_{h-1}] 
            &= \mathbb{E}_{x \sim \sigma_{h-1}}\left[ \mathbb{E}_{y \sim \sigma_{h-1}}\left[ \exp\left( \frac{x \cdot y}{\tau} \right) \right] \right] \\
            &=\log \left[\Gamma\left(\frac{h}{2}\right)(2 \tau)^{\frac{h}{2}-1} I_{\frac{h}{2}-1}\left(\frac{1}{\tau}\right)\right] \\
        \end{aligned}
    \end{equation}
    
    Combining~\cref{proof:thm_supp:gap1:simp:eq1},~\cref{proof:thm_supp:gap1:bound:eq1},~\cref{proof:thm_supp:gap1:bound:eq2},~\cref{proof:thm_supp:gap1:res:eq1}, we conclude that:

    \begin{equation}\label{proof:thm_supp:gap1:res:eq2}    
        \begin{aligned}
            \lim_{N \to \infty} \frac{1}{N}\sum_{i=1}^N  &\mathcal{L}_{\mathcal{X} \rightarrow \mathcal{Y}}(x_i; Y) - \log(N) \\
            &= \mathbb{E}_{x_i \sim \mu_x}\left[ -\frac{x_i \cdot y_i}{\tau}\right]
            +\mathbb{E}_{x_i \sim \mu_x}\left[\log \mathbb{E}_{y_i \sim \mu_y}\left[\exp \left(\frac{x_i \cdot y_j}{\tau}\right)\right]\right] \\
            &\ge -\frac{1}{\tau} + \log \left[\Gamma\left(\frac{h}{2}\right)(2 \tau)^{\frac{h}{2}-1} I_{\frac{h}{2}-1}\left(\frac{1}{\tau}\right)\right], \\
        \end{aligned}
    \end{equation}
    
    where equality is attained if and only if the following conditions hold: 
    
    \begin{enumerate}[label={(B\arabic*)},labelindent=10pt,leftmargin=*,start=1]
        \item 
        $\forall i \in [N]$, $x_i=y_i$.
        \item 
        $\mu_x = \sigma_{d-1}$ and $\mu_y = \sigma_{d-1}$.
    \end{enumerate}    

\end{proof}

\newpage

\subsubsection{Technical Lemmas Part 1}

In this section, we provide details and proofs of the technical lemmas (technical lemmas (\cref{lemma:uniconverge1}, \cref{lemma:uniconverge2}, % \cref{lemma:uniform_min1}, 
\cref{lemma:uniform_min2}, \cref{lemma:uniform_min3}) that support the proof of \cref{thm:gap1}, \cref{thm_supp:transform1} and~\cref{thm_supp:gap1}.

\begin{lemma}\label{lemma:uniconverge1}
    
    Let $x \in \mathbb{S}^{h-1}$ and $Y$ be an $N$-point configuration, where $Y = \left(y_1, \ldots, y_N\right) \in (\mathbb{S}^{h-1})^N$ are $iid$ samples from $\mu_y$. $\forall \tau > 0$, define a sequence of functions $\{g_N\}: \mathbb{S}^{h-1} \rightarrow \mathbb{R}$ as:

    \begin{equation} 
        \begin{aligned}
            g_N(x)
            &=\frac{1}{N} \sum_{j=1}^N \exp\left( \frac{x \cdot y_j}{\tau} \right).
        \end{aligned}
    \end{equation}
    
    Define a function $g: \mathbb{S}^{h-1} \rightarrow \mathbb{R}$ as:
    
    \begin{equation}
        \begin{aligned}
            g(x) = \mathbb{E}_{y \sim \mu_y}\left[ \exp\left( \frac{x \cdot y}{\tau} \right) \right].
        \end{aligned}
    \end{equation}

    It holds that $\{g_N\}$ converges uniformly to $g$:
    
    \begin{equation}
        \begin{aligned}
            g_N(x) \xrightarrow[N \to \infty]{\mathrm{unif.}} g(x).
        \end{aligned}
    \end{equation}
    
\end{lemma}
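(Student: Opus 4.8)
The plan is to run the standard equicontinuity-plus-$\epsilon$-net argument for a uniform law of large numbers, exploiting that $\mathbb{S}^{h-1}$ is compact and that the integrand $\exp(x\cdot y/\tau)$ is uniformly bounded and Lipschitz in $x$. First I would record the elementary regularity facts: the map $u\mapsto e^{u/\tau}$ is $L$-Lipschitz on $[-1,1]$ with $L=\tfrac{1}{\tau}e^{1/\tau}$ (by the mean value theorem, since the derivative $\tfrac1\tau e^{u/\tau}$ is bounded by $\tfrac1\tau e^{1/\tau}$ on $[-1,1]$), and for $x,x'\in\mathbb{S}^{h-1}$ we have $|x\cdot y_j-x'\cdot y_j|=|(x-x')\cdot y_j|\le\|x-x'\|$ by Cauchy--Schwarz with $\|y_j\|=1$. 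Averaging, every $g_N$ is $L$-Lipschitz on $\mathbb{S}^{h-1}$, and the same bound applied under the expectation shows $g$ is $L$-Lipschitz as well; crucially, the family $\{g_N\}_N$ is equi-Lipschitz with a constant that does not depend on the sample.

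Next I would fix $\epsilon>0$ and use compactness of $\mathbb{S}^{h-1}$ to pick a finite $\delta$-net $\{z_1,\dots,z_M\}\subset\mathbb{S}^{h-1}$ with $\delta=\epsilon/(3L)$. Since $\exp(x\cdot y/\tau)$ is bounded by $e^{1/\tau}$, hence integrable, the Strong Law of Large Numbers gives $g_N(z_k)\to g(z_k)$ almost surely for each fixed $k$; as the net is finite, almost surely $\max_{1\le k\le M}|g_N(z_k)-g(z_k)|\to 0$, so there is a (random) $N_0$ with $\max_k|g_N(z_k)-g(z_k)|<\epsilon/3$ for all $N\ge N_0$. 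For arbitrary $x\in\mathbb{S}^{h-1}$, choosing $z_k$ with $\|x-z_k\|\le\delta$ and combining the triangle inequality with the equi-Lipschitz estimate yields
\[
|g_N(x)-g(x)|\le |g_N(x)-g_N(z_k)|+|g_N(z_k)-g(z_k)|+|g(z_k)-g(x)|\le L\delta+\tfrac{\epsilon}{3}+L\delta=\epsilon
\]
for all $N\ge N_0$, uniformly in $x$. Taking the supremum over $x$ and then letting $\epsilon\downarrow 0$ gives $\sup_{x\in\mathbb{S}^{h-1}}|g_N(x)-g(x)|\to 0$ almost surely, which is the claimed uniform convergence.

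I do not anticipate any genuine analytic obstacle: because the integrand is bounded on the sphere, integrability for the SLLN is immediate and no truncation argument is needed. The only substantive point is that uniform convergence concerns the uncountable index set $\{x\in\mathbb{S}^{h-1}\}$, and the net argument is precisely what reduces it to finitely many pointwise SLLN statements; the small computation to get right is the equi-Lipschitz constant, i.e. the two inequalities $\tfrac{d}{du}e^{u/\tau}\le\tfrac1\tau e^{1/\tau}$ on $[-1,1]$ and $|(x-x')\cdot y_j|\le\|x-x'\|$. (One could alternatively invoke a Glivenko--Cantelli / uniform-entropy bound, but the net argument is the most self-contained here.)
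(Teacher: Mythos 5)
Your proposal is correct, and it shares the paper's overall skeleton (Lipschitz regularity of $x\mapsto \exp(x\cdot y/\tau)$ with constant $L=e^{1/\tau}/\tau$, a finite net of the compact sphere, pointwise control at the net points, and a triangle-inequality patch-up using equicontinuity). Where you diverge is the probabilistic engine at the net points: you invoke the Strong Law of Large Numbers directly at each of the finitely many net points (the summands are bounded by $e^{1/\tau}$, so integrability is automatic), whereas the paper proves a quantitative bound, applying Hoeffding's inequality at each net point, taking a union bound over a covering of cardinality at most $(3/\eta)^h$, and then summing the exponential tails and invoking Borel--Cantelli to upgrade convergence in probability to almost sure convergence. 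Your route is more elementary and shorter -- no concentration inequality, no covering-number estimate, no Borel--Cantelli -- and it suffices entirely for the qualitative statement of the lemma, which is all that is used downstream (in the proofs of Theorems S1--S4). The paper's route buys explicit finite-$N$ tail bounds of the form $2Ke^{-cN\varepsilon^2}$, which would matter if one wanted rates or nonasymptotic statements, but is not needed for the limit result itself. One small point to make explicit in your write-up: since your net depends on $\epsilon$, the almost-sure event you construct also depends on $\epsilon$; to conclude a single almost-sure uniform-convergence statement you should intersect over a countable sequence $\epsilon_m\downarrow 0$, which is the standard (and harmless) final step implicit in your ``letting $\epsilon\downarrow 0$.''
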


\begin{proof}\label{proof:lemma:uniconverge1}

    \textbf{Step 1 Boundedness and Lipschitz Property:}
    
    Consider a function class $\mathcal{F}=\left\{f_x(y)=\exp\left( \frac{x \cdot y}{\tau} \right): x, y \in \mathbb{S}^{h-1}\right\}$. Since $\|x\|=\|y\|=1,x \cdot y \in[-1,1]$, hence $\forall f_x \in F$, we have:
    
    \begin{equation}\label{proof:lemma:uniconverge1:bound:eq1}
        \begin{aligned}
            \left|f_x(y)\right| \leq e^{1 / \tau} .
        \end{aligned}
    \end{equation}
    
    Therefore, $f_x(y)$ is uniformly bounded in $y$, so is its derivative:  
    
    \begin{equation}\label{proof:lemma:uniconverge1:bound:eq2}
        \begin{aligned}
            \left\|\nabla_x f_x(y) \right\| 
            = \left\|\frac{y}{\tau} f_x(y)\right\| 
            \leq \frac{1}{\tau} e^{1 / \tau} .
        \end{aligned}
    \end{equation}

    Then $\forall x_k \in \mathbb{S}^{h-1}$,  
    
    \begin{equation}\label{proof:lemma:uniconverge1:lipschitz:eq1}
        \begin{aligned}
            \left|f_x(y)-f_{x_k}(y)\right| 
            \leq \frac{1}{\tau} e^{1 / \tau} =: L .
        \end{aligned}
    \end{equation}

    Thus, $f_x(y)$ is Lipschitz in $x$ with constant $L=\frac{e^{1 / \tau}}{\tau}$, uniformly in $y$.

    \textbf{Step 2 $\eta$-Net:} 
    
    According to Lemma 5.2 in~\citep{net}, $\forall \varepsilon > 0$ and $\eta=\frac{\varepsilon}{4L}$, there exists a finite $\eta$-net, $\mathcal{N}_{\eta} = \left\{x_1, x_2, \ldots, x_K\right\} \subset \mathbb{S}^{h-1}$, with cardinality: 
    
    \begin{equation}\label{proof:lemma:uniconverge1:cover:eq1}
        \begin{aligned}
            K = \left|\mathcal{N}_{\eta}\right| 
            \leq \left(1+\frac{2}{\eta}\right)^h
            < \left(\frac{3}{\eta}\right)^h.
        \end{aligned}
    \end{equation}

    $\forall x \in \mathbb{S}^{h-1}$, $\exists x_k \in \mathcal{N}_{\eta}$ such that $\left\|x-x_k\right\|<\eta$. Because $f_x(y)$ is $L$-Lipschitz in $x$, we have:
    
    \begin{equation}\label{proof:lemma:uniconverge1:lipschitz:eq2}
        \begin{aligned}
            \left|f_x(y)-f_{x_k}(y)\right| \leq L\left\|x-x_k\right\| = L \eta .
        \end{aligned}
    \end{equation}

     And we also have: 
    
    \begin{equation}\label{proof:lemma:uniconverge1:lipschitz:eq3}
        \begin{aligned}
            \left|g_N(x)-g_N\left(x_k\right)\right| 
            & \leq L \eta , \\
            \left|g(x)-g\left(x_k\right)\right| 
            & \leq L \eta.
        \end{aligned}
    \end{equation}

    \textbf{Step 3 Probability Bound:} 

    $\forall x_k \in \mathcal{N}_{\eta}$, the random variables $Z_j:=f_{x_k}\left(y_j\right)$ are iid and lie in $\left[e^{-1 / \tau}, e^{1 / \tau}\right]$. According to the Hoeffding’s inequality:
    
    \begin{equation}\label{proof:lemma:uniconverge1:prob:eq1}
        \begin{aligned}
            P\left(\left|g_N(x_k)-g(x_k)\right|>\frac{\varepsilon}{2}\right) 
            \leq 2 \exp \left(-\frac{2 N(\varepsilon / 2)^2}{(2e^{1/\tau})^2}\right)= 2 e^{-c N \varepsilon^2} ,
        \end{aligned}
    \end{equation}

    where $c=\frac{1}{8e^{2/\tau}}>0$. Taking a union bound over the $\eta$-net:
    
    \begin{equation}\label{proof:lemma:uniconverge1:prob:eq2}
        \begin{aligned}
            P\left(\max_{x_k \in \mathcal{N}_{\eta}}\left|g_N(x_k)-g(x_k)\right|>\frac{\varepsilon}{2}\right) 
            \leq 2 K e^{-c N \varepsilon^2}.
        \end{aligned}
    \end{equation}

    \textbf{Step 4 Uniform Convergence:} 

    Since $\forall x \in \mathbb{S}^{h-1}$, $\left|g_N(x)-g(x)\right|$ can be decomposed as: 

    \begin{equation}\label{proof:lemma:uniconverge1:decomp:eq1}
        \begin{aligned}
            \left|g_N(x)-g(x)\right| 
            &\leq\left|g_N(x)-g_N\left(x_k\right)\right|+\left|g_N\left(x_k\right)-g\left(x_k\right)\right|+\left|g\left(x_k\right)-g(x)\right| \\
            &\leq 2 L \eta+\max_{x_k \in \mathcal{N}_{\eta}}\left|g_N\left(x_k\right)-g\left(x_k\right)\right| \\
            &= \frac{\varepsilon}{2} +\max_{x_k \in \mathcal{N}_{\eta}}\left|g_N\left(x_k\right)-g\left(x_k\right)\right|.
        \end{aligned}
    \end{equation}

    Plugging~\cref{proof:lemma:uniconverge1:prob:eq2} into~\cref{proof:lemma:uniconverge1:decomp:eq1}, we have: 
    
    \begin{equation}\label{proof:lemma:uniconverge1:prob:eq3}
        \begin{aligned}
            P\left(\sup_{x \in \mathbb{S}^{h-1}}\left|g_N(x)-g(x)\right|>\varepsilon\right) 
            & \leq P\left(\max_{x_k \in \mathcal{N}_{\eta}}\left|g_N(x_k)-g(x_k)\right|>\frac{\varepsilon}{2}\right) \\
            &\leq 2 K e^{-c N \varepsilon^2},
        \end{aligned}
    \end{equation}

    and therefore:
    
    \begin{equation}\label{proof:lemma:uniconverge1:prob:eq4}
        \begin{aligned}
            \sup_{x \in \mathbb{S}^{h-1}}\left|g_N(x)-g(x)\right| \xrightarrow[N \rightarrow \infty]{P} 0.
        \end{aligned}
    \end{equation}

    \cref{proof:lemma:uniconverge1:prob:eq3} justifies that:
    
    \begin{equation}\label{proof:lemma:uniconverge1:prob:eq5}
        \begin{aligned}
            \sum_{N=1}^{\infty} P\left(\sup_{x \in \mathbb{S}^{h-1}}\left|g_N(x)-g(x)\right|>\varepsilon\right) \leq 2 K \sum_{N=1}^{\infty} e^{-c N \varepsilon^2}<\infty.
        \end{aligned}
    \end{equation}

    According to the Borel–Cantelli lemma:
    
    \begin{equation}\label{proof:lemma:uniconverge1:prob:eq6}
        \begin{aligned}
            P\left(\limsup _{N \rightarrow \infty} \sup_{x \in \mathbb{S}^{h-1}}\left|g_N(x)-g(x)\right|>\varepsilon\right) = 0.
        \end{aligned}
    \end{equation}

    Therefore:
    
    \begin{equation}\label{proof:lemma:uniconverge1:prob:eq7}
        \begin{aligned}
            \sup _{x \in \mathbb{S}^{h-1}}\left|g_N(x)-g(x)\right| \xrightarrow[N \rightarrow \infty]{\text { a.s. }} 0.
        \end{aligned}
    \end{equation}

    We conclude now the empirical averages $g_N\left(\cdot\right)$ converge uniformly in $\mathbb{S}^{h-1}$ to $g\left(\cdot\right)$:
    
    \begin{equation}
        \begin{aligned}
            g_N(x) \xrightarrow[N \to \infty]{\mathrm{unif.}} g(x) .
        \end{aligned}
    \end{equation}
    
\end{proof}

\begin{lemma}\label{lemma:uniconverge2}
    
    Let $x \in \mathbb{S}^{h-1}$ and $Y$ be an $N$-point configuration, where $Y = \left(y_1, \ldots, y_N\right) \in (\mathbb{S}^{h-1})^N$ are $iid$ samples from $\mu_y$. $\forall \tau > 0$, define a sequence of functions $\{h_N\}: \mathbb{S}^{h-1} \rightarrow \mathbb{R}$ as:

    \begin{equation}
        \begin{aligned}
            h_N(x)
            &= \log \left( \frac{1}{N} \sum_{j=1}^N \exp\left( \frac{x \cdot y_j}{\tau} \right)  \right) .
        \end{aligned}
    \end{equation}

    Define a function $h: \mathbb{S}^{h-1} \rightarrow \mathbb{R}$ as:
    
    \begin{equation}
        \begin{aligned}
            h(x) = \log \left( \mathbb{E}_{y \sim \mu_y}\left[ \exp\left( \frac{x \cdot y}{\tau} \right) \right] \right).
        \end{aligned}
    \end{equation}

    It holds that $\{h_N\}$ converges uniformly to $h$: 
    
    \begin{equation}
        \begin{aligned}
            \lim_{N \to \infty} h_N(x) \xrightarrow[N \to \infty]{\mathrm{unif.}} h(x) .
        \end{aligned}
    \end{equation}
    
\end{lemma}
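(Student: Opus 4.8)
The plan is to reduce the claim to \cref{lemma:uniconverge1} by exploiting that the inner average $g_N$ is uniformly bounded away from zero, so that the outer logarithm is Lipschitz on the range actually attained. First I would record the two-sided bounds. Since $x\cdot y_j\in[-1,1]$ for all unit vectors, each summand $\exp(x\cdot y_j/\tau)$ lies in $[e^{-1/\tau},e^{1/\tau}]$; hence the average $g_N(x)=\frac1N\sum_j\exp(x\cdot y_j/\tau)$ lies in the same interval for every $x\in\mathbb{S}^{h-1}$ and every $N$, and taking expectations, $g(x)=\mathbb{E}_{y\sim\mu_y}[\exp(x\cdot y/\tau)]$ also lies in $[e^{-1/\tau},e^{1/\tau}]$. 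In particular both $g_N$ and $g$ are bounded below by the strictly positive constant $e^{-1/\tau}$, so $h_N=\log g_N$ and $h=\log g$ are well defined and finite on all of $\mathbb{S}^{h-1}$.

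Next I would invoke that $\log$ is Lipschitz on the compact interval $[e^{-1/\tau},e^{1/\tau}]$ with constant $L'=e^{1/\tau}$, since $|(\log)'(t)|=1/t\le e^{1/\tau}$ on that interval. By the mean value theorem, for every $x\in\mathbb{S}^{h-1}$ we get $|h_N(x)-h(x)|=|\log g_N(x)-\log g(x)|\le e^{1/\tau}\,|g_N(x)-g(x)|$. Taking the supremum over $x$ yields $\sup_{x\in\mathbb{S}^{h-1}}|h_N(x)-h(x)|\le e^{1/\tau}\,\sup_{x\in\mathbb{S}^{h-1}}|g_N(x)-g(x)|$.

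Finally, \cref{lemma:uniconverge1} gives $\sup_{x\in\mathbb{S}^{h-1}}|g_N(x)-g(x)|\xrightarrow[N\to\infty]{\text{a.s.}}0$, so the right-hand side vanishes almost surely, and therefore $\sup_{x\in\mathbb{S}^{h-1}}|h_N(x)-h(x)|\to 0$ a.s., which is exactly the asserted uniform convergence. There is no real obstacle here; the only point requiring care is the uniform lower bound $g_N\ge e^{-1/\tau}>0$ (and likewise for $g$), which is what keeps the outer $\log$ Lipschitz and prevents the error from being amplified without control — everything else is a one-line consequence of \cref{lemma:uniconverge1}.
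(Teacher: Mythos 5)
Your proposal is correct and follows essentially the same route as the paper: both reduce to \cref{lemma:uniconverge1}, use the bound $e^{-1/\tau}\le g_N, g \le e^{1/\tau}$ to make $\log$ Lipschitz with constant $e^{1/\tau}$ via the mean value theorem, and pass the almost-sure uniform convergence of $g_N$ through to $h_N$. No gaps to flag.
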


\begin{proof}\label{proof:lemma:uniconverge2}    

    According to~\cref{lemma:uniconverge1}:
    
    \begin{equation}
        \begin{aligned}
            \sum_{j=1}^N \exp\left( \frac{x \cdot y_j}{\tau} \right)
            &= 
            g_N(x)  \xrightarrow[N \to \infty]{\mathrm{unif.}} g(x) = \mathbb{E}_{y \sim \mu_y}\left[ \exp\left( \frac{x \cdot y}{\tau} \right) \right], 
        \end{aligned}
    \end{equation}

    and 
    
    \begin{equation}
        \begin{aligned}
            \sup _{x \in \mathbb{S}^{h-1}}\left|g_N(x)-g(x)\right| \xrightarrow[N \rightarrow \infty]{\text { a.s. }} 0 .
        \end{aligned}
    \end{equation} 

    Because $ \langle  x,  y  \rangle \in[-1,1]$ for unit vectors, $\exp \left(x \cdot y / \tau\right)$ satisfies:
    
    \begin{equation}
        \begin{aligned}
            e^{-1 /\tau} \leq \exp \left(\frac{x \cdot y}{\tau}\right) \leq e^{1 /\tau}.
        \end{aligned}
    \end{equation}

    Hence $\forall x$, $g_N(x), g(x) \in[a, b]$ with $a=e^{-1 /\tau}>0, b=e^{1 /\tau}>0$. In the compact interval $[a, b]$, by the mean value theorem, $\forall u < v \in[a, b]$, $\exists u < \xi < v$ such that:
    
    \begin{equation}
        \begin{aligned}
            |\log u-\log v|=\frac{|u-v|}{\xi} \leq \frac{1}{a}|u-v| = e^{1 /\tau}|u-v|.
        \end{aligned}
    \end{equation}

    Thus, the function $\log\left(\cdot\right)$ is Lipschitz . Therefore:
    
    \begin{equation}
        \begin{aligned}
            \sup_{x \in \mathbb{S}^{h-1}}\left|h_N(x)-h(x)\right| = \sup_{x \in \mathbb{S}^{h-1}}\left|\log g_N(x)-\log g(x)\right| \leq \frac{1}{a} \sup_{x \in \mathbb{S}^{h-1}}\left|g_N(x)-g(x)\right| \xrightarrow[N \rightarrow \infty]{\text { a.s. }} 0
        \end{aligned}
    \end{equation}

    We conclude now $h_N\left(\cdot\right)$ converge uniformly in $\mathbb{S}^{h-1}$ to $h\left(\cdot\right)$:
    
    \begin{equation}
        \begin{aligned}
            \lim_{N \to \infty} h_N(x) \xrightarrow[]{\mathrm{unif.}} h(x) 
        \end{aligned}
    \end{equation}
 
\end{proof}

\newpage

\begin{lemma}\label{lemma:uniform_min2} 
    Let $M\left(\mathbb{S}^{h-1}\right)$ be the set of Borel probability measures in $\mathbb{S}^{h-1}$. Let $\sigma_{h-1} \in M\left(\mathbb{S}^{h-1}\right)$ be the uniform probability measure in $\mathbb{S}^{h-1}$. $\forall x, y \in \mathbb{S}^{h-1}$ and $\tau > 0$, a function $f: \mathbb{S}^{h-1} \times \mathbb{S}^{h-1} \rightarrow \mathbb{R}^{+}$ is defined as:

    \begin{equation}\label{lemma:uniform_min2:eq1}
        \begin{aligned}
            f(x, y) = \exp \left(\frac{x \cdot y}{\tau}\right).
        \end{aligned}
    \end{equation} 
    
    $\forall \mu \in M\left(\mathbb{S}^{h-1}\right)$, a functional $\mathcal{F
    }: M\left(\mathbb{S}^{h-1}\right) \rightarrow \mathbb{R}^{+}$ is defined as:

    \begin{equation}\label{lemma:uniform_min2:eq2}
        \begin{aligned}
            \mathcal{F}_f[\mu] = \int_{\mathbb{S}^{h-1}}  \log\left( \int_{\mathbb{S}^{h-1}} f(x, y) \mathrm{d}\mu(y)\right) \mathrm{d} \mu(x) .
        \end{aligned}
    \end{equation}
    
    It holds that $\sigma_{h-1}$ is the unique minimizer of $\mathcal{F}$:

    \begin{equation}\label{lemma:uniform_min2:eq3}
        \begin{aligned}
            \min_{\mu \in \mathcal{M}\left(\mathbb{S}^{h-1}\right)} \mathcal{F}_f[\mu] 
            = \min_{\mu \in \mathcal{M}\left(\mathbb{S}^{h-1}\right)} \int_{\mathbb{S}^{h-1}}  \log\left( \int_{\mathbb{S}^{h-1}} f(x, y) \mathrm{d}\mu(y)\right) \mathrm{d} \mu(x)  .
        \end{aligned}
    \end{equation}
    
\end{lemma}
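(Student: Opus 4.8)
The plan is to exploit the rotational symmetry of $\mathcal{F}_f$ together with its convexity, and then to use that $\sigma_{h-1}$ is the only rotation-invariant probability measure on $\mathbb{S}^{h-1}$. Write $Z_\mu(x):=\int_{\mathbb{S}^{h-1}}e^{x\cdot y/\tau}\,d\mu(y)\in[e^{-1/\tau},e^{1/\tau}]$, so $\mathcal{F}_f[\mu]=\int_{\mathbb{S}^{h-1}}\log Z_\mu(x)\,d\mu(x)$. Since $\langle Rx,Ry\rangle=\langle x,y\rangle$ for $R\in O(h)$ we get $Z_{R_*\mu}(x)=Z_\mu(R^{-1}x)$ and hence $\mathcal{F}_f[R_*\mu]=\mathcal{F}_f[\mu]$; moreover $\mu\mapsto\mathcal{F}_f[\mu]$ is weak-$*$ continuous (if $\mu_n\rightharpoonup\mu$ then $\log Z_{\mu_n}\to\log Z_\mu$ uniformly), so a minimiser exists on the compact convex set $\mathcal{M}(\mathbb{S}^{h-1})$. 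Granting that $\mathcal{F}_f$ is strictly convex, the conclusion is immediate: for any $\mu$, $\sigma_{h-1}=\int_{O(h)}R_*\mu\,dR$ (the Haar average is rotation-invariant, hence uniform), so by convexity $\mathcal{F}_f[\sigma_{h-1}]\le\int_{O(h)}\mathcal{F}_f[R_*\mu]\,dR=\mathcal{F}_f[\mu]$, with strict inequality unless $R_*\mu=\mu$ for Haar-a.e.\ $R$, i.e.\ unless $\mu=\sigma_{h-1}$. Strict convexity also yields uniqueness of the minimiser with no extra argument.

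Thus the whole problem reduces to the convexity of $\mathcal{F}_f$, which I would attack by a second-variation computation along $\mu_t$, the measure with $\mu$-density $1+t\rho$ where $\int\rho\,d\mu=0$ (the absolute-continuity restriction is removed afterwards by comparing $\mu,\nu$ against a common reference, e.g.\ $\tfrac12(\mu+\nu)$, and using the weak-$*$ continuity above). Differentiating twice and simplifying should yield
\[
\left.\tfrac{d^2}{dt^2}\right|_{t=0}\mathcal{F}_f[\mu_t]
=\|\rho\|_{L^2(\mu)}^2-\bigl\|(I-T_\mu)\rho\bigr\|_{L^2(\mu)}^2,
\quad (T_\mu\rho)(x)=\frac{1}{Z_\mu(x)}\int_{\mathbb{S}^{h-1}}e^{x\cdot y/\tau}\rho(y)\,d\mu(y),
\]
where $T_\mu=Z_\mu^{-1}S_\mu$ and $S_\mu$ is the integral operator with kernel $e^{x\cdot y/\tau}$. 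The key structural facts are that $T_\mu$ is self-adjoint and positive on the reweighted space $L^2(Z_\mu\,d\mu)$, that $T_\mu\mathbf 1=\mathbf 1$, and that — because the kernel is everywhere positive — Perron--Frobenius makes $1$ a simple eigenvalue with $\mathrm{spec}(T_\mu)\subseteq[0,1]$; strictness of the second variation then further uses that $S_\mu$ is injective, i.e.\ that $e^{x\cdot y/\tau}$ is \emph{strictly} positive definite on $\mathbb{S}^{h-1}$ (all its Gegenbauer coefficients are positive). One then has to convert this spectral picture into the bound $\|(I-T_\mu)\rho\|_{L^2(\mu)}\le\|\rho\|_{L^2(\mu)}$, with equality only at $\rho=0$.

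The step I expect to be the main obstacle is precisely this last bound: $T_\mu$ is self-adjoint with spectrum in $[0,1]$ only for the weighted inner product $\langle\cdot,\cdot\rangle_{Z_\mu\,d\mu}$, whereas the second variation is an unweighted $L^2(\mu)$ expression, and the crude norm comparison $e^{-1/\tau}\|\cdot\|_{L^2(\mu)}^2\le\|\cdot\|_{L^2(Z_\mu\,d\mu)}^2\le e^{1/\tau}\|\cdot\|_{L^2(\mu)}^2$ loses too much to close the gap; bridging the two norms requires genuinely using the special structure (that $\mathbf 1$ is the Perron eigenfunction of $T_\mu$ and that $S_\mu$ has the positive-definite kernel $e^{x\cdot y/\tau}$). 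If a clean global convexity proof turns out to be awkward, an alternative that already suffices for this paper's applications is to use $O(h)$-invariance to restrict the competition to rotationally structured measures — in particular the von Mises--Fisher laws $\mathrm{vMF}(c,\kappa)$ — for which $\mathcal{F}_f[\mathrm{vMF}(c,\kappa)]$ depends on $\kappa$ only and equals $\log\frac{\kappa^{\nu}}{I_\nu(\kappa)}+\mathbb{E}_{x\sim\mathrm{vMF}(c,\kappa)}\!\bigl[\log\frac{I_\nu(M_\kappa(c\cdot x))}{M_\kappa(c\cdot x)^{\nu}}\bigr]$ with $\nu=h/2-1$; this is a one-variable function of $\kappa$ whose minimum over $\kappa\ge0$ is attained at $\kappa=0$ (that is, at $\sigma_{h-1}$), which one verifies from the monotonicity properties of $I_\nu$ and of $M_\kappa$.
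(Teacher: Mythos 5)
Your reduction is appealing, but it hinges entirely on a step you have not established: global (strict) convexity of $\mathcal{F}_f$ on $\mathcal{M}(\mathbb{S}^{h-1})$. Your second-variation formula $\delta^2=\|\rho\|_{L^2(\mu)}^2-\|(I-T_\mu)\rho\|_{L^2(\mu)}^2$ is correct, but, as you yourself note, the spectral facts you invoke (self-adjointness, $T_\mu\mathbf{1}=\mathbf{1}$, Perron--Frobenius, $\mathrm{spec}(T_\mu)\subseteq[0,1]$) hold in the weighted space $L^2(Z_\mu\,d\mu)$, while the positivity you need is the unweighted bound $\|(I-T_\mu)\rho\|_{L^2(\mu)}\le\|\rho\|_{L^2(\mu)}$ on mean-zero $\rho$; you identify this mismatch as "the main obstacle" and then leave it unresolved, so the proof of the lemma is incomplete at its central point. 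It is worth seeing how the paper avoids this: it never claims convexity of $\mathcal{F}_f$. It normalizes by the uniform kernel, derives the Euler--Lagrange equation for critical points, computes the second variation \emph{only at} $\rho\equiv 1$ (where $Z_\sigma$ is constant, so the weighted and unweighted inner products coincide and the spectral decomposition in spherical harmonics gives $\sum_{\ell\ge1}\lambda_\ell(2-\lambda_\ell)a_{\ell m}^2>0$ directly), argues that $\rho\equiv1$ is the unique critical point, and gets a global minimizer by weak compactness and lower semicontinuity. So the local analysis at the uniform measure plus uniqueness of critical points replaces exactly the global convexity your plan requires. If you could prove that convexity, your Haar-symmetrization argument ($\sigma_{h-1}=\int_{O(h)}R_*\mu\,dR$ plus Jensen) would indeed be a clean and genuinely different route, with uniqueness for free; as written, though, the load-bearing inequality is missing.

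Your proposed fallback is also not a proof of the lemma as stated. Rotational invariance of $\mathcal{F}_f$ does not let you "restrict the competition" to the von Mises--Fisher family: invariance only says the set of minimizers is stable under pushforward by rotations, not that a minimizer is vMF, and the symmetrization that would justify comparing $\mu$ with its rotational average is precisely the Jensen step that again presupposes convexity — the argument is circular. Since the lemma (and Theorem S2, which uses it) quantifies over all Borel probability measures, verifying the minimum over $\kappa$ within the vMF family cannot substitute for the general statement.
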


\begin{proof}\label{proof:lemma:uniform_min2}

    \textbf{Step 1: A change of probability measure.}
    
    Let $\sigma:=\sigma_{h-1}$ be the uniform probability in $\mathbb{S}^{h-1}$. By rotational invariance there is a constant $c$ such that:
    
    \begin{equation}\label{proof:lemma:uniform_min2:step1:c:eq1}
        \begin{aligned}
            c:=c_{h, \tau}(x)=\int_{y \in \mathbb{S}^{h-1}} f(x, y) d \sigma(y),
        \end{aligned}
    \end{equation}
    
    which is independent of $x$. $\forall x \in \mathbb{S}^{h-1}$. Define a kernel $K$ as:
    
    \begin{equation}\label{proof:lemma:uniform_min2:step1:K:eq1}
        \begin{aligned}
            K(x, d y):
            &=\frac{f(x, y)}{c} d \sigma(y), \\
        \end{aligned}
    \end{equation}
    
    so that:    
    \begin{equation}\label{proof:lemma:uniform_min2:step1:K:eq2}
        \begin{aligned}
            \int_{y \in \mathbb{S}^{h-1}} K(x, d y)=1. \\
        \end{aligned}
    \end{equation}

    Since $f(x,y)=f(y,x)$, exchanging $x$ and $y$, the following holds:
    
    \begin{equation}\label{proof:lemma:uniform_min2:step1:K:eq3}
        \begin{aligned}
            \sigma(d x) K(x, d y)=\sigma(d y) K(y, d x).
        \end{aligned}
    \end{equation}
    
    For any measurable $A \subset \mathbb{S}^{h-1}$, we have:
    
    \begin{equation}\label{proof:lemma:uniform_min2:step1:K:eq4}
        \begin{aligned}
            K(x, A):
            &= \int_{y \in A} \frac{f(x, y)}{c} d \sigma(y). \\
        \end{aligned}
    \end{equation}

    Consider a probability distribution $\mu$ in $\mathbb{S}^{h-1}$, define: 
    
    \begin{equation}\label{proof:lemma:uniform_min2:step1:uK:eq1}
        \begin{aligned}
            (\mu K)(A)
            &:= \int_{x \in \mathbb{S}^{h-1}} K(x, A) d \mu(x) \\
            &= \int_{x \in \mathbb{S}^{h-1}} \int_{y \in A} \frac{f(x, y)}{c} d \sigma(y) d \mu(x) \\
            &= \int_{y \in A} \int_{x \in \mathbb{S}^{h-1}} \frac{f(x, y)}{c} d \mu(x) d \sigma(y) .\\
        \end{aligned}
    \end{equation}

    Therefore, $\mu K \ll \sigma$, i.e., $\mu K$ is absolutely continuous with respect to $\sigma$. By Radon–Nikodym theorem, we have:

    \begin{equation}\label{proof:lemma:uniform_min2:step1:uK:eq2}
        \begin{aligned}
            \frac{d(\mu K)}{d \sigma}(y)=\frac{1}{c} \int_{x \in \mathbb{S}^{h-1}} f(x, y) d \mu(x).
        \end{aligned}
    \end{equation}

    Note that:
    
    \begin{equation}\label{proof:lemma:uniform_min2:step1:sigmaK:eq1}
        \begin{aligned}
            (\sigma K)(A)
            &= \int_{x \in \mathbb{S}^{h-1}} K(x, A) d \sigma(x) \\
            &= \int_{x \in \mathbb{S}^{h-1}} \int_{y \in A} \frac{f(x, y)}{c} d \sigma(y) d \sigma(x) \\
            &= \int_{y \in A} \int_{x \in \mathbb{S}^{h-1}} \frac{f(x, y)}{c} d \sigma(x) d \sigma(y) \\
            &= \int_{y \in A} d \sigma(y) \\
            &= \sigma(A)
        \end{aligned}
    \end{equation}

    According to~\cref{proof:lemma:uniform_min2:step1:K:eq3}, exchanging $x$ and $y$ in~\cref{proof:lemma:uniform_min2:step1:uK:eq2}, we get:
    
    \begin{equation}\label{proof:lemma:uniform_min:step1:uK:eq3}
        \begin{aligned}
            \frac{d(\mu K)}{d \sigma}(x)=\frac{1}{c} \int_{y \in \mathbb{S}^{h-1}} f(y, x) d \mu(y).
        \end{aligned}
    \end{equation}
    
    And since $f(y, x)=f(x, y)$, then:
    
    \begin{equation}\label{proof:lemma:uniform_min2:step1:uK:eq4}
        \begin{aligned}
            \frac{d(\mu K)}{d \sigma}(x)=\frac{1}{c} \int_{y \in \mathbb{S}^{h-1}} f(x, y) d \mu(y).
        \end{aligned}
    \end{equation}          
    
    \textbf{Step 2: An exact identity for $\mathcal{F}_f$}
    
    Define a (normalized) zonal integral operator $T$ on $L^2(\sigma)$ as:
    
    \begin{equation}\label{proof:lemma:uniform_min2:step22:change:eq1}
        \begin{aligned}
            (T \rho)(x)=\frac{1}{c} \int_{\mathbb{S}^{h-1}} f(x,y) \rho(y) d \sigma(y),
        \end{aligned}
    \end{equation} 

    where:

    \begin{equation}\label{proof:lemma:uniform_min2:step22:change:eq2}
        \begin{aligned}
            \rho(x)
            &=\frac{d\mu}{d \sigma}(x) \\ 
            T   
            &= \rho \frac{d \mu K}{d \sigma} ,
        \end{aligned}
    \end{equation} 
    
    with $\rho \geq 0$ and $\int \rho d \sigma=1$. Here, $L^2(\sigma)$ is the Hilbert space of (equivalence classes of) square-integrable functions on the sphere with respect to the measure $\sigma$. Then $\mathcal{F}_f[\mu]$ can be rewritten as:

    \begin{equation}\label{proof:lemma:uniform_min2:step22:change:eq3}
        \begin{aligned}
            \mathcal{F}_f[\mu]=\log c+\int \rho \log (T \rho) d \sigma .
        \end{aligned}
    \end{equation} 
    
    Denote:

    \begin{equation}\label{proof:lemma:uniform_min2:step22:change:eq4}
        \begin{aligned}
            F[\rho]:=\int_{\mathbb{S}^{h-1}} \rho(x) \log (T \rho(x)) d \sigma(x), 
        \end{aligned}
    \end{equation} 

    then we have:

    \begin{equation}\label{proof:lemma:uniform_min2:step22:change:eq5}
        \begin{aligned}
            \mathcal{F}_f[\mu]=\log c+F[\rho].
        \end{aligned}
    \end{equation} 

    \textbf{Step 3: Minimize $F[\rho]$.}
    
    We will minimize $F[\rho]$ over the probability simplex $\left\{\rho \geq 0, \int \rho=1\right\}$. Basic facts about $T$ : the kernel $f(x, y)=e^{(x \cdot y) / \tau}$ is smooth, symmetric, strictly positive and depends only on $x \cdot y$. Hence:
    
    \begin{itemize}
        \item 
        $T$ is a positive, self-adjoint, compact operator on $L^2(\sigma)$;
        \item 
        $T 1=1$ (since $c$ normalizes it);
        \item 
        By the Funk-Hecke theorem/Jentzsch-Perron-Frobenius, the eigensystem of $T$ is constituted of the spherical harmonics $\left\{Y_{\ell m}\right\}$ with eigenvalues $\lambda_0=1>\lambda_1 \geq \lambda_2 \geq \cdots>0$. The eigenspace corresponding to $\lambda_0$ has dimension 1 and contains only constant functions.
        \item
        In particular, on the mean-zero subspace $L^2_0(\sigma) = \left\{f: \int f d \sigma=0\right\}$ all the eigenvalues $\lambda_\ell, \ell \geq 1$ are strictly positive and bounded from above by $\lambda_1<1$.
        \item
        As a consequence, for any $\eta \in L^2_0(\sigma)$ we have $\|T \eta\|_{L^2} \leq \lambda_1 \|\eta\|_{L^2}$.
    \end{itemize}
    
    \textbf{(3.1) First order variation and Euler-Lagrange equation}
    
    Consider a mass-preserving perturbation $\rho_{\varepsilon}=\rho+\varepsilon \eta$ with $\int \eta d \sigma=0$.
    Because $T$ is linear,

    \begin{equation}\label{proof:lemma:uniform_min2:step33:1order:eq1}
        \begin{aligned}            
            \left.\frac{d}{d \varepsilon} F\left[\rho_{\varepsilon}\right]\right|_{\varepsilon=0}=\int \eta \log (T \rho) d \sigma+\int \rho \frac{T \eta}{T \rho} d \sigma=\int \eta\left[\log (T \rho)+T\left(\frac{\rho}{T \rho}\right)\right] d \sigma,
        \end{aligned}
    \end{equation} 
    
    where we used self-adjointness: $\int \rho \frac{T \eta}{T \rho}=\int \eta T(\rho / T \rho)$.
    Introduce a Lagrange multiplier $\lambda$ for the constraint $\int \rho=1$.
    
    The stationarity $\delta\left(F-\lambda \int \rho\right)=0$ for all mean-zero $\eta$ yields the Euler-Lagrange (EL) equation:

    \begin{equation}\label{proof:lemma:uniform_min2:step33:1order:eq2}
        \begin{aligned}            
            \log (T \rho)(x)+T\left(\frac{\rho}{T \rho}\right)(x)=\lambda \quad \text { for } \sigma \text {-a.e. } x.
        \end{aligned}
    \end{equation}     
    
    We easily check that $\rho \equiv 1$ is a critical point.
    
    If $\rho \equiv 1$, then $T \rho \equiv 1$, hence $\log (T \rho) \equiv 0$ and $T(\rho / T \rho)=T 1=1$. Thus~\cref{proof:lemma:uniform_min2:step33:1order:eq2} holds with $\lambda=1$.

    \textbf{(3.2) Second order variation at the uniform density}

    Let $\rho \equiv 1$ and perturb $\rho_{\varepsilon}=1+\varepsilon \eta$ with $\int \eta=0$.
    
    Differentiate the first-variation formula once more in the same direction $\eta$ :

    \begin{itemize}
        \item 
        The directional derivative of $\log (T \rho)$ is $(T \eta) /(T \rho)$, so at $\rho=1$ it is $T \eta$.
        \item 
        The map $\rho \mapsto T(\rho / T \rho)$ has derivative at $\rho=1$ :
    \end{itemize}

    \begin{equation}\label{proof:lemma:uniform_min2:step33:2order:eq1}
        \begin{aligned}            
            \left.D[T(\rho / T \rho)]\right|_{\rho=1}[\eta]=T(\eta-T \eta)=T \eta-T(T \eta) .
        \end{aligned}
    \end{equation}  
    
    Hence the (constrained) second variation is

    \begin{equation}\label{proof:lemma:uniform_min2:step33:2order:eq2}
        \begin{aligned}            
            \delta^2 F[1 ; \eta]=\int \eta(T \eta+T \eta-T(T \eta)) d \sigma=2\langle\eta, T \eta\rangle-\langle T \eta, T \eta\rangle .
        \end{aligned}
    \end{equation} 
        
    Use the spectral decomposition $\eta=\sum_{\ell \geq 1, m} a_{\ell m} Y_{\ell m}$ (no $\ell=0$ term because $\int \eta=0$ ). Since $T Y_{\ell m}= \lambda_{\ell} Y_{\ell m}$,

    \begin{equation}\label{proof:lemma:uniform_min2:step33:2order:eq3}
        \begin{aligned}            
            \delta^2 F[1 ; \eta]=\sum_{\ell \geq 1, m}\left(2 \lambda_{\ell}-\lambda_{\ell}^2\right) a_{\ell m}^2=\sum_{\ell \geq 1, m} \lambda_{\ell}\left(2-\lambda_{\ell}\right) a_{\ell m}^2.
        \end{aligned}
    \end{equation} 
        
    Because $0<\lambda_{\ell}<1$ for $\ell \geq 1$, each factor $\lambda_{\ell}\left(2-\lambda_{\ell}\right)$ is strictly positive. Therefore:

    \begin{equation}\label{proof:lemma:uniform_min2:step33:2order:eq4}
        \begin{aligned}            
            \delta^2 F[1 ; \eta]>0 \quad \text { for every   } \eta \in L^2_0(\sigma), \eta \neq 0 .
        \end{aligned}
    \end{equation} 
         
    So $\rho \equiv 1$ is a strict local minimizer of $F$ under the mass constraint $\int \rho d \sigma = 1$.

    \textbf{(3.3) Uniqueness of the critical point}
    
    Suppose $\rho$ satisfies~\cref{proof:lemma:uniform_min2:step33:1order:eq2}. Expand $\rho$ in spherical harmonics:
    $\rho=1+\sum_{\ell \geq 1, m} a_{\ell m} Y_{\ell m}$.
    Since $T \rho=1+\sum_{\ell \geq 1, m} \lambda_{\ell} a_{\ell m} Y_{\ell m}$ with $0<\lambda_{\ell}<1$,
    the left side of~\cref{proof:lemma:uniform_min2:step33:1order:eq2} has a constant term 1 and non-constant part

    \begin{equation}\label{proof:lemma:uniform_min2:step33:3order:eq1}
        \begin{aligned}            
            \underbrace{\left(\log \left(1+\sum \lambda_{\ell} a_{\ell m} Y_{\ell m}\right)\right)_{\text {non-const }}}_{\text {all harmonics } \ell \geq 1}+\underbrace{\sum \lambda_{\ell} a_{\ell m} Y_{\ell m}}_{T(\rho / T \rho) \text { to first order }} .
        \end{aligned}
    \end{equation}

    Project~\cref{proof:lemma:uniform_min2:step33:1order:eq2} onto each harmonic $Y_{\ell m}$ with $\ell \geq 1$.
    A standard contraction/implicit-function argument (or just comparing coefficients to first order and using that higher-order terms can't cancel all modes simultaneously because $\left|\lambda_{\ell}\right|<1$ ) forces all $a_{\ell m}=0$. Thus any solution of (EL) is constant; with mass 1 , the only solution is $\rho \equiv 1$.
    
    So $\rho \equiv 1$ is the unique critical point of $F$ on the simplex of probability measures on $\mathbb{S}^{h-1}$.

    \textbf{(3.4) Global minimality}
    
    Since $F$ is lower semi-continuous for the weak topology on the set $M\left(\mathbb{S}^{h-1}\right)$ of probability measures on the sphere, a global minimizer exists by compactness. Since any minimizer must satisfy (EL) and the only critical point is $\rho \equiv 1$, the global minimizer is $\rho \equiv 1$, i.e. $\mu=\sigma$.

\end{proof}

\begin{lemma}\label{lemma:uniform_min3} 
    Let $M\left(\mathbb{S}^{h-1}\right)$ be the set of Borel probability measures in $\mathbb{S}^{h-1}$. Let $\sigma_{h-1} \in M\left(\mathbb{S}^{h-1}\right)$ be the uniform probability measure in $\mathbb{S}^{h-1}$. $\forall x, y \in \mathbb{S}^{h-1}$ and $\tau > 0$, a function $f: \mathbb{S}^{h-1} \times \mathbb{S}^{h-1} \rightarrow \mathbb{R}^{+}$ is defined as:

    \begin{equation}\label{lemma:uniform_min3:eq1}
        \begin{aligned}
            f(x, y) = \exp \left(\frac{x \cdot y}{\tau}\right).
        \end{aligned}
    \end{equation} 
    
    $\forall \mu \in M\left(\mathbb{S}^{h-1}\right)$, a functional $\mathcal{F}: M\left(\mathbb{S}^{h-1}\right) \rightarrow \mathbb{R}^{+}$ is defined as:

    \begin{equation}\label{lemma:uniform_min3:eq2}
        \begin{aligned}
            \mathcal{F}_f[\mu] = \int_{\mathbb{S}^{h-1}} \log\left( \int_{\mathbb{S}^{h-1}} f(x, y) \mathrm{d}\mu(y)\right) \mathrm{d} \mu(x) .
        \end{aligned}
    \end{equation}

    Let $\Gamma\left(\cdot\right)$ be the Gamma function and $\nu = h/2-1$, it holds that:

    \begin{equation}\label{lemma:uniform_min3:eq3}
        \begin{aligned}
            \mathcal{F}_f[\sigma_{h-1}] 
            =\log \left(\Gamma\left(\nu+1\right)(2 \tau)^{\nu} I_{\nu}\left(\frac{1}{\tau}\right)\right) .
        \end{aligned}
    \end{equation}
    
\end{lemma}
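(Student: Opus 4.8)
The plan is to use the rotational invariance of $\sigma_{h-1}$ to collapse the double integral defining $\mathcal{F}_f[\sigma_{h-1}]$ to a single scalar, and then to identify that scalar in closed form via the $\mathrm{vMF}$ normalization constant already recorded in the excerpt. First I would note that $\sigma_{h-1}$ is invariant under the orthogonal group $O(h)$ and that $f(x,y)=\exp(x\cdot y/\tau)$ depends only on the inner product $x\cdot y$; hence the inner integral
\[
c \;:=\; \int_{\mathbb{S}^{h-1}} \exp\!\Bigl(\frac{x\cdot y}{\tau}\Bigr)\,\mathrm{d}\sigma_{h-1}(y)
\]
does not depend on $x\in\mathbb{S}^{h-1}$ (rotate any given $x$ to a fixed pole). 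Consequently $\mathcal{F}_f[\sigma_{h-1}] = \int_{\mathbb{S}^{h-1}}\log c\,\mathrm{d}\sigma_{h-1}(x) = \log c$, so the entire statement reduces to evaluating the constant $c$.

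To evaluate $c$, I would invoke the $\mathrm{vMF}(c_0,\kappa)$ density $f_h(z;c_0,\kappa)=D_h(\kappa)e^{\kappa c_0^{\top}z}$ from~\cref{def:vmf:eq1}, which by definition integrates to $1$ against the unnormalized surface measure $\mathrm{d}S$ on $\mathbb{S}^{h-1}$, where $D_h(\kappa)=\kappa^{\nu}/\bigl((2\pi)^{\nu+1}I_{\nu}(\kappa)\bigr)$ and $\nu=h/2-1$. Taking $c_0=x$ and $\kappa=1/\tau$ gives $\int_{\mathbb{S}^{h-1}} e^{x\cdot y/\tau}\,\mathrm{d}S(y) = 1/D_h(1/\tau) = (2\pi)^{\nu+1}\tau^{\nu}I_{\nu}(1/\tau)$. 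Dividing by the total surface area $\omega_{h-1}=\int_{\mathbb{S}^{h-1}}\mathrm{d}S = 2\pi^{h/2}/\Gamma(h/2) = 2\pi^{\nu+1}/\Gamma(\nu+1)$ converts this to the probability measure $\sigma_{h-1}$, yielding $c = 2^{\nu}\Gamma(\nu+1)\tau^{\nu}I_{\nu}(1/\tau) = \Gamma(\nu+1)(2\tau)^{\nu}I_{\nu}(1/\tau)$; taking logarithms gives exactly the claimed expression.

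If a self-contained derivation of $c$ is preferred rather than quoting $D_h$, I would instead reduce the spherical integral to one dimension. Writing $t=x\cdot y$, the pushforward of $\sigma_{h-1}$ under $y\mapsto t$ has density $\tfrac{\Gamma(h/2)}{\sqrt{\pi}\,\Gamma((h-1)/2)}(1-t^2)^{(h-3)/2}$ on $[-1,1]$, so $c = \tfrac{\Gamma(h/2)}{\sqrt{\pi}\,\Gamma((h-1)/2)}\int_{-1}^{1} e^{t/\tau}(1-t^2)^{(h-3)/2}\,\mathrm{d}t$. Expanding $e^{t/\tau}=\sum_{k\ge 0}t^k/(k!\,\tau^k)$, discarding the vanishing odd moments, and applying the Beta identity $\int_{-1}^{1} t^{2m}(1-t^2)^{(h-3)/2}\,\mathrm{d}t = \Gamma(m+\tfrac12)\Gamma(\tfrac{h-1}{2})/\Gamma(m+\tfrac{h}{2})$ together with the Legendre duplication formula to simplify $\Gamma(m+\tfrac12)/(2m)!$, the series collapses to $\Gamma(h/2)\sum_{m\ge0}\bigl(4^m m!\,\tau^{2m}\,\Gamma(m+h/2)\bigr)^{-1}$, which is precisely $\Gamma(\nu+1)(2\tau)^{\nu}I_{\nu}(1/\tau)$ by the series~\cref{def:vmf:eq2} defining $I_{\nu}$. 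This reproduces the same closed form for $c$.

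The argument is essentially bookkeeping with no conceptual obstacle; the only place requiring care is the conversion between the surface measure against which $D_h$ is normalized and the probability measure $\sigma_{h-1}$ — that is, correctly tracking the factor $\omega_{h-1}=2\pi^{\nu+1}/\Gamma(\nu+1)$ — together with the algebraic simplification of the powers of $2$, $\pi$, and $\tau$. A useful consistency check is the limit $\tau\to\infty$ (equivalently $\kappa\to0$): since $I_{\nu}(\kappa)\sim(\kappa/2)^{\nu}/\Gamma(\nu+1)$ as $\kappa\to0$, the formula forces $c\to1$, as it must, because $\sigma_{h-1}$ is a probability measure and $f\to1$ uniformly.
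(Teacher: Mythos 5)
Your proof is correct, and its skeleton is the same as the paper's: rotational invariance of $\sigma_{h-1}$ makes the inner integral a constant $Z_\tau$ independent of $x$, so $\mathcal{F}_f[\sigma_{h-1}]=\log Z_\tau$, and everything reduces to evaluating that constant. Where you differ is in how you evaluate it. The paper pushes $\sigma_{h-1}$ forward under $y\mapsto x\cdot y$ to the density $\tfrac{\Gamma(h/2)}{\sqrt{\pi}\,\Gamma((h-1)/2)}(1-t^2)^{(h-3)/2}$ and then quotes the classical integral representation $\int_{-1}^{1}e^{\kappa t}(1-t^2)^{\nu-1/2}\,dt=\sqrt{\pi}\,\Gamma\bigl(\nu+\tfrac12\bigr)\bigl(\tfrac{2}{\kappa}\bigr)^{\nu}I_{\nu}(\kappa)$ with $\kappa=1/\tau$; your primary route instead reads off $\int e^{x\cdot y/\tau}\,dS(y)=1/D_h(1/\tau)$ from the vMF normalization constant in \cref{def:vmf:eq1} and divides by the surface area $\omega_{h-1}=2\pi^{\nu+1}/\Gamma(\nu+1)$, which is a legitimate shortcut provided one notes (as you do) that $D_h$ is normalized against the unnormalized surface measure rather than against $\sigma_{h-1}$ — that bookkeeping is exactly where an error could creep in, and you track it correctly. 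Your fallback derivation (series expansion of $e^{t/\tau}$, Beta moments, Legendre duplication) reproduces the paper's Bessel identity from scratch and is also correct; it buys self-containedness at the cost of length, whereas the paper's single quoted integral identity and your $D_h$ shortcut are both one-line evaluations resting on an external (but standard) fact. The $\tau\to\infty$ sanity check is a nice addition not present in the paper.
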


\begin{proof}\label{proof:lemma:uniform_min3}
    \textbf{Step 1: Rotational Invariance}
    
    Since the measure $\sigma_{h-1}$ is invariant under orthogonal transformations. For any fixed $x \in \mathbb{S}^{h-1}$, the inner integral:

    \begin{equation}\label{proof:lemma:uniform_min3:sim:eq1}
        \begin{aligned}
            \int_{\mathbb{S}^{h-1}} \exp \left(\frac{x \cdot y}{\tau}\right) d \sigma_{h-1}(y),
        \end{aligned}
    \end{equation}
    
    depends only on the distribution of $(x \cdot y)$, and by rotational symmetry, this integral is independent of $x$. Thus, define:

    \begin{equation}\label{proof:lemma:uniform_min3:sim:eq2}
        \begin{aligned}
            Z_\tau:=\int_{\mathbb{S}^{h-1}} \exp \left(\frac{x \cdot y}{\tau}\right) d \sigma_{h-1}(y),
        \end{aligned}
    \end{equation}
    
    and $Z_\tau$ is constant for all $x$. Since $\log Z_\tau$ is constant and $\sigma_{h-1}$ is a probability measure, we have:

    \begin{equation}\label{proof:lemma:uniform_min3:sim:eq3}
        \begin{aligned}
            \mathcal{F}_f\left[\sigma_{h-1}\right]=\int_{\mathbb{S}^{h-1}} \log Z_\tau d \sigma_{h-1}(x)=\log Z_\tau,
        \end{aligned}
    \end{equation}

    \textbf{Step 2: Compute $Z_\tau$}

    Without the loss of generality, we assume the coordinate of $x$ as:

    \begin{equation}\label{proof:lemma:uniform_min3:z:eq1}
        \begin{aligned}
            x=e_h=(0, \ldots, 0,1).
        \end{aligned}
    \end{equation}

    Then $x \cdot y=y_h$, the last coordinate of $y$. So:

    \begin{equation}\label{proof:lemma:uniform_min3:z:eq2}
        \begin{aligned}
            Z_\tau=\int_{\mathbb{S}^{h-1}} \exp \left(\frac{y_h}{\tau}\right) d \sigma_{h-1}(y).
        \end{aligned}
    \end{equation}    
    
    Let $t=y_h=x \cdot y \in[-1,1]$. The pushforward of $\sigma_{h-1}$ under the map $y \mapsto x \cdot y$ has probability density:

    \begin{equation}\label{proof:lemma:uniform_min3:prob:eq1}
        \begin{aligned}
            p_h\left(t\right)=\frac{\Gamma\left(\frac{h}{2}\right)}{\Gamma\left(\frac{h-1}{2}\right) \sqrt{\pi}}\left(1-t^2\right)^{\frac{h-3}{2}}, \quad t \in[-1,1] .
        \end{aligned}
    \end{equation}    
        
    Then:

    \begin{equation}\label{proof:lemma:uniform_min3:prob:eq2}
        \begin{aligned}
            Z_\tau=\int_{-1}^1 \exp \left(\frac{t}{\tau}\right) p_h\left(t\right) d t=\frac{\Gamma\left(\frac{h}{2}\right)}{\Gamma\left(\frac{h-1}{2}\right) \sqrt{\pi}} \int_{-1}^1 e^{t / \tau}\left(1-t^2\right)^{\frac{h-3}{2}} d t .
        \end{aligned}
    \end{equation}    
    
    A classical integral (equivalently, an integral representation of the modified Bessel $I_\nu$ ) is:

    \begin{equation}\label{proof:lemma:uniform_min3:integral:eq1}
        \begin{aligned}
            \int_{-1}^1 e^{\kappa t}\left(1-t^2\right)^{\nu-\frac{1}{2}} d t=\sqrt{\pi} \Gamma\left(\nu+\frac{1}{2}\right)\left(\frac{2}{\kappa}\right)^\nu I_\nu(\kappa), \quad \kappa>0, \nu>-\frac{1}{2} .
        \end{aligned}
    \end{equation}    
     
    Set: $\kappa=\frac{1}{\tau}$ and $\nu=\frac{h-2}{2}$, so that $\nu-\frac{1}{2}=\frac{h-3}{2}$. Then:

    \begin{equation}\label{lemma:uniform_min3:integral:eq2}
        \begin{aligned}
            \int_{-1}^1 e^{t / \tau}\left(1-t^2\right)^{\frac{h-3}{2}} d t=\sqrt{\pi} \Gamma\left(\frac{h-1}{2}\right)(2 \tau)^{\frac{h}{2}-1} I_{\frac{h}{2}-1}\left(\frac{1}{\tau}\right) .
        \end{aligned}
    \end{equation}    
    
    Substitute into $Z_\tau$ :

    \begin{equation}\label{lemma:uniform_min3:integral:eq3}
        \begin{aligned}
            Z_\tau=\frac{\Gamma\left(\frac{h}{2}\right)}{\Gamma\left(\frac{h-1}{2}\right) \sqrt{\pi}} \cdot \sqrt{\pi} \Gamma\left(\frac{h-1}{2}\right)(2 \tau)^{\frac{h}{2}-1} I_{\frac{h}{2}-1}\left(\frac{1}{\tau}\right) .
        \end{aligned}
    \end{equation}    
    
    Simplify:

    \begin{equation}\label{lemma:uniform_min3:integral:eq4}
        \begin{aligned}
            Z_\tau=\Gamma\left(\frac{h}{2}\right)(2 \tau)^{\frac{h}{2}-1} I_{\frac{h}{2}-1}\left(\frac{1}{\tau}\right) .
        \end{aligned}
    \end{equation}    
    
    \textbf{Step 3: Compute $\mathcal{F}_f\left[\sigma_{h-1}\right]$}

    \begin{equation}\label{lemma:uniform_min3:res:eq1}
        \begin{aligned}
            \mathcal{F}_f\left[\sigma_{h-1}\right]
            &=\log Z_\tau \\
            &=\log \left[\Gamma\left(\frac{h}{2}\right)(2 \tau)^{\frac{h}{2}-1} I_{\frac{h}{2}-1}\left(\frac{1}{\tau}\right)\right] \\
            &= \log \left(\Gamma\left(\nu+1\right)(2 \tau)^{\nu} I_{\nu}\left(\frac{1}{\tau}\right)\right) . \\
        \end{aligned}
    \end{equation}    
    
\end{proof}

\newpage

\subsection{Details of Theorem 2}\label{sec_supp:gap1}

In this section, we provide proofs of~\cref{thm:gap2} that is proposed in \cref{sec:converge:gap2}. We also provide details and proofs of the auxiliary theorems (\cref{thm_supp:transform2} and \cref{thm_supp:gap2}) and the technical lemmas (\cref{lemma:gradofj1}, \cref{lemma:minofj}, \cref{lemma:gradofg}, \cref{lemma:gradofhm} and \cref{lemma:logI}) that support the proof \cref{thm:gap2}. For convenience in reading, let us recall some related notions and definitions. 

\begin{itemize}
    \item $h, N \in \mathbb{N}$.
    \item $\mathbb{S}^{h-1}=\left\{z \in \mathbb{R}^h:\|z\|=1\right\}$.
    \item $X = \left(x_1, \ldots, x_N\right) \in (\mathbb{S}_{h-1})^N$.
    \item $Y = \left(y_1, \ldots, y_N\right) \in (\mathbb{S}_{h-1})^N$.
    \item $\mu_x = \frac{1}{N} \sum_{i=1}^N x_i$.
    \item $\mu_y = \frac{1}{N} \sum_{i=1}^N y_i$.
    \item $c_x = \frac{\mu_x}{\| \mu_x \|}$. 
    \item $c_y = \frac{\mu_y  }{\| \mu_y \|}$. 
\end{itemize}

\textbf{Definition} (Multimodal Contrastive Loss (MCL Loss)). Let $(X, Y)$ be an $N$-pair configuration, where $X = \left(x_1, \ldots, x_N\right) \in (\mathbb{S}^{h-1})^N$ and $Y = \left(y_1, \ldots, y_N\right) \in (\mathbb{S}^{h-1})^N$. $\forall \tau >0$, the multimodal contrastive loss $\mathcal{L}_{\mathrm{MCL}}(\cdot, \cdot): ({\mathbb{S}^{h-1}})^N \times ({\mathbb{S}^{h-1}})^N \rightarrow \mathbb{R}$ is defined as:    
    
    \begin{equation*}
        \begin{aligned}
            \mathcal{L}_{\mathrm{MCL}}
            =\frac{1}{N} \sum_{i=1}^N  \mathcal{L}_{\mathrm{MCL}}^i,
            \enspace \text{where} \hspace{1.5mm} 
            \mathcal{L}_{\mathrm{MCL}}^i= \mathcal{L}_{\mathcal{X} \rightarrow \mathcal{Y}}(x_i; Y) + \mathcal{L}_{\mathcal{Y} \rightarrow \mathcal{X}}(y_i; X).
        \end{aligned}
    \end{equation*}
    
    Here, $\mathcal{L}_{\mathcal{X} \rightarrow \mathcal{Y}}$ is the $\mathcal{X}$-to-$\mathcal{Y}$ alignment and $\mathcal{L}_{\mathcal{Y} \rightarrow \mathcal{X}}$ is the $\mathcal{Y}$-to-$\mathcal{X}$ alignment, which are defined respectively as:
    
    \begin{equation*}
        \begin{aligned}
            \mathcal{L}_{\mathcal{X} \rightarrow \mathcal{Y}}(x_i; Y) = -\log \frac{\exp \left(x_i \cdot y_i / \tau\right)}{\sum_{j=1}^N \exp \left(x_i \cdot y_j / \tau\right)},
            \enspace 
            \mathcal{L}_{\mathcal{Y} \rightarrow \mathcal{X}}(y_i; X) = -\log \frac{\exp \left(x_i \cdot y_i / \tau\right)}{\sum_{j=1}^N \exp \left(x_j \cdot y_i / \tau\right)}.
        \end{aligned}
    \end{equation*}

\textbf{Definition}(Modality Gap)
    Let $(X, Y)$ be an $N$-pair configuration, where $X = \left(x_1, \ldots, x_N\right) \in (\mathbb{S}^{h-1})^N$ and $Y = \left(y_1, \ldots, y_N\right) \in (\mathbb{S}^{h-1})^N$. The modality gap between $X$ and $Y$ can be expressed as the angle between the center representations:
    
    \begin{equation*}
        \begin{aligned}
            \Delta_{\theta} = \cos^{-1}(c_x \cdot c_y).
        \end{aligned}
    \end{equation*}
    
\textbf{Definition} (vMF Distribution).  
    $\forall c \in \mathbb{S}^{h-1}$ and $\kappa \ge 0$, the probability density of a random $h$-dimensional unit vector $z \sim \mathrm{vMF}(c, \kappa)$ is given by:
    
    \begin{equation*}
        \begin{aligned}  
            f_h(z ; c, \kappa) 
            &=D_h(\kappa) e^{\kappa c^{\top} z} , 
            \enspace \text{where} \hspace{1.5mm} 
            D_h(\kappa) 
            =
            \frac{\kappa^{\nu}}{(2 \pi)^{\nu+1} I_{\nu}(\kappa)} .\\
        \end{aligned}
    \end{equation*}
    
    Let $\nu = h/2 -1$, and $I_{\nu}\left(\cdot\right): \mathbb{R} \rightarrow \mathbb{R}$ is the modified Bessel function of the first kind of order $\nu$, which is defined as:
    
    \begin{equation*}
        \begin{aligned}
            I_{\nu}(x)=\sum_{k=0}^{\infty} \frac{1}{k!\Gamma(\nu+k+1)}\left(\frac{x}{2}\right)^{2 k+\nu}.
        \end{aligned}
    \end{equation*}

\textbf{Definition} (Function $\Tilde{M}$).  
    $\forall\kappa, \tau>0$, a function $\Tilde{M}_{\kappa}(\cdot, \cdot): [-1, 1] \times [0, 1] \rightarrow \mathbb{R}^{+}_0$ is defined as:
    
    \begin{equation*} 
        \begin{aligned}
            \Tilde{M}_{\kappa}\left(w, t\right) 
            = \sqrt{\kappa^2+\frac{2 \kappa w}{\tau}+\frac{t^2}{\tau^2}} .
        \end{aligned}        
    \end{equation*} 

\textbf{Definition} (Function $\Tilde{\mathcal{J}}$).      
    $\forall\kappa, \nu, \tau>0$, $\Tilde{\mathcal{J}}(\cdot, \cdot, \cdot;\kappa, \nu): [-1, 1] \times [-1, 1] \times [0, 1] \rightarrow \mathbb{R}$ is defined as:
    
    \begin{equation*}  
        \begin{aligned}
            \Tilde{\mathcal{J}}\left(w_1, w_2, t;\kappa, \nu\right) 
            &= -\frac{w_1}{\tau}
            + \log\left(\frac{I_{\nu}\left(\Tilde{M}_{\kappa}(w_2, t)\right)}{\Tilde{M}_{\kappa}(w_2, t)^{\nu}} \right) - \log\left(\frac{I_{\nu}\left(\kappa\right)}{ \kappa^{\nu}} \right) . 
        \end{aligned}        
    \end{equation*}

\textbf{Definition} (Function $M$).  
    $\forall\kappa, \tau>0$, a function $M_{\kappa}\left(\cdot\right): [-1, 1] \rightarrow \mathbb{R}^{+}_0$ is defined as:
    
    \begin{equation*} 
        \begin{aligned}
            M_{\kappa}\left(w\right) 
            &= \sqrt{\kappa^2+\frac{2 \kappa w}{\tau}+\frac{1}{\tau^2}} \\
            &= \Tilde{M}_{\kappa}(w, 1) .
        \end{aligned}        
    \end{equation*} 

\textbf{Definition} (Function $\mathcal{J}$). 
    $\forall\kappa, \nu, \tau>0$, a function $\mathcal{J}(\cdot;\kappa, \nu): [-1, 1] \rightarrow \mathbb{R}$ is defined as:     
    
    \begin{equation*}  
        \begin{aligned}
            \mathcal{J}(w;\kappa, \nu)
            &= -\frac{w}{\tau}
            + \log\left(\frac{I_{\nu}\left(M_{\kappa}\left(w\right)\right)}{  M_{\kappa}\left(w\right)^{\nu}} \right) - \log\left(\frac{I_{\nu}\left(\kappa\right)}{ \kappa^{\nu}} \right) \\
            &= \Tilde{\mathcal{J}}\left(w, w, 1;\kappa, \nu\right) .
        \end{aligned}        
    \end{equation*}

\textbf{Definition} (Function $\hat{\mathcal{J}}$).      
    $\forall\kappa, \nu, \tau>0$, a function $\hat{\mathcal{J}}(\cdot, \cdot;\kappa, \nu): [-1, 1] \times [0, 1] \rightarrow \mathbb{R}$ is defined as:
    
    \begin{equation*}  
        \begin{aligned}
            \hat{\mathcal{J}}\left(w, t;\kappa, \nu\right) 
            &= -\frac{w}{\tau}
            + \log\left(\frac{I_{\nu}\left(\Tilde{M}_{\kappa}(w, t)\right)}{  \Tilde{M}_{\kappa}(w, t)^{\nu}} \right) - \log\left(\frac{I_{\nu}\left(\kappa\right)}{ \kappa^{\nu}} \right)\\
            &= \Tilde{\mathcal{J}}\left(w, w, t;\kappa, \nu\right) .
        \end{aligned}        
    \end{equation*}

% \newpage
\subsubsection{Proof of Theorem 2}\label{sec_supp:gap2}

In this subsection, we provide the proof of~\cref{thm:gap2}. For convenience in reading, we first restate Theorem 2 here.

\textbf{Theorem 2.} [Restate] 
    Let $(X, Y)$ be an $N$-pair configuration, where $X = \left(x_1, \ldots, x_N\right) \in (\mathbb{S}^{h-1})^N$ are $iid$ samples from $\mu_x=\mathrm{vMF}(c_x, \kappa_x)$, and $Y = \left(y_1, \ldots, y_N\right) \in (\mathbb{S}^{h-1})^N$ are $iid$ samples from $\mu_y=\mathrm{vMF}(c_y, \kappa_y)$. Let $\nu = h/2 - 1$. Suppose there exists an index $i=c$ such that $x_c = c_x$, $y_c = c_y$. Denote $\Delta_{\theta} = \cos^{-1}(c_x \cdot c_y)$. For any fixed $\kappa_x, \kappa_y > 0$, it holds that:
    
    \begin{equation*}
        \begin{aligned}
            \lim_{N \to \infty} \mathcal{L}_{\mathrm{MCL}}^c
            - 2\log(N) 
            &= \mathcal{J}(\cos\left(\Delta_{\theta}\right); \kappa_y, \nu) + \mathcal{J}(\cos\left(\Delta_{\theta}\right); \kappa_x, \nu) \\
            &= \Tilde{\mathcal{J}}(\cos\left(\Delta_{\theta}\right), \cos\left(\Delta_{\theta}\right), 1;\kappa_y, \nu) 
            + \Tilde{\mathcal{J}}(\cos\left(\Delta_{\theta}\right), \cos\left(\Delta_{\theta}\right), 1;\kappa_x, \nu) \\
            &\ge \mathcal{J}(1; \kappa_y, \nu) + \mathcal{J}(1; \kappa_x, \nu) \\
            &= \Tilde{\mathcal{J}}(1, 1, 1;\kappa_y, \nu) 
            + \Tilde{\mathcal{J}}(1, 1, 1;\kappa_x, \nu),
        \end{aligned}
    \end{equation*}

    where equality is attained if and only if there exists a configuration of $(X, Y)$ such that:
     
    \begin{enumerate}[label={(A\arabic*)},labelindent=10pt,leftmargin=*,start=5]
        \item 
        $\Delta_\theta=\cos ^{-1}\left(c_x \cdot c_y\right) = 0$.
    \end{enumerate} 

\begin{proof}\label{proof:thm:gap2}

    We first decompose $\lim_{N \to \infty} \mathcal{L}_{\mathrm{MCL}}^c - 2\log(N) $ into two parts:
    
    \begin{equation}\label{proof:thm:gap2:eq1} 
        \begin{aligned}
            \lim_{N \to \infty} \mathcal{L}_{\mathrm{MCL}}^c
            - 2\log(N) 
            &= 
            \lim_{N \to \infty} \mathcal{L}_{\mathcal{X} \rightarrow \mathcal{Y}}(c_x; Y) - \log (N) \\
            &+ \lim_{N \to \infty} \mathcal{L}_{\mathcal{Y} \rightarrow \mathcal{X}}(c_y; X) - \log (N).
        \end{aligned}
    \end{equation}

    According to~\cref{thm_supp:gap2}, the convergent function and its lower bound of $\mathcal{L}_{\mathcal{X} \rightarrow \mathcal{Y}}$ are:  

    \begin{equation}\label{proof:thm:gap2:eq2} 
        \begin{aligned}            
            \lim_{N \to \infty} \mathcal{L}_{\mathcal{X} \rightarrow \mathcal{Y}}(c_x; Y) - \log (N)
            &= \mathcal{J}(\cos\left(\Delta_{\theta}\right);\kappa_y, \nu) 
            \ge \mathcal{J}(1;\kappa_y, \nu) , \\
        \end{aligned}
    \end{equation}
    
    where equality is attained if and only if there exists a configuration of $(X, Y)$ such that:
    
    \begin{enumerate}[label={(\roman*)},labelindent=10pt,leftmargin=*,start=1]
        \item 
        $\Delta_\theta=\cos ^{-1}\left(c_x \cdot c_y\right) = 0$.
    \end{enumerate}

    This Theorem also holds for $\mathcal{L}_{\mathcal{Y} \rightarrow \mathcal{X}}$:
    
    \begin{equation}\label{proof:thm:gap2:eq3} 
        \begin{aligned}
            \lim_{N \to \infty} \mathcal{L}_{\mathcal{Y} \rightarrow \mathcal{X}}(c_y; X) - \log (N)
            &= \mathcal{J}(\cos\left(\Delta_{\theta}\right); \kappa_x, \nu) 
            \ge \mathcal{J}(1; \kappa_x, \nu), \\ 
        \end{aligned}
    \end{equation}
    
    where equality is attained if and only if there exists a configuration of $(X, Y)$ such that:
    
    \begin{enumerate}[label={(\roman*)},labelindent=10pt,leftmargin=*,start=2]
        \item 
        $\Delta_\theta=\cos ^{-1}\left(c_x \cdot c_y\right) = 0$.
    \end{enumerate}

    Combining~\cref{proof:thm:gap2:eq2},~\cref{proof:thm:gap2:eq3}, and consider $\mathcal{J}(w; \kappa, \nu) = \Tilde{\mathcal{J}}(w, w, 1; \kappa, \nu)$, we reach the conclusion that:
    
    \begin{equation}\label{proof:thm:gap2:eq4}
        \begin{aligned}
            \lim_{N \to \infty} \mathcal{L}_{\mathrm{MCL}}^c
            - 2\log(N) 
            &= \mathcal{J}(\cos\left(\Delta_{\theta}\right); \kappa_y, \nu) + \mathcal{J}(\cos\left(\Delta_{\theta}\right); \kappa_x, \nu) \\
            &= \Tilde{\mathcal{J}}(\cos\left(\Delta_{\theta}\right), \cos\left(\Delta_{\theta}\right), 1;\kappa_y, \nu) 
            + \Tilde{\mathcal{J}}(\cos\left(\Delta_{\theta}\right), \cos\left(\Delta_{\theta}\right), 1;\kappa_x, \nu) \\
            &\ge \mathcal{J}(1; \kappa_y, \nu) + \mathcal{J}(1; \kappa_x, \nu) \\
            &= \Tilde{\mathcal{J}}(1, 1, 1;\kappa_y, \nu) 
            + \Tilde{\mathcal{J}}(1, 1, 1;\kappa_x, \nu),
        \end{aligned}
    \end{equation}
    
    where equality is attained if and only if there exists a configuration of $(X, Y)$ such that:
     
    \begin{enumerate}[label={(A\arabic*)},labelindent=10pt,leftmargin=*,start=5]
        \item 
        $\Delta_\theta=\cos ^{-1}\left(c_x \cdot c_y\right) = 0$.
    \end{enumerate}

\end{proof}

% \newpage
\subsubsection{Auxiliary Theorems Part 2}\label{sec_supp:gap2:supplem2}

In this subsection, we provide details and proofs of the auxiliary theorems (\cref{thm_supp:transform2} and~\cref{thm_supp:gap2}) that support the proof of~\cref{thm:gap2}.

\begin{supplem}\label{thm_supp:transform2}
    Let $(X, Y)$ be an $N$-pair configuration, where $X = \left(x_1, \ldots, x_N\right) \in (\mathbb{S}^{h-1})^N$ are $iid$ samples from $\mu_x=\mathrm{vMF}(c_x, \kappa_x)$, and $Y = \left(y_1, \ldots, y_N\right) \in (\mathbb{S}^{h-1})^N$ are $iid$ samples from $\mu_y=\mathrm{vMF}(c_y, \kappa_y)$. Let $\nu = h/2 - 1$ and $\kappa_y > 0$. $\forall x_i \in X$, denote $w_i = x_i \cdot y_i$ and $w_{x_i, c_y} x_i \cdot c_y$. It holds that:
    
    \begin{equation}\label{thm_supp:transform2:eq1}
        \begin{aligned}
            \lim_{N \to \infty} \mathcal{L}_{\mathcal{X} \rightarrow \mathcal{Y}}(x_i; Y) - \log (N)
            &= \lim_{N \to \infty} -\log \frac{\exp \left(x_i \cdot y_i / \tau\right)}{\sum_{j=1}^N \exp \left(x_i \cdot y_j / \tau\right)} - \log(N) \\
            &= -\frac{w_i}{\tau} + \log\left(\frac{I_{\nu}\left( M_{\kappa_y}\left( w_{x_i, c_y} \right)  \right)}{  M_{\kappa_y}\left( w_{x_i, c_y} \right)^{\nu}} \right) - \log\left(\frac{I_{\nu}\left(\kappa_y\right)}{ \kappa_y^{\nu}} \right) \\
            &= \Tilde{\mathcal{J}}(w_i, w_{x_i, c_y}, 1;\kappa_y, \nu),
        \end{aligned}
    \end{equation}

    where $\forall\kappa \ge0, \tau>0$, $M_{\kappa}\left(\cdot\right): [-1, 1] \rightarrow \mathbb{R}^{+}_0$ is defined as:
    
    \begin{equation}\label{thm_supp:transform2:eq2}
        \begin{aligned}
            M_{\kappa}\left(w\right) 
            = \sqrt{\kappa^2+\frac{2 \kappa w}{\tau}+\frac{1}{\tau^2}} .
        \end{aligned}        
    \end{equation} 

    and $I_{\nu}$ is the modified Bessel function of the first kind of order $\nu$, which is defined as:
    
    \begin{equation} 
        \begin{aligned}
            I_{\nu}\left(m\right)=\sum_{k=0}^{\infty} \frac{1}{k!\Gamma(\nu+k+1)}\left(\frac{m}{2}\right)^{2 k+\nu}.
        \end{aligned}
    \end{equation}

    Suppose there exists an index $i=c$ such that $x_c = c_x$, $y_c = c_y$. Denote $w_c = c_x \cdot c_y$. It holds that:
    
    \begin{equation}\label{thm_supp:transform2:eq3}
        \begin{aligned}
            \lim_{N \to \infty} \mathcal{L}_{\mathcal{X} \rightarrow \mathcal{Y}}(c_x; Y) - \log (N)
            &= -\frac{w_c}{\tau} + \log\left(\frac{I_{\nu}\left( M_{\kappa_y}\left( w_c \right)  \right)}{  M_{\kappa_y}\left( w_c \right)^{\nu}} \right) - \log\left(\frac{I_{\nu}\left(\kappa_y\right)}{ \kappa_y^{\nu}} \right) \\
            &= \mathcal{J}(w_c;\kappa_y, \nu) = \Tilde{\mathcal{J}}(w_c, w_c, 1;\kappa_y, \nu).
        \end{aligned}
    \end{equation}
    
\end{supplem}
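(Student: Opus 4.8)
The plan is to reuse the decomposition and law-of-large-numbers reduction of \cref{thm_supp:transform1}, and then evaluate the single remaining expectation explicitly using the vMF form of $\mu_y$. First I would write, exactly as in the proof of \cref{thm_supp:transform1},
\begin{equation*}
\mathcal{L}_{\mathcal{X} \rightarrow \mathcal{Y}}(x_i; Y) - \log N = -\frac{x_i \cdot y_i}{\tau} + \log\left(\frac{1}{N} \sum_{j=1}^N \exp\left(\frac{x_i \cdot y_j}{\tau}\right)\right),
\end{equation*}
so the fixed term is $-w_i/\tau$ and the second term is an empirical log-partition function. Invoking \cref{lemma:uniconverge1} and \cref{lemma:uniconverge2} (uniform almost-sure convergence of the empirical averages over $\mathbb{S}^{h-1}$) together with the strong law of large numbers, the second term converges almost surely to $\log \mathbb{E}_{y \sim \mu_y}\!\left[\exp(x_i \cdot y / \tau)\right]$; the single term $j=i$ contributes $O(1/N)$ and is negligible. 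This reduces the claim to evaluating $\mathbb{E}_{y \sim \mathrm{vMF}(c_y, \kappa_y)}\!\left[\exp(x_i \cdot y / \tau)\right]$.

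Second, I would compute this moment generating functional in closed form. The vMF density is $D_h(\kappa_y)\exp(\kappa_y c_y^\top y)$ with respect to the surface measure $\omega$ on $\mathbb{S}^{h-1}$, and $\int_{\mathbb{S}^{h-1}} D_h(\kappa)\exp(\kappa c^\top y)\,\mathrm{d}\omega(y) = 1$ by the definition of $D_h$ as the reciprocal of the vMF normalizing constant. Hence, by rotational invariance of $\omega$, $\int_{\mathbb{S}^{h-1}} \exp(a^\top y)\,\mathrm{d}\omega(y) = 1/D_h(\|a\|)$ for every $a \in \mathbb{R}^h$, and therefore
\begin{equation*}
\mathbb{E}_{y \sim \mathrm{vMF}(c_y, \kappa_y)}\!\left[\exp(a^\top y)\right] = D_h(\kappa_y) \int_{\mathbb{S}^{h-1}} \exp\!\left((a + \kappa_y c_y)^\top y\right)\,\mathrm{d}\omega(y) = \frac{D_h(\kappa_y)}{D_h(\|a + \kappa_y c_y\|)}.
\end{equation*}
Taking $a = x_i/\tau$ and using $\|x_i\| = \|c_y\| = 1$, a direct expansion gives $\|a + \kappa_y c_y\|^2 = \kappa_y^2 + \tfrac{2\kappa_y (x_i \cdot c_y)}{\tau} + \tfrac{1}{\tau^2} = M_{\kappa_y}(w_{x_i, c_y})^2$, so the expectation equals $D_h(\kappa_y)/D_h\!\big(M_{\kappa_y}(w_{x_i, c_y})\big)$.

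Third, I would substitute $D_h(\kappa) = \kappa^\nu/\big((2\pi)^{\nu+1} I_\nu(\kappa)\big)$ and take logarithms; the $(2\pi)^{\nu+1}$ factors cancel and the $\kappa^\nu$ powers combine, leaving
\begin{equation*}
\log \mathbb{E}_{y \sim \mu_y}\!\left[\exp(x_i \cdot y / \tau)\right] = \log\!\left(\frac{I_\nu\!\big(M_{\kappa_y}(w_{x_i, c_y})\big)}{M_{\kappa_y}(w_{x_i, c_y})^\nu}\right) - \log\!\left(\frac{I_\nu(\kappa_y)}{\kappa_y^\nu}\right).
\end{equation*}
Adding back $-w_i/\tau$ and recalling $\Tilde{M}_\kappa(w,1) = M_\kappa(w)$ yields exactly $\Tilde{\mathcal{J}}(w_i, w_{x_i, c_y}, 1; \kappa_y, \nu)$, which is \cref{thm_supp:transform2:eq1}. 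The center case \cref{thm_supp:transform2:eq3} is then immediate: setting $x_i = c_x$ gives $w_i = w_{x_i, c_y} = c_x \cdot c_y = w_c$, so the limit equals $\Tilde{\mathcal{J}}(w_c, w_c, 1; \kappa_y, \nu) = \mathcal{J}(w_c; \kappa_y, \nu)$ by the definition of $\mathcal{J}$.

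The main obstacle is the convergence step rather than the algebra. Because the base point $x_i$ is itself a random sample, a merely pointwise law of large numbers for the inner average must be upgraded so that one can later average over the whole configuration, and this is precisely what the uniform almost-sure convergence of \cref{lemma:uniconverge1} and \cref{lemma:uniconverge2} is designed to handle — the same mechanism that makes \cref{thm_supp:transform1} go through. A secondary point requiring care is the vMF normalization bookkeeping: the density is taken against the surface measure, not the uniform probability measure $\sigma_{h-1}$, but the identity $\int_{\mathbb{S}^{h-1}} \exp(a^\top y)\,\mathrm{d}\omega(y) = 1/D_h(\|a\|)$ packages this cleanly once it is stated.
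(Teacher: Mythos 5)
Your proposal is correct and follows essentially the same route as the paper's proof: the same rewriting of $\mathcal{L}_{\mathcal{X}\rightarrow\mathcal{Y}}$, the same appeal to the uniform almost-sure convergence of \cref{lemma:uniconverge1}/\cref{lemma:uniconverge2}, and the same reduction to the vMF moment-generating function with $\kappa'_y=\|\kappa_y c_y + x_i/\tau\|$, followed by the substitution of $D_h$ and the specialization $x_c=c_x$, $y_c=c_y$. The only cosmetic difference is that you derive the MGF from the normalization of the density and expand $\|\kappa_y c_y + x_i/\tau\|^2$ algebraically, whereas the paper quotes the MGF and computes the same norm via an explicit spherical-coordinate parameterization of $x_i$; the content is identical.
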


\begin{proof}\label{proof:thm_supp:transform2}
    
    \textbf{Step 1}: We start the proof by find the convergent function of $\mathcal{L}_{\mathcal{X} \rightarrow \mathcal{Y}}(x_i; Y)$ as $N \to \infty$. Same with~\cref{proof:thm_supp:transform1:simp:eq1} of~\cref{thm_supp:transform1}, $\forall x_i \in X$, the $\mathcal{X}$-to-$\mathcal{Y}$ alignment of $x_i$ can be rewritten as:
    
    \begin{equation}\label{proof:thm_supp:transform2:simp:eq1}
        \begin{aligned}
            \mathcal{L}_{\mathcal{X} \rightarrow \mathcal{Y}}(x_i; Y) 
            & =   - \log \frac{\exp \left(x_i \cdot y_i / \tau\right)}{\sum_{j} \exp \left(x_i \cdot y_j / \tau\right)} \\ 
            &= - \frac{x_i \cdot y_i}{\tau} +  \log \left(  N \frac{1}{N} \sum_{j=1}^N \exp \left(\frac{x_i \cdot y_j}{\tau}\right)\right) \\
            &= - \frac{x_i \cdot y_i}{\tau} +  \log \left(  \frac{1}{N} \sum_{j=1}^N \exp \left(\frac{x_i \cdot y_j}{\tau}\right)\right) + \log \left( N \right) .\\
        \end{aligned}
    \end{equation}

    \cref{lemma:uniconverge2} shows that:

    \begin{equation}\label{proof:thm_supp:transform2:simp:eq2}
        \begin{aligned}
            \lim_{N \to \infty} \log \left( \frac{1}{N} \sum_{j=1}^N \exp\left( \frac{x_i \cdot y_j}{\tau} \right)  \right) 
            = \log \left( \mathbb{E}_{y \sim \mu_y}\left[ \exp\left( \frac{x_i \cdot y}{\tau} \right) \right] \right) .
        \end{aligned}
    \end{equation}

    According to the moment-generating function of the vMF distribution:
    
    \begin{equation}\label{proof:thm_supp:transform2:mgf:eq1}
        \begin{aligned}
            \mathbb{E}_{y \sim \mu_y}[\exp \left(\frac{x_i \cdot y}{\tau}\right)]
            &= \mathbb{E}_{y \sim \mu_y}\left[\exp \left(\frac{ x_i}{\tau} \cdot y \right)\right] 
            = \frac{I_{\nu}\left(\kappa^{\prime}_y\right)}{I_{\nu}\left(\kappa_y\right)}\left(\frac{\kappa_y}{\kappa^{\prime}_y}\right)^{\nu}, \\
            \text{where} \hspace{1.5mm}
            \kappa^{\prime}_y &= \|\kappa_y c_y + \frac{x_i}{\tau}\|_2.
        \end{aligned}
    \end{equation}

    Then we have:
    
    \begin{equation}\label{proof:thm_supp:transform2:simp:eq3}
        \begin{aligned}
            \lim_{N \to \infty} \mathcal{L}_{\mathcal{X} \rightarrow \mathcal{Y}}(x_i; Y) - \log (N)
            & = -\frac{x_i \cdot y_i}{\tau}
            + \log\left(\frac{I_{\nu}\left(\kappa^{\prime}_y\right)}{  \kappa^{\prime \nu}_y} \right) - \log\left(\frac{I_{\nu}\left(\kappa_y\right)}{ \kappa_y^{\nu}} \right). \\
        \end{aligned}
    \end{equation}

    \textbf{Step 2}: we will transform $\mathcal{L}_{\mathcal{X} \rightarrow \mathcal{Y}}$ from a function of vectors to a function of angles between vectors. 
    
    Without loss of generality, we assume the coordinate of $c_y$ as     
    
    \begin{equation}\label{proof:thm_supp:transform2:angle:eq1}
        \begin{aligned}
            c_y = (1, 0, \cdots, 0).
        \end{aligned}
    \end{equation}
    
    Denote $\cos\left(\theta_{x_i, c_y} \right) = x_i \cdot c_y$. Then $x_i$ can be represented as:        
    
    \begin{equation}\label{proof:thm_supp:transform2:angle:eq2}
        \begin{aligned}
        x_i 
        &= \left(\cos \left( \theta_{x_i, c_y} \right), u \sin \left( \theta_{x_i, c_y} \right) \right)\\
        &= \left(\cos \left( \theta_{x_i, c_y} \right), u_2 \sin \left( \theta_{x_i, c_y} \right) , u_3 \sin \left( \theta_{x_i, c_y} \right) , \ldots, u_h \sin \left( \theta_{x_i, c_y} \right) \right), \\
        \end{aligned}
    \end{equation}
    
    where $u = \left(0, u_2, u_3, \ldots, u_h\right)  \cong \mathbb{S}^{h-2} \in \mathbb{S}^{h-1}$ is a unit vector orthogonal to the first axis with:
    
    \begin{equation}\label{proof:thm_supp:transform2:angle:eq3}
        \begin{aligned}
            \|u\| = 0 + u_2^2+u_3^2+\cdots+u_h^2 = 1.
        \end{aligned}
    \end{equation}
    
    According to~\cref{proof:thm_supp:transform2:angle:eq1},~\cref{proof:thm_supp:transform2:angle:eq2} and~\cref{proof:thm_supp:transform2:angle:eq3}, $\kappa^{\prime}_y$ (in~\cref{proof:thm_supp:transform2:mgf:eq1}) can re-rewritten as:
    
    \begin{equation}\label{proof:thm_supp:transform2:func:eq1}
        \begin{aligned}
            \kappa^{\prime}_y
            &= \left\|\kappa_y c_y+\frac{x_i}{\tau}\right\|_2 \\
            &= \sqrt{\left(\kappa_y + \frac{\cos \left(\theta_{x_i, c_y} \right)}{\tau}\right)^2+\sum_{i=2}^h\left(\frac{\sin \left(\theta_{x_i, c_y} \right) u_i}{\tau}\right)^2} \\
            &= \sqrt{\left(\kappa_y + \frac{\cos \left(\theta_{x_i, c_y} \right)}{\tau}\right)^2+\frac{\sin ^2 \left(\theta_{x_i, c_y}\right)}{\tau^2}} \\
            &= \sqrt{\kappa_y^2+\frac{2 \kappa_y \cos \left( \theta_{x_i, c_y} \right)}{\tau}+\frac{1}{\tau^2}} \\
            &= M_{\kappa_y}\left( \cos \left( \theta_{x_i, c_y} \right) \right). \\
        \end{aligned}
    \end{equation}

    Consider that $w_i = x_i \cdot y_i$, $w_{x_i, c_y} = \cos\left(\theta_{x_i, c_y} \right) = x_i \cdot c_y$, putting~\cref{proof:thm_supp:transform2:simp:eq3} and~\cref{proof:thm_supp:transform2:func:eq1} together, we have: 
    
    \begin{equation}\label{proof:thm_supp:transform2:obj:eq1}
        \begin{aligned}
            \lim_{N \to \infty} \mathcal{L}_{\mathcal{X} \rightarrow \mathcal{Y}}(x_i; Y) - \log (N)
            &= -\frac{x_i \cdot y_i}{\tau}
            + \log\left(\frac{I_{\nu}\left(\kappa^{\prime}_y\right)}{  \kappa^{\prime \nu}_y} \right) - \log\left(\frac{I_{\nu}\left(\kappa_y\right)}{ \kappa_y^{\nu}} \right) \\ 
            &= -\frac{w_i}{\tau} + \log\left(\frac{I_{\nu}\left( M_{\kappa_y}\left( w_{x_i, c_y} \right)  \right)}{  M_{\kappa_y}\left( w_{x_i, c_y} \right)^{\nu}} \right) - \log\left(\frac{I_{\nu}\left(\kappa_y\right)}{ \kappa_y^{\nu}} \right) \\
            &= \Tilde{\mathcal{J}}(w_i, w_{x_i, c_y}, 1;\kappa_y, \nu).
        \end{aligned}
    \end{equation}    
    
    When there exists a data pair $i=c$ such that $x_c = c_x$, $y_c = c_y$, $w_i = w_{x_i, c_y} = w_c$, then we have:
    
    \begin{equation}\label{proof:thm_supp:transform2:obj:eq2}
        \begin{aligned}
            \lim_{N \to \infty} \mathcal{L}_{\mathcal{X} \rightarrow \mathcal{Y}}(c_x; Y) - \log (N)
            &= -\frac{w_c}{\tau} + \log\left(\frac{I_{\nu}\left( M_{\kappa_y}\left( w_c \right)  \right)}{  M_{\kappa_y}\left( w_c \right)^{\nu}} \right) - \log\left(\frac{I_{\nu}\left(\kappa_y\right)}{ \kappa_y^{\nu}} \right) \\
            &= \mathcal{J}(w_c;\kappa_y, \nu) = \Tilde{\mathcal{J}}(w_c, w_c, 1;\kappa_y, \nu).
        \end{aligned}
    \end{equation}
       
\end{proof}
    
\begin{supplem}\label{thm_supp:gap2} 
    Let $(X, Y)$ be an $N$-pair configuration, where $X = \left(x_1, \ldots, x_N\right) \in (\mathbb{S}^{h-1})^N$ are $iid$ samples from $\mu_x=\mathrm{vMF}(c_x, \kappa_x)$, and $Y = \left(y_1, \ldots, y_N\right) \in (\mathbb{S}^{h-1})^N$ are $iid$ samples from $\mu_y=\mathrm{vMF}(c_y, \kappa_y)$. Let $\nu = h/2 - 1$. Suppose there exists an index $i=c$ such that $x_c = c_x$, $y_c = c_y$. Denote $\Delta_{\theta} = \cos^{-1}(c_x \cdot c_y)$. For any fixed $\kappa_y > 0$, it holds that:
    
    \begin{equation}\label{thm_supp:gap2:eq1} 
        \begin{aligned}
            \lim_{N \to \infty} \mathcal{L}_{\mathcal{X} \rightarrow \mathcal{Y}}(c_x; Y) - \log (N)
            =\mathcal{J}(\cos\left(\Delta_{\theta}\right); \kappa_y, \nu) 
            \ge \mathcal{J}(1; \kappa_y, \nu) ,
        \end{aligned}
    \end{equation}
    
    where equality is attained if and only if there exists a configuration of $(X, Y)$ such that:
    
    \begin{enumerate}[label={(B\arabic*)},labelindent=10pt,leftmargin=*,start=3]
        \item 
        $\Delta_\theta=\cos ^{-1}\left(c_x \cdot c_y\right) = 0$.
    \end{enumerate}    
   
\end{supplem}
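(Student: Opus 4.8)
The statement splits into an identity and an inequality, and the identity is already in hand. Applying \cref{thm_supp:transform2}, specifically \cref{thm_supp:transform2:eq3}, at the distinguished index $i=c$ (where $x_c=c_x$, $y_c=c_y$, so that $w_c = c_x\cdot c_y = \cos\Delta_\theta$) gives $\lim_{N\to\infty}\mathcal{L}_{\mathcal{X}\to\mathcal{Y}}(c_x;Y)-\log N = \mathcal{J}(\cos\Delta_\theta;\kappa_y,\nu)$. Hence \cref{thm_supp:gap2} reduces to the purely analytic, one-variable claim: for every fixed $\kappa,\nu,\tau>0$ the function $w\mapsto\mathcal{J}(w;\kappa,\nu)$ on $[-1,1]$ attains its minimum \emph{uniquely} at $w=1$. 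Granting this claim (it is the content of \cref{lemma:minofj}) and substituting $w=\cos\Delta_\theta\in[-1,1]$, the bound $\mathcal{J}(\cos\Delta_\theta;\kappa_y,\nu)\ge\mathcal{J}(1;\kappa_y,\nu)$ follows, with equality forcing $\cos\Delta_\theta=1$, i.e. $\Delta_\theta=0$, which is (B3). So all the work is in the one-variable claim.

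To prove it, discard the $w$-independent term $-\log(I_\nu(\kappa)/\kappa^\nu)$ and study $g(w):=-w/\tau+\psi(M_\kappa(w))$, where $\psi(m):=\log I_\nu(m)-\nu\log m$ and $M_\kappa(w)=\sqrt{\kappa^2+2\kappa w/\tau+1/\tau^2}$. From the Bessel identity $I_\nu'(m)=I_{\nu+1}(m)+(\nu/m)I_\nu(m)$ one gets $\psi'(m)=A_\nu(m)$, where $A_\nu(m):=I_{\nu+1}(m)/I_\nu(m)$ satisfies $0<A_\nu(m)<1$ for $m>0$; since $M_\kappa'(w)=\kappa/(\tau M_\kappa(w))$, the chain rule yields $g'(w)=\tfrac1\tau\big(\kappa A_\nu(M_\kappa(w))/M_\kappa(w)-1\big)$ (this is \cref{lemma:gradofj1}).

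The key structural fact is that $g$ is \emph{strictly concave}. From the recurrence $I_\nu(m)-I_{\nu+2}(m)=\tfrac{2(\nu+1)}{m}I_{\nu+1}(m)$, dividing by $I_\nu(m)$ gives $A_\nu(m)/m=\big(1-A_\nu(m)A_{\nu+1}(m)\big)/\big(2(\nu+1)\big)$; because each ratio $A_\nu$ is strictly increasing on $(0,\infty)$ (the classical monotonicity of $I_{\nu+1}/I_\nu$), the right-hand side is strictly decreasing in $m$, so $A_\nu(m)/m$ is strictly decreasing, and composing with the increasing map $w\mapsto M_\kappa(w)$ shows $g'$ is strictly decreasing, i.e. $g''<0$ on $[-1,1]$.

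It remains to locate the minimum of a strictly concave function on $[-1,1]$: it is the lower of the two endpoints, so compare $g(\pm1)$. With $M_\kappa(1)=\kappa+1/\tau$, $M_\kappa(-1)=|\kappa-1/\tau|$ and $\psi'=A_\nu$, one has $g(-1)-g(1)=\tfrac2\tau-\int_{|\kappa-1/\tau|}^{\kappa+1/\tau}A_\nu(m)\,dm>\tfrac2\tau-\big((\kappa+1/\tau)-|\kappa-1/\tau|\big)=\tfrac2\tau-\min(2\kappa,2/\tau)\ge0$, using $0<A_\nu<1$; hence $g(-1)>g(1)$. Then for any $w\in[-1,1)$, writing $w=\lambda(-1)+(1-\lambda)(1)$ with $\lambda=(1-w)/2\in(0,1]$, concavity gives $g(w)\ge\lambda g(-1)+(1-\lambda)g(1)>g(1)$, so $w=1$ is the unique minimizer. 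The main obstacle is precisely this bookkeeping around the Bessel ratio $A_\nu$: one must resist assuming that $\mathcal{J}(\cdot;\kappa,\nu)$ is globally monotone (it generally is not, e.g. for large $\kappa$), and instead combine strict concavity with the endpoint inequality $g(-1)>g(1)$, whose proof rests on the sharp bound $A_\nu<1$ integrated over a window of width at most $2/\tau$.
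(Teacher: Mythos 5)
Your proposal is correct and follows essentially the same route as the paper's proof: the identity from \cref{thm_supp:transform2}, strict concavity of $\mathcal{J}(\cdot;\kappa_y,\nu)$ in $w$ (\cref{lemma:gradofj1}), and the endpoint comparison $\mathcal{J}(-1;\kappa_y,\nu)\ge\mathcal{J}(1;\kappa_y,\nu)$ (\cref{lemma:minofj}), with equality forcing $\Delta_\theta=0$. The only differences are internal to the supporting Bessel facts: you derive concavity from the three-term recurrence together with the classical monotonicity of $I_{\nu+1}/I_\nu$ instead of the Tur\'an-type inequality used in \cref{lemma:gradofhm}, and you obtain the endpoint inequality by integrating $\psi'=I_{\nu+1}/I_\nu<1$ over $\left[\,|\kappa-1/\tau|,\ \kappa+1/\tau\,\right]$ rather than via the algebraic case analysis of \cref{lemma:minofj} (which rests on the same bound through \cref{lemma:logI}); these are equivalent in substance.
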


\begin{proof}\label{proof:thm_supp:gap2}

    \textbf{Step 1}: We start the proof by find the convergent function of $\mathcal{L}_{\mathcal{X} \rightarrow \mathcal{Y}}(c_x; Y)$ as $N \to \infty$. Denote $w_c = c_x \cdot c_y$. $\forall \kappa_y > 0$, as prove in~\cref{thm_supp:transform2}:
    
    \begin{equation}\label{proof:thm_supp:gap2:obj:eq1}
        \begin{aligned}
            \lim_{N \to \infty} \mathcal{L}_{\mathcal{X} \rightarrow \mathcal{Y}}(c_x; Y) - \log (N)
            &= \lim_{N \to \infty} -\log \frac{\exp \left(c_x \cdot c_y / \tau\right)}{\sum_{j=1}^N \exp \left(c_x \cdot y_j / \tau\right)} - \log(N) \\
            &= -\frac{w_c}{\tau} + \log\left(\frac{I_{\nu}\left( M_{\kappa_y}\left( w_c \right)  \right)}{  M_{\kappa_y}\left( w_c \right)^{\nu}} \right) - \log\left(\frac{I_{\nu}\left(\kappa_y\right)}{ \kappa_y^{\nu}} \right) \\
            &= \mathcal{J}(w_c;\kappa_y, \nu).
        \end{aligned}
    \end{equation}

    where $\forall\kappa \ge0, \tau>0$, $\mathcal{J}(\cdot;\kappa, \nu)$ is a function on $[-1, 1]$ and $M_{\kappa}\left(\cdot\right): [-1, 1] \rightarrow \mathbb{R}^{+}_0$ is defined as:
    
    \begin{equation}\label{proof:thm_supp:gap2:obj:eq2}
        \begin{aligned}
            M_{\kappa}\left(w\right) 
            = \sqrt{\kappa^2+\frac{2 \kappa w}{\tau}+\frac{1}{\tau^2}} ,
        \end{aligned}        
    \end{equation} 

    and $I_{\nu}$ is the modified Bessel function of the first kind of order $\nu$, which is defined as:
    
    \begin{equation} 
        \begin{aligned}
            I_{\nu}\left(m\right)=\sum_{k=0}^{\infty} \frac{1}{k!\Gamma(\nu+k+1)}\left(\frac{m}{2}\right)^{2 k+\nu}.
        \end{aligned}
    \end{equation}
   
    \textbf{Step 2}: Next, we find the minimal value and the optimal condition of convergent function.
    
    As shown in~\cref{lemma:gradofj1} (set $s=1$), $\mathcal{J}(w;\kappa, \nu) = \Tilde{J}(w, w, 1; \kappa, \nu)$ is a concave function of $w$. When a function is concave, its minimal value occurs at the endpoints of its domain. Therefore :
    
    \begin{equation}\label{proof:thm_supp:gap2:res:eq1}
        \begin{aligned}
            \mathcal{J}(w_c;\kappa_y, \nu)
            &\ge \min \{\mathcal{J}(-1;\kappa_y, \nu), \mathcal{J}(1;\kappa_y, \nu) \}.
        \end{aligned}
    \end{equation}

    According to~\cref{lemma:minofj}:
    
    \begin{equation}\label{proof:thm_supp:gap2:res:eq2}
        \begin{aligned}
            \mathcal{J}(-1;\kappa_y, \nu)
            &\ge \mathcal{J}(1;\kappa_y, \nu).
        \end{aligned}
    \end{equation}
         
    Therefore, we conclude: 
    
    \begin{equation} \label{proof:thm_supp:gap2:res:eq3}
        \begin{aligned}
            \lim_{N \to \infty} \mathcal{L}_{\mathcal{X} \rightarrow \mathcal{Y}}(c_x; Y) - \log (N)
            =\mathcal{J}(\cos\left(\Delta_{\theta}\right); \kappa_y, \nu) 
            \ge \mathcal{J}(1; \kappa_y, \nu) ,
        \end{aligned}        
    \end{equation} 
    
    where equality is attained if and only if the following conditions hold: 
    
    \begin{enumerate}[label={(B\arabic*)},labelindent=10pt,leftmargin=*,start=3]
        \item 
        $\Delta_\theta=\cos ^{-1}\left(c_x \cdot c_y\right) = 0$.
    \end{enumerate}    
    
\end{proof}

% \newpage
\subsubsection{Technical Lemmas Part 2}

In this subsection, we provide details and proofs of technical lemmas (\cref{lemma:gradofj1}, \cref{lemma:minofj}, \cref{lemma:gradofg}, \cref{lemma:gradofhm} and \cref{lemma:logI}) that support the proof of \cref{thm:gap2}, \cref{thm_supp:transform2} and~\cref{thm_supp:gap2}.

\begin{lemma}\label{lemma:gradofj1}
    $\forall\kappa, \nu, \tau>0$ and $s \in [0, 1]$, a function $\hat{\mathcal{J}}_{t=s}\left(\cdot; \kappa, \nu\right): (-1, 1] \rightarrow \mathbb{R}$ is defined as:
    \begin{equation} \label{lemma:gradofj1:eq1}
        \begin{aligned}
            \hat{\mathcal{J}}_{t=s}\left(w; \kappa, \nu\right)
            &= -\frac{w}{\tau}
            + \log\left(\frac{I_{\nu}\left(\Tilde{M}_{t=s}\left(w\right)\right)}{\Tilde{M}_{t=s}\left(w\right)^{\nu}} \right) - \log\left(\frac{I_{\nu}\left(\kappa\right)}{ \kappa^{\nu}} \right) \\
            &= \hat{\mathcal{J}}\left(w, t=s; \kappa, \nu\right) 
            = \Tilde{J}\left(w, w, t=s; \kappa, \nu\right),
        \end{aligned}        
    \end{equation}     
    
    where $\Tilde{M}_{t=s}\left(\cdot\right): (-1, 1] \rightarrow \mathbb{R}^{+}$ is defined as:
    
    \begin{equation} \label{lemma:gradofj1:eq2}
        \begin{aligned}
            \Tilde{M}_{t=s}\left(w\right)
            =\sqrt{\kappa^2+\frac{2\kappa w}{\tau}+\frac{s^2}{\tau^2}} 
            = \tilde{M}_\kappa\left(w, t=s\right),
        \end{aligned}        
    \end{equation} 

    and $I_{\nu}$ is the modified Bessel function of the first kind of order $\nu$, which is defined as:
    
    \begin{equation}\label{lemma:gradofj1:eq3}
        \begin{aligned}
            I_{\nu}\left(m\right)=\sum_{k=0}^{\infty} \frac{1}{k!\Gamma(\nu+k+1)}\left(\frac{m}{2}\right)^{2 k+\nu}.
        \end{aligned}
    \end{equation}

    It holds that, for any fixed $s$, $\hat{\mathcal{J}}_{t=s}\left(\cdot\right)$ is a strictly decreasing function when $w \in [0, s]$ and a concave function $w \in (-1, 1]$.
    
\end{lemma}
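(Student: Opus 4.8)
The plan is to differentiate $\hat{\mathcal{J}}_{t=s}$ directly and reduce both claims to standard monotonicity properties of the modified Bessel ratio $R_\nu(m):=I_{\nu+1}(m)/I_\nu(m)$. Write $g(m):=\log\!\big(I_\nu(m)/m^\nu\big)$ and $M(w):=\tilde{M}_{t=s}(w)$, so that $\hat{\mathcal{J}}_{t=s}(w)= -w/\tau + g(M(w)) - \log\!\big(I_\nu(\kappa)/\kappa^\nu\big)$, the last term being constant in $w$. Since $M(w)^2=\kappa^2+2\kappa w/\tau + s^2/\tau^2$ is affine in $w$, one has $M'(w)=\kappa/(\tau M)$ and $M''(w)=-\kappa^2/(\tau^2 M^3)$. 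Using the identity $\tfrac{d}{dm}\big(I_\nu(m)/m^\nu\big)=I_{\nu+1}(m)/m^\nu$ (whence $g'(m)=R_\nu(m)$) together with the recurrences $I_\nu'=I_{\nu+1}+(\nu/m)I_\nu$ and $I_{\nu-1}-I_{\nu+1}=(2\nu/m)I_\nu$, one obtains the Riccati identity $g''(m)=R_\nu'(m)=1-\tfrac{2\nu+1}{m}R_\nu(m)-R_\nu(m)^2$. These are the first computations I would carry out; they are routine once the recurrences are in hand, and are recorded, in one form or another, in \cref{lemma:logI} and \cref{lemma:gradofhm}.

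For the monotonicity claim, the chain rule gives
\[
\hat{\mathcal{J}}_{t=s}'(w)= -\tfrac1\tau + g'(M)\,M'(w) = \tfrac1\tau\Big(\tfrac{\kappa}{M}\,R_\nu(M)-1\Big).
\]
Because $0<R_\nu(m)<1$ for every $m>0$ (a classical bound, immediate from positivity of the series and the three-term recurrence), and because $w\ge 0$ forces $M(w)^2=\kappa^2+2\kappa w/\tau+s^2/\tau^2\ge\kappa^2$, hence $\kappa/M\le 1$, the bracket is strictly negative on $[0,s]\subseteq[0,\infty)$. So $\hat{\mathcal{J}}_{t=s}$ is strictly decreasing there (in fact on all of $[0,\infty)\cap(-1,1]$).

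For concavity, differentiate once more:
\[
\hat{\mathcal{J}}_{t=s}''(w)= g''(M)\,M'(w)^2 + g'(M)\,M''(w) = \frac{\kappa^2}{\tau^2 M^2}\Big(g''(M)-\frac{g'(M)}{M}\Big),
\]
so (on the range where $M>0$) concavity on $(-1,1]$ is equivalent to $M\,g''(M)\le g'(M)$. Substituting the Riccati identity and abbreviating $R=R_\nu(M)$, this rearranges to $M(1-R^2)\le 2(\nu+1)R$, i.e. $M/R\le 2(\nu+1)+MR$. Now the three-term recurrence in the form $I_\nu(M)/I_{\nu+1}(M)=2(\nu+1)/M+R_{\nu+1}(M)$ gives $M/R_\nu(M)=2(\nu+1)+M\,R_{\nu+1}(M)$, so the inequality collapses to $R_{\nu+1}(M)\le R_\nu(M)$, equivalently $I_{\nu+1}(M)^2\ge I_\nu(M)\,I_{\nu+2}(M)$ — the Turán-type inequality for modified Bessel functions, which holds (strictly for $M>0$) because $\nu\mapsto I_\nu(M)$ is log-concave. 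Hence $\hat{\mathcal{J}}_{t=s}''<0$ and $\hat{\mathcal{J}}_{t=s}$ is strictly concave on $(-1,1]$.

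The only thing that can really go wrong is sign-tracking through the chain rule; the substantive step — and the one I expect to be the crux — is recognizing that $M\,g''(M)\le g'(M)$ is, after the recurrence substitution, nothing but the order-monotonicity $R_{\nu+1}\le R_\nu$ of Bessel ratios (equivalently the Turán inequality). A cleaner alternative for the concavity half is the substitution $q=M(w)^2$, which is affine and increasing in $w$: then $\hat{\mathcal{J}}_{t=s}$ is affine-in-$q$ plus $g(\sqrt{q})$, and $\tfrac{d}{dq}g(\sqrt{q})=R_\nu(\sqrt{q})/(2\sqrt{q})$ is decreasing in $q$ exactly when $m\mapsto R_\nu(m)/m$ is decreasing — which is again the same inequality, so either route terminates at the Turán inequality.
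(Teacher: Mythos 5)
Your proof is correct and follows essentially the same route as the paper: strict decrease on $[0,s]$ from $\hat{\mathcal{J}}'=\tfrac1\tau\bigl(\tfrac{\kappa}{M}R_\nu(M)-1\bigr)$ together with $R_\nu\in(0,1)$ and $M\ge\kappa$ for $w\ge0$, and concavity reduced to the Turán-type inequality $I_{\nu+1}^2\ge I_\nu I_{\nu+2}$, which is exactly the paper's Lemma~\ref{lemma:gradofhm} (monotonicity of $R_\nu(m)/m$) — indeed your bracket $g''(M)-g'(M)/M$ is precisely $M\,H'(M)$ there, so your Riccati-plus-recurrence detour is just a different algebraic path to the same inequality.
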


\begin{proof}\label{proof:lemma:gradofj1}
    Let us first decompose the function $\hat{\mathcal{J}}_{t=s}$. Denote two functions $G_1\left(w\right)$ and $G_2\left(w\right)$ as:
    
    \begin{equation}\label{proof:lemma:gradofj1:obj:eq1}
        \begin{aligned}
            G_1\left(w\right) 
            &= -\frac{w}{\tau}, \\
            G_3\left(m\right)
            &= \log\left(I_{\nu}\left(m\right) \right) - \nu\log\left(m \right), \\
            G_2\left(w\right) 
            &= G_3\left(\Tilde{M}_{t=s}\left(w\right)\right) \\
            &=\log \left( I_{\nu}\left( \Tilde{M}_{t=s}\left(w\right) \right) \right) 
            - \nu \log \left( \Tilde{M}_{t=s}\left(w\right) \right).    \\
        \end{aligned}
    \end{equation}

    Denote the function $G\left(w\right)$ and the constant $C$ as:
    
    \begin{equation}\label{proof:lemma:gradofj1:obj:eq2}
        \begin{aligned}
            G\left(w\right) 
            &= G_1\left(w\right) + G_2\left(w\right), \\
            C
            &= - \log\left(\frac{I_{\nu}\left(\kappa\right)}{ \kappa^{\nu}} \right) .
        \end{aligned}
    \end{equation}
    
    Then the function $\hat{\mathcal{J}}_{t=s}$ can be written as:
    
    \begin{equation}\label{proof:lemma:gradofj1:obj:eq3}
        \begin{aligned}
            \hat{\mathcal{J}}_{t=s}\left(w; \kappa, \nu\right) 
            &= -\frac{w}{\tau}
            + \log\left(\frac{I_{\nu}\left(\Tilde{M}_{t=s}\left(w\right)\right)}{\Tilde{M}_{t=s}\left(w\right)^{\nu}} \right) - \log\left(\frac{I_{\nu}\left(\kappa\right)}{ \kappa^{\nu}} \right) \\
            &= G\left(w\right) + C .
        \end{aligned}
    \end{equation}

    Now, we investigate derivatives of $\hat{\mathcal{J}}_{t=s}$.  
    
    The first derivative of $G_1$ is:    
    
    \begin{equation}\label{proof:lemma:gradofj1:dev1:eq1}
        \begin{aligned}
            G_1^{\prime}\left(w\right) 
            &= -\frac{1}{\tau} < 0.
        \end{aligned}
    \end{equation}
    
    The second derivative of $G_1$ is:    
    
    \begin{equation}\label{proof:lemma:gradofj1:devdev1:eq1}
        \begin{aligned}
            G_1^{\prime\prime}\left(w\right) 
            &= 0.
        \end{aligned}
    \end{equation}

    According to~\cref{lemma:gradofg}, the first derivative of $G_3\left(m\right)$ is:
    
    \begin{equation}\label{proof:lemma:gradofj1:devchange:eq1}
        \begin{aligned}
            G_3^{\prime}\left(m\right) 
            &=  \frac{I_{\nu+1}\left(m\right)}{I_\nu\left(m\right)} \in (0, 1).
        \end{aligned}
    \end{equation} 

    The derivative of $\Tilde{M}_{t=s}$ is:
    
    \begin{equation}\label{proof:lemma:gradofj1:devchange:eq2}
        \begin{aligned}
            \Tilde{M}_{t=s}^{\prime}\left(w\right)
            &= \frac{d}{dw} \left(\kappa_y^2+\frac{s^2}{\tau^2}+2 \frac{\kappa}{\tau} w\right)^{1 / 2} \\
            &=\frac{1}{2}\left(\kappa_y^2+\frac{s^2}{\tau^2}+2 \frac{\kappa}{\tau} w\right)^{-1 / 2} \cdot 2 \frac{\kappa}{\tau} \\
            &=\frac{\kappa}{\tau}\frac{1}{\Tilde{M}_{t=s}\left(w\right)} \\
            &> 0.
        \end{aligned}
    \end{equation}
    
    Then, the first derivative of $G_2$ is:
    
    \begin{equation}\label{proof:lemma:gradofj1:dev2:eq1}    
        \begin{aligned}
            G_2^{\prime}\left(w\right) 
            &= G_3^{\prime}\left(\Tilde{M}_{t=s}\left(w\right)\right) \Tilde{M}_{t=s}^{\prime}\left(w\right)\\
            &= \frac{I_{\nu+1}\left(\Tilde{M}_{t=s}\left(w\right)\right)}{I_\nu\left(\Tilde{M}_{t=s}\left(w\right)\right)} \Tilde{M}_{t=s}^{\prime}\left(w\right) \\
            &= \frac{\kappa}{\tau}\frac{1}{\Tilde{M}_{t=s}\left(w\right)}\frac{I_{\nu+1}\left(\Tilde{M}_{t=s}\left(w\right)\right)}{I_\nu\left(\Tilde{M}_{t=s}\left(w\right)\right)} .\\
        \end{aligned}
    \end{equation} 
    
    Combining~\cref{proof:lemma:gradofj1:dev1:eq1} and~\cref{proof:lemma:gradofj1:dev2:eq1}, we have:
    
    \begin{equation}\label{proof:lemma:gradofj1:dev0:eq1}
        \begin{aligned}
            \hat{\mathcal{J}}_{t=s}^{\prime}\left(w; \kappa, \nu\right) 
            &= G^{\prime}\left(w\right) \\
            &= -\frac{1}{\tau} + \frac{\kappa}{\tau} \frac{1}{\Tilde{M}_{t=s}\left(w\right)} \frac{I_{\nu+1}\left(\Tilde{M}_{t=s}\left(w\right)\right)}{I_\nu\left(\Tilde{M}_{t=s}\left(w\right)\right)} .\\
        \end{aligned}
    \end{equation} 

    Since: 
    
    \begin{equation}\label{proof:lemma:gradofj1:comp:eq1}
        \begin{aligned}
            \Tilde{M}_{t=s}\left(w\right) \ge \kappa 
            &\Leftrightarrow \frac{2 \kappa w}{\tau}+\frac{s^2}{\tau^2} \ge 0 \\
            &\Leftrightarrow w \ge -\frac{s^2}{2\kappa \tau} \\
            &\Leftarrow w \ge 0.
        \end{aligned}
    \end{equation}

    thus, when $w \in [0,1]$, $\Tilde{M}_{t=s}\left(w\right) \ge \kappa$ holds. Combining this and~\cref{proof:lemma:gradofj1:devchange:eq1}, we have:
    
    \begin{equation}\label{proof:lemma:gradofj1:dev0:eq2}
        \begin{aligned}
            G^{\prime}\left(w\right) 
            &\leq -\frac{1}{\tau} + \frac{\kappa}{\tau} \frac{1}{\kappa} \frac{I_{\nu+1}\left(\Tilde{M}_{t=s}\left(w\right)\right)}{I_\nu\left(\Tilde{M}_{t=s}\left(w\right)\right)}  \\
            &<  -\frac{1}{\tau} + \frac{1}{\tau} \\
            &= 0.
        \end{aligned}
    \end{equation} 

    So we can conclude that, for any fixed $s$, $\hat{\mathcal{J}}_{t=s}\left(\cdot\right)$ is a strictly decreasing function on $[0, s]$.

    Denote:
    
    \begin{equation}\label{proof:lemma:gradofj1:h:eq1}
        \begin{aligned}
             H\left(m\right) = \frac{1}{m} \frac{I_{\nu+1}\left(m\right)}{I_\nu\left(m\right)},
        \end{aligned}
    \end{equation}

    according to~\cref{lemma:gradofhm},
    
    \begin{equation}\label{proof:lemma:gradofj1:h:eq2}
        \begin{aligned}
             H^{\prime}\left(m\right) < 0.
        \end{aligned}
    \end{equation}
    
    Since $G_2^{\prime}\left(w\right) $ can be written as:
    
    \begin{equation}\label{proof:lemma:gradofj1:devdev2:eq1}
        \begin{aligned}
            G_2^{\prime}\left(w\right) 
            &= \frac{\kappa}{\tau} H\left(\Tilde{M}_{t=s}\left(w\right)\right), \\
        \end{aligned}
    \end{equation}  
    
    combining~\cref{proof:lemma:gradofj1:devchange:eq2} and~\cref{proof:lemma:gradofj1:devdev2:eq1}, we have
    
    \begin{equation}\label{proof:lemma:gradofj1:devdev2:eq2}
        \begin{aligned}
            G_2^{\prime\prime}\left(w\right) 
            &= \frac{\kappa}{\tau} H^{\prime}\left(\Tilde{M}_{t=s}\left(w\right)\right) \Tilde{M}_{t=s}^{\prime}\left(w\right) \\
            &< 0.
        \end{aligned}
    \end{equation}

    Given~\cref{proof:lemma:gradofj1:dev2:eq1} and~\cref{proof:lemma:gradofj1:devdev2:eq2}, we can conclude that $G_2$ is an increasing and concave function. Combining~\cref{proof:lemma:gradofj1:devdev1:eq1} and~\cref{proof:lemma:gradofj1:devdev2:eq2}, we have:
    
    \begin{equation}\label{proof:lemma:gradofj1:devdev0:eq2}
        \begin{aligned}
            \hat{\mathcal{J}}_{t=s}^{\prime\prime}\left(w; \kappa, \nu\right) 
            &= G^{\prime\prime}\left(w\right)  \\
            &= 0 + G_2^{\prime\prime}\left(w\right) \\
            &< 0.
        \end{aligned}
    \end{equation}
    
    So we can conclude that, for any fixed $s$, $\hat{\mathcal{J}}_{t=s}\left(\cdot\right)$ is a concave function on $(-1, 1]$.
    
\end{proof}

\begin{lemma}\label{lemma:minofj}
    $\forall\kappa, \nu, \tau>0$, a function $\mathcal{J}\left(\cdot\right): [-1, 1] \rightarrow \mathbb{R}$ is defined as:
    
    \begin{equation} \label{lemma:minofj:eq1}
        \begin{aligned}
            \mathcal{J}\left(w\right)
            &= -\frac{w}{\tau}
            + \log\left(\frac{I_{\nu}\left(M\left(w\right)\right)}{  M\left(w\right)^{\nu}} \right) - \log\left(\frac{I_{\nu}\left(\kappa\right)}{ \kappa^{\nu}} \right) + \log(N) ,\\
        \end{aligned}        
    \end{equation} 

    where $M_{\kappa}\left(\cdot\right): [-1, 1] \rightarrow \mathbb{R}$ is defined as:
    
    \begin{equation}
        \begin{aligned}
            M_{\kappa}\left(w\right) 
            = \sqrt{\kappa^2+\frac{2 \kappa w}{\tau}+\frac{1}{\tau^2}} ,
        \end{aligned}        
    \end{equation} 

    and $I_{\nu}$ is the modified Bessel function of the first kind of order $\nu$, which is defined as:
    
    \begin{equation} 
        \begin{aligned}
            I_{\nu}\left(m\right)=\sum_{k=0}^{\infty} \frac{1}{k!\Gamma(\nu+k+1)}\left(\frac{m}{2}\right)^{2 k+\nu}.
        \end{aligned}
    \end{equation}

    $\forall 0 < w \leq 1$, it holds that: 
    
    \begin{equation}\label{lemma:minofj:eq3}
        \begin{aligned}
            \mathcal{J}\left(w\right) < \mathcal{J}(-w).
        \end{aligned}
    \end{equation}
    
\end{lemma}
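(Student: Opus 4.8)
The plan is to reduce the claimed inequality to two elementary facts: a logarithmic‑derivative bound for the modified Bessel function, and a short algebraic estimate that is exactly where the hypothesis $w\le 1$ enters. First I would note that in the difference $\mathcal{J}(w)-\mathcal{J}(-w)$ the term $-\log\!\left(I_{\nu}(\kappa)/\kappa^{\nu}\right)$ and the constant $\log N$ do not depend on $w$ and cancel. Writing $G_3(m):=\log I_{\nu}(m)-\nu\log m$ as in the proof of~\cref{lemma:gradofj1}, and $M(w)=M_{\kappa}(w)=\sqrt{\kappa^{2}+2\kappa w/\tau+1/\tau^{2}}$ as in~\cref{def:function_j1}, this leaves
\begin{equation*}
\mathcal{J}(w)-\mathcal{J}(-w)=-\frac{2w}{\tau}+\bigl(G_3(M(w))-G_3(M(-w))\bigr).
\end{equation*}
Since $M(w)^{2}-M(-w)^{2}=4\kappa w/\tau>0$ and $M(-w)^{2}=(\kappa-1/\tau)^{2}+2\kappa(1-w)/\tau\ge 0$ for $0<w\le 1$, both arguments are well defined and $M(w)>M(-w)\ge 0$. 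It therefore suffices to prove $G_3(M(w))-G_3(M(-w))<2w/\tau$.

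The second step bounds the Bessel term by the linear term. By~\cref{lemma:gradofg}, $G_3'(m)=I_{\nu+1}(m)/I_{\nu}(m)\in(0,1)$ for $m>0$, so the mean value theorem gives $G_3(M(w))-G_3(M(-w))=G_3'(\xi)\,(M(w)-M(-w))<M(w)-M(-w)$ for some $\xi$ strictly between $M(-w)$ and $M(w)$; in the single boundary case $w=1,\ \kappa\tau=1$ one has $M(-1)=0$ and the same strict bound follows by a limiting argument, using that $G_3$ extends continuously to $m=0$. Hence it remains only to show the purely algebraic inequality $M(w)-M(-w)\le 2w/\tau$ for $0<w\le 1$.

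For that last step, set $a=\kappa^{2}+1/\tau^{2}$ and $b=2\kappa w/\tau$, so that $M(w)-M(-w)=\sqrt{a+b}-\sqrt{a-b}=2b/\bigl(\sqrt{a+b}+\sqrt{a-b}\bigr)$ while $2w/\tau=b/\kappa$; the desired inequality is equivalent to $\sqrt{a+b}+\sqrt{a-b}\ge 2\kappa$. The map $b\mapsto\sqrt{a+b}+\sqrt{a-b}$ is non‑increasing on $[0,a]$, and $w\le 1$ gives $b\le 2\kappa/\tau\le a$ (the latter being $(\kappa-1/\tau)^{2}\ge 0$), so it is bounded below by its value at $b=2\kappa/\tau$, namely $\sqrt{(\kappa+1/\tau)^{2}}+\sqrt{(\kappa-1/\tau)^{2}}=(\kappa+1/\tau)+|\kappa-1/\tau|=2\max(\kappa,1/\tau)\ge 2\kappa$. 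Chaining the three estimates yields $\mathcal{J}(w)-\mathcal{J}(-w)<-2w/\tau+(M(w)-M(-w))\le 0$, i.e.\ $\mathcal{J}(w)<\mathcal{J}(-w)$, as claimed.

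The main obstacle is choosing the right elementary inequality rather than grinding. The naive route of showing $w\mapsto\mathcal{J}(w)-\mathcal{J}(-w)$ is monotone does not close: its derivative equals $-2/\tau+(\kappa/\tau)\bigl(H(M(w))+H(M(-w))\bigr)$ with $H(m)=m^{-1}I_{\nu+1}(m)/I_{\nu}(m)$, which need not be negative when $\kappa$ is large. The working argument instead combines the crude but global bound $G_3'<1$ with the sharp-at-the-endpoint estimate $M(w)-M(-w)\le 2w/\tau$, and it is precisely this last inequality that forces the restriction $w\le 1$ (it is also what keeps $M(-w)$ real, explaining why the statement is confined to that range).
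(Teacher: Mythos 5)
Your proof is correct and follows essentially the same route as the paper's: the same reduction of $\mathcal{J}(w)-\mathcal{J}(-w)$ to bounding $G_3(M(w))-G_3(M(-w))$ by $2w/\tau$, the same key estimate $I_{\nu+1}/I_{\nu}<1$ (your mean-value argument is an inline form of~\cref{lemma:logI}, whose proof is exactly that derivative bound), and the same final algebraic inequality $M(w)-M(-w)\le 2w/\tau$. The only differences are cosmetic: your monotonicity-in-$b$ argument replaces the paper's two-case squaring, and you explicitly treat the degenerate endpoint $M(-1)=0$ when $\kappa\tau=1$, which the paper's invocation of~\cref{lemma:logI} (stated only for $0<a<b$) glosses over.
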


\begin{proof}\label{proof:lemma:minofj}
    Let us first re-write~\cref{lemma:minofj:eq3} as:
    
    \begin{equation}\label{proof:lemma:minofj:goal:eq1}
        \begin{aligned}
            \mathcal{J}\left(w\right) < \mathcal{J}(-w) \Leftrightarrow \mathcal{J}(-w) - \mathcal{J}\left(w\right) > 0,
        \end{aligned}
    \end{equation}

    and we will prove the inequality on RHS. Denote:
    
    \begin{equation}\label{proof:lemma:minofj:ab:eq1}
        \begin{aligned}
            a 
            &= M(-w) = \sqrt{\kappa^2+\frac{1}{\tau^2}-\frac{2 \kappa w}{\tau}}, \\ 
            b 
            &= M\left(w\right) = \sqrt{\kappa^2+\frac{1}{\tau^2}+\frac{2 \kappa w}{\tau}}. \\ 
        \end{aligned}
    \end{equation}

    In (\cref{proof:lemma:gradofj1:devchange:eq1} of)~\cref{lemma:gradofj1}, it is shown that $M\left(\cdot\right)$ is a strictly increasing function. Then, we have:
    
    \begin{equation}\label{proof:lemma:minofj:ab:eq2}
        \begin{aligned}
            0 < a < b,
        \end{aligned}
    \end{equation}
    
    and then we have:
    
    \begin{equation}\label{proof:lemma:minofj:goal:eq2}
        \begin{aligned}
            \mathcal{J}(-w) - \mathcal{J}\left(w\right)
            &= \frac{w}{\tau}-\left(-\frac{w}{\tau}\right)+\log \left( \frac{I_\nu(a)}{I_\nu(b)} \right) -\nu \log \left( \frac{a}{b} \right) \\
            &= \frac{2w}{\tau} +\log \frac{I_\nu(a)}{I_\nu(b)} -\nu \log \left( \frac{a}{b} \right). \\
        \end{aligned}
    \end{equation}    
    
    According to~\cref{lemma:logI}: 
    
    \begin{equation}\label{proof:lemma:minofj:goal:eq3}
        \begin{aligned}
            \log \left(\frac{I_\nu(a)}{I_\nu(b)}\right)  - \nu \log \left(\frac{a}{b}\right)
            > (a-b) . \\
        \end{aligned}
    \end{equation}

    Plugging~\cref{proof:lemma:minofj:goal:eq3} into~\cref{proof:lemma:minofj:goal:eq2}, we get:
    
    \begin{equation}\label{proof:lemma:minofj:goal:eq4}
        \begin{aligned}
            \mathcal{J}(-w) - \mathcal{J}\left(w\right)
            &> \frac{2w}{\tau} + (a-b) = f\left(w\right). \\
        \end{aligned}
    \end{equation}
    
    Combining~\cref{proof:lemma:minofj:goal:eq1} and~\cref{proof:lemma:minofj:goal:eq4}, we have:

    \begin{equation}\label{proof:lemma:minofj:goal:eq5}
        \begin{aligned}
            \mathcal{J}\left(w\right) \leq \mathcal{J}(-w) \Leftrightarrow f\left(w\right) \ge 0.
        \end{aligned}
    \end{equation}

    Denote:
    
    \begin{equation}\label{proof:lemma:minofj:simplify:eq1}
        \begin{aligned}
            A
            &= \kappa^2+\frac{1}{\tau^2},  \\
            B
            &= \frac{2 \kappa}{\tau},
        \end{aligned}
    \end{equation}
    
    then we have:
    
    \begin{equation}\label{proof:lemma:minofj:simplify:eq2}
        \begin{aligned}
            a
            &= M(-w)=\sqrt{A-B w},  \\
            b
            &= M\left(w\right)=\sqrt{A+B w}.
        \end{aligned}
    \end{equation}

    Observe that: 
    
    \begin{equation}\label{proof:lemma:minofj:simplify:eq3}
        \begin{aligned}
            b - a 
            = M\left(w\right)-M(-w) 
            &=\frac{(A+B w)-(A-B w)}{\sqrt{A+B w}+\sqrt{A-B w}} \\
            &=\frac{2 B w}{\sqrt{A+B w}+\sqrt{A-B w}},
        \end{aligned}
    \end{equation}

    and then:
    
    \begin{equation}\label{proof:lemma:minofj:change:eq1}
        \begin{aligned}
            f\left(w\right)=\frac{2 w}{\tau}\left[1-\frac{2 \kappa}{\sqrt{A+B w}+\sqrt{A-B w}}\right].
        \end{aligned}
    \end{equation}
    
    Therefore, we have:
    
    \begin{equation}\label{proof:lemma:minofj:change:eq2}
        \begin{aligned}
            f\left(w\right) \ge 0 
            &\Leftrightarrow \sqrt{A+B w}+\sqrt{A-B w} \ge 2 \kappa \\
            &\Leftrightarrow \left( \sqrt{A+B w}+\sqrt{A-B w} \right)^2 \ge 4 \kappa^2 \\
            &\Leftrightarrow  2A+ 2\sqrt{A^2-B^2 w^2}  \ge 4 \kappa^2 \\
            &\Leftrightarrow  \sqrt{A^2-B^2 w^2}  \ge 2 \kappa^2 - A \\
            &\Leftrightarrow  \sqrt{A^2-B^2 w^2}  \ge  \kappa^2 - \frac{1}{\tau^2} \\
        \end{aligned}
    \end{equation}

    \textbf{Case 1}: $0 < \kappa < \frac{1}{\tau}$.
    
    $\kappa^2 - \frac{1}{\tau^2}<0$ and the last equation in~\cref{proof:lemma:minofj:change:eq2} holds.

    \textbf{Case 2}: $0 < \frac{1}{\tau} \leq \kappa$.
    
    The~\cref{proof:lemma:minofj:change:eq2} becomes: 
    
    \begin{equation}\label{proof:lemma:minofj:change:eq3}
        \begin{aligned}
            f\left(w\right) \ge 0 
            &\Leftrightarrow  A^2-B^2 w^2  \ge \left( \kappa^2 - \frac{1}{\tau^2} \right)^2 \\
            &\Leftrightarrow  \frac{4\kappa^2}{\tau^2} (1- w^2) \ge0    \\
            &\Leftrightarrow |w| \leq 1  .  \\
        \end{aligned}
    \end{equation}

    Since $0 < w \leq 1$, $f\left(w\right) \ge 0$ holds. According to~\cref{proof:lemma:minofj:goal:eq5}, we conclude that:   
    
    \begin{equation}\label{proof:lemma:minofj:final:eq1}
        \begin{aligned}
            0 < w \leq 1 \Rightarrow \mathcal{J}\left(w\right) \leq \mathcal{J}(-w).
        \end{aligned}
    \end{equation}
    
\end{proof}

\begin{lemma}\label{lemma:gradofg}
    $\forall \nu>0$, a function $G_3: \mathbb{R}^{+}_0 \rightarrow \mathbb{R}$ is defined as:
    \begin{equation} \label{lemma:gradofg:eq1}
        \begin{aligned}
            G_3\left(m\right)
            &= \log\left(I_{\nu}\left(m\right) \right) - \nu\log\left(m \right) .
        \end{aligned}        
    \end{equation}     
    
    where $I_{\nu}$ is the modified Bessel function of the first kind of order $\nu$, which is defined as:
    
    \begin{equation}\label{lemma:gradofg:eq3}
        \begin{aligned}
            I_{\nu}\left(m\right)=\sum_{k=0}^{\infty} \frac{1}{k!\Gamma(\nu+k+1)}\left(\frac{m}{2}\right)^{2 k+\nu}.
        \end{aligned}
    \end{equation}

    It holds that $G_3\left(\cdot\right)$ is a strictly increasing function with $G_3^{\prime}\left(\cdot\right) \in (0, 1)$
    
\end{lemma}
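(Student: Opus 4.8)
The plan is to reduce the whole statement to bounding the ratio $I_{\nu+1}(m)/I_\nu(m)$, and then to bound that ratio by reading it as the mean of a probability distribution on $[-1,1]$. First I would note that for $\nu>0$ and $m>0$ every term of the series in \eqref{lemma:gradofg:eq3} is strictly positive, so $I_\nu(m)>0$, and $I_\nu$ is $C^\infty$ on $(0,\infty)$; hence $G_3$ is differentiable there. Differentiating $G_3(m)=\log I_\nu(m)-\nu\log m$ gives $G_3'(m)=I_\nu'(m)/I_\nu(m)-\nu/m$. I would then invoke the standard recurrence $\tfrac{d}{dm}\!\left(m^{-\nu}I_\nu(m)\right)=m^{-\nu}I_{\nu+1}(m)$ (which follows directly from \eqref{lemma:gradofg:eq3}); expanding the left side gives $I_\nu'(m)=I_{\nu+1}(m)+\tfrac{\nu}{m}I_\nu(m)$, so that
\[
G_3'(m)=\frac{I_{\nu+1}(m)}{I_\nu(m)}.
\]
It therefore suffices to prove $0<I_{\nu+1}(m)/I_\nu(m)<1$ for all $m>0$.

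For this I would use Poisson's integral representation, valid since $\nu>0>-\tfrac12$ (it is essentially the same $\int_{-1}^{1}e^{mt}(1-t^2)^{\nu-1/2}\,dt$ integral already used in \cref{lemma:uniform_min3}):
\[
I_\nu(m)=\frac{(m/2)^\nu}{\sqrt{\pi}\,\Gamma(\nu+\tfrac12)}\int_{-1}^{1}e^{mt}(1-t^2)^{\nu-1/2}\,dt.
\]
Differentiating $m^{-\nu}I_\nu(m)$ under the integral sign (justified since the integrand and its $m$-derivative are dominated uniformly on compact subsets of $(0,\infty)$) and comparing with $\tfrac{d}{dm}(m^{-\nu}I_\nu(m))=m^{-\nu}I_{\nu+1}(m)$ yields $I_{\nu+1}(m)=\frac{(m/2)^\nu}{\sqrt{\pi}\,\Gamma(\nu+1/2)}\int_{-1}^{1} t\,e^{mt}(1-t^2)^{\nu-1/2}\,dt$, hence
\[
\frac{I_{\nu+1}(m)}{I_\nu(m)}=\frac{\int_{-1}^{1} t\,e^{mt}(1-t^2)^{\nu-1/2}\,dt}{\int_{-1}^{1} e^{mt}(1-t^2)^{\nu-1/2}\,dt}=\mathbb{E}_{p}[t],
\]
where $p$ is the probability density on $[-1,1]$ proportional to $e^{mt}(1-t^2)^{\nu-1/2}$. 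Then $\mathbb{E}_p[t]<1$ because $t<1$ on the support outside a null set, and $\mathbb{E}_p[t]>0$ because folding the numerator about $t=0$ gives $\int_{0}^{1} t\,(e^{mt}-e^{-mt})(1-t^2)^{\nu-1/2}\,dt>0$ for $m>0$. This shows $G_3'(m)\in(0,1)$, and in particular $G_3$ is strictly increasing, which is the claim.

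The only genuinely delicate points are invoking the Poisson representation and justifying differentiation under the integral; both are classical and the domination estimate is elementary, so this is where I would be slightly careful but expect no real difficulty. Everything else is bookkeeping with the Bessel recurrences. As an alternative route for the case $\nu=h/2-1$ one could instead recognize $I_{\nu+1}(m)/I_\nu(m)$ as the mean resultant length $\mathbb{E}_{z\sim\mathrm{vMF}(c,m)}[c\cdot z]$ and argue $0<\mathbb{E}[c\cdot z]<1$ directly (as in the proof of \cref{thm_supp:transform2}), but the integral-representation argument has the advantage of covering all real $\nu>0$ exactly as the statement demands.
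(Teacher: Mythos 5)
Your proposal is correct, and its first half is exactly the paper's argument: differentiate $G_3$, apply the recurrence $I_\nu'(m)=I_{\nu+1}(m)+\tfrac{\nu}{m}I_\nu(m)$, and reduce everything to showing $I_{\nu+1}(m)/I_\nu(m)\in(0,1)$. Where you diverge is the final bound: the paper simply asserts $0<I_{\nu+1}(m)<I_\nu(m)$ as a known fact (it relies on the standard monotonicity of $I_\nu$ in the order, cited from the NIST handbook elsewhere in the appendix), whereas you prove it from scratch via Poisson's integral representation, reading $I_{\nu+1}(m)/I_\nu(m)$ as $\mathbb{E}_p[t]$ for the density $p(t)\propto e^{mt}(1-t^2)^{\nu-1/2}$ on $[-1,1]$ and checking $0<\mathbb{E}_p[t]<1$ by the folding argument. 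Your version is self-contained and the probabilistic reading also explains \emph{why} the ratio tends to $0$ as $m\to0^+$ and to $1$ as $m\to\infty$; the paper's version is shorter at the cost of an external citation. One cosmetic caveat shared by both proofs: the lemma states the domain as $\mathbb{R}^+_0$, but $\nu\log(m)$ is undefined at $m=0$, so the strict bounds on $G_3'$ should be understood on $(0,\infty)$ (as you implicitly do).
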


\begin{proof}\label{proof:lemma:gradofg}
    
    The first derivative of $G_3$ is:
    
    \begin{equation}\label{proof:lemma:gradofg:dev2:eq1}    
        \begin{aligned}
            G_3^{\prime}\left(m\right)  
            &= \frac{I_\nu^{\prime}\left(m\right)}{I_\nu\left(m\right)}
            -\frac{\nu}{m}.
        \end{aligned}
    \end{equation}

    According to~\citep{nist}:
    
    \begin{equation}\label{proof:lemma:gradofg:nu:eq1}
        \begin{aligned}
            I_\nu^{\prime}\left(m\right)=I_{\nu+1}\left(m\right)+\frac{\nu}{m} I_\nu\left(m\right),
        \end{aligned}
    \end{equation}

    then we have:
    
    \begin{equation}\label{proof:lemma:gradofg:nu:eq2}
        \begin{aligned}
            \frac{I_\nu^{\prime}\left(m\right)}{I_\nu\left(m\right)} - \frac{\nu}{m}
            =\frac{I_{\nu+1}\left(m\right)}{I_\nu\left(m\right)}.
        \end{aligned}
    \end{equation}

    Plugging~\cref{proof:lemma:gradofg:nu:eq2} into~\cref{proof:lemma:gradofg:dev2:eq1}, we get:
    
    \begin{equation}\label{proof:lemma:gradofg:dev2:eq2}
        \begin{aligned}
            G_3^{\prime}\left(m\right) 
            &=  \frac{I_{\nu+1}\left(m\right)}{I_\nu\left(m\right)}.
        \end{aligned}
    \end{equation} 

    Since: 
    
    \begin{equation}\label{proof:lemma:gradofg:dev2:eq3}
        \begin{aligned}
            0 < I_{\nu+1}\left(m\right) < I_\nu\left(m\right),
        \end{aligned}
    \end{equation} 

    therefore: 
    
    \begin{equation}\label{proof:lemma:gradofg:dev2:eq4}
        \begin{aligned}
            G_3^{\prime}\left(m\right) 
            &=  \frac{I_{\nu+1}\left(m\right)}{I_\nu\left(m\right)} \in (0,1).
        \end{aligned}
    \end{equation} 

    This shows that $G_3\left(\cdot\right)$ is a strictly increasing function with $G_3^{\prime}\left(\cdot\right) \in (0, 1)$.
    
\end{proof}

\begin{lemma}\label{lemma:gradofhm}
    $\forall \nu >0$, a function $H\left(\cdot\right): R^{+} \rightarrow \mathbb{R}$ is defined as:
    
    \begin{equation}\label{lemma:gradofhm:eq1}
        \begin{aligned}
             H\left(m\right) = \frac{1}{m} \frac{I_{\nu+1}\left(m\right)}{I_\nu\left(m\right)},
        \end{aligned}
    \end{equation}

    where $I_{\nu}$ is the modified Bessel function of the first kind of order $\nu$, which is defined as:
    
    \begin{equation}\label{lemma:gradofhm:eq2}
        \begin{aligned}
            I_{\nu}\left(m\right)=\sum_{k=0}^{\infty} \frac{1}{k!\Gamma(\nu+k+1)}\left(\frac{m}{2}\right)^{2 k+\nu}.
        \end{aligned}
    \end{equation}

    It holds that $H\left(m\right)$ is a strictly decreasing function.
\end{lemma}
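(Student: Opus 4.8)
The plan is to work with the Bessel ratio $R_\nu(m) := I_{\nu+1}(m)/I_\nu(m)$, so that $H(m) = R_\nu(m)/m$, and to show that $\log H$ is strictly decreasing on $(0,\infty)$; since $H>0$ this is equivalent to the claim. Because $I_\nu$ is real-analytic and strictly positive on $(0,\infty)$, both $R_\nu$ and $H$ are smooth there, so it suffices to prove that $(\log H)'(m) = R_\nu'(m)/R_\nu(m) - 1/m < 0$ for every $m>0$.

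First I would derive a closed form for $R_\nu'$. Differentiating $R_\nu = I_{\nu+1}/I_\nu$ and substituting the two recurrences $I_\nu'(m) = I_{\nu+1}(m) + \tfrac{\nu}{m}I_\nu(m)$ (this is~\cref{proof:lemma:gradofg:nu:eq1}) and $I_{\nu+1}'(m) = I_\nu(m) - \tfrac{\nu+1}{m}I_{\nu+1}(m)$ yields the Riccati identity $R_\nu'(m) = 1 - R_\nu(m)^2 - \tfrac{2\nu+1}{m}R_\nu(m)$, hence $R_\nu'/R_\nu = 1/R_\nu - R_\nu - (2\nu+1)/m$, and therefore $(\log H)'(m) = 1/R_\nu(m) - R_\nu(m) - 2(\nu+1)/m$. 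Next I would invoke the three-term recurrence $I_\nu(m) = \tfrac{2(\nu+1)}{m}I_{\nu+1}(m) + I_{\nu+2}(m)$; dividing by $I_{\nu+1}(m)$ gives $1/R_\nu(m) = 2(\nu+1)/m + R_{\nu+1}(m)$. Substituting this into the previous expression collapses all the $1/m$ and $R_\nu$ terms, leaving the clean formula $(\log H)'(m) = R_{\nu+1}(m) - R_\nu(m)$.

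It then remains to show $R_{\nu+1}(m) < R_\nu(m)$ for all $m>0$, i.e. that $\nu \mapsto R_\nu(m) = I_{\nu+1}(m)/I_\nu(m)$ is strictly decreasing; this is the main obstacle and the only non-elementary input. I would establish it from the Gauss continued fraction for $R_\nu(m)$ obtained by iterating $1/R_\nu(m) = 2(\nu+1)/m + R_{\nu+1}(m)$, whose partial numerators are all equal to $1$ (hence positive) and whose partial denominators are $\tfrac{2(\nu+1)}{m}, \tfrac{2(\nu+2)}{m}, \dots$; passing from $\nu$ to $\nu+1$ replaces each partial denominator by the strictly larger next one, and by the standard monotonicity of the convergents of an all-positive (Stieltjes-type) continued fraction in each partial denominator this strictly decreases the value, so $R_{\nu+1}(m) < R_\nu(m)$ (alternatively one may cite known monotonicity bounds for Bessel-function ratios). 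Consequently $(\log H)'(m) < 0$ on $(0,\infty)$, so $\log H$, and hence $H$, is strictly decreasing, which is the assertion of the lemma.
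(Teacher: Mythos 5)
Your computation is sound up to the last step, and it is in essence a reformulation of the paper's argument: the paper differentiates $H$ directly with the quotient rule and the recurrence $I_\nu'(m)=I_{\nu+1}(m)+\tfrac{\nu}{m}I_\nu(m)$, obtaining $H'(m)=\tfrac{1}{m}\bigl(I_{\nu+2}(m)I_\nu(m)-I_{\nu+1}^2(m)\bigr)/I_\nu(m)^2$, and concludes by the Tur\'an-type inequality $I_{\nu+2}(m)I_\nu(m)<I_{\nu+1}^2(m)$ (cited from the literature). Your route via $\log H$, the Riccati identity, and the three-term recurrence correctly yields $(\log H)'(m)=R_{\nu+1}(m)-R_\nu(m)$, and $R_{\nu+1}(m)<R_\nu(m)$ is exactly equivalent to that same Tur\'an inequality, so the two proofs hinge on the identical key fact.

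The genuine gap is your primary justification of $R_{\nu+1}(m)<R_\nu(m)$. The value of a continued fraction of the form $1/(b_1+1/(b_2+\cdots))$ with positive $b_k$ is \emph{not} monotone decreasing in all of its partial denominators: it is decreasing in $b_1,b_3,b_5,\dots$ but \emph{increasing} in $b_2,b_4,\dots$, so "replace every $b_k$ by a larger number" has no definite effect by that reasoning. A terminating example already shows the failure: with $f(b_1,b_2)=1/(b_1+1/b_2)$ one has $f(1,1)=1/2$ but $f(1.01,100)\approx 0.98$, i.e.\ increasing both denominators increased the value. So the sentence invoking "standard monotonicity of the convergents in each partial denominator" does not prove the needed inequality, and this is precisely the only non-elementary input of the whole lemma. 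Your parenthetical fallback — citing the known monotonicity in $\nu$ of the ratio $I_{\nu+1}(m)/I_\nu(m)$, equivalently the Tur\'an-type inequality — does close the gap, and is exactly what the paper does; if you want to keep a continued-fraction proof you would need a more careful argument (e.g.\ an induction on approximants that tracks the alternating even/odd bounds, or a direct series/integral proof of the Tur\'an inequality) rather than the blanket monotonicity claim.
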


\begin{proof}\label{proof:lemma:gradofhm}
    $\forall \nu, m \in R^{+}$, denote $R_\nu\left(m\right)$ as: 
    
    \begin{equation}\label{proof:lemma:gradofhm:r:eq1}
        \begin{aligned}
            R_\nu\left(m\right)=\frac{I_{\nu+1}\left(m\right)}{I_\nu\left(m\right)}.
        \end{aligned}
    \end{equation}

    According to~\citep{nist}, we have:    
    
    \begin{equation}\label{proof:lemma:gradofhm:nu:eq1}
        \begin{aligned}
            I_\nu^{\prime}\left(m\right)=I_{\nu+1}\left(m\right)+\frac{\nu}{m} I_\nu\left(m\right),
        \end{aligned}
    \end{equation}

    then:   
    
    \begin{equation}\label{proof:lemma:gradofhm:r:eq2}
        \begin{aligned}
            R_\nu^{\prime}\left(m\right)
            &=\frac{I_{\nu+1}^{\prime}\left(m\right)I_\nu\left(m\right) - I_{\nu+1}\left(m\right)I_\nu^{\prime}\left(m\right)}{I_\nu\left(m\right)^2} \\
            &=\frac{\left( I_{\nu+2}\left(m\right)+\frac{\nu+1}{m} I_{\nu+1}\left(m\right) \right)I_\nu\left(m\right) - I_{\nu+1}\left(m\right)\left( I_{\nu+1}\left(m\right)+\frac{\nu}{m} I_\nu\left(m\right) \right)}{I_\nu\left(m\right)^2} \\
            &=\frac{I_{\nu+2}\left(m\right)I_{\nu}\left(m\right) -  I_{\nu+1}^2\left(m\right) + \frac{1}{m}I_{\nu+1}\left(m\right)I_{\nu}\left(m\right)}{I_\nu\left(m\right)^2} \\
            &=\frac{I_{\nu+2}\left(m\right)I_{\nu}\left(m\right) -  I_{\nu+1}^2\left(m\right)}{I_\nu\left(m\right)^2} + \frac{1}{m} R_\nu\left(m\right). \\
        \end{aligned}
    \end{equation}

    Since $H\left(m\right)$ can be rewritten as:    
    
    \begin{equation}\label{proof:lemma:gradofhm:h:eq1}
        \begin{aligned}
            H\left(m\right) 
            &= \frac{R_\nu\left(m\right)}{m} ,
        \end{aligned}
    \end{equation}
    
    then: 
    
    \begin{equation}\label{proof:lemma:gradofhm:h:eq2}
        \begin{aligned}
            H^{\prime}\left(m\right)
            &= \frac{R_\nu^{\prime}\left(m\right)m - R_\nu\left(m\right)}{m^2} \\
            &= \frac{1}{m} \left( R_\nu^{\prime}\left(m\right) - \frac{1}{m}R_\nu\left(m\right) \right) \\
            &= \frac{1}{m} \left(  \frac{I_{\nu+2}\left(m\right)I_{\nu}\left(m\right) -  I_{\nu+1}^2\left(m\right)}{I_\nu\left(m\right)^2} \right).
        \end{aligned}
    \end{equation}

    According to the Turán type inequalities for modified Bessel functions~\citep{turan}, when $m>0$:
    
    \begin{equation}\label{proof:lemma:gradofhm:h:eq3}
        \begin{aligned}
            \frac{I_{\nu+2}\left(m\right)I_{\nu}\left(m\right) -  I_{\nu+1}^2\left(m\right)}{I_\nu\left(m\right)^2} < 0,
        \end{aligned}
    \end{equation}

    so     
    
    \begin{equation}\label{proof:lemma:gradofhm:h:eq4}
        \begin{aligned}
            H^{\prime}\left(m\right) < 0.
        \end{aligned}
    \end{equation}        

    Then we can conclude that $H\left(m\right)$ is a strictly decreasing function.
    
\end{proof}

\begin{lemma}\label{lemma:logI}
    $\forall \nu > 0$ and $0<a<b$, it holds that:
    \begin{equation}\label{lemma:logI:eq1}
        \begin{aligned}
            \log \left(\frac{I_\nu(a)}{I_\nu(b)}\right)>\nu \log \left(\frac{a}{b}\right)+(a-b),  \\
        \end{aligned}
    \end{equation}

    where $I_{\nu}$ is the modified Bessel function of the first kind of order $\nu$, which is defined as:
    \begin{equation}\label{lemma:logI:eq2}
        \begin{aligned}
            I_{\nu}\left(m\right)=\sum_{k=0}^{\infty} \frac{1}{k!\Gamma(\nu+k+1)}\left(\frac{m}{2}\right)^{2 k+\nu}.
        \end{aligned}
    \end{equation}

\end{lemma}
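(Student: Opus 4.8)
The plan is to reduce the claimed inequality to the strict monotonicity of a single auxiliary function. Define $\psi\colon\mathbb{R}^{+}\to\mathbb{R}$ by
\[
\psi(m) = \log I_{\nu}(m) - \nu\log m - m .
\]
Then the asserted inequality $\log\!\big(I_\nu(a)/I_\nu(b)\big) > \nu\log(a/b) + (a-b)$ is, after rearranging the logarithms, exactly the statement that $\psi(a) > \psi(b)$ whenever $0 < a < b$; that is, $\psi$ is strictly decreasing on $\mathbb{R}^{+}$. So it suffices to show $\psi'(m) < 0$ for all $m > 0$.

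Next I would differentiate. We have $\psi'(m) = \dfrac{I_\nu'(m)}{I_\nu(m)} - \dfrac{\nu}{m} - 1$. Using the standard recurrence for the modified Bessel function (the same identity $I_\nu'(m) = I_{\nu+1}(m) + \tfrac{\nu}{m}I_\nu(m)$ already invoked in the proof of \cref{lemma:gradofg}, cf.\ \cref{proof:lemma:gradofg:nu:eq1}), this simplifies to
\[
\psi'(m) = \frac{I_{\nu+1}(m)}{I_\nu(m)} - 1 .
\]
Then I invoke the strict inequality $0 < I_{\nu+1}(m) < I_\nu(m)$ valid for all $m > 0$ and $\nu > 0$ — the monotonicity of $I_\nu(m)$ in the order $\nu$, which is precisely \cref{proof:lemma:gradofg:dev2:eq3} in the excerpt — to conclude $\psi'(m) < 0$ on all of $\mathbb{R}^{+}$.

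Finally, since $\psi$ is continuous on $\mathbb{R}^{+}$ and has a strictly negative derivative everywhere, it is strictly decreasing, so $0 < a < b$ gives $\psi(a) > \psi(b)$, i.e.
\[
\log I_\nu(a) - \nu\log a - a \;>\; \log I_\nu(b) - \nu\log b - b ,
\]
which is the desired inequality \eqref{lemma:logI:eq1}. There is no real obstacle here: the entire argument is a one-line differentiation followed by the order-monotonicity of $I_\nu$. The only point deserving care is ensuring the inequality remains \emph{strict}, which it does because $\psi'$ is strictly negative at every point and $a < b$ strictly — so I would make sure to state the Bessel comparison in its strict form and note it holds pointwise on the whole interval $[a,b]$.
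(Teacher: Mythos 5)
Your proposal is correct and follows essentially the same route as the paper's own proof: the paper also introduces the auxiliary function $L(x)=\log I_\nu(x)-\nu\log(x)-x$, uses the recurrence $I_\nu'(m)=I_{\nu+1}(m)+\tfrac{\nu}{m}I_\nu(m)$ to get $L'(x)=\tfrac{I_{\nu+1}(x)}{I_\nu(x)}-1<0$, and concludes by strict monotonicity. Your write-up is in fact a bit cleaner, since the paper's final display contains a typo ($L(a)>L(a)$) that your argument avoids.
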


\begin{proof}\label{proof:lemma:logI}
    According to~\citep{nist}, $\forall x>0$ and $0<\nu_1<\nu_2<\infty$, we have:
    
    \begin{equation}\label{proof:lemma:logI:tool:eq1}
        \begin{aligned}
            I_{\nu_1}(x) > I_{\nu_2}(x). \\
        \end{aligned}
    \end{equation}

    Denote a function L as:
    
    \begin{equation}\label{proof:lemma:logI:tool:eq2}
        \begin{aligned}
            L(x)=\log I_\nu(x)-\nu \log(x)-x. \\
        \end{aligned}
    \end{equation}

    According to~\citep{nist}:
    
    \begin{equation}\label{proof:lemma:logI:nu:eq1}
        \begin{aligned}
            I_\nu^{\prime}\left(m\right)=I_{\nu+1}\left(m\right)+\frac{\nu}{m} I_\nu\left(m\right),
        \end{aligned}
    \end{equation}

    then we have:
    
    \begin{equation}\label{proof:lemma:logI:nu:eq2}
        \begin{aligned}
            \frac{I_\nu^{\prime}\left(m\right)}{I_\nu\left(m\right)} - \frac{\nu}{m}
            =\frac{I_{\nu+1}\left(m\right)}{I_\nu\left(m\right)}.
        \end{aligned}
    \end{equation}
    
    Taking~\cref{proof:lemma:logI:tool:eq1} and~\cref{proof:lemma:logI:nu:eq2} into account, the derivative of $L$ is:  
    
    \begin{equation}\label{proof:lemma:logI:tool:eq3}
        \begin{aligned}
            L^{\prime}(x)
            &=\frac{I_\nu^{\prime}(x)}{I_\nu(x)}-\frac{\nu}{x}-1 \\
            &=\frac{I_{\nu+1}(x)}{I_\nu(x)}-1 \\
            &< 0.
        \end{aligned}
    \end{equation}
    
    Therefore, $\forall \nu > 0,0<b<a$, it holds that:

    \begin{equation}\label{proof:lemma:logI:min:eq1}
        \begin{aligned}
            \log (I_\nu(a))-\nu \log(a)-a
            &= L(a) \\
            &> L(a)  \\
            &= \log (I_\nu(b))-\nu \log(b)-b,
        \end{aligned}
    \end{equation}
    
    then we have: 
    
    \begin{equation}\label{proof:lemma:logI:min:eq2}
        \begin{aligned}
            \log \left(\frac{I_\nu(a)}{I_\nu(b)}\right)>\nu \log \left(\frac{a}{b}\right)+(a-b) . \\
        \end{aligned}
    \end{equation}
\end{proof}

\newpage
\subsection{Details of Theorem 3}\label{sec_supp:gap3}

In this section, we provide proofs of~\cref{thm:gap3} that is proposed in~\cref{sec:collaps:gap3}. We also provide details and proofs of the auxiliary theorems (\cref{thm_supp:transform3} and~\cref{thm_supp:gap3}) and the technical lemmas (\cref{lemma:gradofj2},~\cref{lemma:gradofj3},~\cref{lemma:onesideprojection} and~\cref{lemma:twosideprojection}) that support the proof~\cref{thm:gap3}. For convenience in reading, let us recall some related notions and definitions. 

\begin{itemize}
    \item $h, N \in \mathbb{N}$.
    \item $\mathbb{S}^{h-1}=\left\{z \in \mathbb{R}^h:\|z\|=1\right\}$.
    \item $\mathbb{A}=\left\{x \in \mathbb{R}^{h} : n_A \cdot x=0\right\}$ where $n_A$ is the normal vector of $\mathbb{A}$.
    \item $\mathbb{B}=\left\{y \in \mathbb{R}^{h} : n_B \cdot y=0\right\}$ where $n_A$ is the normal vector of $\mathbb{B}$.
    \item $ \phi = \cos^{-1} \left(\frac{n_x \cdot n_y}{\left\|n_x\right\| \cdot\|n_y\|}\right)$ and $0 < \phi_{\mathrm{min}} \leq \phi < \frac{\pi}{2}$.
    \item $\mathbb{S}_X = \mathbb{S}^{h-1} \cap \mathbb{A}=\left\{x \in \mathbb{R}^{h} : \|x\|=1, n_A \cdot x=0\right\} \cong S^{h-2} \in \mathbb{S}^{h-1} $.
    \item $\mathbb{S}_Y = \mathbb{S}^{h-1} \cap \mathbb{B}=\left\{y \in \mathbb{R}^{h} :  \|y\|=1, n_B \cdot y=0\right\} \cong S^{h-2} \in \mathbb{S}^{h-1}$.
    \item $\mathbb{C} = \mathbb{A} \cap \mathbb{B}$.
    \item $h_X = h_Y = h-1$.
    \item $h_C = h-2$.
    \item $P_A$: the projection matrix of $\mathbb{A}$.
    \item $P_B$: the projection matrix of $\mathbb{B}$.
    \item $P_C$: the projection matrix of $\mathbb{C}$.
    \item $e_A = \left\{{z \in \mathbb{S}_X: z \perp \mathbb{C}}\right\}$.
    \item $e_B = \left\{{z \in \mathbb{S}_Y: z \perp \mathbb{C}}\right\}$.
    \item $\mathbb{C}^{\perp}= \operatorname{span}\left\{e_A\right\} \oplus \operatorname{span}\left\{e_B\right\}$
    \item $\mathbb{R}^h=\mathbb{C} \oplus \mathbb{C}^{\perp}$.
    \item $X = \left(x_1, \ldots, x_N\right) \in (\mathbb{S}_X)^N$.
    \item $Y = \left(y_1, \ldots, y_N\right) \in (\mathbb{S}_Y)^N$.
    \item $\mu_x = \frac{1}{N} \sum_{i=1}^N x_i$.
    \item $\mu_y = \frac{1}{N} \sum_{i=1}^N y_i$.
    \item $c_x = \frac{\mu_x}{\| \mu_x \|}$. 
    \item $c_y = \frac{\mu_y  }{\| \mu_y \|}$. 
\end{itemize}

\textbf{Definition} (Multimodal Contrastive Loss (MCL Loss)). Let $(X, Y)$ be an $N$-pair configuration, where $X = \left(x_1, \ldots, x_N\right) \in (\mathbb{S}^{h-1})^N$ and $Y = \left(y_1, \ldots, y_N\right) \in (\mathbb{S}^{h-1})^N$. $\forall \tau >0$, the multimodal contrastive loss $\mathcal{L}_{\mathrm{MCL}}(\cdot, \cdot): ({\mathbb{S}^{h-1}})^N \times ({\mathbb{S}^{h-1}})^N \rightarrow \mathbb{R}$ is defined as:    
    
    \begin{equation*}
        \begin{aligned}
            \mathcal{L}_{\mathrm{MCL}}
            =\frac{1}{N} \sum_{i=1}^N  \mathcal{L}_{\mathrm{MCL}}^i,
            \enspace \text{where} \hspace{1.5mm} 
            \mathcal{L}_{\mathrm{MCL}}^i= \mathcal{L}_{\mathcal{X} \rightarrow \mathcal{Y}}(x_i; Y) + \mathcal{L}_{\mathcal{Y} \rightarrow \mathcal{X}}(y_i; X).
        \end{aligned}
    \end{equation*}
    
    Here, $\mathcal{L}_{\mathcal{X} \rightarrow \mathcal{Y}}$ is the $\mathcal{X}$-to-$\mathcal{Y}$ alignment and $\mathcal{L}_{\mathcal{Y} \rightarrow \mathcal{X}}$ is the $\mathcal{Y}$-to-$\mathcal{X}$ alignment, which are defined respectively as:
    
    \begin{equation*}
        \begin{aligned}
            \mathcal{L}_{\mathcal{X} \rightarrow \mathcal{Y}}(x_i; Y) = -\log \frac{\exp \left(x_i \cdot y_i / \tau\right)}{\sum_{j=1}^N \exp \left(x_i \cdot y_j / \tau\right)},
            \enspace 
            \mathcal{L}_{\mathcal{Y} \rightarrow \mathcal{X}}(y_i; X) = -\log \frac{\exp \left(x_i \cdot y_i / \tau\right)}{\sum_{j=1}^N \exp \left(x_j \cdot y_i / \tau\right)}.
        \end{aligned}
    \end{equation*}

\textbf{Definition}(Modality Gap)
    Let $(X, Y)$ be an $N$-pair configuration, where $X = \left(x_1, \ldots, x_N\right) \in (\mathbb{S}^{h-1})^N$ and $Y = \left(y_1, \ldots, y_N\right) \in (\mathbb{S}^{h-1})^N$. The modality gap between $X$ and $Y$ can be expressed as the angle between the center representations:
    
    \begin{equation*}
        \begin{aligned}
            \Delta_{\theta} = \cos^{-1}(c_x \cdot c_y).
        \end{aligned}
    \end{equation*}

\textbf{Definition} (vMF Distribution).  
    $\forall c \in \mathbb{S}^{h-1}$ and $\kappa \ge 0$, the probability density of a random $h$-dimensional unit vector $z \sim \mathrm{vMF}(c, \kappa)$ is given by:
    
    \begin{equation*}
        \begin{aligned}  
            f_h(z ; c, \kappa) 
            &=D_h(\kappa) e^{\kappa c^{\top} z} , 
            \enspace \text{where} \hspace{1.5mm} 
            D_h(\kappa) 
            =
            \frac{\kappa^{\nu}}{(2 \pi)^{\nu+1} I_{\nu}(\kappa)} .\\
        \end{aligned}
    \end{equation*}
    
    Let $\nu = h/2 -1$, and $I_{\nu}\left(\cdot\right): \mathbb{R} \rightarrow \mathbb{R}$ is the modified Bessel function of the first kind of order $\nu$, which is defined as:
    
    \begin{equation*}
        \begin{aligned}
            I_{\nu}(x)=\sum_{k=0}^{\infty} \frac{1}{k!\Gamma(\nu+k+1)}\left(\frac{x}{2}\right)^{2 k+\nu}.
        \end{aligned}
    \end{equation*}

\textbf{Definition} (Function $\Tilde{M}$).  
    $\forall\kappa, \tau>0$, a function $\Tilde{M}_{\kappa}(\cdot, \cdot): [-1, 1] \times [0, 1] \rightarrow \mathbb{R}^{+}_0$ is defined as:
    
    \begin{equation*} 
        \begin{aligned}
            \Tilde{M}_{\kappa}\left(w, t\right) 
            = \sqrt{\kappa^2+\frac{2 \kappa w}{\tau}+\frac{t^2}{\tau^2}} .
        \end{aligned}        
    \end{equation*} 

\textbf{Definition} (Function $\Tilde{\mathcal{J}}$).      
    $\forall\kappa, \nu, \tau>0$, $\Tilde{\mathcal{J}}(\cdot, \cdot, \cdot;\kappa, \nu): [-1, 1] \times [-1, 1] \times [0, 1] \rightarrow \mathbb{R}$ is defined as:
    
    \begin{equation*}  
        \begin{aligned}
            \Tilde{\mathcal{J}}\left(w_1, w_2, t;\kappa, \nu\right) 
            &= -\frac{w_1}{\tau}
            + \log\left(\frac{I_{\nu}\left(\Tilde{M}_{\kappa}(w_2, t)\right)}{\Tilde{M}_{\kappa}(w_2, t)^{\nu}} \right) - \log\left(\frac{I_{\nu}\left(\kappa\right)}{ \kappa^{\nu}} \right) . 
        \end{aligned}        
    \end{equation*}

\textbf{Definition} (Function $M$).  
    $\forall\kappa, \tau>0$, a function $M_{\kappa}\left(\cdot\right): [-1, 1] \rightarrow \mathbb{R}^{+}_0$ is defined as:
    
    \begin{equation*} 
        \begin{aligned}
            M_{\kappa}\left(w\right) 
            &= \sqrt{\kappa^2+\frac{2 \kappa w}{\tau}+\frac{1}{\tau^2}} \\
            &= \Tilde{M}_{\kappa}(w, 1) .
        \end{aligned}        
    \end{equation*} 

\textbf{Definition} (Function $\mathcal{J}$). 
    $\forall\kappa, \nu, \tau>0$, a function $\mathcal{J}(\cdot;\kappa, \nu): [-1, 1] \rightarrow \mathbb{R}$ is defined as:     
    
    \begin{equation*}  
        \begin{aligned}
            \mathcal{J}(w;\kappa, \nu)
            &= -\frac{w}{\tau}
            + \log\left(\frac{I_{\nu}\left(M_{\kappa}\left(w\right)\right)}{  M_{\kappa}\left(w\right)^{\nu}} \right) - \log\left(\frac{I_{\nu}\left(\kappa\right)}{ \kappa^{\nu}} \right) \\
            &= \Tilde{\mathcal{J}}\left(w, w, 1;\kappa, \nu\right) .
        \end{aligned}        
    \end{equation*}

\textbf{Definition} (Function $\Tilde{M}$).  
    $\forall\kappa, \tau>0$, a function $\Tilde{M}_{\kappa}(\cdot, \cdot): [-1, 1] \times [0, 1] \rightarrow \mathbb{R}^{+}_0$ is defined as:
    
    \begin{equation*} 
        \begin{aligned}
            \Tilde{M}_{\kappa}\left(w, t\right) 
            = \Tilde{M}_{\kappa}\left(w, t\right)  .
        \end{aligned}        
    \end{equation*} 

\textbf{Definition} (Function $\hat{\mathcal{J}}$).      
    $\forall\kappa, \nu, \tau>0$, a function $\hat{\mathcal{J}}(\cdot, \cdot;\kappa, \nu): [-1, 1] \times [0, 1] \rightarrow \mathbb{R}$ is defined as:
    
    \begin{equation*}  
        \begin{aligned}
            \hat{\mathcal{J}}\left(w, t;\kappa, \nu\right) 
            &= -\frac{w}{\tau}
            + \log\left(\frac{I_{\nu}\left(\Tilde{M}_{\kappa}(w, t)\right)}{  \Tilde{M}_{\kappa}(w, t)^{\nu}} \right) - \log\left(\frac{I_{\nu}\left(\kappa\right)}{ \kappa^{\nu}} \right)\\
            &= \Tilde{\mathcal{J}}\left(w, w, t;\kappa, \nu\right) .
        \end{aligned}        
    \end{equation*}

\newpage
\subsubsection{Proof of Theorem 3} 

In this subsection, we provide the proof of~\cref{thm:gap3}. For convenience in reading, we first restate Theorem 3 here.

\textbf{Theorem 3.} [Restate] 
     Let $(X, Y)$ be an $N$-pair configuration, where $X = \left(x_1, \ldots, x_N\right) \in (\mathbb{S}_X \setminus \mathbb{C})^N$ are $iid$ samples from $\mu_x=\mathrm{vMF}(c_x, \kappa_x)$, and $Y = \left(y_1, \ldots, y_N\right) \in (\mathbb{S}_Y  \setminus \mathbb{C} )^N$ are $iid$ samples from $\mu_y=\mathrm{vMF}(c_y, \kappa_y)$. Let $\Tilde{\nu} = (h-1)/2 - 1$. Suppose there exists an index $i=c$ such that $x_c = c_x$, $y_c = c_y$. Denote $\Delta_{\theta} = \cos^{-1}(c_x \cdot c_y)$ and assume that $c_x, c_y \notin \mathbb{C}$ with $c_x \cdot c_y > 0$. For any fixed $\kappa_x, \kappa_y > 0$, it holds that:

    \begin{equation*}
        \begin{aligned}
            &\lim_{N \to \infty} \mathcal{L}_{\mathrm{MCL}}^c
            - 2\log(N) 
            \\&
            = \Tilde{\mathcal{J}}(\cos\left(\Delta_{\theta}\right), \cos\left(\Delta_{\theta}\right),\| P_B c_x \|;\kappa_y, \Tilde{\nu}) 
            + \Tilde{\mathcal{J}}(\cos\left(\Delta_{\theta}\right), \cos\left(\Delta_{\theta}\right),\| P_A c_y \|;\kappa_x, \Tilde{\nu}) 
            \\
            &\ge \Tilde{\mathcal{J}}(\cos\left(\phi_{\mathrm{min}}\right), \cos\left(\phi_{\mathrm{min}}\right), \cos\left(\phi_{\mathrm{min}}\right); \kappa_y, \Tilde{\nu}) 
            + \Tilde{\mathcal{J}}(\cos\left(\phi_{\mathrm{min}}\right), \cos\left(\phi_{\mathrm{min}}\right), \cos\left(\phi_{\mathrm{min}}\right); \kappa_x, \Tilde{\nu}),
        \end{aligned}
    \end{equation*}
    
    where equality is attained if and only if there exists a configuration of $(X, Y)$ such that:
    
    \begin{enumerate}[label={(A\arabic*)},labelindent=10pt,leftmargin=*,start=6]
        \item 
        $c_x \perp \mathbb{C}$ and $c_y \perp \mathbb{C}$.
        \item 
        $\Delta_\theta=\cos ^{-1}\left(c_x \cdot c_y\right) = \phi_{\mathrm{min}}$.
    \end{enumerate} 

\begin{proof}\label{proof:thm:gap3}

    We first decompose $\lim_{N \to \infty} \mathcal{L}_{\mathrm{MCL}}^c - 2\log(N) $ into two parts:
    
    \begin{equation}\label{proof:thm:gap3:eq1} 
        \begin{aligned}
            \lim_{N \to \infty} \mathcal{L}_{\mathrm{MCL}}^c
            - 2\log(N) 
            &= 
            \lim_{N \to \infty} \mathcal{L}_{\mathcal{X} \rightarrow \mathcal{Y}}(c_x; Y) - \log (N) \\
            &+ \lim_{N \to \infty} \mathcal{L}_{\mathcal{Y} \rightarrow \mathcal{X}}(c_y; X) - \log (N).
        \end{aligned}
    \end{equation}

    Set:
    
    \begin{equation}\label{proof:thm:gap3:set:eq1} 
        \begin{aligned}
            \hat{\mathcal{J}}(w, t ; \kappa, \nu) 
            &= \Tilde{\mathcal{J}}(w, w, t ; \kappa, \nu) ,\\ 
            \Tilde{\nu} 
            &= \Tilde{\nu} , \\
        \end{aligned}
    \end{equation}

    According to~\cref{thm_supp:gap3}, the convergent function and its lower bound of $\mathcal{L}_{\mathcal{X} \rightarrow \mathcal{Y}}$ are:  

    \begin{equation}\label{proof:thm:gap3:eq2} 
        \begin{aligned}
            \lim_{N \to \infty} \mathcal{L}_{\mathcal{X} \rightarrow \mathcal{Y}}(c_x; Y) - \log (N)
            &= \hat{\mathcal{J}}(\cos\left(\Delta_{\theta}\right), \| P_B c_x \|;\kappa_y, \Tilde{\nu}) \\
            &\ge \hat{\mathcal{J}}(\|P_A c_y \|, \|P_A c_y \|, \cos \left( \phi\right);\kappa_y, \Tilde{\nu}) .\\                      
        \end{aligned}
    \end{equation}
    
    where equality is attained if and only if there exists a configuration of $(X, Y)$ such that:
    
    \begin{enumerate}[label={(\roman*)},labelindent=10pt,leftmargin=*,start=1]
        \item 
        $c_x \perp \mathbb{C}$.
        \item 
        $c_x = \frac{P_A c_y}{\|P_A c_y \|}$.
    \end{enumerate}  

    This Theorem also holds for $\mathcal{L}_{\mathcal{Y} \rightarrow \mathcal{X}}$:
    
    \begin{equation}\label{proof:thm:gap3:eq3} 
        \begin{aligned}
            \lim_{N \to \infty} \mathcal{L}_{\mathcal{Y} \rightarrow \mathcal{X}}(c_y; X) - \log (N)
            &= \hat{\mathcal{J}}(\cos\left(\Delta_{\theta}\right), \| P_A c_y \|;\kappa_x, \Tilde{\nu}) \\
            &\ge \hat{\mathcal{J}}(\|P_B c_x \|, \|P_B c_x \|, \cos \left( \phi\right);\kappa_x, \Tilde{\nu}) .\\
        \end{aligned}
    \end{equation}
    
    where equality is attained if and only if there exists a configuration of $(X, Y)$ such that:
    
    \begin{enumerate}[label={(\roman*)},labelindent=10pt,leftmargin=*,start=3]
        \item 
        $c_y \perp \mathbb{C}$.
        \item 
        $c_y = \frac{P_B c_x}{\|P_B c_x\|}$.
    \end{enumerate}  

    According to~\cref{lemma:twosideprojection}, for some $\lambda_x, \lambda_y > 0$  such that the projections of $x$ and $y$ are collinear with the other vector:

    \begin{enumerate}[label={(\arabic*)},labelindent=10pt,leftmargin=*,start=1]
        \item  
        The orthogonal projection of $x$ on $\mathbb{B}$ is a scalar multiple of $y$:
        $$
            P_B x = \lambda_x y, \quad \lambda_x \neq 0,
        $$
        \item  
        The orthogonal projection of $y$ on $\mathbb{A}$ is a scalar multiple of $x$:
        $$
            P_A y = \lambda_y x, \quad \lambda_y \neq 0,
        $$
    \end{enumerate}     
    
    if and only if the following condition holds:
    
    \begin{enumerate}[label={(\roman*)},labelindent=10pt,leftmargin=*,start=5]
        \item 
        Either $x \perp \mathbb{C}$ and $y \perp \mathbb{C}$, or $x=\pm y \in \mathbb{C}$.
    \end{enumerate}

    Since $c_x, c_y \notin \mathbb{C}$, there is only one configuration in (v) that satisfies (ii) + (iv), that is $c_x \perp \mathbb{C}$ and $c_y \perp \mathbb{C}$.  In this case,~\cref{lemma:twosideprojection} shows that:  
    
    \begin{equation}\label{proof:thm:gap3:eq4} 
        \begin{aligned}
            & \cos\left(\Delta_{\theta}\right) = \cos\left(\phi\right) \ge  \cos\left(\phi_{\mathrm{min}}\right), \\
            & \|P_A c_y \| = \|P_B c_x \| = \cos\left(\phi\right) ,\\
            & P_Bc_x = \cos\left(\phi\right) c_y, \\
            & P_Ac_y = \cos\left(\phi\right) c_x. \\
        \end{aligned}
    \end{equation}
    
    Combining~\cref{proof:thm:gap3:eq2},~\cref{proof:thm:gap3:eq3} and~\cref{proof:thm:gap3:eq4}, we have: 
    
    \begin{equation}\label{proof:thm:gap3:eq5} 
        \begin{aligned}
            \lim_{N \to \infty} \mathcal{L}_{\mathrm{MCL}}^c
            - 2\log(N) 
            &= \hat{\mathcal{J}}(\cos\left(\Delta_{\theta}\right), \| P_B c_x \|;\kappa_y, \Tilde{\nu})
            + \hat{\mathcal{J}}(\cos\left(\Delta_{\theta}\right), \| P_A c_y \|;\kappa_x, \Tilde{\nu}) \\
            &\ge \hat{\mathcal{J}}(\cos\left(\phi\right), \cos\left(\phi\right); \kappa_y, \Tilde{\nu}) 
            + \hat{\mathcal{J}}(\cos\left(\phi\right), \cos\left(\phi\right); \kappa_x, \Tilde{\nu}).
        \end{aligned}
    \end{equation}
    
    where equality is attained if and only if there exists a configuration of $(X, Y)$ such that:
   
   \begin{enumerate}[label={(A\arabic*)},labelindent=10pt,leftmargin=*,start=6]
        \item 
        $c_x \perp \mathbb{C}$ and $c_y \perp \mathbb{C}$.
    \end{enumerate} 
    
    Since~\cref{lemma:gradofj3} shows that $\hat{\mathcal{J}}(\cos\left(\phi\right), \cos\left(\phi\right); \kappa, \Tilde{\nu})$ is a strictly decreasing function of $\cos\left(\phi\right)$, we have:
    
    \begin{equation}\label{proof:thm:gap3:eq6} 
        \begin{aligned}
            \lim_{N \to \infty} \mathcal{L}_{\mathrm{MCL}}^c
            &- 2\log(N) 
            = \hat{\mathcal{J}}(\cos\left(\Delta_{\theta}\right), \| P_B c_x \|;\kappa_y, \Tilde{\nu})
            + \hat{\mathcal{J}}(\cos\left(\Delta_{\theta}\right), \| P_A c_y \|;\kappa_x, \Tilde{\nu}) \\
            &\ge \hat{\mathcal{J}}(\cos\left(\phi\right), \cos\left(\phi\right); \kappa_y, \Tilde{\nu}) 
            + \hat{\mathcal{J}}(\cos\left(\phi\right), \cos\left(\phi\right); \kappa_x, \Tilde{\nu})\\
            &\ge \hat{\mathcal{J}}(\cos\left(\phi_{\mathrm{min}}\right), \cos\left(\phi_{\mathrm{min}}\right); \kappa_y, \Tilde{\nu}) 
            + \hat{\mathcal{J}}(\cos\left(\phi_{\mathrm{min}}\right), \cos\left(\phi_{\mathrm{min}}\right); \kappa_x, \Tilde{\nu}),
        \end{aligned}
    \end{equation}
    
    where equality is attained if and only if there exists a configuration of $(X, Y)$ such that:
   
   \begin{enumerate}[label={(A\arabic*)},labelindent=10pt,leftmargin=*,start=7]
        \item 
        $\Delta_\theta=\cos ^{-1}\left(c_x \cdot c_y\right) = \phi_{\mathrm{min}}$.
    \end{enumerate} 

    Replacing $\hat{\mathcal{J}}(w, t ; \kappa, \nu)$ with $ \Tilde{\mathcal{J}}(w, w, t ; \kappa, \nu)$, we conclude that:
    
    \begin{equation}\label{proof:thm:gap3:eq7}
        \begin{aligned}
            &\lim_{N \to \infty} \mathcal{L}_{\mathrm{MCL}}^c
            - 2\log(N) 
            \\&
            = \Tilde{\mathcal{J}}(\cos\left(\Delta_{\theta}\right), \cos\left(\Delta_{\theta}\right),\| P_B c_x \|;\kappa_y, \Tilde{\nu}) 
            + \Tilde{\mathcal{J}}(\cos\left(\Delta_{\theta}\right), \cos\left(\Delta_{\theta}\right),\| P_A c_y \|;\kappa_x, \Tilde{\nu}) 
            \\&
            \ge \Tilde{\mathcal{J}}(\cos\left(\Delta_{\theta}\right), \cos\left(\Delta_{\theta}\right),\cos\left(\Delta_{\theta}\right);\kappa_y, \Tilde{\nu}) 
            + \Tilde{\mathcal{J}}(\cos\left(\Delta_{\theta}\right), \cos\left(\Delta_{\theta}\right),\cos\left(\Delta_{\theta}\right);\kappa_x, \Tilde{\nu}) \\
            &\ge \Tilde{\mathcal{J}}(\cos\left(\phi_{\mathrm{min}}\right), \cos\left(\phi_{\mathrm{min}}\right), \cos\left(\phi_{\mathrm{min}}\right); \kappa_y, \Tilde{\nu}) 
            + \Tilde{\mathcal{J}}(\cos\left(\phi_{\mathrm{min}}\right), \cos\left(\phi_{\mathrm{min}}\right), \cos\left(\phi_{\mathrm{min}}\right); \kappa_x, \Tilde{\nu}),
        \end{aligned}
    \end{equation}
    
    where equality is attained if and only if there exists a configuration of $(X, Y)$ such that:
    
    \begin{enumerate}[label={(A\arabic*)},labelindent=10pt,leftmargin=*,start=6]
        \item 
        $c_x \perp \mathbb{C}$ and $c_y \perp \mathbb{C}$.
        \item 
        $\Delta_\theta=\cos ^{-1}\left(c_x \cdot c_y\right) = \phi_{\mathrm{min}}$.
    \end{enumerate}  
    
\end{proof}

% \newpage
\subsubsection{Auxiliary Theorems Part 3}

In this subsection, we provide details and proofs of the auxiliary theorems (\cref{thm_supp:transform3} and~\cref{thm_supp:gap3}) that support the proof of~\cref{thm:gap3}.

\begin{supplem}\label{thm_supp:transform3}
   Let $(X, Y)$ be an $N$-pair configuration, where $X = \left(x_1, \ldots, x_N\right) \in (\mathbb{S}_X \setminus \mathbb{C})^N$ are $iid$ samples from $\mu_x=\mathrm{vMF}(c_x, \kappa_x)$, and $Y = \left(y_1, \ldots, y_N\right) \in (\mathbb{S}_Y  \setminus \mathbb{C} )^N$ are $iid$ samples from $\mu_y=\mathrm{vMF}(c_y, \kappa_y)$. Let $\Tilde{\nu} = (h-1)/2 - 1$ and $\kappa_y > 0$. $\forall x_i \in X$, denote $w_i = x_i \cdot y_i$ and $w_{x_i, c_y} = x_i \cdot c_y$. It holds that:
    
    \begin{equation}\label{thm_supp:transform3:eq1}
        \begin{aligned}
            \lim_{N \to \infty} \mathcal{L}_{\mathcal{X} \rightarrow \mathcal{Y}}(x_i; Y) &- \log (N)
            = \lim_{N \to \infty} -\log \frac{\exp \left(x_i \cdot y_i / \tau\right)}{\sum_{j=1}^N \exp \left(x_i \cdot y_j / \tau\right)} - \log(N) \\
            &= -\frac{w_i}{\tau} + \log\left(\frac{I_{\Tilde{\nu}}\left( \Tilde{M}_{\kappa_y}\left( w_{x_i, c_y}, \| P_B x_i \|\right)  \right)}{  \Tilde{M}_{\kappa_y}\left( w_{x_i, c_y}, \| P_B x_i \|\right)^{\Tilde{\nu}}} \right) - \log\left(\frac{I_{\Tilde{\nu}}\left(\kappa_y\right)}{ \kappa_y^{\Tilde{\nu}}} \right) \\
            &= \Tilde{\mathcal{J}}\left(w_i, w_{x_i, c_y}, \| P_B x_i \| ; \kappa, \Tilde{\nu}\right) ,
        \end{aligned}
    \end{equation}

    where $\forall\kappa, \tau>0$, $\Tilde{M}_{\kappa}(\cdot, \cdot): [-1, 1] \times [0, 1] \rightarrow \mathbb{R}^{+}_0$ is defined as:
    
    \begin{equation}\label{thm_supp:transform3:eq2}
        \begin{aligned}
            \Tilde{M}_{\kappa}\left(w, t\right) 
            = \sqrt{\kappa^2+\frac{2 \kappa w}{\tau}+\frac{t^2}{\tau^2}} ,
        \end{aligned}        
    \end{equation} 

    and $I_{\nu}$ is the modified Bessel function of the first kind of order $\nu$, which is defined as:
    
    \begin{equation} 
        \begin{aligned}
            I_{\nu}\left(m\right)=\sum_{k=0}^{\infty} \frac{1}{k!\Gamma(\nu+k+1)}\left(\frac{m}{2}\right)^{2 k+\nu}.
        \end{aligned}
    \end{equation}
     
   Suppose there exists an index $i=c$ such that $x_c = c_x$, $y_c = c_y$. Denote $w_c = c_x \cdot c_y$. It holds that:
    
    \begin{equation}\label{thm_supp:transform3:eq3}
        \begin{aligned}
            \lim_{N \to \infty} \mathcal{L}_{\mathcal{X} \rightarrow \mathcal{Y}}(c_x; Y) - \log (N)
            &= -\frac{w_c}{\tau} + \log\left(\frac{I_{\Tilde{\nu}}\left( \Tilde{M}_{\kappa_y}\left( w_c, \| P_B c_x \| \right)  \right)}{  \Tilde{M}_{\kappa_y}\left( w_c, \| P_B c_x \| \right)^{\Tilde{\nu}}} \right) - \log\left(\frac{I_{\Tilde{\nu}}\left(\kappa_y\right)}{ \kappa_y^{\Tilde{\nu}}} \right) \\
            &= \hat{\mathcal{J}}(w_c, \| P_B c_x \|;\kappa_y, \Tilde{\nu}) = \Tilde{\mathcal{J}}\left(w_c, w_c, \| P_B x_i \| ; \kappa, \Tilde{\nu}\right).
        \end{aligned}
    \end{equation}
    
\end{supplem}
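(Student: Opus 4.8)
The plan is to reproduce the argument of~\cref{thm_supp:transform2} with two changes forced by the subspace constraint: the expectation over $Y$ is now the moment-generating function of a $\mathrm{vMF}$ distribution supported on the $(h-1)$-dimensional hyperplane $\mathbb{B}$ rather than on $\mathbb{S}^{h-1}$, and only the component $P_B x_i$ of $x_i$ lying inside $\mathbb{B}$ contributes to that expectation. Consequently the radial parameter that appears is $\Tilde{M}_{\kappa_y}$ with second argument $\|P_B x_i\|$ (not $1$), and the Bessel order is $\Tilde{\nu}=(h-1)/2-1$ (not $h/2-1$).

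First I would perform the same algebraic split as in~\eqref{proof:thm_supp:transform2:simp:eq1}, writing
\[
\mathcal{L}_{\mathcal{X}\rightarrow\mathcal{Y}}(x_i;Y)
= -\frac{x_i\cdot y_i}{\tau} + \log\Big(\tfrac{1}{N}\textstyle\sum_{j=1}^N \exp(x_i\cdot y_j/\tau)\Big) + \log(N),
\]
so that $\mathcal{L}_{\mathcal{X}\rightarrow\mathcal{Y}}(x_i;Y)-\log(N)$ equals $-w_i/\tau$ plus a log-sum-exp term. Then I would invoke the uniform law of large numbers of~\cref{lemma:uniconverge2}, now applied on the compact manifold $\mathbb{S}_Y\cong\mathbb{S}^{h-2}$ rather than on $\mathbb{S}^{h-1}$; the covering/Lipschitz argument there uses only compactness of the index set and the bound $e^{-1/\tau}\le\exp(\langle x,y\rangle/\tau)\le e^{1/\tau}$ for unit vectors, so it transfers verbatim. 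This yields that the log-sum-exp term converges almost surely to $\log\mathbb{E}_{y\sim\mu_y}[\exp(x_i\cdot y/\tau)]$.

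The one genuinely new step is the evaluation of this expectation. Since $\operatorname{supp}(\mu_y)\subseteq\mathbb{S}_Y\subseteq\mathbb{B}$, for every $y$ in the support we have $x_i\cdot y=(P_B x_i)\cdot y$, hence $\mathbb{E}_{y\sim\mu_y}[\exp(x_i\cdot y/\tau)]=\mathbb{E}_{y\sim\mu_y}[\exp((P_B x_i/\tau)\cdot y)]$, which is precisely the $\mathrm{vMF}(c_y,\kappa_y)$ moment-generating function evaluated at $P_B x_i/\tau\in\mathbb{B}$. Identifying $\mathbb{B}$ with $\mathbb{R}^{h-1}$ via an orthonormal basis containing $c_y$, the MGF formula used in~\eqref{proof:thm_supp:transform2:mgf:eq1} applies with order $\Tilde{\nu}$ and gives the value $\frac{I_{\Tilde{\nu}}(\kappa^{\prime}_y)}{I_{\Tilde{\nu}}(\kappa_y)}(\kappa_y/\kappa^{\prime}_y)^{\Tilde{\nu}}$ with $\kappa^{\prime}_y=\|\kappa_y c_y+P_B x_i/\tau\|$. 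I would then simplify $\kappa^{\prime}_y$: expanding the norm, $(\kappa^{\prime}_y)^2=\kappa_y^2+\tfrac{2\kappa_y}{\tau}(c_y\cdot P_B x_i)+\tfrac{\|P_B x_i\|^2}{\tau^2}$, and since $c_y\in\mathbb{B}$ we have $c_y\cdot P_B x_i=c_y\cdot x_i=w_{x_i,c_y}$, so $\kappa^{\prime}_y=\Tilde{M}_{\kappa_y}(w_{x_i,c_y},\|P_B x_i\|)$ by the definition of $\Tilde{M}$. Collecting the pieces yields exactly $-w_i/\tau+\log(I_{\Tilde{\nu}}(\kappa^{\prime}_y)/(\kappa^{\prime}_y)^{\Tilde{\nu}})-\log(I_{\Tilde{\nu}}(\kappa_y)/\kappa_y^{\Tilde{\nu}})=\Tilde{\mathcal{J}}(w_i,w_{x_i,c_y},\|P_B x_i\|;\kappa_y,\Tilde{\nu})$, the first claim. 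Specializing to the center index $i=c$ with $x_c=c_x$, $y_c=c_y$ makes $w_i=w_{x_i,c_y}=w_c=c_x\cdot c_y$, and the formula collapses to $\hat{\mathcal{J}}(w_c,\|P_B c_x\|;\kappa_y,\Tilde{\nu})=\Tilde{\mathcal{J}}(w_c,w_c,\|P_B c_x\|;\kappa_y,\Tilde{\nu})$, which is~\eqref{thm_supp:transform3:eq3}.

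The main obstacle here is conceptual bookkeeping rather than any hard estimate: one must be careful to track that it is $P_B x_i$, not $x_i$, that enters the $\mathrm{vMF}$ MGF (so the second slot of $\Tilde{M}$ is the genuinely sub-unit quantity $\|P_B x_i\|$), and that the correct Bessel order is $\Tilde{\nu}=(h-1)/2-1$, reflecting that the support hyperplane $\mathbb{B}$ has ambient dimension $h-1$. The only place that requires even a routine re-derivation is adapting~\cref{lemma:uniconverge2} from $\mathbb{S}^{h-1}$ to $\mathbb{S}_Y$; since its proof is purely metric-geometric and compactness-based, it goes through without change.
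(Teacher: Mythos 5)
Your proposal is correct and follows essentially the same route as the paper: the same log-sum split, the same uniform-convergence lemma (Lemma 2, which applies directly since $\mathbb{S}_Y\subset\mathbb{S}^{h-1}$), and the same vMF moment-generating function with argument $P_B x_i/\tau$ and Bessel order $\Tilde{\nu}=(h-1)/2-1$. Your only deviation is simplifying $\kappa'_y$ by directly expanding $\|\kappa_y c_y + P_B x_i/\tau\|^2$ and using $c_y\cdot P_B x_i = c_y\cdot x_i$, whereas the paper reaches the same expression via an explicit coordinate parametrization of $\hat{x}_i=P_B x_i$; this is a cosmetic difference, not a different argument.
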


\begin{proof}\label{proof:thm_supp:transform3}
    
    \textbf{Step 1}: We start the proof by find the convergent function of $\mathcal{L}_{\mathcal{X} \rightarrow \mathcal{Y}}(x_i; Y)$ as $N \to \infty$. Same with~\cref{proof:thm_supp:transform1:simp:eq1} of~\cref{thm_supp:transform1}, $\forall x_i \in X$, the $\mathcal{X}$-to-$\mathcal{Y}$ alignment of $x_i$ can be rewritten as:
    
    \begin{equation}\label{proof:thm_supp:transform3:simp:eq1}
        \begin{aligned}
            \mathcal{L}_{\mathcal{X} \rightarrow \mathcal{Y}}(x_i; Y) 
            & =   - \log \frac{\exp \left(x_i \cdot y_i / \tau\right)}{\sum_{j} \exp \left(x_i \cdot y_j / \tau\right)} \\ 
            &= - \frac{x_i \cdot y_i}{\tau} +  \log \left(  N \frac{1}{N} \sum_{j=1}^N \exp \left(\frac{x_i \cdot y_j}{\tau}\right)\right) \\
            &= - \frac{x_i \cdot y_i}{\tau} +  \log \left(  \frac{1}{N} \sum_{j=1}^N \exp \left(\frac{x_i \cdot y_j}{\tau}\right)\right) + \log \left( N \right) .\\
        \end{aligned}
    \end{equation}

    \cref{lemma:uniconverge2} shows that:

    \begin{equation}\label{proof:thm_supp:transform3:simp:eq2}
        \begin{aligned}
            \lim_{N \to \infty} \log \left( \frac{1}{N} \sum_{j=1}^N \exp\left( \frac{x_i \cdot y_j}{\tau} \right)  \right) 
            = \log \left( \mathbb{E}_{y \sim \mu_y}\left[ \exp\left( \frac{x_i \cdot y}{\tau} \right) \right] \right) .
        \end{aligned}
    \end{equation}

    According to the moment-generating function of the vMF distribution:
    
    \begin{equation}\label{proof:thm_supp:transform3:mgf:eq1}
        \begin{aligned}
            \mathbb{E}_{y \sim \mu_y}[\exp \left(\frac{x_i \cdot y}{\tau}\right)]
            &= \mathbb{E}_{y \sim \mu_y}\left[\exp \left(\frac{ x_i}{\tau} \cdot y \right)\right] 
            = \frac{I_{\Tilde{\nu}}\left(\Tilde{\kappa}^{\prime}_y\right)}{I_{\Tilde{\nu}}\left(\kappa_y\right)}\left(\frac{\kappa_y}{\Tilde{\kappa}^{\prime}_y}\right)^{\Tilde{\nu}}, \\
            \text{where} \hspace{1.5mm}
            \Tilde{\kappa}^{\prime}_y &= \|\kappa_y c_y + \frac{P_B x_i}{\tau}\|_2.
        \end{aligned}
    \end{equation}

    Then we have:
    
    \begin{equation}\label{proof:thm_supp:transform3:simp:eq3}
        \begin{aligned}
            \lim_{N \to \infty} \mathcal{L}_{\mathcal{X} \rightarrow \mathcal{Y}}(x_i; Y) - \log (N)
            & = -\frac{x_i \cdot y_i}{\tau}
            + \log\left(\frac{I_{\Tilde{\nu}}\left(\Tilde{\kappa}^{\prime}_y\right)}{  \Tilde{\kappa}_y^{^{\prime} \Tilde{\nu}}} \right) - \log\left(\frac{I_{\Tilde{\nu}}\left(\kappa_y\right)}{ \kappa_y^{\Tilde{\nu}}} \right). \\
        \end{aligned}
    \end{equation}

    \textbf{Step 2}: we will transform $\mathcal{L}_{\mathcal{X} \rightarrow \mathcal{Y}}$ from a function of vectors to a function of angles between vectors. 
    
    Without loss of generality, we assume the coordinate of $c_y$ as     
    
    \begin{equation}\label{proof:thm_supp:transform3:angle:eq1}
        \begin{aligned}
            c_y = (1, 0, \cdots, 0),
        \end{aligned}
    \end{equation}

    the hyperplane $\mathbb{B}$ as:   
    
    \begin{equation}\label{proof:thm_supp:transform3:plane:eq1}
        \begin{aligned}
            \mathbb{B}=\left\{x \in \mathbb{R}^h: n_A \cdot x=0\right\},
            \enspace \text{ where} \hspace{1.5mm}
            n_B = (0, 0, \cdots, 1).
        \end{aligned}
    \end{equation}

    Let $\hat{x}_i = P_B x_i$, then we have:  
    
    \begin{equation}\label{proof:thm_supp:transform3:plane:eq2}
        \begin{aligned}
            \cos\left(\theta_{x_i, c_y} \right) = x_i \cdot c_y= P_B x_i \cdot c_y = \hat{x}_i \cdot c_y.
        \end{aligned}
    \end{equation}

    Define:     
    
    \begin{equation}\label{proof:thm_supp:transform3:plane:eq3}
        \begin{aligned}
            \cos \left( \hat{\theta}_{x_i, c_y} \right) = \frac{\hat{x}_i}{\|\hat{x}_i\|} \cdot c_y = \frac{P_B x_i}{\|P_B x_i\|} \cdot c_y ,
        \end{aligned}
    \end{equation}

    then we have:    
    
    \begin{equation}\label{proof:thm_supp:transform3:plane:eq4}
        \begin{aligned}
            \|P_B x_i\| \cos \left( \hat{\theta}_{x_i, c_y} \right) = P_B x_i \cdot c_y = \cos \left( \theta_{x_i, c_y} \right).
        \end{aligned}
    \end{equation}   

    And $\hat{x}_i$ can be represented as: 
     
    \begin{equation}\label{proof:thm_supp:transform3:angle:eq2}
        \begin{aligned}
        \hat{x}_i 
        &= \|P_B x_i\| \left(\cos \left( \hat{\theta}_{x_i, c_y} \right), u \sin \left( \hat{\theta}_{x_i, c_y} \right) \right)\\
        &= \|P_B x_i\| \left(\cos \left( \hat{\theta}_{x_i, c_y} \right), u_2 \sin \left( \hat{\theta}_{x_i, c_y} \right) , u_3 \sin \left( \hat{\theta}_{x_i, c_y} \right) , \ldots, u_{h-1} \sin \left( \hat{\theta}_{x_i, c_y} \right), 0 \right), \\
        \end{aligned}
    \end{equation}
    
    where $u = \left(0, u_2, u_3, \ldots, u_{h-1}, 0\right) \cong \mathbb{S}^{h-3} \in \mathbb{S}^{h-1}$ is a unit vector orthogonal to the first and the last axes with:
    
    \begin{equation}\label{proof:thm_supp:transform3:angle:eq3}
        \begin{aligned}
            \|u\| = 0 + u_2^2+u_3^2+\cdots+u_{h-1}^2 + 0 = 1.
        \end{aligned}
    \end{equation}
    
    According to~\cref{proof:thm_supp:transform3:angle:eq1},~\cref{proof:thm_supp:transform3:angle:eq2} and~\cref{proof:thm_supp:transform3:angle:eq3}, $\Tilde{\kappa}^{\prime}_y$ (in~\cref{proof:thm_supp:transform3:mgf:eq1}) can re-rewritten as:
    
    \begin{equation}\label{proof:thm_supp:transform3:func:eq1}
        \begin{aligned}
            \Tilde{\kappa}^{\prime}_y
            &= \left\|\kappa_y c_y+\frac{x_i}{\tau}\right\|_2 \\
            &= \sqrt{\left(\kappa_y + \frac{\|P_B x_i\| \cos\left(\hat{\theta}_{x_i, c_y} \right)}{\tau}\right)^2+\sum_{i=2}^{h-1}\left(\frac{\|P_B x_i\|\sin \left(\hat{\theta}_{x_i, c_y} \right) u_i}{\tau}\right)^2} \\
            &= \sqrt{\left(\kappa_y + \frac{\|P_B x_i\|\cos \left(\hat{\theta}_{x_i, c_y} \right)}{\tau}\right)^2+\frac{\|P_B x_i\|^2\sin ^2\left(\hat{\theta}_{x_i, c_y}\right)}{\tau^2}} \\
            &= \sqrt{\kappa_y^2+\frac{2 \kappa_y \|P_B x_i\| \cos\left(\hat{\theta}_{x_i, c_y}\right)}{\tau}+\frac{\|P_B x_i\|^2}{\tau^2}} \\
            &= \sqrt{\kappa_y^2+\frac{2 \kappa_y \cos\left( \theta_{x_i, c_y} \right)}{\tau}+\frac{\|P_B x_i\|^2}{\tau^2}} \\
            &= \Tilde{M}_{\kappa_y}\left( \cos \left( \theta_{x_i, c_y} \right), \|P_B x_i\|\right). \\
        \end{aligned}
    \end{equation}

    Consider that $w_i = x_i \cdot y_i$, $w_{x_i, c_y} = \cos\left(\theta_{x_i, c_y} \right) = x_i \cdot c_y$, putting~\cref{proof:thm_supp:transform3:simp:eq3} and~\cref{proof:thm_supp:transform3:func:eq1} together, we have: 
    
    \begin{equation}\label{proof:thm_supp:transform3:obj:eq1}
        \begin{aligned}
            \lim_{N \to \infty} \mathcal{L}_{\mathcal{X} \rightarrow \mathcal{Y}}(x_i; Y) &- \log (N)
            = -\frac{x_i \cdot y_i}{\tau}
            + \log\left(\frac{I_{\Tilde{\nu}}\left(\Tilde{\kappa}^{\prime}_y\right)}{  \Tilde{\kappa}_y^{^{\prime}\Tilde{\nu}}} \right) - \log\left(\frac{I_{\Tilde{\nu}}\left(\kappa_y\right)}{ \kappa_y^{\Tilde{\nu}}} \right) \\ 
            &= -\frac{w_i}{\tau} + \log\left(\frac{I_{\Tilde{\nu}}\left( \Tilde{M}_{\kappa_y}\left( w_{x_i, c_y}, \| P_B x_i \|\right)  \right)}{  \Tilde{M}_{\kappa_y}\left( w_{x_i, c_y}, \| P_B x_i \|\right)^{\Tilde{\nu}}} \right) - \log\left(\frac{I_{\Tilde{\nu}}\left(\kappa_y\right)}{ \kappa_y^{\Tilde{\nu}}} \right) \\
            &= \Tilde{\mathcal{J}}\left(w_i, w_{x_i, c_y}, \| P_B x_i \| ; \kappa, \Tilde{\nu}\right).
        \end{aligned}
    \end{equation}    
    
    When there exists a data pair $i=c$ such that $x_c = c_x$, $y_c = c_y$, $w_i = w_{x_i, c_y} = w_c$, then we have:
    
    \begin{equation}\label{proof:thm_supp:transform3:obj:eq2}
        \begin{aligned}
            \lim_{N \to \infty} \mathcal{L}_{\mathcal{X} \rightarrow \mathcal{Y}}(c_x; Y) - \log (N)
            &= -\frac{w_c}{\tau} + \log\left(\frac{I_{\Tilde{\nu}}\left( \Tilde{M}_{\kappa_y}\left( w_c, \| P_B c_x \| \right)  \right)}{  \Tilde{M}_{\kappa_y}\left( w_c, \| P_B c_x \| \right)^{\Tilde{\nu}}} \right) - \log\left(\frac{I_{\Tilde{\nu}}\left(\kappa_y\right)}{ \kappa_y^{\Tilde{\nu}}} \right) \\
            &= \hat{\mathcal{J}}(w_c, \| P_B c_x \|;\kappa_y, \Tilde{\nu}) = \Tilde{\mathcal{J}}\left(w_c, w_c, \| P_B x_i \| ; \kappa, \Tilde{\nu}\right).
        \end{aligned}
    \end{equation}
        
\end{proof}

\begin{supplem} \label{thm_supp:gap3} 
    Let $(X, Y)$ be an $N$-pair configuration, where $X = \left(x_1, \ldots, x_N\right) \in (\mathbb{S}_X \setminus \mathbb{C})^N$ are $iid$ samples from $\mu_x=\mathrm{vMF}(c_x, \kappa_x)$, and $Y = \left(y_1, \ldots, y_N\right) \in (\mathbb{S}_Y  \setminus \mathbb{C} )^N$ are $iid$ samples from $\mu_y=\mathrm{vMF}(c_y, \kappa_y)$. Let $\Tilde{\nu} = (h-1)/2 - 1$. Suppose there exists an index $i=c$ such that $x_c = c_x$, $y_c = c_y$. Denote $\Delta_{\theta} = \cos^{-1}(c_x \cdot c_y)$ and assume that $c_x, c_y \notin \mathbb{C}$ with $c_x \cdot c_y > 0$. For any fixed $\kappa_x, \kappa_y > 0$ and $\forall \phi \in [0, \frac{\pi}{2}]$, it holds that:
    
    \begin{equation} \label{thm_supp:gap3:eq1} 
        \begin{aligned}
            \lim_{N \to \infty} \mathcal{L}_{\mathcal{X} \rightarrow \mathcal{Y}}(c_x; Y) - \log (N)
            &= \hat{\mathcal{J}}(w_c, \| P_B c_x \|;\kappa_y, \Tilde{\nu}) 
            \ge \hat{\mathcal{J}}(\|P_A c_y \|, \cos\left(\phi\right); \kappa_y, \Tilde{\nu}) ,
        \end{aligned}
    \end{equation}

    where equality is attained if and only if there exists a configuration of $(X, Y)$ such that:
    
    \begin{enumerate}[label={(B\arabic*)},labelindent=10pt,leftmargin=*,start=4]
        \item 
        $c_x \perp \mathbb{C}$.
        \item 
        $c_x = \frac{P_A c_y}{\|P_A c_y \|}$.
    \end{enumerate}  
   
\end{supplem}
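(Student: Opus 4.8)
The plan is to combine the exact limit from \cref{thm_supp:transform3} with a two-step monotonicity argument, one step powered by a Cauchy--Schwarz estimate and the other by an elementary computation of $\|P_B c_x\|$. First I would apply \cref{thm_supp:transform3} with the center pair $i=c$ to write, setting $w_c:=c_x\cdot c_y$,
\begin{equation*}
\lim_{N\to\infty}\mathcal{L}_{\mathcal{X}\rightarrow\mathcal{Y}}(c_x;Y)-\log N=\hat{\mathcal{J}}\bigl(w_c,\ \|P_B c_x\|;\ \kappa_y,\Tilde{\nu}\bigr).
\end{equation*}
This reduces the claim to a pointwise inequality for $\hat{\mathcal{J}}$, so the proof splits into analytic facts about $\hat{\mathcal{J}}$ and geometric facts about $c_x$.

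The two analytic inputs are: (i) for fixed $t\in[0,1]$, $w\mapsto\hat{\mathcal{J}}(w,t)$ is strictly decreasing on $(0,1]$; and (ii) for fixed $w\in(0,1]$, $t\mapsto\hat{\mathcal{J}}(w,t)$ is strictly increasing on $(0,1]$. Both are contained in the computations of \cref{lemma:gradofj1}, \cref{lemma:gradofj2} and \cref{lemma:gradofj3}: writing $r(m)=I_{\Tilde{\nu}+1}(m)/I_{\Tilde{\nu}}(m)\in(0,1)$ and $\Tilde{M}=\Tilde{M}_{\kappa_y}(w,t)$, one has $\partial_w\hat{\mathcal{J}}=\tfrac1\tau\bigl(\kappa_y r(\Tilde{M})/\Tilde{M}-1\bigr)<0$ since $\Tilde{M}\ge\kappa_y$ whenever $w\ge 0$, and $\partial_t\hat{\mathcal{J}}=r(\Tilde{M})\,t/(\tau^2\Tilde{M})>0$ for $t>0$.

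For the geometry, since $c_x\in\mathbb{A}$ and $P_A$ is the orthogonal projector onto $\mathbb{A}$ we have $w_c=c_x\cdot c_y=c_x\cdot P_A c_y$, so Cauchy--Schwarz gives $0<w_c\le\|P_A c_y\|$ with equality iff $c_x=P_A c_y/\|P_A c_y\|$, i.e.\ condition (B5). For $\|P_B c_x\|$ I would use the orthogonal splitting $\mathbb{R}^h=\mathbb{C}\oplus\operatorname{span}\{e_A,e_B\}$ with $\langle e_A,e_B\rangle=\cos\phi$, so that $\mathbb{A}=\mathbb{C}\oplus\operatorname{span}\{e_A\}$ and $\mathbb{B}=\mathbb{C}\oplus\operatorname{span}\{e_B\}$; writing $c_x=u+\alpha e_A$ with $u\in\mathbb{C}$, $\alpha=c_x\cdot e_A$, $\|u\|^2+\alpha^2=1$, one checks $P_B e_A=\cos\phi\,e_B$, hence $P_B c_x=u+\alpha\cos\phi\,e_B$ and
\begin{equation*}
\|P_B c_x\|^2=\|u\|^2+\alpha^2\cos^2\phi=1-\alpha^2\sin^2\phi\ \ge\ \cos^2\phi ,
\end{equation*}
with equality iff $\alpha^2=1$, i.e.\ $c_x\perp\mathbb{C}$ — condition (B4).

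Finally I would chain the estimates: by (i) and $w_c\le\|P_A c_y\|$, $\hat{\mathcal{J}}(w_c,\|P_B c_x\|)\ge\hat{\mathcal{J}}(\|P_A c_y\|,\|P_B c_x\|)$; and by (ii) and $\|P_B c_x\|\ge\cos\phi$, $\hat{\mathcal{J}}(\|P_A c_y\|,\|P_B c_x\|)\ge\hat{\mathcal{J}}(\|P_A c_y\|,\cos\phi)$, which is the stated bound. Equality in the chain forces equality in each step, giving (B5) and (B4); conversely (B5) yields $w_c=\|P_A c_y\|$ (the sign being fixed by $c_x\cdot c_y>0$) and (B4) yields $\|P_B c_x\|=\|P_B e_A\|=\cos\phi$, so the bound is attained exactly under (B4)--(B5). (For a generic fixed $c_y$ the bound need not be attained, since (B4) and (B5) together also force $c_y\perp\mathbb{C}$; this is the tension that \cref{lemma:twosideprojection} exploits when assembling \cref{thm:gap3}.) I expect the main obstacle to be the subspace bookkeeping — in particular verifying $P_B e_A=\cos\phi\,e_B$ and that $n_A\notin\mathbb{B}$ for $\phi<\tfrac\pi2$ so that $P_A c_y\neq 0$ — together with confirming that the monotonicity intervals in (i)--(ii) cover $[w_c,\|P_A c_y\|]$ and $[\cos\phi,\|P_B c_x\|]\subseteq(0,1]$; the Bessel-function monotonicities themselves are routine consequences of $I'_\nu/I_\nu=\nu/m+I_{\nu+1}/I_\nu$ and $I_{\nu+1}/I_\nu\in(0,1)$.
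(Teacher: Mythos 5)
Your proposal is correct and takes essentially the same route as the paper's proof: both start from the exact limit in \cref{thm_supp:transform3} and then chain the two monotonicity facts about $\hat{\mathcal{J}}$ (decreasing in $w$ for $w\ge 0$, as in \cref{lemma:gradofj1}; increasing in $t$, as in \cref{lemma:gradofj2}) with the geometric bounds $0<w_c\le\|P_A c_y\|$ and $\cos\left(\phi\right)\le\|P_B c_x\|\le 1$, whose equality cases yield (B5) and (B4). The only cosmetic differences are that you apply the two monotone steps in the opposite order and obtain $w_c\le\|P_A c_y\|$ directly via Cauchy--Schwarz (and $\|P_B c_x\|\ge\cos\left(\phi\right)$ via the $\mathbb{C}\oplus\mathbb{C}^{\perp}$ decomposition) instead of citing \cref{lemma:onesideprojection}, which changes nothing substantive.
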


\begin{proof}\label{proof:thm_supp:gap3}

    \textbf{Step 1}: Similarly to the proof of~\cref{thm_supp:gap2} in~\cref{proof:thm_supp:gap2}, we start the proof by finding the convergent function of $\mathcal{L}_{\mathcal{X} \rightarrow \mathcal{Y}}(c_x; Y)$ as $N \to \infty$. Denote $w_c = c_x \cdot c_y$. $\forall \kappa_y > 0$, as proven in~\cref{thm_supp:transform3}:
    
    \begin{equation}\label{proof:thm_supp:gap3:obj:eq1}
        \begin{aligned}
            \lim_{N \to \infty} \mathcal{L}_{\mathcal{X} \rightarrow \mathcal{Y}}(c_x; Y) - \log (N)
            &= \lim_{N \to \infty} -\log \frac{\exp \left(c_x \cdot c_y / \tau\right)}{\sum_{j=1}^N \exp \left(c_x \cdot y_j / \tau\right)} - \log(N) \\
            &= -\frac{w_c}{\tau} + \log\left(\frac{I_{\Tilde{\nu}}\left( \Tilde{M}_{\kappa_y}\left( w_c, \| P_B c_x \| \right)  \right)}{  \Tilde{M}_{\kappa_y}\left( w_c, \| P_B c_x \| \right)^{\Tilde{\nu}}} \right) - \log\left(\frac{I_{\Tilde{\nu}}\left(\kappa_y\right)}{ \kappa_y^{\Tilde{\nu}}} \right) \\
            &= \hat{\mathcal{J}}(w_c, \| P_B c_x \|;\kappa_y, \Tilde{\nu}),
        \end{aligned}
    \end{equation}

    where $\forall\kappa, \tau>0$, $\hat{\mathcal{J}}(\cdot, \cdot;\kappa, \Tilde{\nu})$ is a function on $[-1, 1] \times [0, 1]$ and $\Tilde{M}_{\kappa}(\cdot, \cdot): [-1, 1] \times [0, 1] \rightarrow \mathbb{R}^{+}_{0}$ is defined as:
    
    \begin{equation}\label{proof:thm_supp:gap3:obj:eq2}
        \begin{aligned}
            \Tilde{M}_{\kappa}\left(w, t\right) 
            = \sqrt{\kappa^2+\frac{2 \kappa w}{\tau}+\frac{t^2}{\tau^2}} .
        \end{aligned}        
    \end{equation} 

    and $I_{\nu}$ is the modified Bessel function of the first kind of order $\nu$, which is defined as:
    
    \begin{equation} 
        \begin{aligned}
            I_{\nu}\left(m\right)=\sum_{k=0}^{\infty} \frac{1}{k!\Gamma(\nu+k+1)}\left(\frac{m}{2}\right)^{2 k+\nu}.
        \end{aligned}
    \end{equation}
   
    \textbf{Step 2}: Next, we find the minimal value and the optimal condition of convergent function.

    $\forall c_x \in \mathbb{S}_X, \phi \in [0, \frac{\pi}{2}]$ it holds that:
    
    \begin{equation}\label{proof:thm_supp:gap3:range:eq1}
        \begin{aligned}
            0 \leq \cos \left( \phi\right) \leq \| P_B c_x \| \leq 1. 
        \end{aligned}        
    \end{equation} 

    As shown in~\cref{lemma:gradofj2}, $\forall w_c \in [0, 1]$, $\hat{\mathcal{J}}(w=w_c, t;\kappa_y, \Tilde{\nu})$ is a strictly increasing function of $t$ on $(0, 1]$. Therefore, it holds that: 
    
    \begin{equation}\label{proof:thm_supp:gap3:range:eq4}
        \begin{aligned}
            \hat{\mathcal{J}}(w_c, \cos \left( \phi\right);\kappa_y, \Tilde{\nu})
            \leq \hat{\mathcal{J}}(w_c, \| P_B c_x \|;\kappa_y, \Tilde{\nu}) \leq \hat{\mathcal{J}}(w_c, 1;\kappa_y, \Tilde{\nu}).
        \end{aligned}
    \end{equation}
    
    where equality in the above chain holds if and only if the following conditions are satisfied:
    
    \begin{enumerate}[label={(\roman*)},labelindent=10pt,leftmargin=*,start=1]
        \item 
        The first inequality becomes equality: $c_x \perp \mathbb{C}$.
        \item 
        The second inequality becomes equality: $c_x \in \mathbb{C}$.
    \end{enumerate} 

    According to~\cref{lemma:gradofj1} (set $s=\cos \left( \phi\right)$), $\hat{\mathcal{J}}(w_c, \cos \left( \phi\right);\kappa_y, \Tilde{\nu})$ is a strictly decreasing function on $w_c$ when $w_c \ge 0$. Also,~\cref{lemma:onesideprojection} shows that:
    
    \begin{equation}\label{proof:thm2:anglerange:eq1}
        \begin{aligned}
            -\|P_A c_y \| \leq  w_c \leq \|P_A c_y \|,
        \end{aligned}
    \end{equation}

    where    

    \begin{equation}\label{proof:thm2:anglerange:r:eq2}
        \begin{aligned}
             0 \leq \cos \left(\phi \right) < \|P_A c_y \| < 1.
        \end{aligned}
    \end{equation}
    
    Therefore, when $0 \leq  w_c \leq \|P_A c_y \|$, it holds that:
    
    \begin{equation}\label{proof:thm_supp:gap3:res:eq1}
        \begin{aligned}
            \hat{\mathcal{J}}(w_c, \cos \left( \phi\right);\kappa_y, \Tilde{\nu}) 
            & \ge \hat{\mathcal{J}}(\|P_A c_y \|, \cos \left( \phi\right);\kappa_y, \Tilde{\nu}),
        \end{aligned}
    \end{equation}
    
    where equality is attained if and only if there exists a configuration of $(X, Y)$ such that:
    
    \begin{enumerate}[label={(\roman*)},labelindent=10pt,leftmargin=*,start=3]
        \item 
        $c_x = \frac{P_A c_y}{\|P_A c_y \|}$.
    \end{enumerate}  
    
    Combining~\cref{proof:thm_supp:gap3:obj:eq1},~\cref{proof:thm_supp:gap3:range:eq4} and~\cref{proof:thm_supp:gap3:res:eq1}, we conclude: 
    
    \begin{equation} \label{proof:thm2:res:eq4}
        \begin{aligned}
            \lim_{N \to \infty} \mathcal{L}_{\mathcal{X} \rightarrow \mathcal{Y}}(c_x; Y) - \log (N)
            &= \hat{\mathcal{J}}(w_c, \| P_B c_x \|;\kappa_y, \Tilde{\nu}) 
            \ge \hat{\mathcal{J}}(\|P_A c_y \|, \cos \left( \phi\right);\kappa_y, \Tilde{\nu}),
        \end{aligned}        
    \end{equation} 
    
    and equality is attained if and only if there exists a configuration of $(X, Y)$ such that:
    
    \begin{enumerate}[label={(B\arabic*)},labelindent=10pt,leftmargin=*,start=4]
        \item 
        $c_x \perp \mathbb{C}$.
        \item 
        $c_x = \frac{P_A c_y}{\|P_A c_y \|}$.
    \end{enumerate}  
    
\end{proof}

% \newpage
\subsubsection{Technical Lemmas Part 3} 

In this subsection, we provide details and proofs of technical lemmas (\cref{lemma:gradofj2}, \cref{lemma:gradofj3}, \cref{lemma:onesideprojection} and~\cref{lemma:twosideprojection}) that support the proof of \cref{thm:gap3}, \cref{thm_supp:transform3} and \cref{thm_supp:gap3}.

\begin{lemma}\label{lemma:gradofj2}
    $\forall\kappa, \nu, \tau>0$ and $w_c \in [0,1]$, a function $\hat{\mathcal{J}}_{t=s}\left(\cdot; \kappa, \nu\right): (0, 1] \rightarrow \mathbb{R}$ is defined as:
    
    \begin{equation} \label{lemma:gradofj2:eq1}
        \begin{aligned}
            \hat{\mathcal{J}}_{w=w_s}\left(t; \kappa, \nu\right)
            &= -\frac{w_s}{\tau}
            + \log\left(\frac{I_{\nu}\left(\Tilde{M}_{w=w_s}\left(t\right)\right)}{\Tilde{M}_{\kappa}\left(t\right)^{\nu}} \right) - \log\left(\frac{I_{\nu}\left(\kappa\right)}{ \kappa^{\nu}} \right) \\
            &= \hat{\mathcal{J}}\left(w=w_s, t; \kappa, \nu\right) 
            = \Tilde{J}\left(w=w_s, w=w_s, t; \kappa, \nu\right),
        \end{aligned}        
    \end{equation}     
    
    where $\Tilde{M}_{\kappa}\left(\cdot\right): (0, 1] \rightarrow \mathbb{R}^{+}$ is defined as:
    
    \begin{equation} \label{lemma:gradofj2:eq2}
        \begin{aligned}
            \Tilde{M}_{w=w_s}\left(t\right) 
            = \sqrt{\kappa^2+\frac{2 \kappa w_s}{\tau}+\frac{t^2}{\tau^2}}
            = \tilde{M}_\kappa\left(w=w_s, t\right),
        \end{aligned}        
    \end{equation} 

    and $I_{\nu}$ is the modified Bessel function of the first kind of order $\nu$, which is defined as:
    
    \begin{equation}\label{lemma:gradofj2:eq3}
        \begin{aligned}
            I_{\nu}\left(m\right)=\sum_{k=0}^{\infty} \frac{1}{k!\Gamma(\nu+k+1)}\left(\frac{m}{2}\right)^{2 k+\nu}.
        \end{aligned}
    \end{equation}

    It holds that, for any fixed $w_s$, $\hat{\mathcal{J}}_{w=w_s}\left(\cdot\right)$ is a strictly increasing function on $(0, 1]$.
    
\end{lemma}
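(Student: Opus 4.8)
The plan is to isolate the dependence on $t$. For fixed $w_s$ and $\kappa$, the summands $-w_s/\tau$ and $-\log\!\bigl(I_\nu(\kappa)/\kappa^\nu\bigr)$ appearing in $\hat{\mathcal{J}}_{w=w_s}(t;\kappa,\nu)$ are constants, so the only $t$-dependent term is $\log I_\nu\!\bigl(\Tilde{M}_{w=w_s}(t)\bigr)-\nu\log\Tilde{M}_{w=w_s}(t)=G_3\!\bigl(\Tilde{M}_{w=w_s}(t)\bigr)$, where $G_3(m)=\log I_\nu(m)-\nu\log m$ is exactly the function analyzed in~\cref{lemma:gradofg}. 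Hence it suffices to show that $t\mapsto G_3\!\bigl(\Tilde{M}_{w=w_s}(t)\bigr)$ is strictly increasing on $(0,1]$, which I would establish by the chain rule, combining monotonicity of $\Tilde{M}_{w=w_s}$ in $t$ with monotonicity of $G_3$.

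First I would compute the derivative of $\Tilde{M}_{w=w_s}$. Since $\Tilde{M}_{w=w_s}(t)=\bigl(\kappa^2+\tfrac{2\kappa w_s}{\tau}+\tfrac{t^2}{\tau^2}\bigr)^{1/2}$ and $w_s\ge 0$, $\kappa>0$, we have $\Tilde{M}_{w=w_s}(t)\ge\kappa>0$ for every $t$, so differentiation is valid and
\[
  \Tilde{M}_{w=w_s}^{\prime}(t)\;=\;\tfrac{1}{2}\Bigl(\kappa^2+\tfrac{2\kappa w_s}{\tau}+\tfrac{t^2}{\tau^2}\Bigr)^{-1/2}\cdot\tfrac{2t}{\tau^2}\;=\;\frac{t}{\tau^2\,\Tilde{M}_{w=w_s}(t)}\;>\;0\qquad(t>0).
\]
Next,~\cref{lemma:gradofg} gives $G_3^{\prime}(m)=I_{\nu+1}(m)/I_\nu(m)\in(0,1)$ for $m>0$. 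The chain rule then yields
\[
  \hat{\mathcal{J}}_{w=w_s}^{\prime}(t;\kappa,\nu)\;=\;G_3^{\prime}\!\bigl(\Tilde{M}_{w=w_s}(t)\bigr)\,\Tilde{M}_{w=w_s}^{\prime}(t)\;=\;\frac{I_{\nu+1}\!\bigl(\Tilde{M}_{w=w_s}(t)\bigr)}{I_\nu\!\bigl(\Tilde{M}_{w=w_s}(t)\bigr)}\cdot\frac{t}{\tau^2\,\Tilde{M}_{w=w_s}(t)}\;>\;0
\]
for every $t\in(0,1]$, which proves that $\hat{\mathcal{J}}_{w=w_s}$ is strictly increasing on $(0,1]$.

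I do not expect a genuine obstacle here: the statement is a short chain-rule consequence of the already-established monotonicity of $G_3$ (\cref{lemma:gradofg}) together with the elementary fact that $\Tilde{M}_{w=w_s}$ increases in $t$. The only points deserving a line of care are that the hypothesis $w_s\ge 0$ (coming from $w_c\in[0,1]$) is precisely what keeps $\Tilde{M}_{w=w_s}(t)\ge\kappa>0$, ensuring the argument of $G_3$ stays in the region where~\cref{lemma:gradofg} applies; and that the derivative is strictly positive exactly on the stated open interval $(0,1]$, since it would merely vanish at $t=0$, so the open left endpoint is what makes the ``strictly'' clean at the level of $\hat{\mathcal{J}}_{w=w_s}^{\prime}$.
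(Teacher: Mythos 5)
Your proof is correct and follows essentially the same route as the paper's: split off the $t$-independent terms, differentiate $\Tilde{M}_{w=w_s}$ to get $\Tilde{M}_{w=w_s}^{\prime}(t)=t/\bigl(\tau^2\,\Tilde{M}_{w=w_s}(t)\bigr)>0$, and combine it via the chain rule with $G_3^{\prime}(m)=I_{\nu+1}(m)/I_\nu(m)\in(0,1)$ from \cref{lemma:gradofg} to conclude the derivative is strictly positive on $(0,1]$. Your added remark that $w_s\ge 0$ keeps the Bessel argument bounded below by $\kappa>0$ is a fine extra line of care but does not change the argument.
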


\begin{proof}\label{proof:lemma:gradofj2}
    Let us first decompose the function $\mathcal{J}$. Denote a constant and a function $C_1$ and $G_2\left(t\right)$ as:
    
    \begin{equation}\label{proof:lemma:gradofj2:obj:eq1}
        \begin{aligned}
            C_1
            &= -\frac{w_s}{\tau}, \\
            G_3\left(m\right)
            &= \log\left(I_{\nu}\left(m\right) \right) - \nu\log\left(m \right), \\
            G_2\left(t\right) 
            &= G_3\left(\Tilde{M}_{w=w_s}\left(t\right) \right) \\
            &=\log \left( I_{\nu}\left( \Tilde{M}_{w=w_s}\left(t\right)  \right) \right) 
            - \nu \log \left( \Tilde{M}_{w=w_s}\left(t\right)  \right).    \\
        \end{aligned}
    \end{equation}

    Denote the function $G\left(t\right)$ and the constant $C$ as:
    
    \begin{equation}\label{proof:lemma:gradofj2:obj:eq2}
        \begin{aligned}
            G\left(t\right) 
            &= C_1 + G_2\left(t\right), \\
            C
            &= - \log\left(\frac{I_{\nu}\left(\kappa\right)}{ \kappa^{\nu}} \right) .
        \end{aligned}
    \end{equation}
    
    Then the function $\hat{\mathcal{J}}_{w=w_s}$ can be written as:
    
    \begin{equation}\label{proof:lemma:gradofj2:obj:eq3}
        \begin{aligned}
            \hat{\mathcal{J}}_{w=w_s}\left(t; \kappa, \nu\right)
            &= -\frac{w_s}{\tau}
            + \log\left(\frac{I_{\nu}\left(\Tilde{M}_{w=w_s}\left(t\right) \right)}{  \Tilde{M}_{w=w_s}\left(t\right) ^{\nu}} \right) - \log\left(\frac{I_{\nu}\left(\kappa\right)}{ \kappa^{\nu}} \right) \\
            &= G\left(t\right) + C .
        \end{aligned}
    \end{equation}

    Now, we investigate derivatives of $\hat{\mathcal{J}}_{w=w_s}$.  

    According to~\cref{lemma:gradofg}, the first derivative of $G_3\left(m\right)$ is:
    
    \begin{equation}\label{proof:lemma:gradofj2:devchange:eq1}
        \begin{aligned}
            G_3^{\prime}\left(m\right) 
            &=  \frac{I_{\nu+1}\left(m\right)}{I_\nu\left(m\right)} \in (0, 1).
        \end{aligned}
    \end{equation} 

    The derivative of $\Tilde{M}_{w=w_s}$ is:
    
    \begin{equation}\label{proof:lemma:gradofj2:devchange:eq2}
        \begin{aligned}
            \Tilde{M}_{w=w_s}^{\prime}\left(t\right)
            &= \frac{d}{dt} \left(\kappa^2 + \frac{2\kappa w_s}{\tau} +\frac{t^2}{\tau^2} \right)^{1 / 2} \\
            &=\frac{1}{2}\left(\kappa^2 + \frac{2\kappa w_s}{\tau} +\frac{t^2}{\tau^2} \right)^{-1 / 2} \cdot 2 \frac{t}{\tau^2} \\
            &=\frac{t}{\tau^2}\frac{1}{\Tilde{M}_{w=w_s}\left(t\right) } \\
            &> 0.
        \end{aligned}
    \end{equation}
    
    Then, the first derivative of $G_2$ is:
    
    \begin{equation}\label{proof:lemma:gradofj2:dev2:eq1}    
        \begin{aligned}
            G_2^{\prime}\left(t\right) 
            &= G_3^{\prime}\left(\Tilde{M}_{w=w_s}\left(t\right) \right) \Tilde{M}_{w=w_s}^{\prime}\left(t\right)\\
            &= \frac{I_{\nu+1}\left(\Tilde{M}_{w=w_s}\left(t\right)\right)}{I_\nu\left(\Tilde{M}_{w=w_s}\left(t\right)\right)} \Tilde{M}_{w=w_s}^{\prime}\left(t\right) \\
            &=\frac{t}{\tau^2}\frac{1}{\Tilde{M}_{w=w_s}\left(t\right) } \frac{I_{\nu+1}\left(\Tilde{M}_{w=w_s}\left(t\right)\right)}{I_\nu\left(\Tilde{M}_{w=w_s}\left(t\right)\right)} \\
            &> 0.
        \end{aligned}
    \end{equation}
    
    Therefore, we have:
    
    \begin{equation}\label{proof:lemma:gradofj2:dev0:eq2}
        \begin{aligned}
            \hat{\mathcal{J}}_{w=w_s}^{\prime}\left(t; \kappa, \nu\right)
            &=G^{\prime}\left(t\right) =G_2^{\prime}\left(t\right) \\
            &=\frac{t}{\tau^2}\frac{1}{\Tilde{M}_{w=w_s}\left(t\right) } \frac{I_{\nu+1}\left(\Tilde{M}_{w=w_s}\left(t\right)\right)}{I_\nu\left(\Tilde{M}_{w=w_s}\left(t\right)\right)} \\
            &> 0.
        \end{aligned}
    \end{equation} 

    So we can conclude that, for any fixed $w_s$, $\hat{\mathcal{J}}_{w=w_s}\left(\cdot\right)$ is a strictly increasing function on $(0, 1]$.
    
\end{proof}

% \newpage
\begin{lemma}\label{lemma:gradofj3}
    $\forall\kappa, \nu, \tau>0$, a function $\hat{\mathcal{J}}(\cdot ;\kappa, \nu): [-1, 1]  \rightarrow \mathbb{R}$ is defined as:
    
    \begin{equation} \label{lemma:gradofj3:eq1}
        \begin{aligned}
            \hat{\mathcal{J}}_{t=w}\left(w; \kappa, \nu\right)
            &= -\frac{w}{\tau}
            + \log\left(\frac{I_{\nu}\left(\Tilde{M}_{t=w}\left(w\right)\right)}{\Tilde{M}_{t=w}\left(w\right)^{\nu}} \right) - \log\left(\frac{I_{\nu}\left(\kappa\right)}{ \kappa^{\nu}} \right) \\
            &= \hat{\mathcal{J}}\left(w, t=w; \kappa, \nu\right) 
            = \Tilde{J}\left(w, w, t=w; \kappa, \nu\right),
        \end{aligned}        
    \end{equation}     
    
    where $\Tilde{M}_{t=w}\left(\cdot\right): [-1, 1] \rightarrow \mathbb{R}^{+}$ is defined as:
    
    \begin{equation} \label{lemma:gradofj3:eq2}
        \begin{aligned}
            \Tilde{M}_{t=w}\left(w\right)
            = \sqrt{\kappa^2+\frac{2 \kappa w}{\tau}+\frac{w^2}{\tau^2}} = |\kappa  + \frac{w}{\tau} | 
            = \tilde{M}_\kappa\left(w, t=w\right) , 
        \end{aligned}        
    \end{equation} 

    and $I_{\nu}$ is the modified Bessel function of the first kind of order $\nu$, which is defined as:
    
    \begin{equation}\label{lemma:gradofj3:eq3}
        \begin{aligned}
            I_{\nu}\left(m\right)=\sum_{k=0}^{\infty} \frac{1}{k!\Gamma(\nu+k+1)}\left(\frac{m}{2}\right)^{2 k+\nu}.
        \end{aligned}
    \end{equation}

    It holds that $\hat{\mathcal{J}}_{t=w}\left(\cdot\right)$ is a strictly decreasing function when $w \in [0, 1]$.
    
\end{lemma}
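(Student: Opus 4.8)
The plan is to recognize \cref{lemma:gradofj3} as an essentially immediate corollary of \cref{lemma:logI}. First I would exploit the hypotheses $\kappa,\tau>0$ and $w\in[0,1]$ to observe that $\kappa+\tfrac{w}{\tau}>0$, so the closed form already recorded in \cref{lemma:gradofj3:eq2} simplifies without any absolute value to $\Tilde{M}_{t=w}(w)=\kappa+\tfrac{w}{\tau}$. Introducing the variable $m:=\kappa+\tfrac{w}{\tau}$, which is an affine, strictly increasing function of $w$ with $m\ge\kappa$, and using $-\tfrac{w}{\tau}=\kappa-m$, I would rewrite the definition \cref{lemma:gradofj3:eq1} entirely in terms of $m$:
\begin{equation*}
    \hat{\mathcal{J}}_{t=w}(w;\kappa,\nu)
    =\Bigl(\log I_{\nu}(m)-\nu\log m-m\Bigr)-\Bigl(\log I_{\nu}(\kappa)-\nu\log\kappa-\kappa\Bigr).
\end{equation*}

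The bracketed constant $\log I_{\nu}(\kappa)-\nu\log\kappa-\kappa$ does not depend on $w$, so it suffices to show that $x\mapsto\log I_{\nu}(x)-\nu\log x-x$ is strictly decreasing on $(0,\infty)$; precomposing with the strictly increasing map $w\mapsto m$ then gives the claim. For two points $0\le w_1<w_2\le 1$, set $m_1=\kappa+w_1/\tau<m_2=\kappa+w_2/\tau$ (both positive) and apply \cref{lemma:logI} with $a=m_1$, $b=m_2$; the inequality $\log(I_\nu(m_1)/I_\nu(m_2))>\nu\log(m_1/m_2)+(m_1-m_2)$ rearranges exactly to $\log I_\nu(m_1)-\nu\log m_1-m_1>\log I_\nu(m_2)-\nu\log m_2-m_2$, hence $\hat{\mathcal{J}}_{t=w}(w_1;\kappa,\nu)>\hat{\mathcal{J}}_{t=w}(w_2;\kappa,\nu)$.

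Equivalently, if one prefers a differential argument, the same substitution gives $\frac{d}{dw}\hat{\mathcal{J}}_{t=w}(w;\kappa,\nu)=\frac{1}{\tau}\bigl(\frac{I_{\nu}'(m)}{I_\nu(m)}-\frac{\nu}{m}-1\bigr)=\frac{1}{\tau}\bigl(\frac{I_{\nu+1}(m)}{I_\nu(m)}-1\bigr)<0$, using the recurrence $I_{\nu}'=I_{\nu+1}+\frac{\nu}{m}I_\nu$ together with $0<I_{\nu+1}(m)<I_\nu(m)$, exactly as in the proof of \cref{lemma:gradofg}. I do not anticipate any real obstacle here; the only point requiring a moment's care is the degenerate collapse $\Tilde{M}_{t=w}(w)=\kappa+w/\tau$, which makes the Bessel argument one-dimensional and turns the lemma into a direct restatement of \cref{lemma:logI} along the line $\{t=w\}$.
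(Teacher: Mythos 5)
Your proposal is correct, and in fact your second (differential) paragraph is essentially the paper's own proof: the paper decomposes $\hat{\mathcal{J}}_{t=w}=G_1+G_2+C$ with $G_1(w)=-w/\tau$, notes $\Tilde{M}_{t=w}'(w)=1/\tau$ on $[0,1]$, invokes \cref{lemma:gradofg} for $G_3'(m)=I_{\nu+1}(m)/I_{\nu}(m)\in(0,1)$, and concludes $\hat{\mathcal{J}}_{t=w}'(w)=\tfrac{1}{\tau}\bigl(-1+\tfrac{I_{\nu+1}}{I_{\nu}}\bigr)<0$, which is exactly your chain-rule computation. Your primary route is a mild but genuine repackaging: the substitution $m=\kappa+w/\tau$ (valid since $\kappa,\tau>0$, $w\ge 0$ kills the absolute value) rewrites the function as $L(m)-L(\kappa)$ with $L(x)=\log I_{\nu}(x)-\nu\log x-x$, so strict decrease follows directly from \cref{lemma:logI} applied to $a=m_1<b=m_2$ (whose inequality is precisely $L(a)>L(b)$). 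Both arguments rest on the same two Bessel facts, $I_{\nu}'=I_{\nu+1}+\tfrac{\nu}{m}I_{\nu}$ and $0<I_{\nu+1}<I_{\nu}$; what your reduction buys is the observation that \cref{lemma:gradofj3} is literally a corollary of \cref{lemma:logI} along the diagonal $t=w$, avoiding the separate decomposition and derivative bookkeeping, while the paper's version keeps the computation local and parallel in form to the proofs of \cref{lemma:gradofj1} and \cref{lemma:gradofj2}. No gaps: you correctly check $m\ge\kappa>0$ and that the hypothesis $\nu>0$ of \cref{lemma:logI} is met.
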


\begin{proof}\label{proof:lemma:gradofj3}
    Let us first decompose the function $\hat{\mathcal{J}}_{t=w}$. Denote the functions $G_1\left(w\right)$ and $G_2\left(w\right)$ as:
    
    \begin{equation}\label{proof:lemma:gradofj3:obj:eq1}
        \begin{aligned}
            G_1\left(w\right) 
            &= -\frac{w}{\tau}, \\
            G_3\left(m\right)
            &= \log\left(I_{\nu}\left(m\right) \right) - \nu\log\left(m \right), \\
            G_2\left(w\right) 
            &= G_3\left(\Tilde{M}_{t=w}\left(w\right)\right) \\
            &=\log \left( I_{\nu}\left( \Tilde{M}_{t=w}\left(w\right) \right) \right) 
            - \nu \log \left( \Tilde{M}_{t=w}\left(w\right) \right).    \\
        \end{aligned}
    \end{equation}

    Denote the function $G\left(w\right)$ and the constant $C$ as:
    
    \begin{equation}\label{proof:lemma:gradofj3:obj:eq2}
        \begin{aligned}
            G\left(w\right) 
            &= G_1\left(w\right) + G_2\left(w\right), \\
            C
            &= - \log\left(\frac{I_{\nu}\left(\kappa\right)}{ \kappa^{\nu}} \right) .
        \end{aligned}
    \end{equation}
    
    Then the function $\hat{\mathcal{J}}_{t=w}$ can be written as:
    
    \begin{equation}\label{proof:lemma:gradofj3:obj:eq3}
        \begin{aligned}
            \hat{\mathcal{J}}_{t=w}\left(w; \kappa, \nu\right) 
            &= -\frac{w}{\tau}
            + \log\left(\frac{I_{\nu}\left(\Tilde{M}_{t=w}\left(w\right)\right)}{\Tilde{M}_{t=w}\left(w\right)^{\nu}} \right) - \log\left(\frac{I_{\nu}\left(\kappa\right)}{ \kappa^{\nu}} \right) \\
            &= G\left(w\right) + C .
        \end{aligned}
    \end{equation}

    Now, we investigate derivatives of $\hat{\mathcal{J}}_{t=w}$.  
    
    The first derivative of $G_1$ is:    
    
    \begin{equation}\label{proof:lemma:gradofj3:dev1:eq1}
        \begin{aligned}
            G_1^{\prime}\left(w\right) 
            &= -\frac{1}{\tau} < 0.
        \end{aligned}
    \end{equation}

    According to~\cref{lemma:gradofg}, the first derivative of $G_3\left(m\right)$ is:
    
    \begin{equation}\label{proof:lemma:gradofj3:change:eq1}
        \begin{aligned}
            G_3^{\prime}\left(m\right) 
            &=  \frac{I_{\nu+1}\left(m\right)}{I_\nu\left(m\right)} \in (0, 1).
        \end{aligned}
    \end{equation} 

    When $w \in [0, 1]$, the derivative of $\Tilde{M}_{t=w}$ is:
    
    \begin{equation}\label{proof:lemma:gradofj3:change:eq2}
        \begin{aligned}
            \Tilde{M}_{t=w}^{\prime}\left(w\right) = \frac{1}{\tau}. \\ 
        \end{aligned}
    \end{equation}
    
    Then, the first derivative of $G_2$ is:
    
    \begin{equation}\label{proof:lemma:gradofj3:dev2:eq1}    
        \begin{aligned}
            G_2^{\prime}\left(w\right) 
            &= G_3^{\prime}\left(\Tilde{M}_{t=w}\left(w\right)\right) \Tilde{M}_{t=w}^{\prime}\left(w\right)\\
            &= \frac{I_{\nu+1}\left(\Tilde{M}_{t=w}\left(w\right)\right)}{I_\nu\left(\Tilde{M}_{t=w}\left(w\right)\right)} \Tilde{M}_{t=w}^{\prime}\left(w\right) \\
            &= \frac{1}{\tau} \frac{I_{\nu+1}\left(\Tilde{M}_{t=w}\left(w\right)\right)}{I_\nu\left(\Tilde{M}_{t=w}\left(w\right)\right)} .\\
        \end{aligned}
    \end{equation} 
    
    Combining~\cref{proof:lemma:gradofj3:dev1:eq1},~\cref{proof:lemma:gradofj3:change:eq1} and~\cref{proof:lemma:gradofj3:dev2:eq1}, we have:
    
    \begin{equation}\label{proof:lemma:gradofj3:dev0:eq1}
        \begin{aligned}
            \hat{\mathcal{J}}_{t=w}^{\prime}\left(w; \kappa, \nu\right) 
            &= G^{\prime}\left(w\right) \\
            &= -\frac{1}{\tau} + \frac{1}{\tau} \frac{I_{\nu+1}\left(\Tilde{M}_{t=w}\left(w\right)\right)}{I_\nu\left(\Tilde{M}_{t=w}\left(w\right)\right)}
            = \frac{1}{\tau} \left(-1 + \frac{I_{\nu+1}\left(\Tilde{M}_{t=w}\left(w\right)\right)}{I_\nu\left(\Tilde{M}_{t=w}\left(w\right)\right)} \right) \\
            &< 0. \\
        \end{aligned}
    \end{equation} 

    So we can conclude that $\hat{\mathcal{J}}_{t=w}\left(\cdot\right)$ is a strictly decreasing function on $[0, 1]$.
    
\end{proof}

\begin{lemma}\label{lemma:onesideprojection}
    Let $h\ge3$ and $\mathbb{A}, \mathbb{B} \in \mathbb{R}^h$ be two distinct $(h-1)$-dimensional linear subspaces, with $n_A, n_B$ being normal vectors and $P_A, P_B$ being the orthogonal projectors on $\mathbb{A}$ and $\mathbb{B}$, respectively. Denote $ \phi = \cos^{-1}\left( \frac{n_A \cdot n_B}{\left\|n_A\right\| \cdot\|n_B\|} \right) \in \left(0,\frac{\pi}{2}\right)$ as the angle between $\mathbb{A}$ and $\mathbb{B}$. Let $\mathbb{C}=\mathbb{A} \cap \mathbb{B}$ be an $(h-2)$-dimensional  linear subspaces. For each fixed $x \in \mathbb{S}_X =\mathbb{A} \cap \mathbb{S}^{h-1}$, $\forall y \in \mathbb{S}_Y=\mathbb{B} \cap \mathbb{S}^{h-1}$, set $w = x\cdot y$, it holds that:
    
    \begin{equation}\label{lemma:onesideprojection:eq1}
        \begin{aligned}
            -\|P_B \cdot x \|  \leq  w \leq \|P_B \cdot x \| ,
        \end{aligned}
    \end{equation}
    
    and equalities (extreme values) are attained if and only if the following conditions hold:    
       
    \begin{enumerate}[label={(C\arabic*)},labelindent=10pt,leftmargin=*,start=1]
        \item 
        $w = \|P_B \cdot x \| \Leftrightarrow y = \frac{P_B \cdot x}{\|P_B \cdot x \|}$.
        \item 
        $w = -\|P_B \cdot x \| \Leftrightarrow y = -\frac{P_B \cdot x}{\|P_B \cdot x \|}$.
    \end{enumerate} 
    
\end{lemma}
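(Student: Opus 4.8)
The plan is to reduce the two-sided bound on $w = x \cdot y$ to a one-line application of Cauchy--Schwarz, after rewriting the inner product through the projector $P_B$. First I would use that $y \in \mathbb{S}_Y \subset \mathbb{B}$, so $y$ lies in the subspace $\mathbb{B}$ and hence $P_B y = y$. Then, invoking that the orthogonal projector $P_B$ is symmetric and idempotent ($P_B^\top = P_B$ and $P_B^2 = P_B$), I would compute
\[
w = x \cdot y = x \cdot (P_B y) = (P_B x) \cdot (P_B y) = (P_B x) \cdot y .
\]
Since $\|y\| = 1$, Cauchy--Schwarz yields $|w| = |(P_B x) \cdot y| \le \|P_B x\| \, \|y\| = \|P_B x\|$, which is exactly the claimed inequality $-\|P_B x\| \le w \le \|P_B x\|$.

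For the equality cases I would invoke the Cauchy--Schwarz equality condition: $|(P_B x) \cdot y| = \|P_B x\| \, \|y\|$ holds if and only if $y$ and $P_B x$ are collinear, i.e. $y = \lambda\, P_B x$ for some scalar $\lambda$; combined with $\|y\| = 1$ this forces $\lambda = \pm 1/\|P_B x\|$, and the sign of $\lambda$ matches the sign of $w$, giving conditions (C1) and (C2). For this to be well posed I must verify $P_B x \ne 0$ for every $x \in \mathbb{S}_X$: if $P_B x = 0$ then $x \in \mathbb{B}^\perp = \operatorname{span}\{n_B\}$, so $x = \pm n_B/\|n_B\|$; but $x \in \mathbb{A}$ would then put $n_B \in \mathbb{A}$, forcing $n_A \cdot n_B = 0$ and hence $\phi = \tfrac{\pi}{2}$, contradicting $\phi \in (0, \tfrac{\pi}{2})$. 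Thus $P_B x \ne 0$, the extremal $y = \pm P_B x / \|P_B x\|$ in (C1)/(C2) is well defined, and it automatically lies in $\mathbb{S}_Y$ since $P_B x \in \mathbb{B}$ and it has unit norm.

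The argument is short, and the only genuine obstacle is the well-definedness point just isolated, namely ensuring $P_B x \ne 0$ so that the extremal configurations exist and the equality characterization is not vacuous; once that is handled via the hypothesis $\phi < \tfrac{\pi}{2}$, everything else is the standard Cauchy--Schwarz equality analysis. (If a quantitative handle on $\|P_B x\|$ in terms of $\phi$ and the position of $x$ relative to $\mathbb{C}$ were needed downstream, one could decompose $x = P_C x + c\, e_A$ with $e_A \perp \mathbb{C}$ inside $\mathbb{A}$ and use $\|P_B e_A\| = \cos\phi$, but this refinement is not required for the statement as given.)
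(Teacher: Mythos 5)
Your proof is correct, and it takes a genuinely different and more economical route than the paper's. You reduce everything to the identity $w = x\cdot y = (P_B x)\cdot y$ (valid because $y\in\mathbb{B}$ so $P_B y = y$, and $P_B$ is symmetric idempotent) followed by Cauchy--Schwarz with its equality case, plus the well-definedness check that $P_B x\neq 0$, which you correctly deduce from $\phi<\tfrac{\pi}{2}$: $P_B x=0$ would force $x\in\operatorname{span}\{n_B\}\cap\mathbb{A}$, i.e. $n_A\cdot n_B=0$. You also rightly confirm that the extremal $y=\pm P_B x/\|P_B x\|$ lies in $\mathbb{S}_Y$, so the bound is attained and the characterization is not vacuous. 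The paper instead builds an explicit decomposition $\mathbb{R}^h=\mathbb{C}\oplus\mathbb{C}^{\perp}$ with unit vectors $e_A,e_B\perp\mathbb{C}$ satisfying $e_A\cdot e_B=\cos\phi$, writes $x=\cos(\theta_A)e_A+\sin(\theta_A)u_A$ and $y=\cos(\theta_B)e_B+\sin(\theta_B)u_B$, bounds $u_A\cdot u_B$ by $1$, and optimizes over $\theta_B$ by calculus to obtain the extreme value $r=\sqrt{\sin^2(\theta_A)+\cos^2(\theta_A)\cos^2(\phi)}$, which it then identifies with $\|P_B x\|$ and with the maximizer $y=P_B x/\|P_B x\|$. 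What the paper's longer computation buys is exactly the quantitative formula $\|P_B x\|=\sqrt{\sin^2(\theta_A)+\cos^2(\theta_A)\cos^2(\phi)}\in(\cos\phi,1)$, which is reused downstream (e.g.\ in the bound $\cos\phi\le\|P_A c_y\|<1$ inside the proof of Theorem~3's auxiliary results); your argument proves the lemma as stated more directly, and you correctly flag that this refinement can be recovered from the same decomposition if needed.
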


\begin{proof}\label{proof:lemma:onesideprojection}
    \textbf{Step 1:} First, let us decompose the embedding space. Define two vectors $e_A$ and $e_B$ such that:
    
    \begin{equation}\label{proof:lemma:onesideprojection:space:eq1}
        \begin{aligned}
            & e_A \in \mathbb{S}_X, \enspace \text{and} \hspace{2mm}  e_A \perp \mathbb{C}, \\
            & e_B \in \mathbb{S}_Y, \enspace \text{and} \hspace{2mm}  e_B \perp \mathbb{C}.\\
        \end{aligned}
    \end{equation}
    
    Let $\mathbb{C}^{\perp}$ be the 2-dimensional orthogonal complement of $C$, and $\mathbb{C}^{\perp}$ satisfies:
    
    \begin{equation}\label{proof:lemma:onesideprojection:space:eq2}
        \begin{aligned}
            \mathbb{C}^{\perp}
            &= \operatorname{span}\left\{e_A\right\} \oplus \operatorname{span}\left\{e_B\right\}, \\
            \mathbb{R}^h
            &=\mathbb{C} \oplus \mathbb{C}^{\perp} .
        \end{aligned}
    \end{equation}

    Since $n_A, n_B \in \mathbb{C}^{\perp}$, $n_A \perp e_A$ and $n_B \perp e_B$, we have:
    
    \begin{equation}\label{proof:lemma:onesideprojection:space:eq3}
        \begin{aligned}
            \left\langle e_A, e_B\right\rangle = \pm\left\langle n_A, n_B\right\rangle , 
        \end{aligned}
    \end{equation}

    and we choose a pair of $e_A$ and $e_B$ such that: 
    
    \begin{equation}\label{proof:lemma:onesideprojection:space:eq4}
        \begin{aligned}
            \left\langle e_A, e_B\right\rangle = \left\langle n_A, n_B\right\rangle = \cos\left(\phi\right) \in (0, 1).
        \end{aligned}
    \end{equation}

    Therefore, $\forall x \in \mathbb{S}_X=\mathbb{A} \cap \mathbb{S}^{h-1}$ and $\forall y \in \mathbb{S}_Y=\mathbb{B} \cap \mathbb{S}^{h-1}$, $\exists u_A,u_B \in \mathbb{C} \cap \mathbb{S}^{h-1} $, such that $ \cos\left(\theta_A\right) = x \cdot e_A $ and $ \cos\left(\theta_B\right) = y \cdot e_B$. And then $x$ and $y$ can be represented as: 
    
    \begin{equation}\label{proof:lemma:onesideprojection:rep:eq1}
        \begin{aligned}
            & x = \cos(\theta_A) e_A + \sin(\theta_A)u_A, \\
            & y = \cos(\theta_B) e_B + \sin(\theta_B)u_B. \\
        \end{aligned}
    \end{equation}

    Using orthogonality, we have:
    
    \begin{equation}\label{proof:lemma:onesideprojection:proj:eq1}
        \begin{aligned}
            & P_B \cdot e_A = \left\langle e_A, e_B\right\rangle e_B = \cos\left(\phi\right)e_B
            ,\\
            & P_B \cdot u_A = u_A,
        \end{aligned}
    \end{equation}

    and 
    
    \begin{equation}\label{proof:lemma:onesideprojection:proj:eq2}
        \begin{aligned}
            & P_A \cdot e_B = \left\langle e_A, e_B\right\rangle e_A = \cos\left(\phi\right)e_A, \\
            & P_A \cdot u_B = u_B.
        \end{aligned}
    \end{equation}

    Then the projections of $(x_i, y_i)$ are:
    
    \begin{equation}\label{proof:lemma:onesideprojection:proj:eq3}
        \begin{aligned}
            & P_B \cdot x =  \cos(\theta_A) \cos\left(\phi\right)e_B + \sin(\theta_A)u_A, \\
            & P_A \cdot y =  \cos(\theta_B) \cos\left(\phi\right)e_A + \sin(\theta_B)u_B. 
        \end{aligned}
    \end{equation}

    \textbf{Step 2:} Next, we can investigate the range of $w$. 
    
    \begin{equation}\label{proof:lemma:onesideprojection:w:eq1}
        \begin{aligned}
            w 
            &= x \cdot y \\
            &= \cos(\theta_A) \cos(\theta_B) e_A e_B + \sin(\theta_A) \sin(\theta_B) u_A u_B \\
            &= \cos(\theta_A) \cos(\theta_B) \cos\left(\phi\right) + \sin(\theta_A) \sin(\theta_B) u_A u_B .\\
        \end{aligned}
    \end{equation}

    Since $u_A, u_B \in \mathbb{C}$ and $\|u_A\| = \|u_B\| = 1$, then $\|u_A \cdot u_B\| \leq 1$. Denote $f\left(\cdot\right)_{\pm}$ as:
    
    \begin{equation}\label{proof:lemma:onesideprojection:w:eq2}
        \begin{aligned}
            f_{\pm}(\theta_B) = \cos(\theta_A) \cos(\theta_B) \cos\left(\phi\right) \pm \sin(\theta_A) \sin(\theta_B),
        \end{aligned}
    \end{equation}

    then :
    
    \begin{equation}\label{proof:lemma:onesideprojection:w:eq3}
        \begin{aligned}
            f_{-}(\theta_B) \leq w \leq f_{+}(\theta_B).
        \end{aligned}
    \end{equation}

    Now, let us check the extreme values of $f_{\pm}\left(w\right)$. First, we find the derivative of $f_{\pm}\left(w\right)$:
    
    \begin{equation}\label{proof:lemma:onesideprojection:dw:eq1}
        \begin{aligned}
            f^{\prime}_{\pm}(\theta_B) = -\cos(\theta_A) \sin(\theta_B) \cos\left(\phi\right) \pm \sin(\theta_A) \cos(\theta_B),
        \end{aligned}
    \end{equation}
    
    then: 
    
    \begin{equation}\label{proof:lemma:onesideprojection:dw:eq2}
        \begin{aligned}
            f^{\prime}_{\pm}(\theta_B) = 0  \enspace  \Rightarrow  \enspace \tan(\theta_B) = \pm \frac{\sin(\theta_A)}{\cos(\theta_A) \cos\left(\phi\right)},
        \end{aligned}
    \end{equation}

    and 
    
    \begin{equation}\label{proof:lemma:onesideprojection:dw:eq3}
        \begin{aligned}
            & w 
            \ge f_{-}\left(\arctan \left(-\frac{\sin(\theta_A)}{\cos(\theta_A) \cos\left(\phi\right)}\right)\right)
            = -\sqrt{\sin ^2 (\theta_A)+\cos ^2 (\theta_A) \cos ^2 (\phi)},  \\
            & w 
            \leq f_{+}\left(\arctan \left(\frac{\sin(\theta_A)}{\cos(\theta_A) \cos\left(\phi\right)}\right)\right)
            = \sqrt{\sin ^2 (\theta_A)+\cos ^2 (\theta_A) \cos ^2 (\phi)}.
        \end{aligned}
    \end{equation}

    Denote: 
    
    \begin{equation}\label{proof:lemma:onesideprojection:r:eq1}
        \begin{aligned}
            & r(x) 
            = \sqrt{\sin ^2 (\theta_A)+\cos ^2 (\theta_A) \cos ^2 (\phi)} \in (0, 1) .
        \end{aligned}
    \end{equation}

    and therefore:
    
    \begin{equation}\label{proof:lemma:onesideprojection:dw:eq4}
        \begin{aligned}
            & |w|  
            \leq r(x) < 1.
        \end{aligned}
    \end{equation}

    \textbf{Step 3:} Last, we find the optimal condition of $w$. When $\theta_B = \arctan \left(\frac{\sin (\theta_A)}{\cos( \theta_A) \cos (\phi)}\right)$ and $u_A=u_B$, $w$ reaches its maximum. At this time:   
    
    \begin{equation}\label{proof:lemma:onesideprojection:tan:eq1}
        \begin{aligned}
            &\cos (\theta_B)=\frac{\cos (\theta_A) \cos (\phi)}{r},\\
            &\sin (\theta_B)=\frac{\sin(\theta_A)}{r}.
        \end{aligned}
    \end{equation}    

    Plugging~\cref{proof:lemma:onesideprojection:tan:eq1} into~\cref{proof:lemma:onesideprojection:proj:eq3}, we get:
    
    \begin{equation}\label{proof:lemma:onesideprojection:con:eq1}
        \begin{aligned}
            P_B \cdot x 
            &=  \cos(\theta_A) \cos\left(\phi\right)e_B + \sin(\theta_A)u_A \\
            &=  r \cos(\theta_B) \cos\left(\phi\right)e_B + r \sin(\theta_B)u_B \\
            & = r y, \\
        \end{aligned}
    \end{equation}

    and 
    
    \begin{equation}\label{proof:lemma:onesideprojection:con:eq2}
        \begin{aligned}
            \|P_B \cdot x \|
            & = \|r y\| = r. \\
        \end{aligned}
    \end{equation}

    Therefore, $w$ reaches its maximum if and only if the following condition holds:
    
    \begin{enumerate}[label={(C\arabic*)},labelindent=10pt,leftmargin=*,start=1]
        \item 
        $y = \frac{P_B \cdot x}{\|P_B \cdot x \|} $.
    \end{enumerate} 
    
    When $\theta_B = \arctan \left(-\frac{\sin (\theta_A)}{\cos( \theta_A) \cos (\phi)}\right)$ and $u_A=-u_B$, $w$ reaches its minimum. At this time:
    
    \begin{equation}\label{proof:lemma:onesideprojection:tan:eq2}
        \begin{aligned}
            &\cos (\theta_B)=-\frac{\cos (\theta_A) \cos (\phi)}{r},\\
            &\sin (\theta_B)=\frac{\sin(\theta_A)}{r}.
        \end{aligned}
    \end{equation}    

    Plugging~\cref{proof:lemma:onesideprojection:tan:eq2} into~\cref{proof:lemma:onesideprojection:proj:eq3}, we get:
    
    \begin{equation}\label{proof:lemma:onesideprojection:con:eq3}
        \begin{aligned}
            P_B \cdot x 
            &=  \cos(\theta_A) \cos\left(\phi\right)e_B + \sin(\theta_A)u_A \\
            &=  -r \cos(\theta_B) \cos\left(\phi\right)e_B - r \sin(\theta_B)u_B \\
            & = -r y. \\
        \end{aligned}
    \end{equation}

    and 
    
    \begin{equation}\label{proof:lemma:onesideprojection:con:eq4}
        \begin{aligned}
            \|P_B \cdot x \|
            & = \|-r y\| = r. \\
        \end{aligned}
    \end{equation}

    Therefore, $w$ reaches its minimum if and only if the following condition holds:
    
    \begin{enumerate}[label={(C\arabic*)},labelindent=10pt,leftmargin=*,start=2]
        \item 
        $y = -\frac{P_B \cdot x}{\|P_B \cdot x \|} $.
    \end{enumerate} 
    
\end{proof}

\begin{lemma}\label{lemma:twosideprojection}
    Let $h\ge3$ and $\mathbb{A}, \mathbb{B} \in \mathbb{R}^h$ be two distinct $(h-1)$-dimensional linear subspaces, with $n_A, n_B$ being normal vectors and $P_A, P_B$ being the orthogonal projectors on $\mathbb{A}$ and $\mathbb{B}$, respectively. Denote $ \phi = \cos^{-1}\left( \frac{n_A \cdot n_B}{\left\|n_A\right\| \cdot\|n_B\|} \right) \in \left(0,\frac{\pi}{2}\right)$ as the angle between $\mathbb{A}$ and $\mathbb{B}$. Let $\mathbb{C}=\mathbb{A} \cap \mathbb{B}$ be an $(h-2)$-dimensional  linear subspaces. For $x \in \mathbb{S}_X=\mathbb{A} \cap \mathbb{S}^{h-1}, y \in \mathbb{S}_Y=\mathbb{B} \cap \mathbb{S}^{h-1}$, the projections of $x$ and $y$ are collinear with the other vector:
    
    \begin{enumerate}[label={(\roman*)},labelindent=10pt,leftmargin=*,start=1]
        \item  
        The orthogonal projection of $x$ on $\mathbb{B}$ is a scalar multiple of $y$:
        $$
            P_B x = \lambda_x y, \quad \lambda_x \neq 0,
        $$
        \item  
        The orthogonal projection of $y$ on $\mathbb{A}$ is a scalar multiple of $x$:
        $$
            P_A y = \lambda_y x, \quad \lambda_y \neq 0,
        $$

    \end{enumerate}     
    
    if and only if the following conditions holds:
    
    \begin{enumerate}[label={(C\arabic*)},labelindent=10pt,leftmargin=*,start=3]
        \item 
        Either $x \perp \mathbb{C}$ and $y \perp \mathbb{C}$, or $x=\pm y \in \mathbb{C}$.
    \end{enumerate} 

    Moreover, in the first case ($x\perp\mathcal C$, $y\perp\mathcal C$), it holds that:
    
    \[
    \langle x,y\rangle = \cos\left(\phi\right), \qquad 
    P_B x = (\cos\left(\phi\right))\,y, \qquad 
    P_A y = (\cos\left(\phi\right))\,x,
    \]
    
    while in the second case ($x=\pm y\in\mathcal C$), it holds that:
    \[
    P_B x = x = (\pm 1)\,y, \qquad 
    P_A y = y = (\pm 1)\,x .
    \]

\end{lemma}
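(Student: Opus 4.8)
The plan is to reuse the orthogonal decomposition already set up in the proof of \cref{lemma:onesideprojection}. Fix unit vectors $e_A\in\mathbb{S}_X$ and $e_B\in\mathbb{S}_Y$ with $e_A,e_B\perp\mathbb{C}$ and $\langle e_A,e_B\rangle=\cos\phi\in(0,1)$, so that $\mathbb{C}^{\perp}=\operatorname{span}\{e_A\}\oplus\operatorname{span}\{e_B\}$ and $\mathbb{R}^h=\mathbb{C}\oplus\mathbb{C}^{\perp}$. Then every $x\in\mathbb{S}_X$ and $y\in\mathbb{S}_Y$ can be written as $x=\cos\theta_A\,e_A+\sin\theta_A\,u_A$ and $y=\cos\theta_B\,e_B+\sin\theta_B\,u_B$ with $u_A,u_B\in\mathbb{C}\cap\mathbb{S}^{h-1}$ (genuine unit vectors whenever the corresponding sine is nonzero), and, by the projection identities of \cref{lemma:onesideprojection}, $P_Bx=\cos\theta_A\cos\phi\,e_B+\sin\theta_A\,u_A$ and $P_Ay=\cos\theta_B\cos\phi\,e_A+\sin\theta_B\,u_B$. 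Since $\operatorname{span}\{e_A\}$, $\operatorname{span}\{e_B\}$, and $\mathbb{C}$ are mutually orthogonal, any vector identity among these objects splits into independent $e_A$-, $e_B$-, and $\mathbb{C}$-components; this splitting drives the whole argument.

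For the ``if'' direction I would substitute the two cases of (C3). If $x,y\perp\mathbb{C}$ then $x=\pm e_A$ and $y=\pm e_B$, so $P_Bx=\pm\cos\phi\,e_B$ is a nonzero scalar multiple of $y$ (nonzero since $\phi<\tfrac\pi2$), and symmetrically for $P_Ay$; comparing the common sign gives $\langle x,y\rangle=\cos\phi$ and $P_Bx=\cos\phi\,y$, $P_Ay=\cos\phi\,x$ once the orientation is fixed, equivalently once one restricts to $\langle x,y\rangle>0$, which is the standing sign convention on $e_A,e_B$ and is enforced in the application by $c_x\cdot c_y>0$. If instead $x=\pm y\in\mathbb{C}$, then $x,y\in\mathbb{C}\subseteq\mathbb{A}\cap\mathbb{B}$, so $P_Bx=x=\pm y$ and $P_Ay=y=\pm x$, giving (i) and (ii) with $\lambda_x=\lambda_y=\pm1$ and the stated formulas.

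For the ``only if'' direction, assume $P_Bx=\lambda_xy$ and $P_Ay=\lambda_yx$ with $\lambda_x,\lambda_y\neq0$. Projecting the first identity onto $\operatorname{span}\{e_B\}$ and onto $\mathbb{C}$ gives $\cos\theta_A\cos\phi=\lambda_x\cos\theta_B$ and $\sin\theta_A\,u_A=\lambda_x\sin\theta_B\,u_B$; projecting the second onto $\operatorname{span}\{e_A\}$ and onto $\mathbb{C}$ gives $\cos\theta_B\cos\phi=\lambda_y\cos\theta_A$ and $\sin\theta_B\,u_B=\lambda_y\sin\theta_A\,u_A$. I would then split on the vanishing of $\sin\theta_A$ and $\cos\theta_A$. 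If $\sin\theta_A=0$, then $x=\pm e_A\perp\mathbb{C}$ and the $\mathbb{C}$-component of the first identity forces $\lambda_x\sin\theta_B\,u_B=0$, hence $\sin\theta_B=0$ and $y\perp\mathbb{C}$ (first case of (C3)). If $\cos\theta_A=0$ (so $\sin\theta_A\neq0$), then $x\in\mathbb{C}$, $P_Bx=x$, and the first identity gives $y=\lambda_x^{-1}x\in\mathbb{C}$, a unit vector, so $y=\pm x\in\mathbb{C}$ (second case). In the remaining branch $\sin\theta_A\neq0$ and $\cos\theta_A\neq0$: the $\mathbb{C}$-component of the first identity shows $\sin\theta_B\,u_B=\lambda_x^{-1}\sin\theta_A\,u_A\neq0$, so $\sin\theta_B\neq0$ and $u_A=\sigma u_B$ for some $\sigma\in\{\pm1\}$, while the $e_B$-component shows $\cos\theta_B\neq0$; then the two $\mathbb{C}$-component equations multiply to $\lambda_x\lambda_y=1$, whereas the two $e$-component equations multiply to $\lambda_x\lambda_y=\cos^2\phi$, forcing $\cos^2\phi=1$ and contradicting $\phi\in(0,\tfrac\pi2)$. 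Hence this branch is vacuous and (C3) always holds.

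The main obstacle is not any single computation — each is short — but making the case analysis airtight: one must verify that the three branches (both sines zero; $x\in\mathbb{C}$ or $y\in\mathbb{C}$; all of $\sin\theta_A,\cos\theta_A,\sin\theta_B,\cos\theta_B$ nonzero) are exhaustive, and that the ``impossible'' branch truly is impossible. Exhaustiveness follows from $\sin\theta_A=0\Leftrightarrow x\in\operatorname{span}\{e_A\}$ and $\cos\theta_A=0\Leftrightarrow x\in\mathbb{C}$ (and the $\theta_B$-analogues), and the genuinely hard step is recognizing that the generic branch is eliminated precisely by the clash between $\lambda_x\lambda_y=1$ (from the $\mathbb{C}$-components) and $\lambda_x\lambda_y=\cos^2\phi$ (from the $\mathbb{C}^{\perp}$-components). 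A minor nuisance is the sign bookkeeping in the ``moreover'' claim, which I would dispatch by noting that in the first case $\langle x,y\rangle=\pm\cos\phi$ with sign $\operatorname{sign}\langle e_A,x\rangle\cdot\operatorname{sign}\langle e_B,y\rangle$, and that the displayed formulas are exactly the $+$ branch singled out by the convention $\langle e_A,e_B\rangle>0$ (and by $c_x\cdot c_y>0$ in every place the lemma is invoked).
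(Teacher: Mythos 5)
Your proof is correct and takes essentially the same route as the paper's: the same $\mathbb{C}\oplus\mathbb{C}^{\perp}$ decomposition via $e_A,e_B$ with $\langle e_A,e_B\rangle=\cos\phi$, the same component-wise equations from $P_Bx=\lambda_x y$ and $P_Ay=\lambda_y x$, and the same contradiction between $\lambda_x\lambda_y=1$ (from the $\mathbb{C}$-components) and $\lambda_x\lambda_y=\cos^2\phi$ (from the $\mathbb{C}^{\perp}$-components) eliminating the generic branch. Your split on the vanishing of $\sin\theta_A$ and $\cos\theta_A$ is only a reorganization of the paper's (S1)/(S2) dichotomy, and your explicit handling of the sign convention in the ``moreover'' claim (and of the degenerate $u_A,u_B$) is a slightly more careful bookkeeping of points the paper glosses over.
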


\begin{proof}\label{proof:lemma:twosideprojection}
    \textbf{Step 1:} First, we need to decompose the embedding space. This step is the same with \textbf{Step 1} of~\cref{proof:lemma:onesideprojection}. For convenience in reading, we repeat this step here.
    
    Define two vectors $e_A$ and $e_B$ such that:
    
    \begin{equation}\label{proof:lemma:twosideprojection:space:eq1}
        \begin{aligned}
            & e_A \in \mathbb{S}_X, \enspace \text{and} \hspace{2mm}  e_A \perp \mathbb{C}, \\
            & e_B \in \mathbb{S}_Y, \enspace \text{and} \hspace{2mm}  e_B \perp \mathbb{C}.\\
        \end{aligned}
    \end{equation}
    
    Let $\mathbb{C}^{\perp}$ be the 2-dimensional orthogonal complement of $C$, and $\mathbb{C}^{\perp}$ satisfies:
    
    \begin{equation}\label{proof:lemma:twosideprojection:space:eq2}
        \begin{aligned}
            \mathbb{C}^{\perp}
            &= \operatorname{span}\left\{e_A\right\} \oplus \operatorname{span}\left\{e_B\right\}, \\
            \mathbb{R}^h
            &=\mathbb{C} \oplus \mathbb{C}^{\perp} .
        \end{aligned}
    \end{equation}

    Since $n_A, n_B \in \mathbb{C}^{\perp}$, $n_A \perp e_A$ and $n_B \perp e_B$, we have:
    
    \begin{equation}\label{proof:lemma:twosideprojection:space:eq3}
        \begin{aligned}
            \left\langle e_A, e_B\right\rangle = \pm\left\langle n_A, n_B\right\rangle , 
        \end{aligned}
    \end{equation}

    and we choose a pair of $e_A$ and $e_B$ such that: 
    
    \begin{equation}\label{proof:lemma:twosideprojection:space:eq4}
        \begin{aligned}
            \left\langle e_A, e_B\right\rangle = \left\langle n_A, n_B\right\rangle = \cos\left(\phi\right) \in (0, 1).
        \end{aligned}
    \end{equation}
    
    Therefore, $\forall x \in \mathbb{S}_X=\mathbb{A} \cap \mathbb{S}^{h-1}$ and $\forall y \in \mathbb{S}_Y=\mathbb{B} \cap \mathbb{S}^{h-1}$, $\exists u_A,u_B \in C \cap \mathbb{S}^{h-1} $, such that $ \cos\left(\theta_A\right) = x \cdot e_A $ and $ \cos\left(\theta_B\right) = y \cdot e_B$. And then $x$ and $y$ can be represented as: 
    
    \begin{equation}\label{proof:lemma:twosideprojection:rep:eq1}
        \begin{aligned}
            & x = \cos(\theta_A) e_A + \sin(\theta_A)u_A, \\
            & y = \cos(\theta_B) e_B + \sin(\theta_B)u_B. \\
        \end{aligned}
    \end{equation}
    
    Using orthogonality, we have:
    
    \begin{equation}\label{proof:lemma:twosideprojection:proj:eq1}
        \begin{aligned}
            & P_B \cdot e_A = \left\langle e_A, e_B\right\rangle e_B = \cos\left(\phi\right)e_B
            ,\\
            & P_B \cdot u_A = u_A,
        \end{aligned}
    \end{equation}

    and 
    
    \begin{equation}\label{proof:lemma:twosideprojection:proj:eq2}
        \begin{aligned}
            & P_A \cdot e_B = \left\langle e_A, e_B\right\rangle e_A = \cos\left(\phi\right)e_A, \\
            & P_A \cdot u_B = u_B.
        \end{aligned}
    \end{equation}

    Then the projections of $(x_i, y_i)$ are:
    
    \begin{equation}\label{proof:lemma:twosideprojection:proj:eq3}
        \begin{aligned}
            & P_B \cdot x =  \cos(\theta_A) \cos\left(\phi\right)e_B + \sin(\theta_A)u_A, \\
            & P_A \cdot y =  \cos(\theta_B) \cos\left(\phi\right)e_A + \sin(\theta_B)u_B. 
        \end{aligned}
    \end{equation}

    \textbf{Step 2:} $\Rightarrow$ Next, we prove the sufficiency. If conditions (i) and (ii) hold, then:
    
    \begin{equation}\label{proof:lemma:twosideprojection:suff:eq1}
        \begin{aligned}
            & \cos(\theta_A) \cos\left(\phi\right)e_B + \sin(\theta_A)u_A = \lambda_x \cos(\theta_B) e_B + \lambda_x \sin(\theta_B)u_B, \\
            & \cos(\theta_B) \cos\left(\phi\right)e_A + \sin(\theta_B)u_B = \lambda_y \cos(\theta_A) e_A + \lambda_y \sin(\theta_A)u_A. 
        \end{aligned}
    \end{equation}

    Decompose both equations into $\mathbb{C}$ and $\mathbb{C}^{\perp}$. In $\mathbb{C}$, we get:
    
    \begin{equation}\label{proof:lemma:twosideprojection:decomp:eq1}
        \begin{aligned}
            & \sin(\theta_A)u_A = \lambda_x \sin(\theta_B)u_B, \\
            & \sin(\theta_B)u_B = \lambda_y \sin(\theta_A)u_A. 
        \end{aligned}
    \end{equation}

    and in $\mathbb{C}^{\perp}$ we get: 
    
    \begin{equation}\label{proof:lemma:twosideprojection:decomp:eq2}
        \begin{aligned}
            & \cos(\theta_A) \cos\left(\phi\right) e_B = \lambda_x \cos(\theta_B) e_B , \\
            & \cos(\theta_B) \cos\left(\phi\right) e_A = \lambda_y \cos(\theta_A) e_A . 
        \end{aligned}
    \end{equation}

    Then it can be concluded from~\cref{proof:lemma:twosideprojection:decomp:eq1} that: 
    
    \begin{equation}\label{proof:lemma:twosideprojection:suff:eq2}
        \begin{aligned}
            & \sin(\theta_A)u_A = \lambda_x \lambda_y \sin(\theta_A)u_A, \\
            & \sin(\theta_B)u_B = \lambda_x \lambda_y \sin(\theta_B)u_B.
        \end{aligned}
    \end{equation}

    \cref{proof:lemma:twosideprojection:suff:eq2} leads to two scenarios:
    \begin{enumerate}[label={(S\arabic*)},labelindent=10pt,leftmargin=*,start=1]
        \item  
        $\lambda_x \lambda_y = 1$.
        \item  
        $\sin(\theta_A) = \sin(\theta_B) = 0$. 
    \end{enumerate} 
    
    When (S1) holds, multiply two equations in~\cref{proof:lemma:twosideprojection:decomp:eq2} and we get:
    
    \begin{equation}\label{proof:lemma:twosideprojection:s1:eq1}
        \begin{aligned}
            & \cos(\theta_A) \cos(\theta_B) \cos^2(\phi) =  \cos(\theta_A) \cos(\theta_B).
        \end{aligned}
    \end{equation}

    And since:  
    
    \begin{equation}\label{proof:lemma:twosideprojection:s1:eq2}
        \begin{aligned}
            & 0 < \cos^2(\phi)<1,
        \end{aligned}
    \end{equation}

    we can conclude that:  
    
    \begin{equation}\label{proof:lemma:twosideprojection:s1:eq3}
        \begin{aligned}
            & \cos(\theta_A) = \cos(\theta_B) = 0, \\
            & \sin(\theta_A) = \sin(\theta_B) = \pm 1. \\
        \end{aligned}
    \end{equation}

    Plugging~\cref{proof:lemma:twosideprojection:s1:eq3} into~\cref{proof:lemma:twosideprojection:decomp:eq1}, we get:
    
    \begin{equation}\label{proof:lemma:twosideprojection:s1:eq4}
        \begin{aligned}
            & u_A = \lambda_x u_B, \\
            & u_B = \lambda_y u_A. 
        \end{aligned}
    \end{equation}
    
    Since $\|u_A\|=\|u_B\|= 1$,~\cref{proof:lemma:twosideprojection:s1:eq4} $\Rightarrow$ $\lambda_x=\lambda_y=\pm1$ $\Rightarrow$ $u_A=\pm u_B$. And according to~\cref{proof:lemma:twosideprojection:rep:eq1} and we have: 
    
    \begin{equation}\label{proof:lemma:twosideprojection:s1:eq5}
        \begin{aligned}
            & x = \pm y \in \mathbb{C}.
        \end{aligned}
    \end{equation}

    We conclude that (S1) $\Rightarrow$ $x = \pm y \in \mathbb{C}$. 

    When (S2) holds, we have:
    
    \begin{equation}\label{proof:lemma:twosideprojection:s2:eq1}
        \begin{aligned}
            & \sin(\theta_A) = \sin(\theta_B) = 0, \\
            & \cos(\theta_A) = \cos(\theta_B) = \pm 1. \\
        \end{aligned}
    \end{equation}

    Plugging~\cref{proof:lemma:twosideprojection:s2:eq1} into~\cref{proof:lemma:twosideprojection:rep:eq1}, we have: 
    
    \begin{equation}\label{proof:lemma:twosideprojection:s2:eq2}
        \begin{aligned}
            & x = \pm e_A \perp \mathbb{C}, \\
            & y = \pm e_B \perp \mathbb{C}. \\
        \end{aligned}
    \end{equation}

    We conclude that (S2) $\Rightarrow$ $x \perp \mathbb{C}$ and $y \perp \mathbb{C}$. 

    So the sufficiency is confirmed.

    \textbf{Step 3:} $\Leftarrow$ Last, we prove the necessity. If $x = \pm y \in \mathbb{C}$, then
    
    \begin{equation}\label{proof:lemma:nece:s1:eq1}
        \begin{aligned}
            & \cos(\theta_A) = \cos(\theta_B) = 0, \\
            & \sin(\theta_A) = \sin(\theta_B) = \pm 1. \\
        \end{aligned}
    \end{equation}

    and
    
    \begin{equation}\label{proof:lemma:nece:s1:eq2}
        \begin{aligned}
            & x = u_A, \\
            & y = u_B, \\
        \end{aligned}
    \end{equation}

    According to~\cref{proof:lemma:twosideprojection:proj:eq3} and~\cref{proof:lemma:nece:s1:eq2}, we have:    
    
    \begin{equation}\label{proof:lemma:nece:s1:eq3}
        \begin{aligned}
            & P_B \cdot x =  u_A = x = \pm y, \\
            & P_A \cdot y =  u_B = y = \pm x. 
        \end{aligned}
    \end{equation}
    
    Let $\lambda_x = \lambda_y = \pm 1$, conditions (i) and (ii) hold.

    If $x \perp \mathbb{C}$ and $y \perp \mathbb{C}$, then: 
    
    \begin{equation}\label{proof:lemma:nece:s2:eq1}
        \begin{aligned}
            & \sin(\theta_A) = \sin(\theta_B) = 0, \\
            & \cos(\theta_A) = \cos(\theta_B) = \pm 1. \\
        \end{aligned}
    \end{equation}

    and 
    
    \begin{equation}\label{proof:lemma:nece:s2:eq2}
        \begin{aligned}
            & x = \pm e_A, \\
            & y = \pm e_B. \\
        \end{aligned}
    \end{equation} 

    According to~\cref{proof:lemma:twosideprojection:proj:eq3} and~\cref{proof:lemma:nece:s2:eq2}, we have:
    
    \begin{equation}\label{proof:lemma:nece:s2:eq3}
        \begin{aligned}
            & P_B \cdot x =  \pm \cos\left(\phi\right)e_B = \pm \cos\left(\phi\right) y, \\
            & P_A \cdot y =  \pm \cos\left(\phi\right)e_A = \pm \cos\left(\phi\right) x. 
        \end{aligned}
    \end{equation}

    Let $\lambda_x = \lambda_y = \pm \cos\left(\phi\right)$, conditions (i) and (ii) hold.

    Therefore, the necessity is confirmed.

\end{proof}

\newpage
\subsection{Details of Theorem 4}\label{sec_supp:match1}

In this section, we provide proofs of~\cref{thm:match1} that is proposed in~\cref{sec:alignment:match1}. We also provide details and proofs of the auxiliary theorems (\cref{thm_supp:transform4} and~\cref{thm_supp:match1}) and the technical lemmas (\cref{lemma:gradofj4} and~\cref{lemma:u_range}) that support the proof~\cref{thm:match1}. For convenience in reading, let us recall some related notions and definitions. 

\begin{itemize}
    \item $h, N \in \mathbb{N}$.
    \item $\mathbb{S}^{h-1}=\left\{z \in \mathbb{R}^h:\|z\|=1\right\}$.
    \item $\mathbb{A}=\left\{x \in \mathbb{R}^{h} : n_A \cdot x=0\right\}$ where $n_A$ is the normal vector of $\mathbb{A}$.
    \item $\mathbb{B}=\left\{y \in \mathbb{R}^{h} : n_B \cdot y=0\right\}$ where $n_A$ is the normal vector of $\mathbb{B}$.
    \item $ \phi = \cos^{-1} \left(\frac{n_x \cdot n_y}{\left\|n_x\right\| \cdot\|n_y\|}\right)$ and $0 < \phi_{\mathrm{min}} \leq \phi < \frac{\pi}{2}$.
    \item $\mathbb{S}_X = \mathbb{S}^{h-1} \cap \mathbb{A}=\left\{x \in \mathbb{R}^{h} : \|x\|=1, n_A \cdot x=0\right\} \cong S^{h-2} \in \mathbb{S}^{h-1} $.
    \item $\mathbb{S}_Y = \mathbb{S}^{h-1} \cap \mathbb{B}=\left\{y \in \mathbb{R}^{h} :  \|y\|=1, n_B \cdot y=0\right\} \cong S^{h-2} \in \mathbb{S}^{h-1}$.
    \item $\mathbb{C} = \mathbb{A} \cap \mathbb{B}$.
    \item $h_X = h_Y = h-1$.
    \item $h_C = h-2$.
    \item $P_A$: the projection matrix of $\mathbb{A}$.
    \item $P_B$: the projection matrix of $\mathbb{B}$.
    \item $P_C$: the projection matrix of $\mathbb{C}$.
    \item $e_A = \left\{{z \in \mathbb{S}_X: z \perp \mathbb{C}}\right\}$.
    \item $e_B = \left\{{z \in \mathbb{S}_Y: z \perp \mathbb{C}}\right\}$.
    \item $\mathbb{C}^{\perp}= \operatorname{span}\left\{e_A\right\} \oplus \operatorname{span}\left\{e_B\right\}$
    \item $\mathbb{R}^h=\mathbb{C} \oplus \mathbb{C}^{\perp}$.
    \item $X = \left(x_1, \ldots, x_N\right) \in (\mathbb{S}_X)^N$.
    \item $Y = \left(y_1, \ldots, y_N\right) \in (\mathbb{S}_Y)^N$.
    \item $\mu_x = \frac{1}{N} \sum_{i=1}^N x_i$.
    \item $\mu_y = \frac{1}{N} \sum_{i=1}^N y_i$.
    \item $c_x = \frac{\mu_x}{\| \mu_x \|}$. 
    \item $c_y = \frac{\mu_y  }{\| \mu_y \|}$. 
\end{itemize}

\textbf{Definition} (Multimodal Contrastive Loss (MCL Loss)). Let $(X, Y)$ be an $N$-pair configuration, where $X = \left(x_1, \ldots, x_N\right) \in (\mathbb{S}^{h-1})^N$ and $Y = \left(y_1, \ldots, y_N\right) \in (\mathbb{S}^{h-1})^N$. $\forall \tau >0$, the multimodal contrastive loss $\mathcal{L}_{\mathrm{MCL}}(\cdot, \cdot): ({\mathbb{S}^{h-1}})^N \times ({\mathbb{S}^{h-1}})^N \rightarrow \mathbb{R}$ is defined as:    
    
    \begin{equation*}
        \begin{aligned}
            \mathcal{L}_{\mathrm{MCL}}
            =\frac{1}{N} \sum_{i=1}^N  \mathcal{L}_{\mathrm{MCL}}^i,
            \enspace \text{where} \hspace{1.5mm} 
            \mathcal{L}_{\mathrm{MCL}}^i= \mathcal{L}_{\mathcal{X} \rightarrow \mathcal{Y}}(x_i; Y) + \mathcal{L}_{\mathcal{Y} \rightarrow \mathcal{X}}(y_i; X).
        \end{aligned}
    \end{equation*}
    
    Here, $\mathcal{L}_{\mathcal{X} \rightarrow \mathcal{Y}}$ is the $\mathcal{X}$-to-$\mathcal{Y}$ alignment and $\mathcal{L}_{\mathcal{Y} \rightarrow \mathcal{X}}$ is the $\mathcal{Y}$-to-$\mathcal{X}$ alignment, which are defined respectively as:
    
    \begin{equation*}
        \begin{aligned}
            \mathcal{L}_{\mathcal{X} \rightarrow \mathcal{Y}}(x_i; Y) = -\log \frac{\exp \left(x_i \cdot y_i / \tau\right)}{\sum_{j=1}^N \exp \left(x_i \cdot y_j / \tau\right)},
            \enspace 
            \mathcal{L}_{\mathcal{Y} \rightarrow \mathcal{X}}(y_i; X) = -\log \frac{\exp \left(x_i \cdot y_i / \tau\right)}{\sum_{j=1}^N \exp \left(x_j \cdot y_i / \tau\right)}.
        \end{aligned}
    \end{equation*}

\textbf{Definition}(Modality Gap)
    Let $(X, Y)$ be an $N$-pair configuration, where $X = \left(x_1, \ldots, x_N\right) \in (\mathbb{S}^{h-1})^N$ and $Y = \left(y_1, \ldots, y_N\right) \in (\mathbb{S}^{h-1})^N$. The modality gap between $X$ and $Y$ can be expressed as the angle between the center representations:
    
    \begin{equation*}
        \begin{aligned}
            \Delta_{\theta} = \cos^{-1}(c_x \cdot c_y).
        \end{aligned}
    \end{equation*}

\textbf{Definition} (vMF Distribution).  
    $\forall c \in \mathbb{S}^{h-1}$ and $\kappa \ge 0$, the probability density of a random $h$-dimensional unit vector $z \sim \mathrm{vMF}(c, \kappa)$ is given by:
    
    \begin{equation*}
        \begin{aligned}  
            f_h(z ; c, \kappa) 
            &=D_h(\kappa) e^{\kappa c^{\top} z} , 
            \enspace \text{where} \hspace{1.5mm} 
            D_h(\kappa) 
            =
            \frac{\kappa^{\nu}}{(2 \pi)^{\nu+1} I_{\nu}(\kappa)} .\\
        \end{aligned}
    \end{equation*}
    
    Let $\nu = h/2 -1$, and $I_{\nu}\left(\cdot\right): \mathbb{R} \rightarrow \mathbb{R}$ is the modified Bessel function of the first kind of order $\nu$, which is defined as:
    
    \begin{equation*}
        \begin{aligned}
            I_{\nu}(x)=\sum_{k=0}^{\infty} \frac{1}{k!\Gamma(\nu+k+1)}\left(\frac{x}{2}\right)^{2 k+\nu}.
        \end{aligned}
    \end{equation*}

\textbf{Definition} (Function $\Tilde{M}$).  
    $\forall\kappa, \tau>0$, a function $\Tilde{M}_{\kappa}(\cdot, \cdot): [-1, 1] \times [0, 1] \rightarrow \mathbb{R}^{+}_0$ is defined as:
    
    \begin{equation*} 
        \begin{aligned}
            \Tilde{M}_{\kappa}\left(w, t\right) 
            = \sqrt{\kappa^2+\frac{2 \kappa w}{\tau}+\frac{t^2}{\tau^2}} .
        \end{aligned}        
    \end{equation*} 

\textbf{Definition} (Function $\Tilde{\mathcal{J}}$).      
    $\forall\kappa, \nu, \tau>0$, $\Tilde{\mathcal{J}}(\cdot, \cdot, \cdot;\kappa, \nu): [-1, 1] \times [-1, 1] \times [0, 1] \rightarrow \mathbb{R}$ is defined as:
    
    \begin{equation*}  
        \begin{aligned}
            \Tilde{\mathcal{J}}\left(w_1, w_2, t;\kappa, \nu\right) 
            &= -\frac{w_1}{\tau}
            + \log\left(\frac{I_{\nu}\left(\Tilde{M}_{\kappa}(w_2, t)\right)}{\Tilde{M}_{\kappa}(w_2, t)^{\nu}} \right) - \log\left(\frac{I_{\nu}\left(\kappa\right)}{ \kappa^{\nu}} \right) . 
        \end{aligned}        
    \end{equation*}

\textbf{Definition} (Function $M$).  
    $\forall\kappa, \tau>0$, a function $M_{\kappa}\left(\cdot\right): [-1, 1] \rightarrow \mathbb{R}^{+}_0$ is defined as:
    
    \begin{equation*} 
        \begin{aligned}
            M_{\kappa}\left(w\right) 
            &= \sqrt{\kappa^2+\frac{2 \kappa w}{\tau}+\frac{1}{\tau^2}} \\
            &= \Tilde{M}_{\kappa}(w, 1) .
        \end{aligned}        
    \end{equation*} 

\textbf{Definition} (Function $\mathcal{J}$). 
    $\forall\kappa, \nu, \tau>0$, a function $\mathcal{J}(\cdot;\kappa, \nu): [-1, 1] \rightarrow \mathbb{R}$ is defined as:     
    
    \begin{equation*}  
        \begin{aligned}
            \mathcal{J}(w;\kappa, \nu)
            &= -\frac{w}{\tau}
            + \log\left(\frac{I_{\nu}\left(M_{\kappa}\left(w\right)\right)}{  M_{\kappa}\left(w\right)^{\nu}} \right) - \log\left(\frac{I_{\nu}\left(\kappa\right)}{ \kappa^{\nu}} \right) \\
            &= \Tilde{\mathcal{J}}\left(w, w, 1;\kappa, \nu\right) .
        \end{aligned}        
    \end{equation*}

\textbf{Definition} (Function $\Tilde{M}$).  
    $\forall\kappa, \tau>0$, a function $\Tilde{M}_{\kappa}(\cdot, \cdot): [-1, 1] \times [0, 1] \rightarrow \mathbb{R}^{+}_0$ is defined as:
    
    \begin{equation*} 
        \begin{aligned}
            \Tilde{M}_{\kappa}\left(w, t\right) 
            = \Tilde{M}_{\kappa}\left(w, t\right)  .
        \end{aligned}        
    \end{equation*} 

\textbf{Definition} (Function $\hat{\mathcal{J}}$).      
    $\forall\kappa, \nu, \tau>0$, a function $\hat{\mathcal{J}}(\cdot, \cdot;\kappa, \nu): [-1, 1] \times [0, 1] \rightarrow \mathbb{R}$ is defined as:
    
    \begin{equation*}  
        \begin{aligned}
            \hat{\mathcal{J}}\left(w, t;\kappa, \nu\right) 
            &= -\frac{w}{\tau}
            + \log\left(\frac{I_{\nu}\left(\Tilde{M}_{\kappa}(w, t)\right)}{  \Tilde{M}_{\kappa}(w, t)^{\nu}} \right) - \log\left(\frac{I_{\nu}\left(\kappa\right)}{ \kappa^{\nu}} \right)\\
            &= \Tilde{\mathcal{J}}\left(w, w, t;\kappa, \nu\right) .
        \end{aligned}        
    \end{equation*}

\newpage
\subsubsection{Proof of Theorem 4} 

In this subsection, we provide the proof of~\cref{thm:match1}. For convenience in reading, we first restate Theorem 4 here.

\textbf{Theorem 4.} [Restate] 
     Let $(X, Y)$ be an $N$-pair configuration, where $X = \left(x_1, \ldots, x_N\right) \in (\mathbb{S}_X \setminus \mathbb{C})^N$ are $iid$ samples from $\mu_x=\mathrm{vMF}(c_x, \kappa_x)$, and $Y = \left(y_1, \ldots, y_N\right) \in (\mathbb{S}_Y  \setminus \mathbb{C} )^N$ are $iid$ samples from $\mu_y=\mathrm{vMF}(c_y, \kappa_y)$. Let $\Tilde{\nu} = (h-1)/2 - 1$. Denote $\Delta_{\theta} = \cos ^{-1} \left(c_x \cdot c_y\right)$ and assume $c_x, c_y \perp \mathbb{C}$ with $c_x \cdot c_y > 0$.  Suppose $(X, Y)$ achieves Intra-Modal Isometry. Then $\forall i \in [N]$, denote $\theta_i^c = \cos^{-1}\left(x_i \cdot c_x\right) = \cos ^{-1} \left(y_i \cdot c_y\right)$, and $\kappa=\kappa_x=\kappa_y$. Let $\theta_i^c \in (0,\frac{\pi}{2})$ and $\kappa > 0$, it holds that:
    
    \begin{equation*}
        \begin{aligned}
            &\lim_{N \to \infty} \mathcal{L}_{\mathrm{MCL}}^{i \neq c}
            - 2\log(N) \\
            &= \Tilde{\mathcal{J}}\left(\cos\left(\Delta_{\theta}\right), \cos\left(\theta_i^c\right), \left\|P_B x_i\right\| ; \kappa, \Tilde{\nu}\right) 
            + \Tilde{\mathcal{J}}\left(\cos\left(\Delta_{\theta}\right), \cos\left(\theta_i^c\right), \left\|P_A y_i\right\| ; \kappa, \Tilde{\nu}\right)  \\
            &\ge 2 \Tilde{\mathcal{J}}\left(\cos^2\left(\theta_i^c \right) \cos\left(\phi_{\mathrm{min}}\right) + \sin^2\left(\theta_i^c \right), \cos\left(\theta_i^c \right), \sqrt{\cos^2\left(\theta_i^c \right) \cos^2\left(\phi_{\mathrm{min}}\right) + \sin^2\left(\theta_i^c \right)} ; \kappa, \Tilde{\nu}\right),
        \end{aligned}
    \end{equation*} 
         
    where equality is attained if and only if there exists a configuration of $(X, Y)$ such that:
    
    \begin{enumerate}[label={(A\arabic*)},labelindent=10pt,leftmargin=*,start=8]
        \item 
        $P_C x_i = P_C y_i \neq \vec{0}$.
        \item 
        $\Delta_\theta=\cos ^{-1}\left(c_x \cdot c_y\right) = \phi_{\mathrm{min}}$.
    \end{enumerate}  

\begin{proof}\label{proof:thm:match1}
    
    According to~\cref{thm_supp:transform4}, the convergent function of $\lim_{N \to \infty} \mathcal{L}_{\mathrm{MCL}}^{i \neq c} - 2\log(N)$ is:  
    
    \begin{equation}\label{proof:thm:match1:eq1} 
        \begin{aligned}
            \lim_{N \to \infty} \mathcal{L}_{\mathrm{MCL}}^{i \neq c}
            - 2\log(N) 
            &= 
            \lim_{N \to \infty} \left( \mathcal{L}_{\mathcal{X} \rightarrow \mathcal{Y}}(x_{i \neq c}; Y) - \log (N) + \mathcal{L}_{\mathcal{Y} \rightarrow \mathcal{X}}(y_{i \neq c}; X) - \log (N) \right) \\
            &= \Tilde{\mathcal{J}}\left(w_i, w_i^c, \left\|P_B x_i\right\| ; \kappa_y, \Tilde{\nu}\right) + \Tilde{\mathcal{J}}\left(w_i, w_i^c, \left\|P_A y_i\right\| ; \kappa_x, \Tilde{\nu}\right) \\
            &= 2 \Tilde{\mathcal{J}}\left(w_i, w_i^c, t ; \kappa, \Tilde{\nu}\right),
        \end{aligned}
    \end{equation}

    where
    
    \begin{equation}\label{proof:thm:match1:eq2}
        \begin{aligned}
            w_i 
            &
            = \cos^2\left(\theta_i^c \right) \cos\left(\Delta_\theta\right) + \left(\theta_i^c \right) \left(P_C \cdot x_i \right) \cdot \left(P_C \cdot y_i \right) , \\
            w_i^c 
            &
            = \cos\left(\theta_i^c\right) , \\
            t
            &= \sqrt{\cos^2\left(\theta_i^c \right) \cos^2\left(\Delta_\theta\right) + \sin^2\left(\theta_i^c \right)}. \\
        \end{aligned}
    \end{equation}

    And~\cref{thm_supp:match1} shows the lower bound of the convergent function is:
    
    \begin{equation}\label{proof:thm:match1:eq3} 
        \begin{aligned}
            2\Tilde{\mathcal{J}}\left(w_i, w_i^c, t ; \kappa, \Tilde{\nu}\right)
            \ge 2\Tilde{\mathcal{J}}\left(w_{i, \mathrm{min}}, w_i^c, t_\mathrm{min} ; \kappa, \Tilde{\nu}\right),
        \end{aligned}
    \end{equation}

    where
    
    \begin{equation}\label{proof:thm:match1:eq4}
        \begin{aligned}
            w_{i, \mathrm{min}} 
            &= \cos^2\left(\theta_i^c \right) \cos\left(\phi_{\mathrm{min}}\right) 
            + \sin^2\left(\theta_i^c \right) , \\
            t_\mathrm{min}
            &= \sqrt{\cos^2\left(\theta_i^c \right) \cos^2\left(\phi_{\mathrm{min}}\right) + \sin^2\left(\theta_i^c \right)} ,\\
        \end{aligned}
    \end{equation}
    
    and equality is attained if and only if there exists a configuration of $(X, Y)$ such that:
    
    \begin{enumerate}[label={(\roman*)},labelindent=10pt,leftmargin=*,start=1]
        \item 
        $P_C \cdot x_i = P_C \cdot y_i $.
        \item 
        $\Delta_\theta = \phi_{\mathrm{min}} $.
    \end{enumerate} 

    Combining~\cref{proof:thm:match1:eq1} and~\cref{proof:thm:match1:eq4}, we conclude that:
    
    \begin{equation}\label{proof:thm:match1:eq5}
        \begin{aligned}
            &\lim_{N \to \infty} \mathcal{L}_{\mathrm{MCL}}^{i \neq c}
            - 2\log(N) \\
            &= \Tilde{\mathcal{J}}\left(\cos\left(\Delta_{\theta}\right), \cos\left(\theta_i^c\right), \left\|P_B x_i\right\| ; \kappa, \Tilde{\nu}\right) 
            + \Tilde{\mathcal{J}}\left(\cos\left(\Delta_{\theta}\right), \cos\left(\theta_i^c\right), \left\|P_A y_i\right\| ; \kappa, \Tilde{\nu}\right)  \\
            &\ge 2 \Tilde{\mathcal{J}}\left(\cos^2\left(\theta_i^c \right) \cos\left(\phi_{\mathrm{min}}\right) + \sin^2\left(\theta_i^c \right), \cos\left(\theta_i^c \right), \sqrt{\cos^2\left(\theta_i^c \right) \cos^2\left(\phi_{\mathrm{min}}\right) + \sin^2\left(\theta_i^c \right)} ; \kappa, \Tilde{\nu}\right),
        \end{aligned}
    \end{equation} 
         
    where equality is attained if and only if there exists a configuration of $(X, Y)$ such that:
    
    \begin{enumerate}[label={(A\arabic*)},labelindent=10pt,leftmargin=*,start=8]
        \item 
        $P_C x_i = P_C y_i \neq \vec{0}$.
        \item 
        $\Delta_\theta=\cos ^{-1}\left(c_x \cdot c_y\right) = \phi_{\mathrm{min}}$.
    \end{enumerate}  
    
\end{proof}

% \newpage
\subsubsection{Auxiliary Theorems Part 4}

In this subsection, we provide details and proofs of the auxiliary theorems (\cref{thm_supp:transform3} and~\cref{thm_supp:transform4}) that support the proof of~\cref{thm:match1}.

\begin{supplem}\label{thm_supp:transform4} 
     Let $(X, Y)$ be an $N$-pair configuration, where $X = \left(x_1, \ldots, x_N\right) \in (\mathbb{S}_X \setminus \mathbb{C})^N$ are $iid$ samples from $\mu_x=\mathrm{vMF}(c_x, \kappa_x)$, and $Y = \left(y_1, \ldots, y_N\right) \in (\mathbb{S}_Y  \setminus \mathbb{C} )^N$ are $iid$ samples from $\mu_y=\mathrm{vMF}(c_y, \kappa_y)$. Let $\Tilde{\nu} = (h-1)/2 - 1$. Denote $\Delta_{\theta} = \cos ^{-1} \left(c_x \cdot c_y\right)$ and assume $c_x, c_y \perp \mathbb{C}$ with $c_x \cdot c_y > 0$.  Suppose $(X, Y)$ achieves Intra-Modal Isometry. Then $\forall i \in [N]$, denote $\theta_i^c = \cos^{-1}\left(x_i \cdot c_x\right) = \cos ^{-1} \left(y_i \cdot c_y\right)$, and $\kappa=\kappa_x=\kappa_y$. Let $\kappa > 0$, it holds that:
    
    \begin{equation}\label{thm_supp:transform4:res:eq1} 
        \begin{aligned}
            \lim_{N \to \infty} \mathcal{L}_{\mathrm{MCL}}^{i \neq c}
            - 2\log(N) 
            &= 
            \lim_{N \to \infty} \left( \mathcal{L}_{\mathcal{X} \rightarrow \mathcal{Y}}(x_{i \neq c}; Y) - \log (N) + \mathcal{L}_{\mathcal{Y} \rightarrow \mathcal{X}}(y_{i \neq c}; X) - \log (N) \right) \\
            &= \Tilde{\mathcal{J}}\left(w_i, w_i^c, \left\|P_B x_i\right\| ; \kappa_y, \Tilde{\nu}\right) + \Tilde{\mathcal{J}}\left(w_i, w_i^c, \left\|P_A y_i\right\| ; \kappa_x, \Tilde{\nu}\right) \\
            &= 2 \Tilde{\mathcal{J}}\left(w_i, w_i^c, t ; \kappa, \Tilde{\nu}\right),
        \end{aligned}
    \end{equation}

    where
    
    \begin{equation}\label{thm_supp:transform4:eq2}
        \begin{aligned}
            w_i 
            &
            = \cos^2\left(\theta_i^c \right) \cos\left(\Delta_\theta\right) + \left(\theta_i^c \right) \left(P_C \cdot x_i \right) \cdot \left(P_C \cdot y_i \right) , \\
            w_i^c 
            &
            = \cos\left(\theta_i^c\right) , \\
            t
            &= \sqrt{\cos^2\left(\theta_i^c \right) \cos^2\left(\Delta_\theta\right) + \sin^2\left(\theta_i^c \right)}. \\
        \end{aligned}
    \end{equation}
   
\end{supplem}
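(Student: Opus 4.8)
The plan is to follow the recipe of \cref{thm_supp:transform3} but applied to an arbitrary non-center pair $(x_i,y_i)$ with $i\neq c$ rather than to the center pair. First I would expand each alignment term batch-wise exactly as in \cref{proof:thm_supp:transform1:simp:eq1}, writing
\[
\mathcal{L}_{\mathcal{X}\rightarrow\mathcal{Y}}(x_i;Y)=-\frac{x_i\cdot y_i}{\tau}+\log\Big(\frac{1}{N}\sum_{j=1}^N e^{x_i\cdot y_j/\tau}\Big)+\log N ,
\]
and symmetrically for $\mathcal{L}_{\mathcal{Y}\rightarrow\mathcal{X}}(y_i;X)$. By \cref{lemma:uniconverge2} the empirical log-sum-exp converges uniformly almost surely to $\log\mathbb{E}_{y\sim\mu_y}\big[e^{x_i\cdot y/\tau}\big]$, so the $\log N$-subtracted limit of $\mathcal{L}_{\mathcal{X}\rightarrow\mathcal{Y}}(x_i;Y)$ equals $-x_i\cdot y_i/\tau+\log\mathbb{E}_{y\sim\mu_y}[e^{x_i\cdot y/\tau}]$, and likewise in the other direction.

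Second, I would evaluate the expectation. Since $\mu_y=\mathrm{vMF}(c_y,\kappa_y)$ is supported on $\mathbb{S}_Y=\mathbb{S}^{h-1}\cap\mathbb{B}\cong\mathbb{S}^{h-2}$, the appropriate Bessel order is $\Tilde{\nu}=(h-1)/2-1$, and because $y\in\mathbb{B}$ one has $x_i\cdot y=(P_B x_i)\cdot y$; the vMF moment-generating identity used in \cref{proof:thm_supp:transform3:mgf:eq1} then gives $\mathbb{E}_{y\sim\mu_y}[e^{x_i\cdot y/\tau}]=\frac{I_{\Tilde{\nu}}(\Tilde{\kappa}^{\prime}_y)}{I_{\Tilde{\nu}}(\kappa_y)}\big(\kappa_y/\Tilde{\kappa}^{\prime}_y\big)^{\Tilde{\nu}}$ with $\Tilde{\kappa}^{\prime}_y=\|\kappa_y c_y+P_B x_i/\tau\|$. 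Expanding $(\Tilde{\kappa}^{\prime}_y)^2=\kappa_y^2+2\kappa_y(x_i\cdot c_y)/\tau+\|P_B x_i\|^2/\tau^2=\Tilde{M}_{\kappa_y}(x_i\cdot c_y,\|P_B x_i\|)^2$ and collecting terms shows the limit of $\mathcal{L}_{\mathcal{X}\rightarrow\mathcal{Y}}(x_i;Y)-\log N$ equals $\Tilde{\mathcal{J}}\big(x_i\cdot y_i,\ x_i\cdot c_y,\ \|P_B x_i\|;\kappa_y,\Tilde{\nu}\big)$; the mirror computation yields $\Tilde{\mathcal{J}}\big(x_i\cdot y_i,\ y_i\cdot c_x,\ \|P_A y_i\|;\kappa_x,\Tilde{\nu}\big)$ for $\mathcal{L}_{\mathcal{Y}\rightarrow\mathcal{X}}(y_i;X)-\log N$, and adding the two gives the first displayed equality of the statement.

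Third, I would carry out the coordinate reduction under the standing hypotheses $c_x,c_y\perp\mathbb{C}$, $c_x\cdot c_y>0$, and Intra-Modal Isometry. Because $c_x\perp\mathbb{C}$ and $c_x\in\mathbb{A}$ we have $\mathbb{A}=\operatorname{span}\{c_x\}\oplus\mathbb{C}$, and likewise $\mathbb{B}=\operatorname{span}\{c_y\}\oplus\mathbb{C}$, so I can write $x_i=\cos\theta_i^c\,c_x+\sin\theta_i^c\,u_{x_i}$ and $y_i=\cos\theta_i^c\,c_y+\sin\theta_i^c\,u_{y_i}$ with unit vectors $u_{x_i},u_{y_i}\in\mathbb{C}$ (the common angle $\theta_i^c$ and the equality $\kappa_x=\kappa_y=:\kappa$ both come from Intra-Modal Isometry). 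Using $u_{x_i},u_{y_i}\perp c_x,c_y$, the projection identities $P_B c_x=(c_x\cdot c_y)\,c_y$ and $P_A c_y=(c_x\cdot c_y)\,c_x$ from \cref{lemma:twosideprojection}, and $P_C x_i=\sin\theta_i^c\,u_{x_i}$, a short computation expresses $x_i\cdot y_i$, $x_i\cdot c_y$ (resp.\ $y_i\cdot c_x$), and $\|P_B x_i\|=\|P_A y_i\|$ purely in terms of $\theta_i^c$, $\Delta_\theta$ and $P_C x_i\cdot P_C y_i$; substituting these into the two $\Tilde{\mathcal{J}}$-terms collapses them to a single $2\,\Tilde{\mathcal{J}}(w_i,w_i^c,t;\kappa,\Tilde{\nu})$ with the arguments recorded in the statement (and recovers \cref{thm_supp:transform3} in the limit $\theta_i^c\to 0$). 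I expect this last step — tracking which projections vanish and assembling the three arguments of each $\Tilde{M}$ correctly — to be the main obstacle; the analytic inputs (the uniform strong law of \cref{lemma:uniconverge2} and the vMF moment generating function) are already in hand.
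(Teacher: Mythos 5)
Your route is the paper's route: your first two steps are exactly \cref{thm_supp:transform3} (batch-wise expansion of each alignment term, the uniform strong law of \cref{lemma:uniconverge2}, and the vMF moment-generating function on $\mathbb{S}_Y \cong \mathbb{S}^{h-2}$ with order $\Tilde{\nu}$, using $x_i \cdot y = (P_B x_i)\cdot y$), and your third step is the coordinate decomposition the paper carries out in its Step 2, writing $x_i=\cos\theta_i^c\,c_x+\sin\theta_i^c\,u_i^x$ and $y_i=\cos\theta_i^c\,c_y+\sin\theta_i^c\,u_i^y$ with $u_i^x,u_i^y\in\mathbb{C}$ to compute $w_i$ and $\|P_B x_i\|=\|P_A y_i\|=t$.

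There is, however, one concrete point your sketch leaves unresolved, and it is exactly the step you yourself defer as ``assembling the three arguments of each $\Tilde{M}$ correctly.'' Your MGF computation places $x_i\cdot c_y$ (resp.\ $y_i\cdot c_x$) in the second slot of $\Tilde{\mathcal{J}}$, and under your own parametrization $x_i\cdot c_y=\cos\theta_i^c\cos\Delta_\theta$, which is not the $w_i^c=\cos\theta_i^c$ recorded in \cref{thm_supp:transform4:eq2}; the two agree only when $\Delta_\theta=0$. The paper bridges this by invoking Intra-Modal Isometry to identify $w_{x_i,c_y}$ with $x_i\cdot c_x=\cos\theta_i^c$ (and to set $\kappa_x=\kappa_y=\kappa$), and that identification is what produces the statement's second argument; your sketch instead asserts that substitution ``collapses'' the two terms to $2\,\Tilde{\mathcal{J}}(w_i,w_i^c,t;\kappa,\Tilde{\nu})$ with the statement's arguments, without saying how $\cos\theta_i^c\cos\Delta_\theta$ becomes $\cos\theta_i^c$. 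To land on the displayed formula you must either make that IMS-based identification explicit (and justify why it licenses replacing $x_i\cdot c_y$ by $x_i\cdot c_x$ inside $\Tilde{M}_{\kappa}$) or accept that the honest output of your computation is $\Tilde{\mathcal{J}}(w_i,\cos\theta_i^c\cos\Delta_\theta,t;\kappa,\Tilde{\nu})$; as written, the sketch is silent on this reconciliation, so the final equality of the statement is not yet established.
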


\begin{proof}\label{proof:thm_supp:transform4} 

    \textbf{Step 1}: We first decompose $\lim_{N \to \infty} \mathcal{L}_{\mathrm{MCL}}^{i \neq c} - 2\log(N) $ into two parts:
    
    \begin{equation}\label{proof:thm_supp:transform4:simp:eq1} 
        \begin{aligned}
            \lim_{N \to \infty} \mathcal{L}_{\mathrm{MCL}}^{i \neq c}
            - 2\log(N) 
            &= 
            \lim_{N \to \infty} \mathcal{L}_{\mathcal{X} \rightarrow \mathcal{Y}}(x_{i \neq c}; Y) - \log (N) \\
            &+ \lim_{N \to \infty} \mathcal{L}_{\mathcal{Y} \rightarrow \mathcal{X}}(y_{i \neq c}; X) - \log (N).
        \end{aligned}
    \end{equation}
    
    The convergent function of $\mathcal{L}_{\mathcal{X} \rightarrow \mathcal{Y}}(x_{i \neq c}; Y)$ as $N \to \infty$. $\forall i \in [N], i \neq c,  x_i \in X$, denote $w_i=x_i \cdot y_i$, $w_{x_i, c_y}=x_i \cdot c_y$ and $w_{y_i, c_x}=y_i \cdot c_x$. $\forall \kappa_y > 0$, as prove in~\cref{thm_supp:transform3}:
    
    \begin{equation}\label{proof:thm_supp:transform4:simp:eq2}
        \begin{aligned}
            \lim_{N \to \infty} \mathcal{L}_{\mathcal{X} \rightarrow \mathcal{Y}}(x_i; Y) &- \log (N)
            = \lim_{N \to \infty} -\log \frac{\exp \left(x_i \cdot y_i / \tau\right)}{\sum_{j=1}^N \exp \left(x_i \cdot y_j / \tau\right)} - \log(N) \\
            &= -\frac{w_i}{\tau} + \log\left(\frac{I_{\Tilde{\nu}}\left( \Tilde{M}_{\kappa_y}\left( w_{x_i, c_y}, \| P_B x_i \|\right)  \right)}{  \Tilde{M}_{\kappa_y}\left( w_{x_i, c_y}, \| P_B x_i \|\right)^{\Tilde{\nu}}} \right) - \log\left(\frac{I_{\Tilde{\nu}}\left(\kappa_y\right)}{ \kappa_y^{\Tilde{\nu}}} \right) \\
            &=\Tilde{\mathcal{J}}\left(w_i, w_{x_i, c_y}, \left\|P_B c_x\right\| ; \kappa_y, \Tilde{\nu}\right),
        \end{aligned}
    \end{equation}

    where $\forall\kappa, \tau>0$, $\Tilde{\mathcal{J}}(\cdot, \cdot, \cdot;\kappa, \Tilde{\nu})$ is a function on $[-1, 1] \times [-1, 1] \times [0, 1]$ and $\Tilde{M}_{\kappa}(\cdot, \cdot): [-1, 1] \times [0, 1] \rightarrow \mathbb{R}^{+}_{0}$ is defined as:
    
    \begin{equation} 
        \begin{aligned}
            \Tilde{M}_{\kappa}\left(w, t\right) 
            = \sqrt{\kappa^2+\frac{2 \kappa w}{\tau}+\frac{t^2}{\tau^2}} ,
        \end{aligned}        
    \end{equation} 

    and $I_{\nu}$ is the modified Bessel function of the first kind of order $\nu$, which is defined as:
    
    \begin{equation} 
        \begin{aligned}
            I_{\nu}\left(m\right)=\sum_{k=0}^{\infty} \frac{1}{k!\Gamma(\nu+k+1)}\left(\frac{m}{2}\right)^{2 k+\nu}.
        \end{aligned}
    \end{equation}
    
    When $(X, Y)$ achieves Intra-Modal Isometry, we have $w_{x_i, c_y} = x_i \cdot c_x = y_i \cdot c_x = w_{y_i, c_x}$ Denote $w_i^c = w_{x_i, c_y} = w_{y_i, c_x} = \cos\left(\theta_i^c\right)$. This implies $\kappa_x = \kappa_y = \kappa$.
    
    Then,~\cref{proof:thm_supp:transform4:simp:eq2} can be re-written as:
    
    \begin{equation}\label{proof:thm_supp:transform4:simp:eq3}
        \begin{aligned}
            \lim_{N \to \infty} \mathcal{L}_{\mathcal{X} \rightarrow \mathcal{Y}}(x_i; Y) - \log (N)
            &= -\frac{w_i}{\tau} + \log\left(\frac{I_{\Tilde{\nu}}\left( \Tilde{M}_{\kappa}\left(  w_i^c, \| P_B x_i \|\right)  \right)}{  \Tilde{M}_{\kappa}\left(  w_i^c, \| P_B x_i \|\right)^{\Tilde{\nu}}} \right) - \log\left(\frac{I_{\Tilde{\nu}}\left(\kappa\right)}{ \kappa^{\Tilde{\nu}}} \right) \\
            &=\Tilde{\mathcal{J}}\left(w_i, w_i^c, \left\|P_B c_x\right\| ; \kappa, \Tilde{\nu}\right).
        \end{aligned}
    \end{equation}

    Similarly, the convergent function of $\mathcal{L}_{\mathcal{Y} \rightarrow \mathcal{X}}(y_{i \neq c}; X)$ as $N \to \infty$ can be written as:
    
    \begin{equation}\label{proof:thm_supp:transform4:simp:eq4}
        \begin{aligned}
            \lim_{N \to \infty} \mathcal{L}_{\mathcal{Y} \rightarrow \mathcal{X}}(y_i; X) &- \log (N)
            = \lim_{N \to \infty} -\log \frac{\exp \left(x_i \cdot y_i / \tau\right)}{\sum_{j=1}^N \exp \left(x_i \cdot y_j / \tau\right)} - \log(N) \\
            &= -\frac{w_i}{\tau} + \log\left(\frac{I_{\Tilde{\nu}}\left( \Tilde{M}_{\kappa_x}\left( w_{y_i, c_x}, \| P_A y_i \|\right)  \right)}{  \Tilde{M}_{\kappa_x}\left( w_{y_i, c_x}, \| P_A y_i \|\right)^{\Tilde{\nu}}} \right) - \log\left(\frac{I_{\Tilde{\nu}}\left(\kappa_x\right)}{ \kappa_x^{\Tilde{\nu}}} \right) \\
            &= -\frac{w_i}{\tau} + \log\left(\frac{I_{\Tilde{\nu}}\left( \Tilde{M}_{\kappa}\left(  w_i^c, \| P_A y_i \|\right)  \right)}{  \Tilde{M}_{\kappa}\left(  w_i^c, \| P_A y_i \|\right)^{\Tilde{\nu}}} \right) - \log\left(\frac{I_{\Tilde{\nu}}\left(\kappa\right)}{ \kappa^{\Tilde{\nu}}} \right) \\
            &=\Tilde{\mathcal{J}}\left(w_i, w_i^c, \left\|P_A y_i\right\| ; \kappa, \Tilde{\nu}\right).
        \end{aligned}
    \end{equation}

    \textbf{Step 2} Now, let us decompose the embedding space. Define two vectors $e_A$ and $e_B$ such that:
    
    \begin{equation}\label{proof:thm_supp:transform4:space:eq1}
        \begin{aligned}
            & e_A \in \mathbb{S}_X, \enspace \text{and} \hspace{2mm}  e_A \perp \mathbb{C}, \\
            & e_B \in \mathbb{S}_Y, \enspace \text{and} \hspace{2mm}  e_B \perp \mathbb{C}.\\
        \end{aligned}
    \end{equation}
    
    Let $\mathbb{C}^{\perp}$ be the 2-dimensional orthogonal complement of $C$, and $\mathbb{C}^{\perp}$ satisfies:
    
    \begin{equation}\label{proof:thm_supp:transform4:space:eq2}
        \begin{aligned}
            \mathbb{C}^{\perp}
            &= \operatorname{span}\left\{e_A\right\} \oplus \operatorname{span}\left\{e_B\right\}, \\
            \mathbb{R}^h
            &=\mathbb{C} \oplus \mathbb{C}^{\perp} .
        \end{aligned}
    \end{equation}

    Since $n_A, n_B \in \mathbb{C}^{\perp}$, $n_A \perp e_A$ and $n_B \perp e_B$, we have:
    
    \begin{equation}\label{proof:thm_supp:transform4:space:eq3}
        \begin{aligned}
            \left\langle e_A, e_B\right\rangle = \pm\left\langle n_A, n_B\right\rangle , 
        \end{aligned}
    \end{equation}

    and we choose a pair of $e_A$ and $e_B$ such that: 
    
    \begin{equation}\label{proof:thm_supp:transform4:space:eq4}
        \begin{aligned}
            \left\langle e_A, e_B\right\rangle = \left\langle n_A, n_B\right\rangle = \cos\left(\phi\right) \in (0, 1).
        \end{aligned}
    \end{equation}
    
    Denote $\theta_i = \cos ^{-1}\left(w_i\right)$. When $c_x, c_y \perp \mathbb{C}$, $\Delta_\theta=\phi$. And without loss of generality, we can set the coordinate as: 
    
    \begin{equation}\label{proof:thm_supp:transform4:coor:eq1}
        \begin{aligned}
            & n_A = (\sin\left(\frac{\Delta_\theta}{2}\right), -\cos\left(\frac{\Delta_\theta}{2}\right), 0, 0, \cdots, 0), \\
            & n_B = (-\sin\left(\frac{\Delta_\theta}{2}\right), -\cos\left(\frac{\Delta_\theta}{2}\right), 0, 0, \cdots, 0), \\
            & c_x = e_A = (\cos\left(\frac{\Delta_\theta}{2}\right), \sin\left(\frac{\Delta_\theta}{2}\right), 0, 0, \cdots, 0), \\
            & c_y = e_B = (\cos\left(\frac{\Delta_\theta}{2}\right), -\sin\left(\frac{\Delta_\theta}{2}\right), 0, 0, \cdots, 0), \\
            & \mathbb{C} = \mathrm{span}\{e_3\} \oplus \mathrm{span}\{e_3\} \oplus  \cdots, \oplus \mathrm{span}\{e_h\}.
        \end{aligned}
    \end{equation}

    Therefore, $\forall x_i \in \mathbb{S}_X=\mathbb{A} \cap \mathbb{S}^{h-1}$ and $\forall y_i \in \mathbb{S}_Y=\mathbb{B} \cap \mathbb{S}^{h-1}$, $\exists u_i^x,u_i^y \in \mathbb{C} \cap \mathbb{S}^{h-1}$, such that: 
    
    \begin{equation}\label{proof:thm_supp:transform4:coor:eq2}
        \begin{aligned}
            & x_i 
            = \cos\left(\theta_i^c \right) e_A + \sin\left(\theta_i^c \right) u_i^x 
            = \cos\left(\theta_i^c \right) c_x + \sin\left(\theta_i^c \right) u_i^x ,\\
            & y_i 
            = \cos\left(\theta_i^c \right) e_B + \sin\left(\theta_i^c \right) u_i^y 
            = \cos\left(\theta_i^c \right) c_y + \sin\left(\theta_i^c \right) u_i^y .\\
        \end{aligned}
    \end{equation}
    
    Using orthogonality, we have:
    
    \begin{equation}\label{proof:thm_supp:transform4:proj:eq1}
        \begin{aligned}
            & P_B \cdot e_A = \left\langle e_A, e_B\right\rangle e_B = \cos\left(\Delta_\theta\right)e_B,\\
            & P_B \cdot u_i^x = u_i^x,
        \end{aligned}
    \end{equation} 

    and 
    
    \begin{equation}\label{proof:thm_supp:transform4:proj:eq2}
        \begin{aligned}
            & P_A \cdot e_B = \left\langle e_A, e_B\right\rangle e_A = \cos\left(\Delta_\theta\right)e_A, \\
            & P_A \cdot u_i^y = u_i^y,
        \end{aligned}
    \end{equation}

    and 
    
    \begin{equation}\label{proof:thm_supp:transform4:proj:eq3}
        \begin{aligned}
            & P_C \cdot e_A = P_C \cdot e_B = 0, \\
            & P_C \cdot u_i^x = u_i^x, \\
            & P_C \cdot u_i^y = u_i^y.
        \end{aligned}
    \end{equation}

    Then the projections of $(x_i, y_i)$ are:
    
    \begin{equation}\label{proof:thm_supp:transform4:proj:eq4}
        \begin{aligned}
            & P_B \cdot x_i 
            =  \cos\left(\theta_i^c \right) \cos\left(\Delta_\theta\right)e_B + \sin\left(\theta_i^c \right)u_i^x
            =  \cos\left(\theta_i^c \right) \cos\left(\Delta_\theta\right)c_y + \sin\left(\theta_i^c \right)u_i^x, \\
            & P_A \cdot y_i 
            =  \cos\left(\theta_i^c \right) \cos\left(\Delta_\theta\right)e_A + \sin\left(\theta_i^c \right)u_i^y
            =  \cos\left(\theta_i^c \right) \cos\left(\Delta_\theta\right)c_x + \sin\left(\theta_i^c \right)u_i^y , 
        \end{aligned}
    \end{equation}

    and 
    
    \begin{equation}\label{proof:thm_supp:transform4:proj:eq5}
        \begin{aligned}
            & P_C \cdot x_i 
            =  \sin\left(\theta_i^c \right)u_i^x , \\
            & P_C \cdot y_i 
            =   \sin\left(\theta_i^c \right)u_i^y. 
        \end{aligned}
    \end{equation}

    Therefore, we get:
    
    \begin{equation}\label{proof:thm_supp:transform4:transform:eq1}
        \begin{aligned}
            w_i 
            &= x_i \cdot y_i 
            = \cos^2\left(\theta_i^c \right) c_x \cdot c_y + \sin^2\left(\theta_i^c \right) u_i^x \cdot u_i^y \\
            &= \cos^2\left(\theta_i^c \right) \cos\left(\Delta_\theta\right) + \sin^2\left(\theta_i^c \right) u_i^x \cdot u_i^y \\
            &= \cos^2\left(\theta_i^c \right) \cos\left(\Delta_\theta\right) + \left(P_C \cdot x_i \right) \cdot \left(P_C \cdot y_i \right),
        \end{aligned}
    \end{equation}
    
    \begin{equation}\label{proof:thm_supp:transform4:transform:eq2}
        \begin{aligned}
            \left\|P_B x_i\right\| 
            &= \sqrt{\cos^2\left(\theta_i^c \right) \cos^2\left(\Delta_\theta\right) c_y \cdot c_y + 2 \cos\left(\theta_i^c \right) \cos\left(\Delta_\theta\right) \sin\left(\theta_i^c \right) c_y \cdot u_i^x + \sin^2\left(\theta_i^c \right) u_i^x \cdot u_i^x} \\
            &= \sqrt{\cos^2\left(\theta_i^c \right) \cos^2\left(\Delta_\theta\right) + 2 \cos\left(\theta_i^c \right) \cos\left(\Delta_\theta\right) \sin\left(\theta_i^c \right) c_y \cdot u_i^x + \sin^2\left(\theta_i^c \right)} \\
            &= \sqrt{\cos^2\left(\theta_i^c \right) \cos^2\left(\Delta_\theta\right) + \sin^2\left(\theta_i^c \right)} ,
        \end{aligned}
    \end{equation}

    and
    
    \begin{equation}\label{proof:thm_supp:transform4:transform:eq3}
        \begin{aligned}
            \left\|P_A y_i\right\| 
            &= \sqrt{\cos^2\left(\theta_i^c \right) \cos^2\left(\Delta_\theta\right) c_x \cdot c_x + 2 \cos\left(\theta_i^c \right) \cos\left(\Delta_\theta\right) \sin\left(\theta_i^c \right) c_x \cdot u_i^y + \sin^2\left(\theta_i^c \right) u_i^y \cdot u_i^y} \\
            &= \sqrt{\cos^2\left(\theta_i^c \right) \cos^2\left(\Delta_\theta\right) + 2 \cos\left(\theta_i^c \right) \cos\left(\Delta_\theta\right) \sin\left(\theta_i^c \right) c_y \cdot u_i^y + \sin^2\left(\theta_i^c \right)} \\
            &= \sqrt{\cos^2\left(\theta_i^c \right) \cos^2\left(\Delta_\theta\right) + \sin^2\left(\theta_i^c \right)} ,\\
            &= \left\|P_B x_i\right\| .
        \end{aligned}
    \end{equation}

    Let $t = \left\|P_B x_i\right\| = \left\|P_A y_i\right\|$. Plugging~\cref{proof:thm_supp:transform4:transform:eq1},~\cref{proof:thm_supp:transform4:transform:eq2} and~\cref{proof:thm_supp:transform4:transform:eq3} into~\cref{proof:thm_supp:transform4:simp:eq1},~\cref{proof:thm_supp:transform4:simp:eq3} and~\cref{proof:thm_supp:transform4:simp:eq4}, we conclude that:
    
    \begin{equation}\label{proof:thm_supp:transform4:res:eq1} 
        \begin{aligned}
            \lim_{N \to \infty} \mathcal{L}_{\mathrm{MCL}}^{i \neq c}
            - 2\log(N) 
            &= 
            \lim_{N \to \infty} \mathcal{L}_{\mathcal{X} \rightarrow \mathcal{Y}}(x_{i \neq c}; Y) - \log (N) \\
            &+ \lim_{N \to \infty} \mathcal{L}_{\mathcal{Y} \rightarrow \mathcal{X}}(y_{i \neq c}; X) - \log (N) \\
            &= \Tilde{\mathcal{J}}\left(w_i, w_i^c, \left\|P_B x_i\right\| ; \kappa, \Tilde{\nu}\right) + \Tilde{\mathcal{J}}\left(w_i, w_i^c, \left\|P_A y_i\right\| ; \kappa, \Tilde{\nu}\right) \\
            &= 2 \Tilde{\mathcal{J}}\left(w_i, w_i^c, t ; \kappa, \Tilde{\nu}\right),
        \end{aligned}
    \end{equation}

    where
    
    \begin{equation}\label{proof:thm_supp:transform4:res:eq2} 
        \begin{aligned}
            w_i 
            &= x_i \cdot y_i
            = \cos^2\left(\theta_i^c \right) \cos\left(\Delta_\theta\right) +  \left(P_C \cdot x_i \right) \cdot \left(P_C \cdot y_i \right) , \\
            w_i^c 
            &= x_i \cdot c_y = y_i \cdot c_x
            = \cos\left(\theta_i^c\right) , \\
            t
            &= \sqrt{\cos^2\left(\theta_i^c \right) \cos^2\left(\Delta_\theta\right) + \sin^2\left(\theta_i^c \right)}. \\
        \end{aligned}
    \end{equation}
    
\end{proof}

\begin{supplem}\label{thm_supp:match1} 
     Let $(X, Y)$ be an $N$-pair configuration, where $X = \left(x_1, \ldots, x_N\right) \in (\mathbb{S}_X \setminus \mathbb{C})^N$ are $iid$ samples from $\mu_x=\mathrm{vMF}(c_x, \kappa_x)$, and $Y = \left(y_1, \ldots, y_N\right) \in (\mathbb{S}_Y  \setminus \mathbb{C} )^N$ are $iid$ samples from $\mu_y=\mathrm{vMF}(c_y, \kappa_y)$. Let $\Tilde{\nu} = (h-1)/2 - 1$. Denote $\Delta_{\theta} = \cos ^{-1} \left(c_x \cdot c_y\right)$ and assume $c_x, c_y \perp \mathbb{C}$ with $c_x \cdot c_y > 0$. $\forall i \in [N]$, suppose 
     $\theta_i^c = \cos^{-1}\left(x_i \cdot c_x\right) = \cos ^{-1} \left(y_i \cdot c_y\right) \in (0, \frac{\pi}{2})$ and $\kappa > 0$, it holds that:
    
    \begin{equation}\label{thm_supp:match1:eq1} 
        \begin{aligned}
            \Tilde{\mathcal{J}}\left(w_i, w_i^c, t ; \kappa, \Tilde{\nu}\right)
            \ge \Tilde{\mathcal{J}}\left(w_{i, \mathrm{min}}, w_i^c, t_\mathrm{min} ; \kappa, \Tilde{\nu}\right),
        \end{aligned}
    \end{equation}

    where
    
    \begin{equation}\label{thm_supp:match1:eq2}
        \begin{aligned}
            w_i 
            &
            = \cos^2\left(\theta_i^c \right) \cos\left(\Delta_\theta\right) + \left(P_C \cdot x_i \right) \cdot \left(P_C \cdot y_i \right) , \\
            w_i^c 
            &
            = \cos\left(\theta_i^c\right) , \\
            t
            &= \sqrt{\cos^2\left(\theta_i^c \right) \cos^2\left(\Delta_\theta\right) + \sin^2\left(\theta_i^c \right)}, \\
            w_{i, \mathrm{min}} 
            &= \cos^2\left(\theta_i^c \right) \cos\left(\phi_{\mathrm{min}}\right) 
            + \sin^2\left(\theta_i^c \right) , \\
            t_\mathrm{min}
            &= \sqrt{\cos^2\left(\theta_i^c \right) \cos^2\left(\phi_{\mathrm{min}}\right) + \sin^2\left(\theta_i^c \right)} ,\\
        \end{aligned}
    \end{equation}
    
    and equality is attained if and only if there exists a configuration of $(X, Y)$ such that:
    
    \begin{enumerate}[label={(B\arabic*)},labelindent=10pt,leftmargin=*,start=6]
        \item 
        $P_C \cdot x_i = P_C \cdot y_i $.
        \item 
        $\Delta_\theta = \phi_{\mathrm{min}} $.
    \end{enumerate}  
    
\end{supplem}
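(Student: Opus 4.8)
The plan is to import the coordinate decomposition established in the proof of \cref{thm_supp:transform4} and then minimize $\Tilde{\mathcal{J}}$ in two stages: first over the relative orientation of $P_C x_i$ and $P_C y_i$ inside $\mathbb{C}$, and then over the inter-hyperplane angle $\phi$. Recall from \cref{proof:thm_supp:transform4:coor:eq2} that, because $c_x,c_y\perp\mathbb{C}$, one has $\Delta_\theta=\phi$ and may write $x_i=\cos(\theta_i^c)c_x+\sin(\theta_i^c)u_i^x$, $y_i=\cos(\theta_i^c)c_y+\sin(\theta_i^c)u_i^y$ with $u_i^x,u_i^y\in\mathbb{C}\cap\mathbb{S}^{h-1}$. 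Then $P_C x_i=\sin(\theta_i^c)u_i^x$ and $P_C y_i=\sin(\theta_i^c)u_i^y$; and, using $u_i^x\perp c_y$, the quantity $t=\|P_B x_i\|=\sqrt{\cos^2(\theta_i^c)\cos^2(\phi)+\sin^2(\theta_i^c)}$ depends only on $\phi$ (with $\theta_i^c$ fixed), while $w_i^c=\cos(\theta_i^c)$ is fixed. Hence in $\Tilde{\mathcal{J}}(w_i,w_i^c,t;\kappa,\Tilde{\nu})=-w_i/\tau+\log(I_{\Tilde{\nu}}(\Tilde{M}_\kappa(w_i^c,t))/\Tilde{M}_\kappa(w_i^c,t)^{\Tilde{\nu}})-\log(I_{\Tilde{\nu}}(\kappa)/\kappa^{\Tilde{\nu}})$ the only adjustable quantities are $\rho:=u_i^x\cdot u_i^y\in[-1,1]$, which enters solely through $w_i=\cos^2(\theta_i^c)\cos(\phi)+\rho\sin^2(\theta_i^c)$, and $\phi\in[\phi_{\mathrm{min}},\tfrac{\pi}{2})$.

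For the first stage, since $\partial_{w_1}\Tilde{\mathcal{J}}=-1/\tau<0$ and neither $w_i^c$ nor $t$ depends on $\rho$, decreasing the loss amounts to increasing $w_i$, hence taking $\rho$ as large as possible; Cauchy--Schwarz gives $\rho\le1$ with equality iff $u_i^x=u_i^y$, i.e., iff $P_C x_i=P_C y_i$. This yields $\Tilde{\mathcal{J}}(w_i,w_i^c,t;\kappa,\Tilde{\nu})\ge\Tilde{\mathcal{J}}(\hat{w}_i(\phi),w_i^c,t;\kappa,\Tilde{\nu})$ with $\hat{w}_i(\phi):=\cos^2(\theta_i^c)\cos(\phi)+\sin^2(\theta_i^c)$, equality iff $P_C x_i=P_C y_i$. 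For the second stage I would set $s=\cos(\phi)\in(0,\cos(\phi_{\mathrm{min}})]$ and $m(s)=\Tilde{M}_\kappa(\cos(\theta_i^c),\sqrt{\cos^2(\theta_i^c)s^2+\sin^2(\theta_i^c)})$, so that $m(s)^2=\kappa^2+2\kappa\cos(\theta_i^c)/\tau+(\cos^2(\theta_i^c)s^2+\sin^2(\theta_i^c))/\tau^2$ and $m'(s)=\cos^2(\theta_i^c)s/(\tau^2 m(s))$, and study
\[
\Phi(s):=-\frac{\cos^2(\theta_i^c)s+\sin^2(\theta_i^c)}{\tau}+\log\frac{I_{\Tilde{\nu}}(m(s))}{m(s)^{\Tilde{\nu}}}-\log\frac{I_{\Tilde{\nu}}(\kappa)}{\kappa^{\Tilde{\nu}}}.
\]
Using the Bessel identity $\tfrac{d}{dm}\log(I_{\Tilde{\nu}}(m)/m^{\Tilde{\nu}})=I_{\Tilde{\nu}+1}(m)/I_{\Tilde{\nu}}(m)$ (\cref{lemma:gradofg}),
\[
\Phi'(s)=\frac{\cos^2(\theta_i^c)}{\tau}\left(-1+\frac{s}{\tau\,m(s)}\cdot\frac{I_{\Tilde{\nu}+1}(m(s))}{I_{\Tilde{\nu}}(m(s))}\right).
\]
Since $I_{\Tilde{\nu}+1}(m)/I_{\Tilde{\nu}}(m)\in(0,1)$, it suffices to check $s\le\tau m(s)$, i.e., $\tau^2 m(s)^2-s^2\ge0$; and $\tau^2 m(s)^2-s^2=\tau^2\kappa^2+2\tau\kappa\cos(\theta_i^c)+\sin^2(\theta_i^c)(1-s^2)>0$ because $\theta_i^c\in(0,\tfrac{\pi}{2})$ forces $\cos(\theta_i^c)>0$ and $s\le1$. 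Thus $\Phi'(s)<0$, so $\phi\mapsto\Tilde{\mathcal{J}}(\hat{w}_i(\phi),w_i^c,t(\phi);\kappa,\Tilde{\nu})$ attains its minimum on $[\phi_{\mathrm{min}},\tfrac{\pi}{2})$ precisely at $\phi=\phi_{\mathrm{min}}$; this monotonicity is what I would package as \cref{lemma:gradofj4}. Chaining the two stages gives $\Tilde{\mathcal{J}}(w_i,w_i^c,t;\kappa,\Tilde{\nu})\ge\Tilde{\mathcal{J}}(w_{i,\mathrm{min}},w_i^c,t_{\mathrm{min}};\kappa,\Tilde{\nu})$ with $w_{i,\mathrm{min}}=\hat{w}_i(\phi_{\mathrm{min}})$ and $t_{\mathrm{min}}=t(\phi_{\mathrm{min}})$, and equality holds iff both stages are tight, i.e., iff (B6) $P_C x_i=P_C y_i$ and (B7) $\Delta_\theta=\phi_{\mathrm{min}}$.

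The main obstacle is the second stage: raising $\phi$ has two competing effects on $\Tilde{\mathcal{J}}(\hat{w}_i(\phi),w_i^c,t(\phi);\kappa,\Tilde{\nu})$ --- it lowers $\hat{w}_i(\phi)$, which raises the loss through the $-w_1/\tau$ term, but it also lowers $t(\phi)$ and hence $m$, which lowers the loss through the increasing map $m\mapsto\log(I_{\Tilde{\nu}}(m)/m^{\Tilde{\nu}})$. Showing that the first effect always dominates is the crux, and the clean route is the elementary identity $\tau^2 m(s)^2-s^2=\tau^2\kappa^2+2\tau\kappa\cos(\theta_i^c)+\sin^2(\theta_i^c)(1-s^2)\ge0$ combined with the uniform Bessel-ratio bound $I_{\Tilde{\nu}+1}/I_{\Tilde{\nu}}<1$; this is exactly where the hypotheses $\kappa>0$ and $\theta_i^c\in(0,\tfrac{\pi}{2})$ (the latter supported by \cref{lemma:u_range}) are used essentially. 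A secondary point to dispatch is the legitimacy of the two-stage optimization, which holds because $t$ and the logarithmic term of $\Tilde{\mathcal{J}}$ are genuinely independent of $\rho$, so separating the variables loses no joint optimum.
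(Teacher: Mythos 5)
Your proposal is correct and follows essentially the same route as the paper: the Cauchy--Schwarz step over $u_i^x\cdot u_i^y$ (equality iff $P_C x_i = P_C y_i$) is the paper's Step 2, and your second-stage monotonicity argument in $s=\cos\phi$, via $\Phi'(s)=\frac{\cos^2(\theta_i^c)}{\tau}\bigl(-1+\frac{s}{\tau m(s)}\frac{I_{\Tilde{\nu}+1}(m(s))}{I_{\Tilde{\nu}}(m(s))}\bigr)<0$ using $\tau^2 m(s)^2-s^2\ge 0$ and $I_{\Tilde{\nu}+1}/I_{\Tilde{\nu}}<1$, is precisely the content and proof of \cref{lemma:gradofj4} that the paper invokes. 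No gaps; the equality characterization (B6)--(B7) matches as well.
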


\begin{proof}\label{proof:thm_supp:match1} 

    \textbf{Step 1}: Similarly to the proof of~\cref{thm_supp:gap3} in~\cref{proof:thm_supp:gap3}, we start the proof by finding the convergent function of $\lim_{N \to \infty} \mathcal{L}_{\mathrm{MCL}}^{i \neq c} - 2\log(N) $ as $N \to \infty$. Let $w_i = $
    
    As proven in~\cref{thm_supp:transform4}:
    
    \begin{equation}\label{proof:thm_supp:match1:obj:eq1}
        \begin{aligned}
            \lim_{N \to \infty} \mathcal{L}_{\mathrm{MCL}}^{i \neq c}
            - 2\log(N) 
            &= 
            \lim_{N \to \infty} \left( \mathcal{L}_{\mathcal{X} \rightarrow \mathcal{Y}}(x_{i \neq c}; Y) - \log (N) + \mathcal{L}_{\mathcal{Y} \rightarrow \mathcal{X}}(y_{i \neq c}; X) - \log (N) \right) \\
            &= \Tilde{\mathcal{J}}\left(w_i, w_i^c, \left\|P_B x_i\right\| ; \kappa, \Tilde{\nu}\right) + \Tilde{\mathcal{J}}\left(w_i, w_i^c, \left\|P_A y_i\right\| ; \kappa, \Tilde{\nu}\right) \\
            &= 2 \Tilde{\mathcal{J}}\left(w_i, w_i^c, t ; \kappa, \Tilde{\nu}\right) .
        \end{aligned}
    \end{equation}

    $\forall\kappa, \nu, \tau>0$, $\Tilde{\mathcal{J}}(\cdot, \cdot, \cdot;\kappa, \nu): [-1, 1] \times [-1, 1] \times [0, 1] \rightarrow \mathbb{R}$ is defined as:
    
    \begin{equation}  
        \begin{aligned}
            \Tilde{\mathcal{J}}\left(w_1, w_2, t;\kappa, \nu\right) 
            &= -\frac{w_1}{\tau}
            + \log\left(\frac{I_{\nu}\left(\Tilde{M}_{\kappa}(w_2, t)\right)}{\Tilde{M}_{\kappa}(w_2, t)^{\nu}} \right) - \log\left(\frac{I_{\nu}\left(\kappa\right)}{ \kappa^{\nu}} \right) , 
        \end{aligned}        
    \end{equation}
    
    and $\Tilde{M}_{\kappa}(\cdot, \cdot): [-1, 1] \times [0, 1] \rightarrow \mathbb{R}^{+}_{0}$ is defined as:
    
    \begin{equation} 
        \begin{aligned}
            \Tilde{M}_{\kappa}\left(w, t\right) 
            = \sqrt{\kappa^2+\frac{2 \kappa w}{\tau}+\frac{t^2}{\tau^2}} .
        \end{aligned}        
    \end{equation} 

    and $I_{\nu}$ is the modified Bessel function of the first kind of order $\nu$, which is defined as:
    
    \begin{equation} 
        \begin{aligned}
            I_{\nu}\left(m\right)=\sum_{k=0}^{\infty} \frac{1}{k!\Gamma(\nu+k+1)}\left(\frac{m}{2}\right)^{2 k+\nu},
        \end{aligned}
    \end{equation}

    and
    
    \begin{equation} 
        \begin{aligned}
            w_i 
            &
            = \cos^2\left(\theta_i^c \right) \cos\left(\Delta_\theta\right) + \sin^2\left(\theta_i^c \right) \left(P_C \cdot x_i \right) \cdot \left(P_C \cdot y_i \right) , \\
            w_i^c 
            &
            = \cos\left(\theta_i^c\right) , \\
            t
            &= \sqrt{\cos^2\left(\theta_i^c \right) \cos^2\left(\Delta_\theta\right) + \sin^2\left(\theta_i^c \right)}. \\
        \end{aligned}
    \end{equation}

    \textbf{Step 2:}

    According to the Cauchy-Schwarz inequality and ~\cref{proof:thm_supp:transform4:proj:eq5}:
    
    \begin{equation}\label{proof:thm_supp:match1:bound1:eq1}
        \begin{aligned}
            \left(P_C \cdot x_i \right) \cdot \left(P_C \cdot y_i \right)
            &\leq \sin^2\left(\theta_i^c \right),
        \end{aligned}
    \end{equation}
    
    where equality is attained if and only if there exists a configuration of $(X, Y)$ such that:
    
    \begin{enumerate}[label={(B\arabic*)},labelindent=10pt,leftmargin=*,start=6]
        \item 
        $P_C \cdot x_i = P_C \cdot y_i $.
    \end{enumerate}  

    And therefore:
    
    \begin{equation}\label{proof:thm_supp:match1:bound1:eq2}
        \begin{aligned}
            w_i 
            &= \cos^2\left(\theta_i^c \right) \cos\left(\Delta_\theta\right) + \left(P_C \cdot x_i \right) \cdot \left(P_C \cdot y_i \right) \\
            &\leq \cos^2\left(\theta_i^c \right) \cos\left(\Delta_\theta\right) + \leq \sin^2\left(\theta_i^c \right) , \\
        \end{aligned}
    \end{equation}

    and then $\Tilde{\mathcal{J}}\left(w_i, w_i^c, t ; \kappa, \Tilde{\nu}\right)$ in~\cref{proof:thm_supp:match1:obj:eq1} can be bounded below by:

    \begin{equation}\label{proof:thm_supp:match1:bound1:eq3}
        \begin{aligned}
            \Tilde{\mathcal{J}}\left(w_i, w_i^c, t ; \kappa, \Tilde{\nu}\right) 
            \ge \Tilde{\mathcal{J}}(
            &\cos^2\left(\theta_i^c \right) \cos\left(\Delta_\theta\right) 
            + \sin^2\left(\theta_i^c \right), \\
            & \cos\left(\theta_i^c\right), \\
            & \sqrt{\cos^2\left(\theta_i^c \right) \cos^2\left(\Delta_\theta\right) + \sin^2\left(\theta_i^c \right)} ; \kappa, \Tilde{\nu}
            ) .
        \end{aligned}
    \end{equation}

    Here, for any given non-center pair $(x_i, y_i)_{i \neq c}$, $\theta_i^c$ is fixed, then the RHS of~\cref{proof:thm_supp:match1:bound1:eq3} becomes a function of $\cos\left(\Delta_\theta\right)$.

    Denote:

    \begin{equation}\label{proof:thm_supp:match1:bound2:eq1}
        \begin{aligned}
            f_1\left(\cos\left(\Delta_\theta\right)\right)
            &:= \cos^2\left(\theta_i^c \right) \cos\left(\Delta_\theta\right) 
            + \sin^2\left(\theta_i^c \right) ,\\ 
            f_2\left(\cos\left(\Delta_\theta\right)\right)
            &:= \sqrt{\cos^2\left(\theta_i^c \right) \cos^2\left(\Delta_\theta\right) + \sin^2\left(\theta_i^c \right)} ,
        \end{aligned}
    \end{equation}

    then the~\cref{proof:thm_supp:match1:obj:eq1} can be re-written as:

    \begin{equation}\label{proof:thm_supp:match1:bound2:eq2}
        \begin{aligned}
            \Tilde{\mathcal{J}}\left(w_i, w_i^c, t ; \kappa, \Tilde{\nu}\right) 
            \ge \Tilde{\mathcal{J}}\left(f_1\left(\cos\left(\Delta_\theta\right)\right), \cos\left(\theta_i^c\right), f_2\left(\cos\left(\Delta_\theta\right)\right); \kappa, \Tilde{\nu}\right) .
        \end{aligned}
    \end{equation}

    According to~\cref{lemma:gradofj4}, $\Tilde{\mathcal{J}}\left(f_1\left(\cos\left(\Delta_\theta\right)\right), \cos\left(\theta_i^c\right), f_2\left(\cos\left(\Delta_\theta\right)\right); \kappa, \Tilde{\nu}\right) $ is a decreasing function of $\cos\left(\Delta_\theta\right)$ when $\theta_i^c \in [0, \frac{\pi}{2}]$,  we have:

    \begin{equation}\label{proof:thm_supp:match1:bound2:eq3}
        \begin{aligned}
            \Tilde{\mathcal{J}}\left(f_1\left(\cos\left(\Delta_\theta\right)\right), \cos\left(\theta_i^c\right), f_2\left(\cos\left(\Delta_\theta\right)\right)\right) 
            \ge \Tilde{\mathcal{J}}\left(f_1\left(\cos\left(\phi_{\mathrm{min}}\right)\right), \cos\left(\theta_i^c\right), f_2\left(\cos\left(\phi_{\mathrm{min}}\right)\right)\right) . 
        \end{aligned}
    \end{equation}
    
    where equality is attained if and only if there exists a configuration of $(X, Y)$ such that:
    
    \begin{enumerate}[label={(B\arabic*)},labelindent=10pt,leftmargin=*,start=7]
        \item 
        $\Delta_\theta = \phi_{\mathrm{min}} $.
    \end{enumerate}   
    
    Combining~\cref{proof:thm_supp:match1:bound1:eq1} and~\cref{proof:thm_supp:match1:bound2:eq3}, we conclude that:
    
    \begin{equation}\label{proof:thm_supp:match1:res:eq1} 
        \begin{aligned}
            \Tilde{\mathcal{J}}\left(w_i, w_i^c, t ; \kappa, \Tilde{\nu}\right)
            \ge \Tilde{\mathcal{J}}\left(w_{i, \mathrm{min}}, w_i^c, t_\mathrm{min} ; \kappa, \Tilde{\nu}\right),
        \end{aligned}
    \end{equation}

    where
    
    \begin{equation}\label{proof:thm_supp:match1:res:eq2}
        \begin{aligned}
            w_i 
            &
            = \cos^2\left(\theta_i^c \right) \cos\left(\Delta_\theta\right) + \left(P_C \cdot x_i \right) \cdot \left(P_C \cdot y_i \right) , \\
            w_i^c 
            &
            = \cos\left(\theta_i^c\right) , \\
            t
            &= \sqrt{\cos^2\left(\theta_i^c \right) \cos^2\left(\Delta_\theta\right) + \sin^2\left(\theta_i^c \right)}, \\
            w_{i, \mathrm{min}} 
            &= \cos^2\left(\theta_i^c \right) \cos\left(\phi_{\mathrm{min}}\right) 
            + \sin^2\left(\theta_i^c \right) , \\
            t_\mathrm{min}
            &= \sqrt{\cos^2\left(\theta_i^c \right) \cos^2\left(\phi_{\mathrm{min}}\right) + \sin^2\left(\theta_i^c \right)}.\\
        \end{aligned}
    \end{equation}
    
    and equality is attained if and only if there exists a configuration of $(X, Y)$ such that:
    
    \begin{enumerate}[label={(B\arabic*)},labelindent=10pt,leftmargin=*,start=6]
        \item 
        $P_C \cdot x_i = P_C \cdot y_i $.
        \item 
        $\Delta_\theta = \phi_{\mathrm{min}} $.
    \end{enumerate}  

\end{proof}

\subsubsection{Proofs Corollary 2,3,4}\label{sec_supp:match1:coro}

In this subsection, we provide the proofs of~\cref{coro:match1},~\cref{coro:match2} and~\cref{coro:match3}. Note that these corollaries all follow the conditions described in~\cref{thm:gap3} and~\cref{thm:match1}. For convenience in reading, we restate Corollary 2,3 4 before the proofs.

\textbf{Corollary 2.}
    $\forall i \in [N], i \neq c$, if $c_x, c_y \perp \mathbb{C}$ and $P_C x_i = P_C y_i\neq \vec{0}$ and $\phi > 0$, then the following holds:
    \begin{enumerate}[label={(A\arabic*)},labelindent=10pt,leftmargin=*,start=10]
        \item 
        $(x_i, y_i)_{i \neq c}$ are not perfectly aligned.
    \end{enumerate}  

\begin{proof}
    $\forall (x_i, y_i)_{i \neq c}$, denote $w_i = x_i \cdot y_i$. $(x_i, y_i)$ are perfectly aligned when $w_i$ reach its maximum. 
    
    According to ~\cref{lemma:onesideprojection}, when $x_i$ is fixed $w_i$ is maximized if and only if:
    
    \begin{enumerate}[label={(\roman*)},labelindent=10pt,leftmargin=*,start=1]
        \item 
        $ y_i = \frac{P_B \cdot x_i}{\left\|P_B \cdot x_i\right\|} $ .
    \end{enumerate} 
    
    And when $y_i$ is fixed $w_i$ is maximized if and only if:
    
    \begin{enumerate}[label={(\roman*)},labelindent=10pt,leftmargin=*,start=2]
        \item 
        $ x_i = \frac{P_A \cdot y_i}{\left\|P_A \cdot y_i\right\|} $ .
    \end{enumerate} 

    According to~\cref{lemma:twosideprojection}, when $\phi > 0$, $x_i, y_i \not\perp \mathbb{C}$ and $x_i, y_i \notin \mathbb{C}$, we have: 

    \begin{equation}
        \begin{aligned}
            y_i \neq \frac{P_B \cdot x_i}{\left\|P_B \cdot x_i\right\|}, \\
            x_i \neq \frac{P_A \cdot y_i}{\left\|P_A \cdot y_i\right\|}. \\
        \end{aligned}
    \end{equation}

    Therefore, $(x_i, y_i)_{i \neq c}$ are not perfectly aligned.

\end{proof}

\textbf{Corollary 3.}
    $\forall i \in [N], i \neq c$, if $c_x, c_y \perp \mathbb{C}$, $P_C x_i = P_C y_i$ and $(x_i, y_i)_{i \neq c} \in\mathbb{S}^{h-1}\setminus \mathbb{C}$, then $(x_i, y_i)_{i \neq c}$ are perfectly aligned if the following condition holds:
    \begin{enumerate}[label={(A\arabic*)},labelindent=10pt,leftmargin=*,start=11]
        \item 
        $\Delta_{\theta} = \phi = 0$.
    \end{enumerate} 

\begin{proof}

    According to~\cref{proof:thm_supp:transform4:proj:eq4} and~\cref{proof:thm_supp:transform4:proj:eq5} in the proof of~\cref{thm_supp:transform4}, the projections of $(x_i, y_i)$ are:
    
    \begin{equation}\label{proof:coro:match3:proj:eq1}
        \begin{aligned}
            & P_B \cdot x_i 
            =  \cos\left(\theta_i^c \right) \cos\left(\Delta_\theta\right)e_B + \sin\left(\theta_i^c \right)u_i^x
            =  \cos\left(\theta_i^c \right) \cos\left(\Delta_\theta\right)c_y + \sin\left(\theta_i^c \right)u_i^x, \\
            & P_A \cdot y_i 
            =  \cos\left(\theta_i^c \right) \cos\left(\Delta_\theta\right)e_A + \sin\left(\theta_i^c \right)u_i^y
            =  \cos\left(\theta_i^c \right) \cos\left(\Delta_\theta\right)c_x + \sin\left(\theta_i^c \right)u_i^y , 
        \end{aligned}
    \end{equation}

    and 
    
    \begin{equation}\label{proof:coro:match3:proj:eq2}
        \begin{aligned}
            & P_C \cdot x_i 
            =  \sin\left(\theta_i^c \right)u_i^x , \\
            & P_C \cdot y_i 
            =   \sin\left(\theta_i^c \right)u_i^y. 
        \end{aligned}
    \end{equation}

    Then, when $\phi = \Delta_{\theta} = 0$
    
    \begin{equation}\label{proof:coro:match3:proj:eq3}
        \begin{aligned}
            & P_B \cdot x_i 
            = P_C \cdot x_i = P_C \cdot y_i  = P_A \cdot y_i,
        \end{aligned}
    \end{equation}

    and 
    
    \begin{equation}\label{proof:coro:match3:proj:eq4}
        \begin{aligned}
            x_i =  P_B \cdot x_i =  P_A \cdot y_i = y_i.
        \end{aligned}
    \end{equation}

    In this case, $(x_i, y_i)_{i \neq c}$ are not perfectly aligned.
    
\end{proof}

\textbf{Corollary 4.}
    $\forall i \in [N], i \neq c$, if $c_x, c_y \perp \mathbb{C}$ and $P_C x_i = P_C y_i$, then the following holds: 
    \begin{enumerate}[label={(A\arabic*)},labelindent=10pt,leftmargin=*,start=12]
        \item 
        $(\frac{P_C x_i}{\|P_C x_i\|}, \frac{P_C y_i}{\|P_C y_i\|})_{i \neq c}$ are perfectly aligned 
    \end{enumerate} 

\begin{proof}
    Denote: 

    \begin{equation}
        \begin{aligned}
            x_i^* = \frac{P_C x_i}{\|P_C x_i\|} , \\
            y_i^* = \frac{P_C y_i}{\|P_C y_i\|} .\\
        \end{aligned}  
    \end{equation}
    
    Since $P_C x_i = P_C y_i$, then:

    \begin{equation}
        \begin{aligned}
            x_i^* = y_i^* .\\
        \end{aligned}  
    \end{equation}

    In this case, $(x_i^*, y^*_i)_{i \neq c}$ are not perfectly aligned.
    
\end{proof}
    
% \newpage
\subsubsection{Technical Lemmas Part 4} 

In this subsection, we provide details and proofs of technical lemmas (\cref{lemma:gradofj4} and~\cref{lemma:u_range}) that support the proof of \cref{thm:match1}, \cref{thm_supp:transform4} and \cref{thm_supp:match1}.

\begin{lemma}\label{lemma:gradofj4}
    $\forall\kappa, \nu, \tau>0$, a function $\Bar{\mathcal{J}}\left(\cdot; \kappa, \nu\right): (0, 1] \rightarrow \mathbb{R}$ is defined as:
    
    \begin{equation} \label{lemma:gradofj4:eq1}
        \begin{aligned}
            \Bar{\mathcal{J}}\left(w_c; \kappa, \nu\right)
            &= \Tilde{\mathcal{J}}\left(f_1\left(w_c\right), \cos\left(\theta_i^c\right), f_2\left(w_c\right); \kappa, \Tilde{\nu}\right) ,
        \end{aligned}        
    \end{equation}     
    
    where $f_1\left(\cdot\right): (0, 1] \rightarrow \mathbb{R}^{+}_0$ and $f_2\left(\cdot\right): [0, 1] \rightarrow \mathbb{R}^{+}_0$ are defined as:

    \begin{equation} 
        \begin{aligned}
            f_1\left(w_c\right)
            &:= \cos^2\left(\theta_i^c \right) w_c 
            + \sin^2\left(\theta_i^c \right) ,\\ 
            f_2\left(w_c\right)
            &:= \sqrt{\cos^2\left(\theta_i^c \right) w_c^2 + \sin^2\left(\theta_i^c \right)} .
        \end{aligned}
    \end{equation}

    and $\Tilde{\mathcal{J}}(\cdot, \cdot, \cdot;\kappa, \nu): [-1, 1] \times [-1, 1] \times [0, 1] \rightarrow \mathbb{R}$ is defined as:
    
    \begin{equation}  
        \begin{aligned}
            \Tilde{\mathcal{J}}\left(w_1, w_2, t;\kappa, \nu\right) 
            &= -\frac{w_1}{\tau}
            + \log\left(\frac{I_{\nu}\left(\Tilde{M}_{\kappa}(w_2, t)\right)}{\Tilde{M}_{\kappa}(w_2, t)^{\nu}} \right) - \log\left(\frac{I_{\nu}\left(\kappa\right)}{ \kappa^{\nu}} \right) ,
        \end{aligned}        
    \end{equation}
    
    and $\Tilde{M}_{\kappa}(\cdot, \cdot): [-1, 1] \times [0, 1] \rightarrow \mathbb{R}^{+}_{0}$ is defined as:
    
    \begin{equation} 
        \begin{aligned}
            \Tilde{M}_{\kappa}\left(w, t\right) 
            = \sqrt{\kappa^2+\frac{2 \kappa w}{\tau}+\frac{t^2}{\tau^2}} ,
        \end{aligned}        
    \end{equation} 

    and $I_{\nu}$ is the modified Bessel function of the first kind of order $\nu$, which is defined as:
    
    \begin{equation} 
        \begin{aligned}
            I_{\nu}\left(m\right)=\sum_{k=0}^{\infty} \frac{1}{k!\Gamma(\nu+k+1)}\left(\frac{m}{2}\right)^{2 k+\nu} ,
        \end{aligned}
    \end{equation}

    It holds that, for any fixed $\theta_i^c \in [0, \frac{\pi}{2}]$, $\Bar{\mathcal{J}}\left(\cdot\right)$ is a strictly decreasing function on $(0, 1]$.
    
\end{lemma}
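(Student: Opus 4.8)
\textbf{Proof proposal for Lemma \ref{lemma:gradofj4}.}

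The plan is to treat $\Bar{\mathcal{J}}(w_c;\kappa,\nu) = \Tilde{\mathcal{J}}\bigl(f_1(w_c),\cos(\theta_i^c),f_2(w_c);\kappa,\Tilde{\nu}\bigr)$ as a one-variable function of $w_c \in (0,1]$ and show $\Bar{\mathcal{J}}'(w_c) < 0$ by the chain rule, splitting $\Tilde{\mathcal{J}}$ into the part that depends on $w_c$ only through the first slot and the part that depends on $w_c$ only through the third slot (the second slot $\cos(\theta_i^c)$ is constant). Concretely, write $\Bar{\mathcal{J}}(w_c) = -\frac{f_1(w_c)}{\tau} + \log\bigl(I_{\Tilde{\nu}}(\Tilde M_\kappa(\cos\theta_i^c, f_2(w_c)))\bigr) - \Tilde\nu\log\bigl(\Tilde M_\kappa(\cos\theta_i^c, f_2(w_c))\bigr) + C$, with $C$ constant in $w_c$. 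The first term contributes $-f_1'(w_c)/\tau = -\cos^2(\theta_i^c)/\tau \le 0$. For the remaining terms, abbreviate $m = \Tilde M_\kappa(\cos\theta_i^c, f_2(w_c))$ and note that $\tfrac{d}{dm}\bigl[\log I_{\Tilde\nu}(m) - \Tilde\nu\log m\bigr] = G_3'(m) = I_{\Tilde\nu+1}(m)/I_{\Tilde\nu}(m) \in (0,1)$ by Lemma \ref{lemma:gradofg}; so it suffices to control $\tfrac{dm}{dw_c}$.

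The key computation is the sign and size of $\tfrac{dm}{dw_c}$. Since $m^2 = \kappa^2 + \tfrac{2\kappa\cos\theta_i^c}{\tau} + \tfrac{f_2(w_c)^2}{\tau^2}$ and $f_2(w_c)^2 = \cos^2(\theta_i^c) w_c^2 + \sin^2(\theta_i^c)$, we get $2m\,m' = \tfrac{1}{\tau^2}\cdot 2\cos^2(\theta_i^c) w_c$, i.e.\ $m' = \tfrac{\cos^2(\theta_i^c) w_c}{\tau^2 m} > 0$ for $w_c \in (0,1]$. Hence
\begin{equation}\label{eq:gradofj4plan}
\Bar{\mathcal{J}}'(w_c) = -\frac{\cos^2(\theta_i^c)}{\tau} + \frac{I_{\Tilde\nu+1}(m)}{I_{\Tilde\nu}(m)}\cdot\frac{\cos^2(\theta_i^c) w_c}{\tau^2 m} = \frac{\cos^2(\theta_i^c)}{\tau}\left(-1 + \frac{w_c}{\tau m}\cdot\frac{I_{\Tilde\nu+1}(m)}{I_{\Tilde\nu}(m)}\right).
\end{equation}
Since $I_{\Tilde\nu+1}(m)/I_{\Tilde\nu}(m) < 1$, the bracket is bounded above by $-1 + w_c/(\tau m)$, so it remains to check $w_c/(\tau m) \le 1$, i.e.\ $\tau m \ge w_c$, i.e.\ $\tau^2 m^2 \ge w_c^2$. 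Expanding, $\tau^2 m^2 = \tau^2\kappa^2 + 2\tau\kappa\cos\theta_i^c + \cos^2(\theta_i^c)w_c^2 + \sin^2(\theta_i^c)$; since $\cos\theta_i^c \ge 0$ (as $\theta_i^c \in [0,\tfrac\pi2]$) the cross term is nonnegative, so $\tau^2 m^2 \ge \cos^2(\theta_i^c)w_c^2 + \sin^2(\theta_i^c) \ge w_c^2$ because $\sin^2(\theta_i^c)(1-w_c^2) \ge 0$. This forces the bracket in \eqref{eq:gradofj4plan} to be strictly negative (strictly, because $I_{\Tilde\nu+1}/I_{\Tilde\nu}$ is strictly less than $1$), and so $\Bar{\mathcal{J}}'(w_c) < 0$ on $(0,1]$ whenever $\cos^2(\theta_i^c) > 0$; the degenerate case $\theta_i^c = \tfrac\pi2$ makes $\Bar{\mathcal J}$ constant, which is consistent with how the lemma is invoked (the statement should really be read as ``strictly decreasing for $\theta_i^c \in (0,\tfrac\pi2)$, nonincreasing in general'', and one states it as in the paper).

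The main obstacle is not any single hard estimate but rather bookkeeping: making sure that the only $w_c$-dependence enters through $f_1$ in the first slot and through $f_2$ inside $\Tilde M_\kappa$ in the third slot, and that $\cos(\theta_i^c)$ genuinely sits in the second (``$w_2$'') slot as a constant — here one must be careful that $\Tilde{\mathcal{J}}(w_1,w_2,t;\cdot)$ has $w_1$ appearing only linearly in $-w_1/\tau$ while $w_2$ and $t$ appear only through $\Tilde M_\kappa(w_2,t)$, so the chain rule cleanly separates. The other place to be careful is the endpoint behavior at $w_c \to 0^+$, where $f_2(w_c)\to\sin\theta_i^c$ stays bounded away from $0$ for $\theta_i^c\in(0,\tfrac\pi2)$, so $m$ stays bounded away from $0$ and the derivative formula is valid; at $w_c=1$ everything is smooth. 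Nothing deeper than Lemma \ref{lemma:gradofg} (monotone ratio of Bessel functions) and elementary algebra is needed.
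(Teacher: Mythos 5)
Your proof is correct and follows essentially the same route as the paper's: the same chain-rule decomposition separating the linear $-f_1(w_c)/\tau$ term from the Bessel term, the same appeal to \cref{lemma:gradofg} for $I_{\Tilde{\nu}+1}(m)/I_{\Tilde{\nu}}(m)\in(0,1)$, and the same algebraic inequality $f_2^2(w_c)\ge w_c^2$ to conclude $w_c/(\tau m)\le 1$ and hence a strictly negative bracket. Your side remark about the degenerate case $\theta_i^c=\tfrac{\pi}{2}$ (where the prefactor $\cos^2(\theta_i^c)$ kills the derivative and the function is merely constant) is a valid refinement that the paper's proof glosses over, though it is harmless in context since \cref{thm:match1} assumes $\theta_i^c\in(0,\tfrac{\pi}{2})$.
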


\begin{proof}\label{proof:lemma:gradofj4}
    Let us first decompose the function $\mathcal{J}$. Denote a constant and a function $C_1$ and $G_2\left(t\right)$ as:
    
    \begin{equation}\label{proof:lemma:gradofj4:obj:eq1}
        \begin{aligned}
            G_1\left(w_c\right)
            &= -\frac{\cos^2\left(\theta_i^c \right) w_c}{\tau}, \\
            G_3\left(m\right)
            &= \log\left(I_{\nu}\left(m\right) \right) - \nu\log\left(m \right), \\
            G_2\left(w_c\right) 
            &= G_3\left(\Tilde{M}_{\kappa}\left(\cos\left(\theta_i^c\right), f_2\left(w_c\right)\right) \right) \\
            &=\log \left( I_{\nu}\left( \Tilde{M}_{\kappa}\left(\cos\left(\theta_i^c\right), f_2\left(w_c\right)\right)  \right) \right) 
            - \nu \log \left( \Tilde{M}_{\kappa}\left(\cos\left(\theta_i^c\right), f_2\left(w_c\right)\right)  \right).    \\
        \end{aligned}
    \end{equation}

    Denote the function $G\left(w_c\right)$ and the constant $C$ as:
    
    \begin{equation}\label{proof:lemma:gradofj4:obj:eq2}
        \begin{aligned}
            G\left(w_c\right) 
            &= G_2\left(w_c\right) + G_2\left(w_c\right), \\
            C
            &= - \log\left(\frac{I_{\nu}\left(\kappa\right)}{ \kappa^{\nu}} \right) .
        \end{aligned}
    \end{equation}
    
    Then the function $\Bar{\mathcal{J}}$ can be written as:
    
    \begin{equation}\label{proof:lemma:gradofj4:obj:eq3}
        \begin{aligned}
            \Bar{\mathcal{J}}\left(w_c; \kappa, \nu\right)
            &= -\frac{\cos^2\left(\theta_i^c \right) w_c}{\tau}
            + \log\left(\frac{I_{\nu}\left(\Tilde{M}_{\kappa}\left(\cos\left(\theta_i^c\right), f_2\left(w_c\right)\right) \right)}{  \Tilde{M}_{\kappa}\left(\cos\left(\theta_i^c\right), f_2\left(w_c\right)\right) ^{\nu}} \right) - \log\left(\frac{I_{\nu}\left(\kappa\right)}{ \kappa^{\nu}} \right) \\
            &= G\left(w_c\right) + C .
        \end{aligned}
    \end{equation}

    Now, we investigate derivatives of $G\left(w_c\right)$.  
    
    The first derivative of $G_1$ is:    
    
    \begin{equation}\label{proof:lemma:gradofj4:dev1:eq1}
        \begin{aligned}
            G_1^{\prime}\left(w_c\right) 
            &= -\frac{\cos^2\left(\theta_i^c \right)}{\tau} < 0.
        \end{aligned}
    \end{equation}

    According to~\cref{lemma:gradofg}, the first derivative of $G_3\left(m\right)$ is:
    
    \begin{equation}\label{proof:lemma:gradofj4:devchange:eq1}
        \begin{aligned}
            G_3^{\prime}\left(m\right) 
            &=  \frac{I_{\nu+1}\left(m\right)}{I_\nu\left(m\right)} \in (0, 1).
        \end{aligned}
    \end{equation} 

    The derivative of $\Tilde{M}_{\kappa}$ with respect to is $f_2^2\left(w_c\right)$:
    
    \begin{equation}\label{proof:lemma:gradofj4:devchange:eq2}
        \begin{aligned}
            \Tilde{M}_{\kappa}^{\prime} \left(\cos\left(\theta_i^c\right), f_2\left(w_c\right)\right)
            &= \frac{\partial}{\partial f_2^2\left(w_c\right)} \Tilde{M}_{\kappa}\left(\cos\left(\theta_i^c\right), f_2\left(w_c\right)\right)\\ 
            &= \frac{\partial}{\partial f_2^2\left(w_c\right)} \left(\kappa^2 + \frac{2\kappa \cos\left(\theta_i^c\right)}{\tau} +\frac{f_2^2\left(w_c\right) }{\tau^2} \right)^{1 / 2} \\
            &=\frac{1}{2}\left(\kappa^2 + \frac{2\kappa \cos\left(\theta_i^c\right)}{\tau} +\frac{f_2^2\left(w_c\right) }{\tau^2} \right)^{-1 / 2} \cdot  \frac{1}{\tau^2} \\
            &=\frac{1}{2\tau^2}\frac{1}{\Tilde{M}_{\kappa}\left(\cos\left(\theta_i^c\right), f_2\left(w_c\right)\right) } \\
            &> 0.
        \end{aligned}
    \end{equation}

    The derivative of $f_2^2$ is:
    
    \begin{equation}\label{proof:lemma:gradofj4:devchange:eq3}
        \begin{aligned}
            f_2^{2\prime}\left(w_c\right)
            &= \frac{d}{d w_c} \left(\cos^2\left(\theta_i^c \right) w_c^2 + \sin^2\left(\theta_i^c \right)\right) \\
            &= 2 \cos^2\left(\theta_i^c \right) w_c \\
            &\ge 0. 
        \end{aligned}
    \end{equation}
    
    Let $m = \Tilde{M}_{\kappa}\left(\cos\left(\theta_i^c\right), f_2\left(w_c\right)\right)$. Then, the first derivative of $G_2$ is:
    
    \begin{equation}\label{proof:lemma:gradofj4:dev2:eq1}    
        \begin{aligned}
            G_2^{\prime}\left(w_c\right) 
            &= G_3^{\prime}\left(m\right) \Tilde{M}_{\kappa}^{\prime} \left(\cos\left(\theta_i^c\right), f_2\left(w_c\right)\right) f_2^{2\prime}\left(w_c\right)\\
            &= \frac{I_{\nu+1}\left(m\right)}{I_\nu\left(m\right)} \frac{1}{2\tau^2 m} 2 \cos^2\left(\theta_i^c \right) w_c  \\
            &=\frac{\cos^2\left(\theta_i^c \right) w_c}{\tau^2}\frac{1}{m} \frac{I_{\nu+1}\left(m\right)}{I_\nu\left(m\right)} \\
            &> 0.
        \end{aligned}
    \end{equation}

    Combining~\cref{proof:lemma:gradofj4:dev1:eq1} and~\cref{proof:lemma:gradofj4:dev2:eq1}, we have:
    
    \begin{equation}\label{proof:lemma:gradofj4:dev0:eq2}
        \begin{aligned}
            \Bar{\mathcal{J}}^{\prime}\left(w_c; \kappa, \nu\right)
            &=G^{\prime}\left(w_c\right) 
            = G_1^{\prime}\left(t\right) + G_2^{\prime}\left(t\right) \\
            &= \frac{\cos^2\left(\theta_i^c \right)}{\tau}\left(-1+\frac{w_c}{\tau}\frac{1}{m} \frac{I_{\nu+1}\left(m\right)}{I_\nu\left(m\right)}\right) .
        \end{aligned}
    \end{equation} 

    Since $0 < w_c < 1$, then:
    
    \begin{equation}\label{proof:lemma:gradofj4:comp:eq1}
        \begin{aligned}
            0 \leq w_c^2 \leq 1 
            & \Leftrightarrow \sin ^2\left(\theta_i^c\right) \ge \sin ^2\left(\theta_i^c\right) w_c^2 \\
            & \Leftrightarrow \sin ^2\left(\theta_i^c\right) \ge w_c^2 - \cos ^2\left(\theta_i^c\right) w_c^2 \\
            & \Leftrightarrow \cos ^2\left(\theta_i^c\right) w_c^2 + \sin ^2\left(\theta_i^c\right) \ge w_c^2  \\
            & \Leftrightarrow f_2^2(w_c) \ge w_c^2.
        \end{aligned}
    \end{equation} 

    Therefore, consider $\theta_i^c \in [0, \frac{\pi}{2}]$, we have: 
    
    \begin{equation}\label{proof:lemma:gradofj4:comp:eq2}
        \begin{aligned}
            m^2 
            &= \Tilde{M}_{\kappa}^2\left(\cos\left(\theta_i^c\right), f_2\left(w_c\right)\right) \\
            &= \kappa^2+\frac{2 \kappa \cos\left(\theta_i^c\right)}{\tau}+\frac{f_2^2(w_c)}{\tau^2} \\
            &\ge \kappa^2+\frac{2 \kappa \cos\left(\theta_i^c\right)}{\tau}+\frac{w_c^2}{\tau^2} \\
            &\ge \frac{w_c^2}{\tau^2} \\
            &\ge 0 ,
        \end{aligned}
    \end{equation} 

    which implies: 
    
    \begin{equation}\label{proof:lemma:gradofj4:comp:eq3}
        \begin{aligned}
            m &\ge \frac{w_c}{\tau} \Leftrightarrow \frac{w_c}{\tau}\frac{1}{m} \leq 1 .
        \end{aligned}
    \end{equation} 

    Plugging~\cref{proof:lemma:gradofj4:devchange:eq1} and~\cref{proof:lemma:gradofj4:comp:eq3} into~\cref{proof:lemma:gradofj4:dev0:eq2}, we have:
    
    \begin{equation}\label{proof:lemma:gradofj4:res:eq1}
        \begin{aligned}
            \Bar{\mathcal{J}}\left(w_c; \kappa, \nu\right)
            &= \frac{\cos^2\left(\theta_i^c \right)}{\tau}\left(-1+\frac{w_c}{\tau}\frac{1}{m} \frac{I_{\nu+1}\left(m\right)}{I_\nu\left(m\right)}\right) \\
            &< 0.
        \end{aligned}
    \end{equation} 

    So we can conclude that, for any fixed $\theta_i^c \in [0, \frac{\pi}{2}]$, $\Bar{\mathcal{J}}\left(\cdot\right)$ is a strictly decreasing function on $(0, 1]$.
\end{proof}

\begin{lemma}\label{lemma:u_range}
    Let $X$ be an $N$-point configuration, where $X = \left(x_1, \ldots, x_N\right) \in (\mathbb{S}^{h-1})^N$ are $iid$ samples from $\mu=\mathrm{vMF}(c, \kappa)$. When $\kappa$ is sufficiently large, $\forall i, j \in [K], i \neq j$, it holds that:

    \begin{equation}
        \begin{aligned}
            P(x_i \cdot x_j \ge0) \approx 1. 
        \end{aligned}
    \end{equation}
\end{lemma}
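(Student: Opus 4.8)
The plan is to show that for large $\kappa$ each sample $x_i$ concentrates in a small spherical cap around the center $c$, so that $x_i\cdot x_j$ is close to $1$ — in particular nonnegative — with probability tending to $1$; we read the statement's ``$\approx 1$'' as $P(x_i\cdot x_j\ge 0)\to 1$ as $\kappa\to\infty$. First I would recall the marginal law of $t_i:=x_i\cdot c$: by the rotational symmetry of $\mathrm{vMF}(c,\kappa)$ (the same pushforward computation used in the proof of \cref{lemma:uniform_min3}), the density of $t_i$ on $[-1,1]$ is proportional to $e^{\kappa t}(1-t^2)^{(h-3)/2}$. Writing $\theta_i=\arccos(t_i)\in[0,\pi]$ for the geodesic angle between $x_i$ and $c$, the geodesic distance on $\mathbb{S}^{h-1}$ satisfies the triangle inequality $\arccos(x_i\cdot x_j)\le \theta_i+\theta_j$, so that $\{x_i\cdot x_j<0\}=\{\arccos(x_i\cdot x_j)>\tfrac{\pi}{2}\}\subseteq\{\theta_i+\theta_j>\tfrac{\pi}{2}\}$. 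Since $x_i,x_j$ are i.i.d., a union bound gives
\[
P(x_i\cdot x_j<0)\le P\!\left(\theta_i\ge \tfrac{\pi}{4}\right)+P\!\left(\theta_j\ge \tfrac{\pi}{4}\right)=2\,P\!\left(t_1< \tfrac{1}{\sqrt2}\right).
\]

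It therefore remains to show $P(t_1<1/\sqrt2)\to 0$ as $\kappa\to\infty$. The cleanest route is via the classical moments of the vMF distribution: $\mathbb{E}[t_1]=A_h(\kappa):=I_{h/2}(\kappa)/I_{h/2-1}(\kappa)$ with $A_h(\kappa)\to 1$, and $\mathrm{Var}(t_1)=1-\tfrac{h-1}{\kappa}A_h(\kappa)-A_h(\kappa)^2\to 0$ as $\kappa\to\infty$; then Chebyshev's inequality yields, for $\kappa$ large enough that $A_h(\kappa)>1/\sqrt2$,
\[
P\!\left(t_1< \tfrac{1}{\sqrt2}\right)\le P\!\left(|t_1-A_h(\kappa)|>A_h(\kappa)-\tfrac{1}{\sqrt2}\right)\le \frac{\mathrm{Var}(t_1)}{\bigl(A_h(\kappa)-\tfrac{1}{\sqrt2}\bigr)^2}\longrightarrow 0 .
\]
Alternatively, one can give a direct Laplace-type bound on the ratio of the incomplete to the complete normalizing integral: upper-bound $\int_{-1}^{1/\sqrt2}e^{\kappa t}(1-t^2)^{(h-3)/2}\,dt\le e^{\kappa/\sqrt2}\!\int_{-1}^{1}(1-t^2)^{(h-3)/2}\,dt$ (a $\kappa$-independent constant times $e^{\kappa/\sqrt2}$), and lower-bound the denominator by restricting to $t\in[1-\tfrac1\kappa,\,1-\tfrac1{2\kappa}]$, on which $e^{\kappa t}\ge e^{\kappa-1}$ and $(1-t^2)^{(h-3)/2}\ge (2\kappa)^{-(h-3)/2}$, giving a lower bound of order $e^{\kappa}\kappa^{-(h-1)/2}$; dividing yields $P(t_1<1/\sqrt2)\le C_h\,\kappa^{(h-1)/2}e^{-\kappa(1-1/\sqrt2)}\to 0$. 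Either way, combining with the first display gives $P(x_i\cdot x_j\ge 0)\to 1$, which is the claim.

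The only nontrivial point is the tail bound $P(t_1<1/\sqrt2)\to 0$, i.e.\ the concentration of a $\mathrm{vMF}(c,\kappa)$ sample near its center as $\kappa\to\infty$; this is a routine Laplace/Watson-lemma estimate on $e^{\kappa t}(1-t^2)^{(h-3)/2}$, or can simply be outsourced to the known large-$\kappa$ asymptotics of $I_\nu(\kappa)$ together with the standard vMF mean and variance formulas (the $h=2$ case, where the polynomial factor has an integrable singularity, is handled identically and is in fact subsumed by the moment argument). No genuinely hard step remains once this concentration fact is in hand.
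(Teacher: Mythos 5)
Your proposal is correct, and it takes a genuinely different route from the paper. The paper's proof works only with the inner product against the \emph{center}: it writes $P(X\cdot c\ge 0)$ as a ratio of incomplete to complete integrals of $e^{\kappa u}(1-u^2)^{(h-3)/2}$, evaluates it in closed form as $\tfrac12\bigl(1+L_{\nu}(\kappa)/I_{\nu}(\kappa)\bigr)$ via the modified Struve function, and then supports the ``$\approx 1$'' claim with numerical tables for several $(h,\kappa)$; the passage from $x_i\cdot c$ to the pairwise product $x_i\cdot x_j$ in the lemma statement is left implicit. You instead attack the pairwise statement directly: the geodesic triangle inequality $\arccos(x_i\cdot x_j)\le\theta_i+\theta_j$ plus a union bound reduces everything to the one-dimensional tail $P(t_1<1/\sqrt2)$, which you kill either by Chebyshev with the standard vMF mean/variance formulas ($A_h(\kappa)\to1$, $\mathrm{Var}(t_1)=1-\tfrac{h-1}{\kappa}A_h(\kappa)-A_h(\kappa)^2\to0$) or by an explicit Laplace-type ratio bound of order $\kappa^{(h-1)/2}e^{-\kappa(1-1/\sqrt2)}$. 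What your approach buys is a self-contained asymptotic proof of exactly the claimed pairwise event, with an explicit rate and no reliance on Struve-function identities or numerics; what the paper's approach buys is an exact closed-form probability (for the center version) and concrete finite-$\kappa$ values, which is what it actually uses to justify the assumption $\theta_i^c\in(0,\tfrac\pi2)$ in Theorem~4. Both are valid readings of the informal ``$\approx1$''; if anything, your argument fills the $x_i\cdot c$ versus $x_i\cdot x_j$ step that the paper glosses over, at the cost of losing the exact formula.
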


\begin{proof}\label{proof:lemma:u_range}
    Let $X \sim \operatorname{vMF}(c, \kappa)$ on $\mathbb{S}^{h-1}$ and set $U=c^{\top} X=\cos \Theta \in[-1,1]$.
    Then:
    
    \begin{equation}\label{proof:lemma:u_range:eq1}
        \begin{aligned}
            P(X \cdot c \geq 0)=\frac{\int_0^1 e^{\kappa u}\left(1-u^2\right)^{\frac{p-3}{2}} d u}{\int_{-1}^1 e^{\kappa u}\left(1-u^2\right)^{\frac{p-3}{2}} d u}.
        \end{aligned}
    \end{equation}   
    
    Using standard integral representations of the modified Bessel and modified Struve functions,
    
    \begin{equation}\label{proof:lemma:u_range:eq2}
        \begin{aligned}
            I_\nu(z)&
            =\frac{(z / 2)^\nu}{\sqrt{\pi} \Gamma\left(\nu+\frac{1}{2}\right)} \int_{-1}^1 e^{z t}\left(1-t^2\right)^{\nu-\frac{1}{2}} d t, \\
            \nu(z)
            &=\frac{(z / 2)^\nu}{\sqrt{\pi} \Gamma\left(\nu+\frac{1}{2}\right)} \int_0^1 2 \sinh (z t)\left(1-t^2\right)^{\nu-\frac{1}{2}} d t ,
        \end{aligned}
    \end{equation}  
    
    with $\nu=h/2-1$, the ratio simplifies to the neat closed form
    
    \begin{equation}\label{proof:lemma:u_range:eq3}
        \begin{aligned}
            P(X \cdot c \geq 0)=\frac{1}{2}\left(1+\frac{L_{\nu}(\kappa)}{I_{\nu}}\right)
        \end{aligned}
    \end{equation}  

    where $L_\nu$ the modified Struve function. And we list numerical values of this probability:
    
    \begin{itemize}
        \item 
        $h=128$: \\
        \\
        \begin{tabular}{c|ccccccccc}
            \hline 
            $\kappa$ & 1 & 5 & 10 & 20 & 30 & 50 & 100 & 200 \\
            \hline 
            P & 0.5353 & 0.6710 & 0.8117 & 0.9609 & 0.9956 & 1.0000 & 1.0000 & 1.0000 \\
            \hline 
        \end{tabular}
        \item 
        $h=512$: \\
        \\
        \begin{tabular}{c|ccccccccc}
            \hline 
            $\kappa$ & 1 & 5 & 10 & 20 & 30 & 50 & 100 & 200 \\
            \hline 
            P & 0.5176 & 0.5875 & 0.6708 & 0.8116 & 0.9075 & 0.9863 & 1.0000 & 1.0000 \\
            \hline 
        \end{tabular}
        \item 
        $h=1024$: \\
        \\
        \begin{tabular}{c|ccccccccc}
            \hline 
            $\kappa$ & 1 & 5 & 10 & 20 & 30 & 50 & 100 & 200 \\
            \hline 
            P & 0.5125 & 0.5621 & 0.6227 & 0.7340 & 0.8258 & 0.9409 & 0.9991 & 1.0000 \\
            \hline 
        \end{tabular}
    \end{itemize}
     
\end{proof}

\end{document}